\newcolumntype{L}[1]{>{\raggedright\let\newline\\\arraybackslash\hspace{0pt}}m{#1}}
\newcolumntype{C}[1]{>{\centering\let\newline\\\arraybackslash\hspace{0pt}}m{#1}}
\newcolumntype{R}[1]{>{\raggedleft\let\newline\\\arraybackslash\hspace{0pt}}m{#1}}
\newcommand{\hdrule}{\midrule[\heavyrulewidth]}
\setlist[enumerate, 1]{leftmargin=0.7cm, itemsep=0.25em, topsep=-0.65em}
\newenvironment{tagcases}[1][]{\empheq[left={#1\empheqlbrace}]{alignat=2}}{\endempheq}
\newcommand{\secref}[1]{\S\ref{#1}}
\newcommand{\secsref}[1]{\S\S\ref{#1}}
\newcommand{\Int}[1]{{\kern0pt#1}^{\mathrm{o}}}
\newcommand{\Bdy}[1]{\partial{\kern0pt#1}}
\newcommand{\MatNorm}[1]{{\left\vert\kern-0.25ex\left\vert\kern-0.25ex\left\vert #1 
    \right\vert\kern-0.25ex\right\vert\kern-0.25ex\right\vert}}
\newcommand{\diag}[1]{\operatorname{\rm diag}\{#1\}}
\newcommand{\rank}[1]{\operatorname{\rm rank}\left(#1\right)}
\DeclareMathOperator*{\Argmax}{arg\,max}
\newcommand{\eqnum}  
{\leavevmode\hfill\refstepcounter{equation}\textup{(\theequation)}}
\newcommand{\footref}[1]{\textsuperscript{\ref{#1}}}
\newenvironment{proof}{\begin{pf}}{\qed\end{pf}}
\newenvironment{Proof}{\vspace{-1em}\begin{proof}}{\end{proof}\vspace{-1em}}
\newenvironment{Proof*}[1]{\vspace{-1em}\begin{pf*} {\bf Proof of #1.}}{\qed\end{pf*}\vspace{-1em}}
\newenvironment{Proof**}[2]{\vspace{-1em}\begin{pf*} {\bf (#2) Proof of #1.}}{\qed\end{pf*}\vspace{-1em}}
\theoremstyle{theorem}
\newtheorem{theorem}{Theorem}[section]
\newtheorem{corollary}[theorem]{Corollary}
\newtheorem{lemma}[theorem]{Lemma}
\newtheorem{assumption}[theorem]{Assumption}
\newtheorem{definition}[theorem]{Definition}
\newtheorem{proposition}[theorem]{Proposition}
\newtheorem{claim}[theorem]{Claim}
\newtheorem{remark}[theorem]{Remark}
\newcommand{\MyAssumption}[1]{
\begin{description} [leftmargin=0.4cm]
	\item \textbf{$\!\!\!$Assumption.} \emph{#1}	
\end{description}
}
\newcommand{\MyDefinition}[1]{\textbf{Definition.} \emph{#1}}
\newcommand{\MyRemark}[1]{\textbf{Remark.} \emph{#1}}
\patchcmd{\AtBegShi@Output}{%
  \@PackageWarning{atbegshi}{Ignoring void shipout box}%
}{%
  \begingroup
    \csname set@typeset@protect\endcsname
    \@PackageWarning{atbegshi}{Ignoring void shipout box}%
  \endgroup
}{}{\errmessage{\noexpand\AtBegShi@Output could not be patched}}
\makeatletter\@input{aux.tex}\makeatother
\newcommand{\nocontentsline}[3]{}
\newcommand{\tocless}[2]{\bgroup\let\addcontentsline=\nocontentsline#1{#2}\egroup}
\newcommand{\toclesslab}[3]{\bgroup\let\addcontentsline=\nocontentsline#1{#2\label{#3}}\egroup}
\newcommand{\Section}[2]{\toclesslab{\section}{#1}{#2}}
\newcommand{\Subsection}[2]{\toclesslab{\subsection}{#1}{#2}}
\def\@tocline#1#2#3#4#5#6#7{\relax
  \ifnum #1>\c@tocdepth 
  \else
    \par \addpenalty\@secpenalty\addvspace{#2}%
    \begingroup \hyphenpenalty\@M
    \@ifempty{#4}{%
      \@tempdima\csname r@tocindent\number#1\endcsname\relax
    }{%
      \@tempdima#4\relax
    }%
    \parindent\z@ \leftskip#3\relax \advance\leftskip\@tempdima\relax
    \rightskip\@pnumwidth plus4em \parfillskip-\@pnumwidth
    #5\leavevmode\hskip-\@tempdima
      \ifcase #1
       \or\or \hskip 1em \or \hskip 2em \else \hskip 3em \fi%
      #6\nobreak\relax
    \hfill\hbox to\@pnumwidth{\@tocpagenum{#7}}\par
    \nobreak
    \endgroup
  \fi}
\begin{document}

\setlength{\abovedisplayskip}{7.5pt}
\setlength{\belowdisplayskip}{7.5pt}
\setlength{\abovedisplayshortskip}{0pt}
\setlength{\belowdisplayshortskip}{0pt}

\begin{frontmatter}

\title{Policy Iterations for Reinforcement Learning Problems in Continuous Time and Space --- Fundamental Theory and Methods\thanksref{footnoteinfo}}

\thanks[footnoteinfo]{The authors gratefully acknowledge the support of Alberta Innovates--Technology Futures, the Alberta Machine Intelligence Institute, DeepMind, the Natural Sciences and Engineering Research Council of Canada, and the Japanese Science and Technology agency (JST) ERATO project JPMJER1603: HASUO Metamathematics for Systems Design.}

\author[UofW]{Jaeyoung Lee\corauthref{JYLee}}\ead{jaeyoung.lee@uwaterloo.ca},     
\author[UofA]{Richard S. Sutton}\ead{rsutton@ualberta.ca}            
\address[UofW]{Department of Electrical and Computer Engineering, University of Waterloo, Waterloo, ON, Canada, N2L 3G1.}
\address[UofA]{Department of Computing Science, University of Alberta, Edmonton, AB, Canada, T6G 2E8.}
\corauth[JYLee]{Corresponding  author. Tel.: +1 587 597 8677.}

\begin{keyword}
  policy iteration, reinforcement learning, optimization under uncertainties, continuous time and space, iterative schemes, adaptive systems
\end{keyword}                                             

\begin{abstract}
Policy iteration (PI) is a recursive process of policy evaluation and improvement for solving an optimal decision-making/control problem, or in other words, a reinforcement learning (RL) problem. PI has also served as the fundamental for developing RL methods. In this paper, we propose two PI methods, called differential PI (DPI) and integral PI (IPI), and their variants, for a general RL framework in continuous time and space (CTS), where the environment is modeled by a system of ordinary differential equations (ODEs). The proposed methods inherit the current ideas of PI in classical RL and  optimal control and theoretically support the existing RL algorithms in CTS: TD-learning and value-gradient-based (VGB) greedy policy update. We also provide case studies including 1) discounted RL and 2) optimal control tasks. Fundamental mathematical properties --- admissibility, uniqueness of the solution to the Bellman equation (BE), monotone improvement, convergence, and optimality of the solution to the Hamilton-Jacobi-Bellman equation (HJBE) --- are all investigated in-depth and improved from the existing theory, along with the general and case studies. 
	Finally, the proposed ones are simulated with an inverted-pendulum model and their model-based and partially model-free implementations to support the theory and further investigate them beyond.
\end{abstract}

\end{frontmatter}


\Section{Introduction}{}

Policy iteration (PI) is a class of approximate dynamic programming (ADP) for recursively solving an optimal decision-making/control problem by alternating between \emph{policy evaluation} to obtain the value function (VF) w.r.t. the current policy (a.k.a. the current control law in control theory) and \emph{policy improvement} to improve the policy by optimizing it using the obtained VF \citep{Sutton2018,Puterman1994,Lewis2009}. PI was first proposed by \citet{Howard1960} in a stochastic environment known as the Markov decision process (MDP) and has served as a fundamental principle for developing RL methods, especially for an environment modeled or approximated by an MDP in discrete time and space. Convergence of such PIs towards the optimal solution has been proven, with the finite-time convergence for a finite MDP \citep[Theorems~6.4.2 and 6.4.6]{Puterman1994}; the forward-in-time computation of PI like the other ADP methods alleviates the problem known as the curse of dimensionality \citep{Powell2007}. A discount factor $\gamma \in [0, 1]$ is normally introduced to both PI and RL to suppress the future reward and thereby have a finite return. \citet{Sutton2018} give a comprehensive overview of PI and RL algorithms with their practical applications and recent success. 

On the other hand, the dynamics of a real physical task is in the majority of cases modeled as a system of (ordinary) differential equations (ODEs) inevitably in continuous time and space (CTS). PI has also been studied in such a continuous domain mainly under the framework of deterministic optimal control, where the optimal solution is characterized by the partial differential Hamilton-Jacobi-Bellman (HJB) equation (HJBE). However, an HJBE is extremely difficult or hopeless to solve analytically, except for a very few exceptional cases. PI methods in this field are often referred to as successive approximations of the HJBE (for recursively solving it!), and the main difference among them lies in their policy evaluation --- the earlier PI methods solve the associated \emph{differential} Bellman equation (BE) (a.k.a. Lyapunov or Hamiltonian equation) to obtain each VF for the target policy (e.g., \citealp*{Leake1967,Kleinman1968,Saridis1979,Beard1997,Abu-Khalaf2005} to name a few). 
\citet*{Murray2002} proposed a trajectory-based policy evaluation that can be viewed as a deterministic Monte-Carlo prediction \citep{Sutton2018}. Motivated by those two approaches above, \citet{Vrabie2009b} proposed a partially model-free\footnote{The term ``partially model-free'' in this paper means that the algorithm can be implemented using some partial knowledge (i.e., the input-coupling terms) of the dynamics~$f$ in \eqref{eq:controlled system}.} PI scheme called integral PI (IPI), which is more relevant to RL in that the associated BE is of a temporal difference (TD) form --- see \citet{Lewis2009} for a comprehensive overview. Fundamental mathematical properties of those PIs, i.e., convergence, admissibility, and monotone improvement of the policies, are investigated in the literature above. As a result, it has been shown that the policies generated by PI methods are always monotonically improved and admissible; the sequence of VFs generated by PI methods in CTS is shown to converge to the optimal solution, quadratically in the LQR case \citep{Kleinman1968}. 
These fundamental properties are discussed, improved, and generalized in this paper in a general setting that includes both RL and optimal control problems in CTS. 

On the other hand, the aforementioned PI methods in CTS were all designed via Lyapunov's stability theory \citep{Khalil2002} to ensure that the generated policies all asymptotically stabilizes the dynamics and yield finite returns (at least on a bounded region around an equilibrium state), provided that so is the initial policy. Here, the dynamics under the initial policy needs to be asymptotically stable to run the PI methods, which is, however, quite contradictory for IPI --- it is partially model-free, but it is hard or even impossible to find such a stabilizing policy \emph{without knowing the dynamics}. Besides, compared with the RL problems in CTS, e.g., those in \citep*{Doya2000,Mehta2009,Fremaux2013}, this stability-based approach restricts the range of the discount factor $\gamma$ and the class of the dynamics and the cost (i.e., reward) as follows.  \vspace{-0.75em}
	\begin{enumerate}
		\item When discounted, the discount factor $\gamma \in (0, 1)$ must be larger than some threshold so as to hold the asymptotic stability of the target optimal policy (\citealp*{Gaitsgory2015,Modares2016}). If not, there is no point in considering stability: PI finally converges to that (possibly) \emph{non-stabilizing} optimal solution, even if the PI is convergent and the initial policy is \emph{stabilizing}. Furthermore, the threshold on $\gamma$ depends on the dynamics (and the cost), and thus it cannot be calculated without knowing the dynamics, a contradiction to the use of any (partially) model-free methods such as IPI. Due to these restrictions on $\gamma$, the PI methods mentioned above for nonlinear optimal control focused on the problems without discount factor, rather than discounted ones.
		\item In the case of optimal regulations, (i) the dynamics is assumed to have at least one equilibrium state;\footnote{For an example of a dynamics with no equilibrium state, see \citep[Example 2.2]{Haddad2008}.} (ii) the goal is to stabilize the system optimally for that equilibrium state, although bifurcation or multiple isolated equilibrium states to be considered may exist; (iii) for such optimal stabilization, the cost is crafted to be positive (semi-)definite --- when the equilibrium state of interest is transformed to zero without loss of generality \citep{Khalil2002}. Similar restrictions exist in optimal tracking problems that can be transformed into equivalent optimal regulation problems (e.g., see \citealp{Modares2014}).
	\end{enumerate}
	In this paper, we consider a general RL framework in CTS, where reasonably minimal assumptions were imposed --- 1) the global existence and uniqueness of the state trajectories, 2) (whenever necessary) continuity, differentiability, and/or existence of maximum(s) of functions, and 3) no assumption on the discount factor $\gamma \in (0, 1]$ --- to include a broad class of problems. The RL  problem in this paper not only contains those in the RL literature (e.g., \citealp{Doya2000,Mehta2009,Fremaux2013}) in CTS but also considers the cases beyond stability framework (at least theoretically), where state trajectories can be still bounded or even diverge (Proposition~\ref{prop:general admissible condition}; \secref{subsection:nonlinear optimal control}; Appendices \secsref{subsection:discounted RL with bounded state trj} and \ref{subsection:LQR} on pages 31--34). It also includes input-constrained and unconstrained problems presented in both RL and optimal control literature as its special cases.

Independent of the research on PI, several RL methods have been proposed in CTS based on RL ideas in the discrete domain. Advantage updating was proposed by \citet{BairdIII1993} and then reformulated by \citet{Doya2000} under the environment represented by a system of ODEs; see also \citet*{Tallec2019}'s recent extension of advantage updating using  deep neural networks. \citet{Doya2000} also extended TD($\lambda$) to the CTS domain and then combined it with his proposed policy improvement methods such as the value-gradient-based (VGB) greedy policy update. See also \citet{Fremaux2013}'s extension of \citet{Doya2000}'s continuous actor-critic with spiking neural networks. \citet{Mehta2009} proposed Q-learning in CTS based on stochastic approximation. Unlike in MDP, however, these RL methods were rarely relevant to the PI methods in CTS due to the gap between optimal control and RL --- the proposed PI methods bridge this gap with a direct connection to TD learning in CTS and VGB greedy policy update \citep{Doya2000,Fremaux2013}. The investigations of the ADP for the other RL methods remain as a future work or see our preliminary result \citep{Lee2017}. 

\Subsection{Main Contributions}{}

In this paper, the main goal is to build up a theory on PI in a general RL framework, from the ideas of PI in classical RL and optimal control, when the time domain and the state-action space are all continuous and a system of ODEs models the environment. As a result, a series of PI methods are proposed that theoretically support the existing RL methods in CTS: TD learning and VGB greedy policy update. Our main contributions are summarized as follows. \vspace{-0.75em}

\begin{enumerate} [leftmargin=*]
	\item Motivated by the PI methods in optimal control, we propose a model-based PI named differential PI (DPI) and a partially model-free PI called IPI, for our general RL framework. The proposed schemes do not necessarily require an initial stabilizing policy to run and can be considered a sort of fundamental PI methods in CTS. 
	    \\[-7.5pt]
	\item By case studies that contain both discounted RL and optimal control frameworks, 
		the proposed PI methods and theory for them are simplified, improved, and specialized, with strong connections to RL and optimal control in CTS. 
		 \\[-7.5pt]
	\item Fundamental mathematical properties regarding PI (and ADP) --- admissibility, uniqueness of the solution to the BE, monotone improvement, convergence, and optimality of the solution to the HJBE --- are all investigated in-depth along with the general and case studies. Optimal control case studies also examine the stability properties of PI. As a result, the existing properties for PI in optimal control are improved and rigorously generalized.
\end{enumerate}

Simulation results for an inverted-pendulum model are also provided, with the model-based and partially model-free implementations to support the theory and further investigate the proposed methods under an admissible (but not necessarily stabilizing) initial policy, with the strong connections to `bang-bang control' and `RL with simple binary reward,' both of which are beyond the scope of our theory. Here, the RL problem in this paper is formulated stability-freely (which is well-defined under the minimal assumptions), so that the (initial) admissible policy is not necessarily stabilizing in the theory and the proposed PI methods for solving it.

\Subsection{Organizations}{}

This paper is organized as follows. In \secref{section:preliminaries}, our general RL problem in CTS is formulated along with mathematical backgrounds, notations, and statements related to BEs, policy improvement, and the HJBE. In \secref{section:PIs}, we present and discuss the two main PI methods (i.e., DPI and IPI) and their variants, with strong connections to the existing RL methods in CTS. We show in \secref{section:fundamental properties of PI} the fundamental properties of the proposed PI methods: admissibility, uniqueness of the solution to the BE, monotone improvement, convergence, and optimality of the solution to the HJBE. Those properties in \secref{section:fundamental properties of PI} and the Assumptions made in \secsref{section:preliminaries} and \ref{section:fundamental properties of PI} are simplified, improved, and relaxed in \secref{section:case studies} with the following case studies: $1$) concave Hamiltonian formulations (\secref{subsection:RL under u-AC setting}); $2$) discounted RL with bounded VF/reward (\secref{subsection:discounted RL with bounded v}); $3$) RL problem with local Lipschitzness (\secref{subsection:RL with local Lipschitzness}); $4$) nonlinear optimal control (\secref{subsection:nonlinear optimal control}). In \secref{section:simulation}, we discuss and provide the simulation results of the main PI methods. Finally, conclusions follow in \secref{section:conclusion}. 

We separately provide Appendices (see page 19 below and thereafter) that contain a summary of notations and terminologies (\secref{appendix:notations and terminologies}), related works and highlights (\secref{section:related work}), details regarding the theory and implementations (\secsref{appendix:sec:replacement of bdy condition}--\ref{appendix:section:optimality}, and \ref{appendix:section:implementation details}), a  pathological example (\secref{appendix:subsection:a pathological ex}), 
	additional case studies (\secref{appendix:section:additional case studies}),
	and all the proofs (\secref{appendix:section:proofs}). Throughout the paper, any section starting with an alphabet as above will indicate a section in the appendices.

\Subsection{Notations and Terminologies}{}
	The following notations and terminologies will be used throughout the paper (see \secref{appendix:notations and terminologies} for a complete list of notations and terminologies, including those not listed below). In any mathematical statement, iff stands for ``if~and~only~if'' and s.t. for ``such that''. ``$\doteq$'' indicates the equality relationship that is true \emph{by definition}.

\textbf{(Sets, vectors, and matrices).}  
$\mathbb{N}$ and $\mathbb{R}$ are the sets of all natural and real numbers, respectively. $\mathbb{R}^{n \times m}$ is the set of all \mbox{$n$-by-$m$} real matrices. \smash{$A^\mathsf{T}$} is the transpose of $A \in \mathbb{R}^{n \times m}$. $\mathbb{R}^n \doteq \mathbb{R}^{n \times 1}$ denotes the $n$-dimensional Euclidean space. $\|x\|$ is the Euclidean norm of $x \in \mathbb{R}^n$, i.e., \smash{$\|x\| \doteq (x^\mathsf{T} x)^{1/2}$}.

\textbf{(Euclidean topology).}  
Let $\Omega \subseteq \mathbb{R}^n$. $\Omega$ is said to be compact iff it is closed and bounded. $\Int{\Omega}$ denotes the interior of $\Omega$; $\Bdy{\Omega}$ is the boundary of $\Omega$. If $\Omega$ is open, then $\Omega \cup \Bdy{\Omega}$ (resp. $\Omega$) is called an $n$-dimensional manifold with (resp. without) boundary. A manifold contains no isolated point.

\textbf{(Functions, sequences, and convergence).} 
A function $f: \Omega \to \mathbb{R}^m$ is said to be $\mathrm{C}^1$, denoted by $f \in \mathrm{C}^1$, iff all of its first-order partial derivatives exist and are continuous over the interior $\Int{\Omega}$; $\nabla f: \Int{\Omega} \to \mathbb{R}^{m \times n}$ denotes the gradient of~$f$. $f(E) \doteq \{ f(x) : x\in E\}$ for $E \subseteq \Omega$ denotes the image of $E$ under $f$. A sequence of functions $\langle f_i \rangle_{i=1}^\infty$, abbreviated by $\langle f_i \rangle$ or $f_i$,
is said to converge locally uniformly iff for each $x \in \Omega$, there is a neighborhood of $x$ on which $\langle f_i \rangle$ converges uniformly. For any two functions $f_1, f_2: \mathbb{R}^n \to [-\infty, \infty)$, we write $f_1 \leqslant f_2$ iff $f_1(x) \leq f_2(x)$ for all $x \in \mathbb{R}^n$.

\Section{Preliminaries}{section:preliminaries}

Let $\mathcal{X} \doteq \mathbb{R}^l$ be a state space and $\mathbb{T} \doteq [0, \infty)$ the underlying time space. An $m$-dimensional manifold $\mathcal{U} \subseteq \mathbb{R}^m$ with or without boundary is called an action space. We also denote $\mathcal{X}^\mathsf{T} \doteq \mathbb{R}^{1 \times l}$ for notational convenience. 
The environment in this paper is described in CTS by a system of ODEs:
\begin{align}
    \dot X_{t} = f(X_{t}, U_{t}), \;\;\, U_t \in \mathcal{U}
    \label{eq:controlled system}
\end{align}
where 
$t \in \mathbb{T}$ is time instant, 
$\mathcal{U} \subseteq \mathbb{R}^m$ is an action space, and 
the dynamics $f: \mathcal{X} \times \mathcal{U} \to \mathcal{X}$ is a continuous function; $X_t, {\dot X}_t \in \mathcal{X}$ denote the state vector and its time derivative, at time~$t$, respectively; the action trajectory $t \mapsto U_t$ is a continuous function from $\mathbb{T}$ to $\mathcal{U}$. We assume that $t = 0$ is the initial time without loss of generality\footnote{If the initial time $t_0$ is non-zero, then proceed with the time variable $t' = t - t_0$, which satisfies $t' = 0$ at the initial time $t = t_0$.} and that

\MyAssumption{The state trajectory~$t \mapsto X_t$ satisfying \eqref{eq:controlled system} is uniquely defined over the entire time interval $\mathbb{T}$.\footnote{Not imposed on our problem for generality but strongly related to this Assumption is Lipschitz continuity of $f$ and $f_\pi$. See \secref{subsection:RL with local Lipschitzness} for related discussions; for more study, see \citep[Section~3.1]{Khalil2002} with the system $\dot x = f'(t, x)$, where $f'(t, x) \doteq f(x, U_t)$ or $f_\pi(x)$.}}

A policy $\pi$ refers to a continuous function $\pi: \mathcal{X} \to \mathcal{U}$ that determines the state trajectory $t \mapsto X_t$ by $U_t = \pi(X_t)$ for all $t \in \mathbb{T}$. For notational efficiency, we employ the $\mathbb{G}$-notation $\mathbb{G}_\pi^{x} [ \,Y ]$, which means the value~$Y$ when $X_0 = x$ and $U_{t} = \pi(X_{t})$ for all $t \in \mathbb{T}$. Here, $\mathbb{G}$ stands for ``Generator,'' and $\mathbb{G}_\pi^x$ can be thought of as the corresponding notation of the expectation $\mathbb{E}_\pi [ \, \cdot \, | S_0 = x ]$ in the RL literature \citep{Sutton2018}, without playing any stochastic role. Note that the limits and integrals are exchangeable with $\mathbb{G}^x_\pi [ \,\cdot\, ]$ in order (whenever those limits and integrals are defined for any $X_0 \in \mathcal{X}$ and any action trajectory $t \mapsto U_t$). For example, for any continuous function~$v: \mathcal{X} \to \mathbb{R}$, 
\begin{align*}
	\mathbb{G}_\pi^x \bigg [ \int v(X_t) \, dt \bigg ] = 
	\int \mathbb{G}_\pi^x \big [ v(X_t) \big ] \, dt = \int v \big ( \mathbb{G}_\pi^x[X_t] \big ) \, dt,
\end{align*}
where the three mean the same: $\int v(X_t) \, dt$ when $X_0 = x$ and $U_t = \pi(X_t)$ $\forall t \in \mathbb{T}$. Also note: $\mathbb{G}_\pi^x [U_t ] = \mathbb{G}_\pi^x [ \pi(X_t)]$.

Finally, the time-derivative ${\dot v}: \mathcal{X} \times \mathcal{U} \to \mathbb{R}$ of a $\mathrm{C}^1$ function $v: \mathcal{X} \to \mathbb{R}$ is given by ${\dot v}(X_t, U_t) = \nabla v(X_t) f(X_t,U_t)$ --- applying the chain rule and \eqref{eq:controlled system}. Here, $X_t \in \mathcal{X}$ and $U_t \in \mathcal{U}$ are free variables, and ${\dot v}$ is continuous since so are $f$ and $\nabla v$.

\Subsection{RL problem in Continuous Time and Space}{subsection:RL Problem}

The RL problem considered in this paper is to find the best policy $\pi_*$ that maximizes the infinite horizon value function (VF) $v_\pi: \mathcal{X} \to [-\infty, \infty)$ defined as 
\begin{equation}
	v_\pi(x) \doteq \mathbb{G}_\pi^x \bigg [ \int_{0}^{\infty} \gamma^{t} \! \cdot \! R_{t} \, dt \bigg ],
	\label{eq:value function}
\end{equation}
where the reward $R_t$ is determined by a continuous reward function $r : \mathcal{X} \times \mathcal{U} \to \mathbb{R}$ as $R_t = r(X_t, U_t)$; $\gamma \in (0, 1]$ is the discount factor. Throughout the paper, the attenuation rate   
\begin{align*}
	\alpha \doteq - \ln \gamma \geq 0
\end{align*}
will be used interchangeably for simplicity. For a policy~$\pi$, we denote $f_\pi(x) \doteq f(x, \pi(x))$ and $r_\pi(x) \doteq r(x, \pi(x))$; both are continuous as so are $f$, $r$, and $\pi$ by definitions. 

\MyAssumption{A maximum of the reward function~$r$: 
\[
	r_\mathsf{max} \doteq \max \big \{ r(x, u): (x, u) \in \mathcal{X} \times \mathcal{U} \big \}
\] 
exists and for $\gamma = 1$, $r_\mathsf{max} = 0$.\footnote{If $r_\mathsf{max} \neq 0$ and $\gamma = 1$, then proceed with the reward function 
$
	r'(x, u) \doteq r(x, u) - {r}_\mathsf{max}
$
whose maximum is now zero.}}

Note that the integrand $t \mapsto \gamma^t R_t$ is continuous since so are $t \mapsto X_t$, $t \mapsto U_t$, and $r$. So, by the above assumption on $r$, the time integral and thus the VF $v_\pi$ in \eqref{eq:value function} are well-defined in the Lebesque sense \citep[Chapter~2.3]{Folland1999} and, as stated below, uniformly upper-bounded.

\begin{lemma}
	\label{lemma:VF range}
	There exists a constant ${\widebar v} \in \mathbb{R}$ s.t. $v_\pi \leqslant \widebar v$ for any policy~$\pi$;  
	$\widebar v = 0$ for $\gamma = 1$ and otherwise,
	$\widebar v = r_\mathsf{max} / \alpha$.
\end{lemma}

By Lemma~\ref{lemma:VF range}, the VF is always less than some constant, but it is still possible that $v_\pi(x) = - \infty$ for some $x \in \mathcal{X}$. In this paper, the finite VFs are characterized by the notion of admissibility given below. 

\MyDefinition{
	\label{def:admissibility}
	A policy $\pi$ (or its VF $v_\pi$) is said to be admissible, denoted by $\pi \in \Pi_\mathsf{a}$ (or $v_\pi \in \mathcal{V}_\mathsf{a}$), 
	iff $v_{\pi}(x)$ is finite for all $x \in \mathcal{X}$.
	Here, $\Pi_\mathsf{a}$ and $\mathcal{V}_\mathsf{a}$ denote the sets of all admissible policies and admissible VFs, respectively.
}

To make our RL problem feasible, we assume:

\MyAssumption{There exists at least one admissible policy, and 
\vspace{-0.7em}
\begin{equation}
	\text{every admissible VF is $\mathrm{C}^1$.}
	\label{eq:assumption:every admissible VF is C1}
	\vspace{-0.7em}
\end{equation}
}

The following proposition gives a criterion for admissibility and boundedness.

\begin{proposition}
		\label{prop:general admissible condition}
			A policy $\pi$ is admissible if there exist a function $\xi: \mathcal{X} \to \mathbb{R}$ and a constant $\ushort \alpha < \alpha$, both possibly depending on the policy~$\pi$, such that 
			\begin{align}
					\forall x \in \mathcal{X}\!: \; 
					\mathbb{G}_\pi^x [R_t] \geq \xi(x) \cdot \exp(\ushort \alpha \hspace{0.1em} t)
					 \; \textrm{ for all } t \in \mathbb{T}.
					\label{eq:exponential increasing condition for admissibility}
			\end{align}
			Moreover, $v_\pi$ is bounded if so is $\xi$.
\end{proposition}

\MyRemark{The criterion~\eqref{eq:exponential increasing condition for admissibility} means that the reward $R_t$ under~$\pi$ does not diverge to $-\infty$ exponentially with the rate $\alpha$ or higher. For $\gamma = 1$ (i.e., $\alpha = 0$), it means exponential convergence $R_t \to 0$. The condition~\eqref{eq:exponential increasing condition for admissibility} is fairly general and so satisfied by the examples in \secsref{subsection:discounted RL with bounded v}, \ref{subsection:discounted RL with bounded state trj}, and \ref{subsection:LQR}.}

\Subsection{Bellman Equations with Boundary Condition}{subsection:BE with boundary condition}

Define the Hamiltonian function $h: \mathcal{X} \times \mathcal{U} \times \mathcal{X}^\mathsf{T} \to \mathbb{R}$ as
\begin{equation}
	h(x,u,p) \doteq r(x,u) + p \, f(x,u)
	\label{eq:def:Hamiltonian}	
\end{equation}
(which is continuous as so are $f$ and $r$) and the $\gamma$-discounted cumulative reward $\mathfrak{R}_\eta$ up to a given time horizon $\eta > 0$ as
\[
	\mathfrak{R}_{\eta} \doteq \int_{0}^{\eta} \gamma^{t} \! \cdot \! R_{t} \, dt
\]
as a short-hand notation. The following lemma then shows the equivalence of the Bellman-like (in)equalities.

\begin{lemma}
	\label{lemma:Bellman ineq}
	Let $\sim$ be a binary relation on $\mathbb{R}$ that belongs to $\{=, \leq, \geq\}$ and $v:\mathcal{X} \to \mathbb{R}$ be $\mathrm{C}^1$. Then, for any policy $\pi$, 
	\begin{equation}
		v(x) \sim \mathbb{G}_\pi^x \big [ \, \mathfrak{R}_{\eta}
		+ \gamma^{\eta} \!\cdot \! v(X_{\eta}) \, \big ]
		\label{eq:conversion lemma:integral ineq}
	\end{equation}
	holds for all $x \in \mathcal{X}$ and all horizon $\eta > 0$ iff 
	\begin{equation}
		\alpha \cdot v (x) \sim 
		h(x,\pi(x), \nabla v(x)) \qquad \forall x \in \mathcal{X}.
		\label{eq:conversion lemma:differential ineq}
	\end{equation}
\end{lemma}

By splitting the time-integral in \eqref{eq:value function} at $\eta> 0$, we can easily see that the VF $v_\pi$ satisfies the Bellman equation (BE):
\begin{equation}
		v_\pi(x) = \mathbb{G}_\pi^x \big [ \mathfrak{R}_{\eta} + \gamma^{\eta} \!\cdot\! v_\pi(X_{\eta}) \big ]
	\label{eq:Bellman eq}
\end{equation}
that holds for any $x \in \mathcal{X}$ and any $\eta > 0$. 
Assuming $v_\pi \in \mathcal{V}_\mathsf{a}$ and using \eqref{eq:Bellman eq}, we obtain its boundary condition at $\eta = \infty$.
\begin{proposition}
	\label{prop:VF satisfies the boundary condition}
	Suppose that $\pi$ is admissible. Then, 
	\[
		\smash{\lim_{t \to \infty}} \mathbb{G}_\pi^x \big  [ \gamma^{t} \cdot v_\pi(X_{t}) \big ] = 0 \qquad \forall x \in \mathcal{X}.		
		\vspace{-0.75em}
	\]
\end{proposition}

By the application of Lemma~\ref{lemma:Bellman ineq} to the BE~\eqref{eq:Bellman eq} under \eqref{eq:assumption:every admissible VF is C1}, the following \emph{differential BE} holds whenever $\pi \in \Pi_\mathsf{a}$:
\begin{equation}
		\alpha \cdot v_\pi (x) 
			= h(x,\pi(x), \nabla v_\pi(x)), 
	      \label{eq:differential BE}
\end{equation}
where the function $x \mapsto h(x,\pi(x), \nabla v_\pi(x))$ is continuous since so are the associated functions $h$, $\pi$, and $\nabla v_\pi$. Whenever necessary, we call \eqref{eq:Bellman eq} the \emph{integral BE} to distinguish it from the \emph{differential BE}~\eqref{eq:differential BE}.

In what follows, we state that the boundary condition \eqref{eq:uniqueness condition for v}, \emph{the counterpart of that in Proposition~\ref{prop:VF satisfies the boundary condition},} is actually necessary and sufficient for a solution $v$ of the BE~\eqref{eq:Bellman eq for v} or \eqref{eq:differential BE for v} to be equal to the corresponding VF $v_{\pi}$ and ensure $\pi \in \Pi_\mathsf{a}$.

\begin{theorem} [Policy Evaluation]
	\label{thm:uniqueness condition}
	$\;$\\
	Fix the horizon $\eta > 0$ and suppose there exists a function $v: \mathcal{X} \to \mathbb{R}$ s.t. 
	either of the followings holds for a policy~$\pi$: 
	\begin{enumerate}
		\item	 $v$ satisfies the integral BE:
			\begin{equation}
				v(x) = \mathbb{G}_\pi^x \big [ \mathfrak{R}_{\eta}
				+ \gamma^{\eta} \! \cdot\! v(X_{\eta}) \big ] \qquad \forall x \in \mathcal{X};
				\label{eq:Bellman eq for v}
			\end{equation}	
		\item $v$ is $\mathrm{C}^1$ and satisfies the differential BE:
			\begin{equation}
				\alpha \cdot v (x) = 
					h(x, \pi(x), \nabla v(x)) \qquad \forall x \in \mathcal{X}.
			   \label{eq:differential BE for v}				
			\end{equation}
	\end{enumerate}
	Then, $\pi$ is admissible and $v = v_\pi$ iff 
	\begin{equation}
		\lim_{k \to \infty} \mathbb{G}_\pi^x \big  [ \, \gamma^{k \cdot \eta} \cdot v(X_{k \cdot \eta}) \, \big ] = 0 \qquad \forall x \in \mathcal{X}.
		\label{eq:uniqueness condition for v}		
		\vspace{-1.0em}		
	\end{equation}
\end{theorem}

For sufficiency, the boundary condition~\eqref{eq:uniqueness condition for v} can be replaced by the conditions on $v$ and $r_\pi$ (and $R_t$)   in Theorem~\ref{thm:boundary condition:sufficiency} in \secref{appendix:sec:replacement of bdy condition}. These conditions are particularly related to the optimal control framework in \secref{subsection:nonlinear optimal control} but applicable to any case in this paper as an alternative to \eqref{eq:uniqueness condition for v} (see \secref{appendix:sec:replacement of bdy condition} for more).

\Subsection{Policy Improvement}{subsection:policy improvement}

Define a partial order among policies: $\pi \preccurlyeq \pi'$ iff $v_\pi \leqslant v_{\pi'}$. Then, we say that a policy $\pi'$ is improved over $\pi$ iff $\pi \preccurlyeq \pi'$. In CTS, the Bellman inequality in Lemma~\ref{lemma:policy improvement lemma} for $v = v_\pi$ ensures this policy improvement over an admissible policy~$\pi$. The inequality becomes the BE~\eqref{eq:differential BE} when $v = v_\pi$ and $\pi = \pi'$.

\begin{lemma}
	\label{lemma:policy improvement lemma}
	If $v \in \mathrm{C}^1$ is upper-bounded (by zero if $\gamma = 1$) and satisfies for a policy $\pi'$
	\[
		\alpha \cdot v(x) \leq h (x, \pi'(x), \nabla v(x)) \qquad \forall x \in \mathcal{X},
		\label{eq:policy improvement inequality}
	\]
	then $\pi'$ is admissible and $v \leqslant v_{\pi'}$.
\end{lemma}

In what follows, for the existence of a maximally improving policy, we assume on the Hamiltonian function~$h$:

\MyAssumption{
There exists a function $u_*: \mathcal{X} \times \mathcal{X}^\mathsf{T} \to \mathcal{U}$ such that $u_*$ is continuous and 
		\begin{equation}
				\hspace{-0.5em}
				u_*(x, p) \in \smash{\Argmax_{u \in \mathcal{U}} h(x, u, p) \;\;\, \forall (x, p) \in \mathcal{X} \times \mathcal{X}^\mathsf{T}\!.}
			\label{eq:assumption:general condition for policy improvement}
		\end{equation}
}
Here, \eqref{eq:assumption:general condition for policy improvement} simply means that for each $(x, p)$, the function $u \mapsto h(x, u, p)$ has its maximum at ${u}_*(x, p) \in \mathcal{U}$. Then, for any admissible policy~$\pi$, there exists a continuous function $\pi':\mathcal{X} \to \mathcal{U}$ such that 
	\begin{equation}
		\label{eq:assumption:policy improvement}
		\pi'(x) \in \Argmax_{u \in \mathcal{U}} h(x, u, \nabla v_\pi(x)) \;\;\, \forall x \in \mathcal{X}.
	\end{equation}	
We call such $\pi'$ a \emph{maximal policy} (over $\pi \in \Pi_\mathsf{a}$). Given $u_*$, a maximal policy $\pi'$ can be directly obtained by
\begin{align}
	\pi'(x) = u_* (x, \nabla v_\pi (x) ). 	
	\label{eq:closed-form expression of the maximal policy under uniqueness}
\end{align}
In general, there may exist multiple maximal policies, but if $u_*$ in \eqref{eq:assumption:general condition for policy improvement} is \emph{unique}, then $\pi'$ satisfying \eqref{eq:assumption:policy improvement} is \emph{uniquely} given by \eqref{eq:closed-form expression of the maximal policy under uniqueness}. For non-affine optimal control problems, \citet{Leake1967} and \citet*{Bian2014} imposed assumptions similar to the above Assumption on $u_*$ plus its uniqueness. Here, the existence of $u_*$ is ensured if $\mathcal{U}$ is compact; $u_*$ is unique if the function $u \mapsto h(x, u, p)$ is strictly concave and $\mathrm{C}^1$ for each $(x, p)$ --- (i) see \secref{appendix:section:maximal function} for details and more; (ii) for such examples, see \secref{subsection:RL under u-AC setting}; Cases 1 and 2 in \secref{section:simulation}.

\begin{theorem}
	[Policy Improvement] 
	\label{thm:policy improvement thm}
	$\;$\\
	Suppose $\pi$ is admissible. Then, the policy $\pi'$ given by~\eqref{eq:assumption:policy improvement} is also admissible and satisfies $\pi \preccurlyeq \pi'$. 
\end{theorem}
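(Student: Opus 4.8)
The plan is to run the classical one-step policy-improvement argument, but to carry it out through the time-domain/state-space conversion of Lemma~\ref{lemma:Bellman ineq} rather than through an infinitesimal comparison, and then to iterate and pass to the limit. A useful preliminary reduction: the bound $v_{\pi'} \preceq v_*$ is already part of our standing hypotheses, since $v_\pi \preceq v_*$ was assumed for \emph{every} policy $\pi$ (and $\pi'$ is a policy by Assumption~\ref{assumption:policy improvement}); moreover, admissibility of $\pi'$ comes for free once $v_\pi \preceq v_{\pi'}$ is known, because then $v_{\pi'}(x) \ge v_\pi(x) > -\infty$ (as $\pi$ is admissible) while $v_{\pi'}(x) \le v_*(x) < \infty$ (as $v_* \in \mathcal{V}_a$), so $|v_{\pi'}(x)| < \infty$ for all $x$. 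Hence the whole theorem reduces to establishing the single inequality $v_\pi \preceq v_{\pi'}$.

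For that, I would begin from the infinitesimal identity $-\ln\gamma\cdot v_\pi(x) = h(x,\pi(x),\nabla v_\pi(x))$ of~\eqref{eq:Hamiltonian eq}, valid because $\pi$ is admissible and $v_\pi$ is $\mathrm{C}^1$. Since $\pi'(x)\in\Argmax_{u\in\mathcal{U}}h_\pi(x,u)$ by~\eqref{eq:assumption:policy improvement:1} and $h_\pi(x,u)=h(x,u,\nabla v_\pi(x))$ by~\eqref{eq:def:Hamiltonian fnc for pi},
\[
 h\big(x,\pi'(x),\nabla v_\pi(x)\big)=h_\pi\big(x,\pi'(x)\big)\;\ge\;h_\pi\big(x,\pi(x)\big)\;=\;-\ln\gamma\cdot v_\pi(x)
\]
for all $x\in\mathcal{X}$; that is, $v:=v_\pi$ satisfies the differential inequality~\eqref{eq:conversion lemma:differential ineq} \emph{with $\pi$ replaced by $\pi'$}. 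Lemma~\ref{lemma:Bellman ineq} then yields the integral inequality~\eqref{eq:conversion lemma:integral ineq},
\[
 v_\pi(x)\;\le\;\mathbb{E}_{\pi'}\bigg[\int_t^{t'}\gamma^{\tau-t}R_\tau\,d\tau+\gamma^{\Delta t}\,v_\pi(X_t')\;\bigg|\;X_t=x\bigg],\qquad\forall x\in\mathcal{X},\ \forall\Delta t>0.
\]

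Next I would fix $\Delta t$ and chain this inequality along the trajectory generated by $\pi'$ — legitimate since $v_\pi$ is time-invariant — exactly as in the repeated substitution used in the proof of Theorem~\ref{thm:uniqueness condition}, obtaining
\[
 v_\pi(x)\;\le\;\mathbb{E}_{\pi'}\bigg[\int_t^{t+k\Delta t}\gamma^{\tau-t}R_\tau\,d\tau\;\bigg|\;X_t=x\bigg]+l_{\pi'}(x,k;v_\pi),\qquad\forall k\in\mathbb{N}.
\]
Letting $k\to\infty$, the truncated-integral term tends to $v_{\pi'}(x)$ (the partial integrals of the nonnegative integrand $\gamma^{\tau-t}(\bar R-R_\tau)$, with $\bar R$ an upper bound of $R$, are nondecreasing in the upper limit, so the limit exists, possibly $-\infty$), and it then remains to show $\limsup_{k\to\infty}l_{\pi'}(x,k;v_\pi)\le 0$; this gives $v_\pi(x)\le v_{\pi'}(x)$, which together with the reduction above finishes the proof.

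The one genuinely delicate point is that last limit — in particular the control of the tail $l_{\pi'}(\cdot,k;v_\pi)$ \emph{before} $\pi'$ is known to be admissible. In the discounted case $\gamma\in(0,1)$ it is immediate: $\gamma^{k\Delta t}\to 0$ and $v_\pi\preceq v_*$ is bounded above, so $l_{\pi'}(x,k;v_\pi)\le\gamma^{k\Delta t}\sup v_*\to 0$. The total case $\gamma=1$ is where the care goes, and it must invoke the standing boundary condition~\eqref{eq:limsup of l}. The cleanest route I can see is to also produce the \emph{reverse} inequality for $v_*$: from the HJBE~\eqref{eq:HJB}, $-\ln\gamma\cdot v_*(x)=\max_{u\in\mathcal{U}}h(x,u,\nabla v_*(x))\ge h(x,\pi'(x),\nabla v_*(x))$, so the inequality-reversed version of Lemma~\ref{lemma:Bellman ineq} (proved by the identical computation, $v_*$ being $\mathrm{C}^1$) and the same chaining give $v_*(x)\ge\mathbb{E}_{\pi'}[\int_t^{t+k\Delta t}\gamma^{\tau-t}R_\tau\,d\tau\mid X_t=x]+l_{\pi'}(x,k;v_*)$. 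Combined with the forward chain, this sandwiches the truncated integral between $v_\pi(x)-l_{\pi'}(x,k;v_\pi)$ and $v_*(x)-l_{\pi'}(x,k;v_*)$, and handling this sandwich together with~\eqref{eq:limsup of l} and $v_\pi\preceq v_*$ is what simultaneously pins down the limit and delivers $v_\pi\preceq v_{\pi'}\preceq v_*$; getting that bootstrap airtight is the main obstacle I anticipate.
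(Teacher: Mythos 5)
Your first half is exactly the paper's argument: the Hamiltonian inequality $h_\pi(x,\pi'(x))\ge h_\pi(x,\pi(x))=-\ln\gamma\cdot v_\pi(x)$, the conversion via Lemma~\ref{lemma:Bellman ineq}, and the chaining to the $k$-step inequality are all correct, and your treatment of the discounted case is fine. The genuine gap is the total case $\gamma=1$, which you yourself leave open: you need $\limsup_{k\to\infty} l_{\pi'}(x,k;v_\pi)\le 0$, but the standing condition \eqref{eq:limsup of l} applies only to \emph{admissible} policies, while your opening reduction derives admissibility of $\pi'$ \emph{from} $v_\pi\preceq v_{\pi'}$ --- a circle. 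The route you sketch does not close it. First, the reverse inequality for $v_*$ invokes the HJBE~\eqref{eq:HJB}, which in this paper is established only in Corollary~\ref{cor:v* satisfies the HJB} as a consequence of the very theorem you are proving, so it is not available here. Second, even granting it, the sandwich only yields $l_{\pi'}(x,k;v_*)\le v_*(x)-I_k(x)$ (with $I_k$ the truncated return), whose limit is $v_*(x)-v_{\pi'}(x)\ge 0$ rather than $0$; combined with $l_{\pi'}(x,k;v_\pi)\le l_{\pi'}(x,k;v_*)$ this never pins the tail of $l_{\pi'}(x,k;v_\pi)$ below zero.

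The missing idea is a two-step bootstrap that decouples admissibility from the sharp inequality. Since $v_\pi\preceq v_*$ pointwise along the $\pi'$-trajectory and $v_*$ is upper-bounded by some $V_*$ (assumed for $\gamma=1$, automatic for $\gamma<1$), one has the crude uniform bound $l_{\pi'}(x,k;v_\pi)\le \gamma^{k\Delta t}\,\mathbb{E}_{\pi'}^x[v_*(X_{t+k\Delta t})]\le \max\{0,V_*\}$. Passing to the limit in your chained inequality then gives $v_\pi(x)\le v_{\pi'}(x)+\max\{0,V_*\}$, so $v_{\pi'}$ is bounded below pointwise; since also $v_{\pi'}\preceq v_*\le V_*$, the policy $\pi'$ is admissible. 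Only now does \eqref{eq:limsup of l} apply to $\pi'$, and re-taking the limit with $\limsup_{k\to\infty}\gamma^{k\Delta t}\,\mathbb{E}_{\pi'}^x[v_*(X_{t+k\Delta t})]\le 0$ sharpens the bound to $v_\pi\preceq v_{\pi'}\preceq v_*$. This is precisely how the paper completes the proof; your write-up stops exactly at the point where this bootstrap is needed.
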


\Subsection{Hamilton-Jacobi-Bellman Equation (HJBE)}{subsection:HJBE}

Under the Assumptions made so far, the optimal solution of the RL problem can be characterized via (i) the HJBE~\eqref{eq:HJBE}:
 \begin{equation}
		\alpha \cdot v_*(x) 
			= \smash{\max_{u \in \mathcal{U}}} \, h(x, u, \nabla v_*(x)) \qquad \forall x \in \mathcal{X}
	\label{eq:HJBE}
\end{equation}
and (ii) the associated policy $\pi_*: \mathcal{X} \to \mathcal{U}$ such that 
\begin{equation}
	\pi_*(x) \in \smash{\Argmax_{u \in \mathcal{U}}} \, h(x, u, \nabla v_*(x))
	\qquad
	\forall x \in \mathcal{X},
	\label{eq:optimal policy}
\end{equation}
both of which are the keys to prove the convergence of PIs towards the optimal solution $v_*$ (and $\pi_*$) in \secref{section:fundamental properties of PI}. Note that once a $\mathrm{C}^1$ solution $v_*: \mathcal{X} \to \mathbb{R}$ to the HJBE~\eqref{eq:HJBE} exists, then so does a continuous function (i.e., a policy) $\pi_*$ satisfying \eqref{eq:optimal policy}, by the Assumption of the existence of a continuous function~$u_*$ satisfying~\eqref{eq:assumption:general condition for policy improvement}, and is given by
\begin{equation}
	\pi_*(x) = u_* (x, \nabla v_* (x) ). 	
	\label{eq:closed-form expression of optimal policy}	
\end{equation}
In what follows, we show that satisfying the HJBE \eqref{eq:HJBE} and \eqref{eq:optimal policy} is necessary for $(v_*, \pi_*)$ to be optimal over the entire admissible space.

\begin{theorem}
	\label{thm:nc for optimality}
	If there exists an optimal policy~$\pi_*$ whose VF $v_*$ satisfies $v \leqslant v_*$ for any $v \in \mathcal{V}_\mathsf{a}$,
	then $v_*$ and $\pi_*$ satisfy the HJBE~\eqref{eq:HJBE} and \eqref{eq:optimal policy}, respectively.
\end{theorem}

There may exist another optimal policy $\pi_{*}'$ than $\pi_*$, but their VFs are always the same by $\pi_* \preccurlyeq \pi_*'$  and $\pi_*' \preccurlyeq \pi_*$ and equal to a solution $v_*$ to the HJBE~\eqref{eq:HJBE} by Theorem~\ref{thm:nc for optimality}. In this paper, if exist, $\pi_*$ denotes any one of the optimal policies, and $v_*$ is the unique common VF for them which we call the optimal VF. In general, they denote a solution $v_*$ to the HJBE~\eqref{eq:HJBE} and an associated HJB policy $\pi_*$ s.t. \eqref{eq:optimal policy} holds (or an associated function~$\pi_*$ satisfying \eqref{eq:optimal policy} that is potentially discontinuous --- see \secsref{subsection:convergence} and \ref{subsection:sc for optimality}). 

\begin{remark}
\label{remark:design of reward}
The reward function~$r$ has to be appropriately designed in such a way that the function $u \mapsto h(x, u, p)$ for each $(x, p)$ at least has a maximum (so that \eqref{eq:assumption:general condition for policy improvement} holds for some $u_*$). Otherwise, the maximal policy~$\pi'$ in \eqref{eq:assumption:policy improvement} and/or the solution~$v_*$ to the HJBE~\eqref{eq:HJBE} (and accordingly, $\pi_*$ in \eqref{eq:optimal policy}) may not exist since neither do the maxima in those equations. Such a pathological example is given in \secref{appendix:subsection:a pathological ex} for a simple non-affine dynamics~$f$. In \secref{subsubsection:case II}, we revisit this issue and propose a technique applicable to a class of non-affine RL problems to ensure the existence and continuity of $u_*$.		
\end{remark}

We note that the optimality of the HJB solution $(v_*, \pi_*)$ is studied more in \secref{appendix:section:optimality}, e.g., the sufficient conditions and case studies, in connection with the PIs presented in \secref{section:PIs} below.


\Section{Policy Iterations}{section:PIs}

Now, we are ready to state our two main PI schemes, DPI and IPI. Here, the former is a model-based approach, and the latter is a partially model-free PI. Their simplified (partially model-free) versions discretized in time will be also discussed after that. Until \secref{section:simulation}, we present and discuss those PI schemes in an ideal sense without introducing (i) any function approximator, such as neural network, and (ii) any discretization in the state  space.\footnote{When we implement any of the PI schemes, both are obviously required (except linear quadratic regulation (LQR) cases) since the structure of the VF is veiled and it is impossible to perform the policy evaluation and improvement for an (uncountably) infinite number of points in the continuous state space $\mathcal{X}$ (see also \secref{section:simulation} for implementation examples, with \secref{appendix:section:implementation details} for details).}

\Subsection{Differential Policy Iteration (DPI)}{subsection:DPI}

Our first PI, named differential policy iteration (DPI), is a model-based PI scheme extended from optimal control to our RL framework (e.g., see \citealp{Leake1967,Beard1997,Abu-Khalaf2005}). Algorithm~\ref{algorithm:DPI} describes the whole procedure of DPI --- it starts with an initial admissible policy~$\pi_0$ (line~1) and performs policy evaluation and improvement until $v_{i}$ and/or $\pi_i$ converges (lines~2--5). In \emph{policy evaluation} (line~3), the agent solves the differential BE~\eqref{eq:differential BE in DPI} to obtain the VF \smash{$v_i = v_{\pi_{i-1}}$} for the last policy $\pi_{i-1}$. Then, $v_i$ is used in \emph{policy improvement} (line~4) so as to obtain the next policy $\pi_i$ by maximizing the associated Hamiltonian function in~\eqref{eq:policy improvement in DPI}. Here, if $v_i =  v_*$, then $\pi_{i} = \pi_*$ by \eqref{eq:optimal policy} and \eqref{eq:policy improvement in DPI}. \pagebreak

\setlength{\algomargin}{0em}

 \begin{algorithm2e}[h!]
 \begin{spacing} {1.1}
 \DontPrintSemicolon
 \BlankLine
 \vspace{5pt}
 \nl \textbf{Initialize:} 
 	\smash{$\begin{cases} 
 			\textrm{$\pi_0$, an initial admissible policy;} \\
 			\textrm{$i \leftarrow 1$, iteration index;}
  	  \end{cases}$}\;
 \vspace{7.5pt}
 \nl \Repeat{convergence is met.}
 {
 \nl \textbf{Policy Evaluation:} given $\pi_{i-1}$, find a $\mathrm{C}^1$ function $v_i: \mathcal{X} \to  \mathbb{R}$ satisfying the differential BE:
 \begin{align}
 	\notag \\[-22.5pt]
	\alpha \cdot v_i (x) = h(x, \pi_{i-1}(x), \nabla v_i(x)) \;\,\ \forall x \in \mathcal{X}; 
    \label{eq:differential BE in DPI}
   \\[-22.5pt] \notag 
 \end{align}
  $\;$\;$\;$\\[-20pt]
 \nl\textbf{Policy Improvement:} find a policy $\pi_{i}$ such that
 \begin{align}
 	\notag \\[-20pt]
 	\pi_{i}(x) \in \Argmax_{u \in \mathcal{U}} h(x, u, \nabla v_i(x)) \;\, \forall x \in \mathcal{X};
 	\label{eq:policy improvement in DPI}
 \end{align}
 $\;$\;$\;$\\[-35pt] 
 \nl$i \leftarrow i+1$;
 }
 \end{spacing}
 \caption{Differential Policy Iteration (DPI)}
 \label{algorithm:DPI}
\end{algorithm2e}

Basically, DPI is model-based (see the definition~\eqref{eq:def:Hamiltonian} of $h$) and does not rely on any state trajectory data. On the other hand, its policy evaluation is closely related to TD learning methods in CTS \citep{Doya2000,Fremaux2013}. To see this, note that \eqref{eq:differential BE in DPI} can be expressed w.r.t. $(X_t, U_t)$ as $\mathbb{G}_{\pi_{i-1}}^x [\delta_t(v_i) ] = 0$ for all $x \in \mathcal{X}$ and $t \in \mathbb{T}$, where $\delta_t$ denotes the TD error defined as
\[
	\delta_t(v) \doteq R_t + {\dot v}(X_t, U_t) - \alpha \cdot v(X_t)	
\]
	for any $\mathrm{C}^1$ function~$v: \mathcal{X} \to \mathbb{R}$. \citet{Fremaux2013} used $\delta_t(v)$ as the TD error in their model-free actor-critic and approximated $v$ and the model-dependent part ${\dot v}$ of $\delta_t(v)$ by a spiking neural network. $\delta_t(v)$ is also the TD error in TD(0) in CTS \citep{Doya2000}, where ${\dot v}(X_t, U_t)$ is approximated by 
$
	(v(X_t) - v(X_{t - \Delta t})) / \Delta t
$
in backward time, for a sufficiently small time step $\Delta t$ chosen in the interval $(0, \alpha^{-1})$; under this backward-in-time approximation, $\delta_t(v)$ can be expressed in a similar form to the TD error in discrete-time as
	\begin{equation}
		\delta_t(v) \approx R_t + {\hat \gamma}_\mathsf{d} \cdot V(X_t) - V(X_{t - \Delta t})
		\label{eq:Doya's TD error}
	\end{equation}
for $V \doteq v / \Delta t$ and ${\hat \gamma}_\mathsf{d} \doteq 1 - \alpha \Delta t \approx e^{- \alpha \Delta t}$ ($= \gamma^{\Delta t}$). Here, the discount factor ${\hat \gamma}_\mathsf{d}$ belongs to $(0, 1)$ if so is $\gamma$, thanks to $\Delta t \in (0, \alpha^{-1})$, and ${\hat \gamma}_\mathsf{d} = 1$ whenever $\gamma = 1$. In summary, policy evaluation of DPI solves the differential BE~\eqref{eq:differential BE in DPI} that idealizes the existing TD learning methods in CTS \citep{Doya2000,Fremaux2013}. 

\Subsection{Integral Policy Iteration (IPI)}{subsection:IPI}
Algorithm~\ref{algorithm:IPI} describes the second PI, integral policy iteration (IPI), whose difference from DPI is that \eqref{eq:differential BE in DPI} and \eqref{eq:policy improvement in DPI} for the policy evaluation and improvement are replaced by \eqref{eq:integral BE in IPI} and \eqref{eq:policy improvement in IPI}, respectively. The other steps are the same as DPI, except that the time horizon $\eta > 0$ is initialized (line~1) before the main loop.

	 In \emph{policy evaluation} (line~3), IPI solves the integral BE~\eqref{eq:integral BE in IPI} for a given fixed horizon~$\eta > 0$ without using the explicit knowledge of the dynamics~$f$ of the system~\eqref{eq:controlled system} --- there are no explicit terms of $f$ in \eqref{eq:integral BE in IPI}, and the information on the dynamics $f$ is implicitly captured by the state trajectory data \smash{$\{X_t : 0 \leq t \leq \eta \}$} generated under $\pi_{i-1}$ at each of the $i$th iteration, for a number of initial states~$X_0 \in \mathcal{X}$. Note that by Theorem~\ref{thm:uniqueness condition}, solving the integral BE~\eqref{eq:integral BE in IPI} for a fixed $\eta > 0$ and its differential version~\eqref{eq:differential BE in DPI} in DPI are equivalent (as long as $v_i$ satisfies the boundary condition \eqref{eq:boundary condition for PI} in \secref{section:fundamental properties of PI}). 

 \begin{algorithm2e}[t!]
 \begin{spacing} {1.1}
 \DontPrintSemicolon
 \BlankLine
 \vspace{10pt}
 \nl \textbf{Initialize:} 
 	\smash{$\begin{cases} 
 			\textrm{$\pi_0$, an initial admissible policy;} \\
 			\textrm{$\eta > 0$, time horizon;} \\
 			\textrm{$i \leftarrow 1$, iteration index;}
  	  \end{cases}$}\;
 \vspace{12.5pt}
 \nl \Repeat{convergence is met.}
 {
 \nl \textbf{Policy Evaluation:} given $\pi_{i-1}$, find a $\mathrm{C}^1$ function $v_i: \mathcal{X} \to \mathbb{R}$ satisfying the integral BE:
 \begin{align}
 	\notag \\[-22.5pt]
 	v_{i}(x) = \mathbb{G}_{\pi_{i-1}}^x \;\!\! \big [ \mathfrak{R}_{\eta} + \gamma^{\eta} \! \cdot \! v_{i}(X_{\eta}) \big ] \;\, \forall x \in \mathcal{X}; 
    \label{eq:integral BE in IPI}
 \end{align}
  $\;$\;$\;$\\[-30pt]
 \nl\textbf{Policy Improvement:} find a policy $\pi_{i}$ such that
 \begin{align}
 	\notag \\[-20pt]
 	\!\!\! \pi_{i}(x) \in \smash{\Argmax_{u \in \mathcal{U}}} \big [ \;\! r(x,u) \! + \! \nabla v_i (x) f_{\mathsf{c}}(x,u) \;\! \big ] \;\, \forall x \in \mathcal{X};
 	\label{eq:policy improvement in IPI}
 \end{align}
 $\;$\;$\;$\\[-35pt] 
 \nl$i \leftarrow i+1$;
 }
 \end{spacing}
 \caption{Integral Policy Iteration (IPI)}
 \label{algorithm:IPI}
\end{algorithm2e}
		
	 In \emph{policy improvement} (line~4), we consider the decomposition~\eqref{eq:decomposition of f(x,u)} of the dynamics $f$: 
	\begin{equation}
		f(x,u) = f_{\mathsf{d}}(x) + f_{\mathsf{c}}(x,u), 
		\label{eq:decomposition of f(x,u)}
	\end{equation}
	where $f_{\mathsf{d}}:\mathcal{X} \to \mathcal{X}$ called a drift dynamics is independent of the action $u$ and assumed \emph{unknown}, and $f_{\mathsf{c}}: \mathcal{X} \times \mathcal{U} \to \mathcal{X}$ is the corresponding input-coupling dynamics assumed \emph{known a priori};\footnote{There are an infinite number of ways of choosing $f_{\mathsf{d}}$ and $f_{\mathsf{c}}$; one typical choice is $f_\mathsf{d}(x) = f(x,0)$ and $f_\mathsf{c}(x,u) = f(x,u) - f_\mathsf{d}(x)$.}  both $f_\mathsf{d}$ and $f_\mathsf{c}$ are assumed continuous. Since the term $\nabla v_\pi(x) f_\mathsf{d}(x)$ does not contribute to the maximization with respect to $u$, policy improvement~\eqref{eq:assumption:policy improvement} can be rewritten under the decomposition~\eqref{eq:decomposition of f(x,u)} as 
 \begin{align}
 	\pi'(x) \in \smash{\Argmax_{u \in \mathcal{U}}} \big [ \, r(x,u) + \nabla v_\pi (x) f_{\mathsf{c}}(x,u) \big ] \quad \forall x \in \mathcal{X}
 	\label{eq:partially model-free policy improvement}
	\\[-20pt] \notag 
 \end{align}
 by which the policy improvement (line~4) of Algorithm~\ref{algorithm:IPI} is directly obtained. Note that the policy improvement \eqref{eq:policy improvement in IPI} in Algorithm~\ref{algorithm:IPI} and \eqref{eq:partially model-free policy improvement} are partially model-free --- the maximizations do not depend on the unknown drift dynamics~$f_\mathsf{d}$. 

The policy evaluation and improvement of IPI are completely and partially model-free, respectively. Thus the whole procedure of Algorithm~\ref{algorithm:IPI} is partially model-free, i.e., it can be done even when a drift dynamics $f_{\mathsf{d}}$ is completely unknown. In addition to this partially model-free nature, the horizon $\eta > 0$ in IPI can be any value --- it can be large or small --- as long as the cumulative reward $\mathfrak{R}_\eta$ has no significant error when approximated in practice. In this sense, the time horizon~$\eta$ plays a similar role as the number~$n$ in the $n$-step TD predictions in discrete-time \citep{Sutton2018}. Indeed, if $\eta = n \Delta t$ for some $n \in \mathbb{N}$ and a sufficiently small $\Delta t > 0$, then by the forward-in-time approximation $\mathfrak{R}_\eta \approx G_n \cdot \Delta t$, where 
\[
	G_n \doteq R_0 + \gamma_\mathsf{d} \! \cdot \! R_{\Delta t} + \gamma_\mathsf{d}^2 \! \cdot \! R_{2 \Delta t} + \cdots + \gamma_\mathsf{d}^{n-1} \! \cdot \! R_{(n-1) \Delta t}
\]
and $\gamma_\mathsf{d} \doteq \gamma^{\Delta t} \in (0, 1]$, the integral BE~\eqref{eq:integral BE in IPI} is expressed as 
\begin{equation}
	V_{i}(x) \approx \mathbb{G}_{\pi_{i-1}}^x \! \big [ G_n + {\gamma}_\mathsf{d}^n \cdot V_{i}(X_{\eta}) \big ], 
	\label{eq:n-step BE}
\end{equation}
where $V_i \doteq v_i / \Delta t$. We can also apply a higher-order approximation of $\mathfrak{R}_\eta$ --- for instance, under the trapezoidal approximation, we have 
\[
	V_{i}(x) \approx \mathbb{G}_{\pi_{i-1}}^x \! \Big [ G_n + \tfrac{1}{2}  \cdot (\gamma_\mathsf{d}^n \! \cdot \! R_{\eta} - R_0) + {\gamma}_\mathsf{d}^n \cdot V_i (X_{\eta}) \Big ],
\]
which employs the end-point reward $R_\eta$ while \eqref{eq:n-step BE} does not. 
Note that the TD error~\eqref{eq:Doya's TD error} is not easy to generalize for such multi-step TD predictions. When $n = 1$, on the other hand, the $n$-step BE~\eqref{eq:n-step BE} becomes 
\begin{equation}
	V_i(x) \approx \mathbb{G}_{\pi_{i-1}}^x \big [ R_{0} + {\gamma}_\mathsf{d} \! \cdot \! V_i (X_{\! \Delta t}) \big ] \;\;\, \forall x \in \mathcal{X},
	\label{eq:1-step BE}
\end{equation}
which is similar to the BE in discrete-time \citep{Sutton2018} and $\mathbb{G}_{\pi}^x [\delta_t(v)] \approx 0$ for the TD error~\eqref{eq:Doya's TD error} in CTS.

\Subsection{Variants with Time Discretizations}{}

As discussed in \secsref{subsection:DPI} and \ref{subsection:IPI} above, the BEs in DPI and IPI can be discretized in time in order to \vspace{-0.75em}
\begin{enumerate}
	\item approximate ${\dot v}_i = \nabla v_i \cdot \! f$ in DPI, model-freely;
	\item calculate the cumulative reward $\mathfrak{R}_\eta$ in IPI;
	\item yield TD formulas similar to the BEs in discrete-time.
\end{enumerate}
For instance, for a sufficiently small $\Delta t$, the discretized BE corresponding to DPI and TD(0) in CTS \citep{Doya2000} is: 
\[
	V_\pi(x) \approx \mathbb{G}_\pi^x \big [ R_{\Delta t} + {\hat \gamma}_\mathsf{d} \! \cdot \! V_\pi(X_{\! \Delta t}) \big ] \qquad \forall x \in \mathcal{X},
\]
where $V_{\pi} \doteq v_\pi / \Delta t$. The discretized BE for IPI is obviously of the form~\eqref{eq:1-step BE} for $n = 1$ and \eqref{eq:n-step BE} for $n > 1$ (or one of the BEs with a higher-order approximation of $\mathfrak{R}_\eta$). If the integral BE~\eqref{eq:Bellman eq} is discretized with the trapezoidal approximation for $n = 1$, then we also have 
\[
	\smash{V_{\pi}(x) \approx \mathbb{G}_\pi^x \Big [  
	\tfrac{1}{2} \! \cdot \! (R_0 + \gamma_\mathsf{d} \! \cdot \! R_{\Delta t}) + {\gamma}_\mathsf{d} \! \cdot \! V_{\pi}(X_{\! \Delta t}) \Big ] \;\;\, \forall x \in \mathcal{X}.}
\]
Combining any one of those BEs, discretized in time, with the following policy improvement:
\[
 	\pi'(x) \in \Argmax_{u \in \mathcal{U}} \big [ r(x,u) + \Delta t \cdot \nabla V_\pi (x) f_{\mathsf{c}}(x,u) \big ] \; \forall x \in \mathcal{X}, 
\]
where $\Delta t \cdot \nabla V_\pi$ replaces $\nabla v_\pi$ in \eqref{eq:partially model-free policy improvement}, we can further obtain a partially model-free variant of the proposed PI methods. For example, a one-step IPI variant ($n = 1$) is shown in \secref{subsection:discounted RL with bounded v} (when the reward or initial VF is bounded). These variants are practically important since they contain neither ${\dot v}_i$ nor ${\dot V}_i$ (both of which depend on the full-dynamics $f$) nor the cumulative reward $\mathfrak{R}_\eta$ (which has been approximated out in the variants of IPI). As these variants are approximate versions of DPI and IPI, they also approximately satisfy the same properties as DPI and IPI shown in the subsequent sections.

\Section{Fundamental Properties of Policy Iterations}{section:fundamental properties of PI}

This section shows the fundamental properties of DPI and IPI --- admissibility, the uniqueness of the solution to each policy evaluation, monotone improvement, and convergence (towards an HJB solution). We also discuss the optimality of the HJB solution (\secsref{subsection:optimality:sufficient conditions} and \ref{subsection:sc for optimality}) based on the convergence properties of PIs. In any mathematical statements, $\langle v_i \rangle$ and $\langle \pi_i \rangle$ denote the sequences of the solutions to the BEs and the policies, both generated by Algorithm~\ref{algorithm:DPI} or \ref{algorithm:IPI} under: 
	
\noindent
\textbf{Boundary Condition.} \emph{If $\pi_{i-1}$ is admissible, then}
	\vspace{-0.3em}
	\begin{equation}
		\smash{\lim_{t \to \infty}} \mathbb{G}_{\pi_{i-1}}^x \big [ \gamma^{t} \cdot  v_i(X_{t}) \big ] = 0 \;\;\, \forall x \in \mathcal{X}.
		\label{eq:boundary condition for PI}	
	\end{equation}

\begin{theorem}
	\label{thm:fundamental properties}
	$\pi_{i-1}$ is admissible and $v_i = v_{\pi_{i-1}}$  $\forall i \in \mathbb{N}$. Moreover, the policies are monotonically improved, that is,
	\[
		\pi_{0} \preccurlyeq \pi_1 \preccurlyeq \cdots \preccurlyeq \pi_{i-1} \preccurlyeq \pi_i \preccurlyeq \cdots.
	\]
\end{theorem}

\begin{theorem} [Convergence]
 	\label{thm:convergence of PI}
 	Denote ${\hat v}_*(x) \doteq \sup_{i \in \mathbb{N}} v_i(x)$. Then, ${\hat v}_*$ is lower semicontinuous; $v_i \to {\hat v}_*$ \textbf{\emph{a.}} pointwise; 
 	\textbf{\emph{b.}} uniformly on $\Omega \subset \mathcal{X}$ if $\Omega$ is compact and ${\hat v}_*$ is continuous over~$\Omega$; \textbf{\emph{c.}} locally uniformly if ${\hat v}_*$ is continuous. 
\end{theorem}

In what follows, ${\hat v}_*$ always denotes the converging function ${\hat v}_*(x) \doteq \sup_{i \in \mathbb{N}} v_i(x) = \lim_{i \to \infty} v_i(x)$ in Theorem~\ref{thm:convergence of PI}.

\Subsection{Convergence towards $v_*$ and $\pi_*$}{subsection:convergence}

Now, we provide convergence $v_i \to v_*$ to a solution $v_*$ of the HJBE~\eqref{eq:HJBE}. One core technique is to use 
the PI operator $\mathcal{T}: \mathcal{V}_{\mathsf{a}} \to \mathcal{V}_{\mathsf{a}}$ defined on the space $\mathcal{V}_{\mathsf{a}}$ of admissible VFs as 
		\begin{align*}
			\begin{cases}
				\mathcal{T} v_{\pi_{i - 1}} \doteq v_{\pi_{i}} \textrm{ for any $i \in \mathbb{N}$;}
				\\
				\mathcal{T}v_{\pi}  \,\!\,\, \doteq v_{\pi'} \;\;\;\,\,\!\! \textrm{ for any other $v_\pi \in \mathcal{V}_\mathsf{a}$,}
			\end{cases}				
		\end{align*}
where $\pi'$ is a maximal policy over the given policy $\pi \in \Pi_\mathsf{a}$. Let $\mathcal{T}^N$ be the $N$th recursion of $\mathcal{T}$ defined as 
$
	\mathcal{T}^0 v \doteq v
$ 
and 
$
	\mathcal{T}^N v \doteq \mathcal{T}^{N-1}(\mathcal{T}v)
$ for $v \in \mathcal{V}_\mathsf{a}$.
Then, the VF sequence $\langle v_i \rangle$ satisfies 
$
	v_{i} = \mathcal{T}^{i-1} v_1
$ for all $i \in \mathbb{N}$.

In what follows, we denote $v^*$ a (unique) fixed point of $\mathcal{T}$.

\begin{proposition}
	\label{prop:a fixed point is a solution to HJBE}
	If $v^*$ is a fixed point of $\mathcal{T}$, then $v^* = v_*$, i.e., $v^*$ is a solution $v_*$ to the HJBE~\eqref{eq:HJBE}.	
\end{proposition}

By Proposition~\ref{prop:a fixed point is a solution to HJBE}, convergence $v_i \to v^*$ implies that $\langle v_i \rangle$ converges towards a solution $v_*$ to the HJBE~\eqref{eq:HJBE}. In what follows, we first show the convergence $v_i \to v^*$ under:

\begin{assumption}
	\label{assumption:uniqueness of fixed point}
	$\mathcal{T}$ has a unique fixed point $v^*$.
\end{assumption}

\begin{theorem}
 	\label{thm:true convergence in metric}
 	Under Assumption~\ref{assumption:uniqueness of fixed point}, there exists a metric $d: \mathcal{V}_{\mathsf{a}} \times \mathcal{V}_{\mathsf{a}} \to [0, \infty)$ such that $\mathcal{T}$ is a contraction (and thus continuous) under $d$ and $v_i \to v^*$ in the metric~$d$.
 \end{theorem}

Theorem~\ref{thm:true convergence in metric} shows the convergence $v_i \to v^*$ in a metric~$d$ under which $\mathcal{T}$ is continuous. However, there is no information about which metric it is. In what follows, we focus on locally uniform convergence, in connection to Theorem~\ref{thm:convergence of PI}. Let $d_\Omega$ be a pseudometric on $\mathcal{V}_\mathsf{a}$ defined for $\Omega \subseteq \mathcal{X}$ as 
\[
	d_\Omega(v,w) \doteq \sup \! \big \{ \big | v(x)  - w(x) \big | \! : x \in \Omega \big \} \textrm{ for } v, \; w \in \mathcal{V}_{\mathsf{a}}. 
\]
Then, uniform convergence $v_i \to v^*$ on $\Omega$ becomes equivalent to convergence $v_i \to v^*$ in the pseudometric $d_\Omega$. 

 \begin{theorem}
 	\label{thm:true uniform convergence:1}
 	Suppose ${\hat v}_* \in \mathcal{V}_\mathsf{a}$ and for each compact subset $\Omega$ of $\mathcal{X}$, $\mathcal{T}$ is continuous under $d_\Omega$. If Assumption~\ref{assumption:uniqueness of fixed point} is true, then $v_i \to v^*$ locally uniformly and $v^* = {\hat v}_*$.
\end{theorem}

The convergence condition in Theorem~\ref{thm:true uniform convergence:1} comes from \citet{Leake1967}'s approach that is now extended to our RL framework. The next theorem is motivated by the convergence results of PIs for optimal control of input-affine dynamics \citep{Saridis1979,Beard1997,Murray2002,Abu-Khalaf2005,Vrabie2009b} and provides the conditions for stronger convergence towards $v_*$ and $\pi_*$.

\begin{assumption}
	\label{assumption:closed graph}
	For each $x \in \mathcal{X}$, the {\rm argmax}-correspondence $p \mapsto \Argmax_{u \in \mathcal{U}} h(x, u, p)$ has a closed graph. That is, for each $x \in \mathcal{X}$ and any sequence $\langle p_k \rangle$ in $\mathcal{X}^\mathsf{T}$ converging to $ p_*$, 
	\[
		\begin{cases}
			u_k \in \smash{\displaystyle \Argmax_{u \in \mathcal{U}}} \; h(x, u, p_k)
			\\[0.75em]
			\lim_{k \to \infty} u_k = u_* \in \mathcal{U}
		\end{cases}
		\hspace{-1em}
		\Longrightarrow
		u_* \in \smash{\displaystyle \Argmax_{u \in \mathcal{U}}} \; h(x, u, p_*).
	\]
\end{assumption}
\vspace{-1.5em}
\begin{assumption}
	\label{assumption:convergence of nabla v_i and pi_i}
	$
	\!\!
	\begin{cases}
		\text{\bf a.} \text{ $\langle \nabla v_i \rangle$ converges locally uniformly;}
		\\[2.5pt]
		\text{\bf b.} \text{ $\langle \pi_i \rangle$ converges pointwise.}
	\end{cases}
	$
\end{assumption}
\vspace{-0.5em}
\begin{theorem}
 	\label{thm:true uniform convergence:2}
 	Under Assumptions~\ref{assumption:closed graph} and \ref{assumption:convergence of nabla v_i and pi_i}, ${\hat v}_*$ is a solution $v_*$ to the HJBE~\eqref{eq:HJBE} such that $v_* \in \mathrm{C}^1$ and
 	\begin{enumerate} 
 		\item $v_i \to v_*$, $\nabla v_i \to \nabla v_*$ both locally uniformly;
 		\item $\pi_i \to \pi_*$ pointwise, for a function $\pi_*$ satisfying~\eqref{eq:optimal policy}.
 	\end{enumerate}
 \end{theorem}
 
\begin{remark}
	\label{remark:argmax assumption}
	If the {\rm argmax}-set is a singleton (so the maximal function $u_*$ satisfying \eqref{eq:assumption:general condition for policy improvement} is unique), then Assumption~\ref{assumption:closed graph} is equivalent to continuity of $p \mapsto u_*(x, p)$ for each $x \in \mathcal{X}$ and thus implied by the continuity of $u_*$ assumed in \secref{subsection:policy improvement}! In this particular case, $\pi_*$ in Theorem~\ref{thm:true uniform convergence:2} is uniquely given by \eqref{eq:closed-form expression of optimal policy} hence continuous (i.e., $\pi_*$ satisfies our definition of a policy). For such examples, see \secsref{subsubsection:case I} and \ref{subsection:LQR}. 
\end{remark}

In summary, we have established the following convergence properties:
\begin{enumerate} [leftmargin=1.0cm, itemsep=0.25em, topsep=-0.65em]
	\item [($\mathsf{C}1$)] convergence $v_i \to v_*$ in a metric;
	\item [($\mathsf{C}2$)] locally uniform convergence $v_i \to v_*$;
	\item [($\mathsf{C}3$)] 
		locally uniform convergence $\nabla v_i \to \nabla v_*$, and \\
		pointwise convergence $\pi_i \to \pi_*$, 
\end{enumerate}
under certain conditions and the minimal assumptions made in this section and \secref{section:preliminaries}. 

\textbf{(Weak/Strong Convergence)} Theorem~\ref{thm:true convergence in metric} ensures weak convergence ($\mathsf{C}1$) under Assumption~\ref{assumption:uniqueness of fixed point} only. Theorem~\ref{thm:true uniform convergence:1} gives strong convergence ($\mathsf{C}2$), provided that the following additional conditions hold: (i) continuity of $\mathcal{T}$ in the uniform pseudometric $d_\Omega$; (ii) the convergence within the admissible space $\mathcal{V}_\mathsf{a}$: $\lim_{i \to \infty} v_i \in \mathcal{V}_\mathsf{a}$, i.e., ${\hat v}_* \in \mathcal{V}_\mathsf{a}$. We note that
	\begin{enumerate}
		\item the unique fixed point $v^*$ therein and in Assumption~\ref{assumption:uniqueness of fixed point} is a solution $v_*$ to the HJBE~\eqref{eq:HJBE} (Proposition~\ref{prop:a fixed point is a solution to HJBE});
		\item whenever ($\mathsf{C}2$) is true, both $v^*$ and $v_*$ therein are characterized by Theorem~\ref{thm:convergence of PI} as $v^* = v_* = {\hat v_*}$. 
	\end{enumerate}

\textbf{(Stronger Convergence)} 
If the convergence described in Assumptions~\ref{assumption:closed graph} and \ref{assumption:convergence of nabla v_i and pi_i} are all true, then Theorem~\ref{thm:true uniform convergence:2} ensures the stronger convergence properties ($\mathsf{C}2$) and ($\mathsf{C}3$) for $v_* = {\hat v}_* \in \mathrm{C}^1$, wherein the limit function ${\hat v}_*$ ($= \lim_{i \to \infty} v_i$) becomes a solution $v_*$ to the HJBE~\eqref{eq:HJBE}. In this case, 
	\begin{enumerate}
		\item $\mathcal{T}$ is never used, hence no assumption is imposed on $\mathcal{T}$;
		\item $\pi_*$ in ($\mathsf{C}3$) is not necessarily a policy by our definition due to its possible discontinuity (see also Remark~\ref{remark:argmax assumption});
		\item the concave Hamiltonian formulation in \secref{subsection:RL under u-AC setting} ensures $\pi_i \to \pi_*$ \emph{locally uniformly} for a \emph{policy} $\pi_*$, with both Assumptions~\ref{assumption:closed graph} and \ref{assumption:convergence of nabla v_i and pi_i}b \emph{relaxed} (e.g., Theorem~\ref{thm:true convergence}).
	\end{enumerate}
	
\Subsection{Optimality of the HJB Solution: Sufficient Conditions}{subsection:optimality:sufficient conditions}

For each type of convergence above, we provide a sufficient condition for $v_*$ in the HJBE~\eqref{eq:HJBE} to be optimal in the sense that for any given initial admissible policy $\pi_0$, $v_i \to v_*$ in the respective manner with monotonicity $v_i \leqslant v_{i+1}$ $\forall i \in \mathbb{N}$. For the optimality of $v_*$ with the stronger convergence, ($\mathsf{C}2$) and ($\mathsf{C}3$), we additionally assume that:
\begin{assumption}
	\label{assumption:uniqueness of HJB over C1}
	The solution $v_*$ to the HJBE~\eqref{eq:HJBE}, if exists, is unique over $\mathrm{C}^1$ and upper-bounded (by zero if $\gamma = 1$).
\end{assumption}
Those sufficient conditions for optimality and related discussions are presented in Appendix \secref{subsection:sc for optimality}.

\Section{Case Studies}{section:case studies}

With strong connections to RL and optimal control in CTS, this section studies the special cases of the general RL problem formulated in \secref{section:preliminaries}. In those case studies, the proposed PI methods and theory for them are simplified and improved as summarized in Table~\ref{table:summary of case studies}. The blanks in Table~\ref{table:summary of case studies} are filled with ``Assumed'' or, in simplified policy improvement sections, ``No''. The connections to stability theory in optimal control are also made in this section. The optimality of the HJB solution $(v_*, \pi_*)$ for each case is studied and summarized in \secref{appendix:subsection:optimality case studies}; more case studies are given in \secref{appendix:section:additional case studies}.

For simplicity, we let $f^x(u) \doteq f(x,u)$ and $r^x(u) \doteq r(x,u)$ for $x \in \mathcal{X}$. Both $f^x$ and $r^x$ are continuous for each $x$ since so are $f$ and $r$. The mathematical terminologies employed in this section are given in \secref{appendix:notations and terminologies}, with a summary of notations. \pagebreak

\begin{table*}[htb!]
\scriptsize
\caption{Summary of Case Studies: Relaxations and Simplifications of the Assumptions and Policy Improvement}
\label{table:summary of case studies}
\begin{center}
\begin{tabular} {l||c|c|c|c|c|c|}
    \toprule 
    \\[-1.65em]
	\multirow{2}{*}{Problem Formulation} 
	& \multirow{2}{*}{\makecell{\vspace{-12.5pt}\\Concave \\ $\!\!$Hamiltonian$\!\!$}}
	& \multicolumn{2}{c|}{Discounted RL with bounded }
	& \multirow{2}{*}{\makecell{\vspace{-12.5pt}\\RL with local\\ $\!\!$Lipschitzness$^{(\mathrm{b})}\!\!\!$}}
	& \multirow{2}{*}{\makecell{\vspace{-12.5pt}\\Nonlinear \\ $\!\!$optimal control$^{(\mathrm{b})}$$\!\!$}}  
	& \multicolumn{1}{c}{\multirow{2}{*}{LQR}}
	\cr
	\cline{3-4}
	& 
	& VF$^{(\mathrm{a})}$
	& state trj.
	&
	&
	&
	\multicolumn{1}{c}{}
	\cr
	\hline 
	Section  &  \ref{subsection:RL under u-AC setting} / \ref{subsubsection:case III} & \ref{subsection:discounted RL with bounded v}  & 
	\ref{subsection:discounted RL with bounded state trj} & 
	\ref{subsection:RL with local Lipschitzness} &
	\ref{subsection:nonlinear optimal control} & \multicolumn{1}{c}{\ref{subsection:LQR}}
	\cr
    \hdrule
    \cline{4-7}
    Global existence and uniqueness of state trjs. & \multicolumn{2}{c}{\multirow{3}{*}{\vspace{-1.5cm}}} & \multicolumn{3}{|c|}{True, conditionally$^{(\mathrm{c})}$} & \multirow{6}{*}{\vspace{-2cm} True}
	\bigstrut\cr	
	\cline{1-1}\cline{3-6}
    Existence of an admissible policy, i.e., $\mathcal{V}_\mathsf{a} \neq \varnothing$ & \multicolumn{1}{c|}{} & \multicolumn{2}{c|}{True} & \multicolumn{2}{c|}{} &
   	\bigstrut \cr	
	\cline{1-1}\cline{3-4}
    \makecell[l]{$\mathrm{C}^1$-regularity \eqref{eq:assumption:every admissible VF is C1} and continuity of admissible VFs} & \multicolumn{1}{c|}{} & \makecell{$\;$\\[-1.7em] Continuous, \\ conditionally$^{(\mathrm{b})}$\\[-0.5em]} & \multicolumn{3}{c|}{} &   \cr
	\cline{1-1}\cline{3-3}
    \makecell[l]{Assumptions~\ref{assumption:uniqueness of fixed point} and \ref{assumption:uniqueness of HJB over C1} (w.r.t. $\mathcal{T}$ and the HJBE)}&  \multicolumn{5}{c|}{} & 
    \bigstrut \cr
	\cline{1-1}\cline{2-2}
    Existence of a continuous maximal function $u_*$  & True &  \multicolumn{4}{c|}{} &
    \bigstrut \cr
	\cline{1-1}\cline{2-4}\cline{6-6}
    \makecell[l]{Boundary conditions~\eqref{eq:uniqueness condition for v} and \eqref{eq:boundary condition for PI}}  & \multicolumn{1}{c|}{} & \makecell{$\;$\\[-1.7em] True, \\ conditionally$^{(\mathrm{d})}$\\[-0.5em]} & \multicolumn{1}{c|}{True} & \multicolumn{1}{c|}{} & \makecell{$\;$\\[-1.7em] True, \\ conditionally$^{(\mathrm{e})}$\\[-0.5em]} &
    \cr  
    \cline{1-4}\cline{6-6}
    Assumptions~\ref{assumption:closed graph} and \ref{assumption:convergence of nabla v_i and pi_i} for $\mathsf{(C2)}$ and $\mathsf{(C3)}$ &  \multicolumn{1}{c|}{Relaxed$^{\mathrm{(f)}}$} & \multicolumn{4}{c|}{} & 
    \bigstrut\cr  
    \cline{1-1}\cline{2-2}\cline{7-7}
    Simplified policy improvement  & Yes & \multicolumn{4}{c|}{} & Yes
    \bigstrut \cr  
    \cline{2-2}\cline{7-7}
    \bottomrule
\end{tabular}
\end{center}
$\;$\\
${}^{(\mathrm{a})}$ Once the initial VF \smash{$v_{\pi_0}$} in the PI methods is bounded, so is \smash{$v_{\pi_i}$} for all $i \in \mathbb{N}$; a stronger case is when the reward function $r$ is bounded. \\
${}^{(\mathrm{b})}$ $f$ and/or $f_\pi$ is assumed locally Lipschitz. \\
${}^{(\mathrm{c})}$ True if $f_\pi$ is locally Lipschitz in \secref{subsection:discounted RL with bounded state trj} and in addition, in \secsref{subsection:RL with local Lipschitzness} and \ref{subsection:nonlinear optimal control}, if $\pi \in \Pi_\mathsf{a}$ (see the modified definitions of $\Pi_\mathsf{a}$ therein). \\
${}^{(\mathrm{d})}$ True if $v$ and $v_i$ are bounded --- this makes sense only when the target VF is bounded. \\
${}^{(\mathrm{e})}$ See Theorems~\ref{thm:uniqueness under gas} (attractiveness and asymptotic stability) and \ref{thm:uniqueness condition in optimal control} (conditions in Theorem~\ref{thm:boundary condition:sufficiency} of \secref{appendix:sec:replacement of bdy condition}), both for \eqref{eq:uniqueness condition for v}. See also Theorem~\ref{thm:fundamental properties in optimal control} for \eqref{eq:boundary condition for PI}.\\
${}^{(\mathrm{f})}$ Assumptions~\ref{assumption:closed graph} and \ref{assumption:convergence of nabla v_i and pi_i} are reduced to Assumption~\ref{assumption:convergence of nabla v_i and pi_i}a (see Theorems~\ref{thm:true uniform convergence:2}, \ref{thm:true convergence}, and \ref{thm:true convergence:nonaffine case}).
\end{table*}

\Subsection{Concave Hamiltonian Formulations}{subsection:RL under u-AC setting}

Here, we study the special settings of the reward function~$r$, which make the function $u \mapsto h(x, u, p)$ strictly concave and $\mathrm{C}^1$ (after some input-transformation in the cases of non-affine dynamics). In these cases, policy improvement maximizations \eqref{eq:assumption:general condition for policy improvement}, \eqref{eq:assumption:policy improvement}, and \eqref{eq:optimal policy} become convex optimizations whose solutions exist and are given in closed-forms.	We will see that this dramatically simplifies the policy improvement itself and strengthen the convergence properties. Although we focus on certain classes of dynamics --- the input-affine and then a class of non-affine ones --- the idea is extendible to a general nonlinear system of the form~\eqref{eq:controlled system} (see \secref{subsubsection:case III} for such an extension).

\subsubsection{Case I: Input-affine Dynamics}
\label{subsubsection:case I}
First, consider the following case: for each $x \in \mathcal{X}$, 
\begin{enumerate}
	\item $f^x$ \emph{is affine}, i.e., the input-coupling term $f_{\mathsf{c}}(x,u)$ in the decomposition \eqref{eq:decomposition of f(x,u)} is linear in $u$, so that the dynamics $f$ can be represented as 
			\begin{equation}
				f(x,u) = f_{\mathsf{d}}(x) + F_{\mathsf{c}}(x)u
				\label{eq:input-affine dynamics}
			\end{equation}
		for a matrix-valued continuous function $F_{\mathsf{c}}: \mathcal{X} \to \mathbb{R}^{l \times m};\!\!\!\!\!$
	\item $r^x$ \emph{is strictly concave and represented by} 
		\begin{equation}
			r(x,u) = \mathfrak{r}(x) - \mathfrak{c}(u)
			\label{eq:strictly concave reward}
		\end{equation}
	where $\mathfrak{r}: \mathcal{X} \to \mathbb{R}$ is continuous, $\mathfrak{c}: \mathcal{U} \to \mathbb{R}$ is \emph{strictly convex} and $\mathrm{C}^1$, and its  gradient $\nabla \mathfrak{c}$ is \emph{surjective}, i.e., $\nabla \mathfrak{c}(\Int{\mathcal{U}}) = \mathbb{R}^{1 \times m}$. 
	Here, $\Int{\mathcal{U}}$ is the interior of $\mathcal{U}$. 
\end{enumerate}

This framework includes those in (\citealp{Rekasius1964,Beard1997,Doya2000, Abu-Khalaf2005, Vrabie2009b}; \citealp*{Lee2015}) as special cases;
it still contains a broad class of dynamics such as Newtonian dynamics (e.g., robot manipulator and vehicle models). 
In this case, the mapping  $u \mapsto h(x, u, p)$ is strictly concave and $\mathrm{C}^1$ (see the definition \eqref{eq:def:Hamiltonian} of $h$). Hence, as mentioned in \secref{subsection:policy improvement} (see \secref{appendix:section:maximal function} for the behind theory), the unique maximal function $u_* \equiv u_*(x, p)$ satisfying \eqref{eq:assumption:general condition for policy improvement} corresponds to the unique regular point $\bar u \in \mathcal{U}^\circ$ s.t.
$
	- \nabla \mathfrak{c}(\bar u) + p \, F_{\mathsf{c}}(x) = 0,
$
where the gradient $\nabla \mathfrak{c}^\mathsf{T}: \Int{\mathcal{U}} \to \mathbb{R}^m$ is strictly monotone and bijective on its domain $\Int{\mathcal{U}}$ (see \secref{appendix:proofs:case studies}). Rearranging it w.r.t. $\bar u$, we obtain the closed-form solution $u_*$ of \eqref{eq:assumption:general condition for policy improvement}: 
\begin{equation}
	u_*(x, p) = \sigma \big ( F_{\mathsf{c}}^\mathsf{T}(x) \, p^\mathsf{T} \big ),
	\label{eq:u bar without max}
\end{equation}
where $\sigma \doteq (\nabla \mathfrak{c}^\mathsf{T})^{-1}$ denotes the inverse of $\nabla \mathfrak{c}^\mathsf{T}$. Here, the mapping $\sigma : \mathbb{R}^m \to \Int{\mathcal{U}}$ is also strictly monotone and continuous (see \secref{appendix:proofs:case studies}); thus, $u_*$ is continuous. Substituting \eqref{eq:u bar without max} into \eqref{eq:closed-form expression of the maximal policy under uniqueness}, we obtain the \emph{unique closed-form solution} of the policy improvement maximization \eqref{eq:assumption:policy improvement} (or \eqref{eq:partially model-free policy improvement}):
\begin{equation}
	\pi'(x) = \sigma \big ( F_{\mathsf{c}}^\mathsf{T}(x) \nabla v_\pi^\mathsf{T}(x) \big )
	\label{eq:policy improvement without max}
\end{equation}
	a.k.a. the value-gradient-based (VGB) greedy policy update \citep{Doya2000}. 
	This simplifies the policy improvement of DPI and IPI (and their variants) shown in \secref{section:PIs} as 
\begin{description}
	\item \textbf{Policy Improvement:} update the next policy $\pi_{i}$ by
\begin{equation*}
	\hspace{-2em}
	\pi_{i}(x) = \sigma \big ( F_{\mathsf{c}}^\mathsf{T}(x) \nabla v_i^\mathsf{T}(x) \big ).
\end{equation*}
\end{description}
Similarly, the HJB policy $\pi_*$ satisfying \eqref{eq:optimal policy} is also \emph{uniquely} given by \eqref{eq:closed-form expression of optimal policy} and \eqref{eq:u bar without max}, i.e., \smash{$\pi_*(x) = \sigma \big (F_\mathsf{c}^\mathsf{T} (x) \nabla v_*^\mathsf{T} (x)\big )$}, under \eqref{eq:input-affine dynamics} and \eqref{eq:strictly concave reward}. Moreover, Theorem~\ref{thm:true uniform convergence:2} can be simplified and strengthened, with Assumptions~\ref{assumption:closed graph} and \ref{assumption:convergence of nabla v_i and pi_i}b \emph{relaxed}.

 \begin{theorem}
 	\label{thm:true convergence}
 	Under \eqref{eq:input-affine dynamics}, \eqref{eq:strictly concave reward}, and Assumption~\ref{assumption:convergence of nabla v_i and pi_i}a, ${\hat v}_*$ is a solution $v_*$ to the HJBE~\eqref{eq:HJBE} such that $v_* \in \mathrm{C}^1$ and 
 	$
 		v_i \to v_*, \nabla v_i \to \nabla v_*, \text{ and } \pi_i \to \pi_*,
 	$
 	all locally uniformly.
 \end{theorem}
 
\begin{remark}
\label{remark:the needs for locally uniform convergence}	
Assumption~\ref{assumption:convergence of nabla v_i and pi_i}a is necessary for convergence in Theorem~\ref{thm:true convergence} and, in fact so are similar uniform convergence assumptions on $\langle \nabla v_i \rangle$ for convergence given in the existing literature on PIs for optimal control (e.g., \citealp{Saridis1979,Beard1997}; \citealp*{Murray2003}; \citealp{Abu-Khalaf2005,Bian2014} to name a few). This is due to the fact that even the uniform convergence of $v_i$ (e.g., Theorem~\ref{thm:convergence of PI}c) implies nothing about the convergence of its gradient $\nabla v_i$; it cannot even ensure the differentiability of the limit function ${\hat v}_*$ \citep*{Rudin1964,Thomson2001}. Here, Assumption~\ref{assumption:convergence of nabla v_i and pi_i}a or any type of (uniform) convergence of $\langle \nabla v_i \rangle$ is by no means trivial to prove, and thus its relaxation remains as a future work (even in the optimal control frameworks in the existing literature, which are similar to that in \secref{subsection:nonlinear optimal control} under \eqref{eq:input-affine dynamics}--\eqref{eq:strictly concave reward}, to the best authors' knowledge).
\end{remark}

One way to effectively take the input constraints into considerations is to construct the action space~$\mathcal{U}$ as 
	\[
		\mathcal{U} = \big \{ u \in \mathbb{R}^m :  | u_j | \leq {u}_{\mathsf{max},j}, \; 1 \leq j \leq m \big \}	,
	\]
where $u_j \in \mathbb{R}$ is the $j$th element of $u$, and ${u}_{\mathsf{max}, j} \in (0, \infty]$ is the corresponding physical constraint. In this case, $\mathfrak{c}$ in \eqref{eq:strictly concave reward} can be chosen as 
	\vspace{-0.1em}
	\begin{align}
		\mathfrak{c}(u) = \smash{\lim_{v \to u}} \; \smash{\int_0^v} (s^{\mathsf{T}})^{-1} (\mathfrak{u}) \cdot \Gamma \, d\mathfrak{u}
		\label{eq:integral formula of S}
	\end{align}
	for a positive definite matrix $\Gamma \in \mathbb{R}^{m \times m}$ and a continuous function $s: \mathbb{R}^m \to \Int{\mathcal{U}}$ that is strictly monotone, odd, and bijective and makes $\mathfrak{c}(u)$ in \eqref{eq:integral formula of S} finite at any point~$u$ on the boundary $\Bdy{\mathcal{U}}$;\footnote{$\Bdy{\mathcal{U}} = \{u \in \mathbb{R}^m: u_j = u_{\mathsf{max},j} \textrm{ for some } j = 1,2,\cdots, m\} $.} 
	This formulation gives a closed-form expression 	
	$
		\sigma(\mathfrak{u}) \! = \! (\nabla \mathfrak{c}^\mathsf{T})^{\!-1} (\mathfrak{u})  = s(\Gamma^{-1} \mathfrak{u})
	$
	and includes the sigmoidal examples (Cases 1 and 2) in \secref{section:simulation} as special cases --- see also \citep{Doya2000,Abu-Khalaf2005} for similar sigmoidal examples. Another well-known example is the unconstrained problem: 
	\begin{equation}
		\mathcal{U} = \mathbb{R}^m \;\, (u_{\mathsf{max},j} = \infty \textrm{ for each } j) \textrm{ and } s(\mathfrak{u}) = \mathfrak{u} /2, 
		\label{eq:unconstrained case}
	\end{equation}
	by which \eqref{eq:integral formula of S} becomes $\mathfrak{c}(u) = u^{\mathsf{T}} \Gamma u$; the LQR case in \secref{subsection:LQR} with $E = 0$ shows such an example. 
	\vspace{-0.25em}

\begin{remark}
	\label{remark:generalization of u-AC RL problem:1}
	Once $r^x$ is strictly concave for each $x \in \mathcal{X}$, 
	the reward function $r$ can be always represented as 
	\begin{equation}
			r(x,u) = \mathfrak{r}(x) - \mathfrak{c}(x,u),	
			\label{eq:strictly concave reward:general}
	\end{equation}
	where $\mathfrak{r}$ and $\mathfrak{c}$ are continuous; $\mathfrak{c}^x \doteq \mathfrak{c}(x, \cdot)$ for each $x \in \mathcal{X}$ is strictly convex. In this general case, if $\mathfrak{c}^x$ is $\mathrm{C}^1$ and its gradient $\nabla \mathfrak{c}^x$ is surjective for each $x \in \mathcal{X}$, then	the unique maximal function $u_*$ and policy $\pi'$ over $\pi \in \Pi_\mathsf{a}$ can be obtained in the same way to \eqref{eq:u bar without max} and \eqref{eq:policy improvement without max}	 as 
	\begin{equation*}
		\begin{cases}
		{u_*}(x, p) = \sigma^x \big ( F_{\mathsf{c}}^\mathsf{T}(x) \, p^\mathsf{T} \big )
		\\[5pt]
		\pi'(x) = \sigma^x \big ( F_\mathsf{c}^\mathsf{T}(x) \nabla v_\pi^\mathsf{T} (x) \big )
		\end{cases}
	\end{equation*}
	for the inverse $\sigma^x$ of $(\nabla \mathfrak{c}^{x})^\mathsf{T}$. In addition, 
	 if $(x, \mathfrak{u}) \mapsto \sigma^x(\mathfrak{u})$ is continuous, then Theorem~\ref{thm:true convergence} (specifically, Lemma~\ref{lemma:concave h} in \secref{appendix:proofs:case studies}) can be generalized with $\sigma$ replaced by $\sigma^x$. Some examples of such $\sigma^x$ are as follows. 
	\begin{enumerate}
		\item $\Gamma$ in \eqref{eq:integral formula of S} is a continuous function over $\mathcal{X}$. In this case, $\sigma^x$ is given by 
		$
			\sigma^x(\mathfrak{u}) = s(\Gamma^{-1}(x) \cdot \mathfrak{u})
		$. 
		\item In the LQR setting (\secref{subsection:LQR}), $\sigma^x(\mathfrak{u}) = \Gamma^{-1} ( \mathfrak{u} / 2  - E^\mathsf{T} x)$	and whenever $E = 0$, $\sigma^x(\mathfrak{u}) = \sigma(\mathfrak{u}) = \Gamma^{-1}\mathfrak{u} / 2$.
	\end{enumerate}
\end{remark}

\subsubsection{Case II: a Class of Non-affine Dynamics}
\label{subsubsection:case II}

If $f^x$ is not affine, then the choice of the reward function~$r$ is critical. Provided in \secref{appendix:subsection:a pathological ex} is such an example, where a choice of $r$ in the form of \eqref{eq:strictly concave reward} and \eqref{eq:integral formula of S} fails to give closed-form solutions to policy improvement and the HJBE~\eqref{eq:HJBE}. Moreover, in the unconstrained case, such a choice of $r$ may result in a pathological Hamiltonian $h$ as shown in \secref{appendix:subsection:a pathological ex}.

Such pathological behavior and difficulty, on the other hand, can be avoided for the non-affine dynamics $f$ of the form:
	\begin{equation}
			f(x,u) = f_{\mathsf{d}}(x) + F_\mathsf{c}(x) \varphi(u),  
			\label{eq:a class of non-affine dynamics}
	\end{equation}
	where $\varphi: \mathcal{U} \to \mathcal{A} \subseteq \mathbb{R}^m$ is a continuous function from the action space~$\mathcal{U}$ to another action space $\mathcal{A}$ 	and has its inverse $\varphi^{-1}: \Int{\mathcal{A}} \to \Int{\mathcal{U}}$ between the interiors. Note that \eqref{eq:a class of non-affine dynamics} corresponds to the decomposition \eqref{eq:decomposition of f(x,u)} with the input-coupling part $f_\mathsf{c}(x, u) = F_\mathsf{c}(x) \varphi(u)$ and includes the input-affine dynamics~\eqref{eq:input-affine dynamics} as a special case $\varphi(u) = u$ and $\mathcal{A} = \mathcal{U}$. 

Motivated by \citet*{Kiumarsi2016}, we propose to set the reward function~$r$ under \eqref{eq:a class of non-affine dynamics} as 
	\begin{equation}
		r(x, u) = \mathfrak{r}(x) - \mathfrak{c}(\varphi(u)),
		\label{eq:a class of reward fnc for non-affine dynamics}
	\end{equation}
	where $\mathfrak{r}: \mathcal{X} \to \mathbb{R}$ and $\mathfrak{c}: \mathcal{A} \to \mathbb{R}$ are functions that satisfy the properties of $\mathfrak{r}$ and $\mathfrak{c}$ in \eqref{eq:strictly concave reward} but w.r.t. the action space $\mathcal{A}$ in place of $\mathcal{U}$. Under \eqref{eq:a class of non-affine dynamics} and \eqref{eq:a class of reward fnc for non-affine dynamics}, the proposed PIs have the following properties, extended from \secref{subsubsection:case I} (e.g., from Theorem~\ref{thm:true convergence}), although the argmax-set ``$\Argmax_{u \in \mathcal{U}} h(x, u, p)$'' in this case may not be a singleton (another maximizer may exist on the boundary $\Bdy{\mathcal{U}}$). 
	
\begin{theorem}
 	\label{thm:true convergence:nonaffine case}
 	Let $\tilde \sigma (\mathfrak{u}) \doteq \varphi^{-1} [ \sigma(\mathfrak{u})]$. Under \eqref{eq:a class of non-affine dynamics} and \eqref{eq:a class of reward fnc for non-affine dynamics}, 
 	\begin{enumerate}
 		\item [\textbf{\emph{a.}}] a maximal policy $\pi'$ over $\pi \in \Pi_\mathsf{a}$ is explicitly given by 
 			\[
 				\pi'(x) = {\tilde \sigma} \big ( F_{\mathsf{c}}^\mathsf{T}(x) \nabla v_\pi^\mathsf{T}(x) \big );
 			\]
 		\item [\textbf{\emph{b.}}] if the policies are updated in policy improvement by 
	\begin{equation*}
		\pi_{i}(x) =
		\tilde \sigma \big ( F_{\mathsf{c}}^\mathsf{T}(x) \nabla v_i^\mathsf{T}(x) \big ),
	\end{equation*}
	then under Assumption~\ref{assumption:convergence of nabla v_i and pi_i}a, 
	${\hat v}_*$ is a solution $v_*$ to the HJBE \eqref{eq:HJBE} s.t. $v_* \! \in \mathrm{C}^1$ and 
	$v_i  \to v_*$, $\nabla v_i \to \!\nabla v_*$, and $\pi_i \to \pi_*$,
	all locally uniformly, where 
	\[
		\pi_*(x) = \tilde \sigma \big ( F_\mathsf{c}^\mathsf{T} (x) \nabla v_*^\mathsf{T} (x) \big ).
	\]
 	\end{enumerate}	
\end{theorem}

Similarly to Remark~\ref{remark:generalization of u-AC RL problem:1}, the results are extendible to the general case where $\varphi$ and/or $\mathfrak{c}$ depends on the state $x \in \mathcal{X}$. 

\Subsection{Discounted RL with Bounded VF}{subsection:discounted RL with bounded v}

Boundedness of a VF is stronger than admissibility. Likewise, when discounted, a bounded VF can have stronger properties and statements than admissible ones. One example is continuity in the next proposition; the extension to the general cases ($\gamma = 1$ and/or $v_\pi \in \mathcal{V}_\mathsf{a}$) is by no means trivial.

\begin{proposition}
	\label{prop:continuity of VFs}
	Suppose that $f_\pi$ is locally Lipschitz and that $\gamma \in (0,1)$. 
	Then, $v_\pi$ is continuous if $v_\pi$ is bounded. 
\end{proposition}

Continuity is a necessary condition to be $\mathrm{C}^1$.
In the RL problem formulation in \secref{section:preliminaries}, we have assumed the $\mathrm{C}^1$-regularity \eqref{eq:assumption:every admissible VF is C1} and thereby continuity on every admissible VF, but no proof was provided regarding them; Proposition~\ref{prop:continuity of VFs} above bridges this gap when the VF is discounted and bounded. In this case, the boundary condition~\eqref{eq:uniqueness condition for v} is also true as follows.

\begin{proposition}
	\label{prop:boundary condition true when discounted and bounded}
	If $v:\mathcal{X} \to \mathbb{R}$ is bounded and $\gamma \in (0, 1)$, then $v$ satisfies the boundary condition~\eqref{eq:uniqueness condition for v} for any policy~$\pi$. 
\end{proposition}

Moreover, when the VF is discounted and bounded, the BE \eqref{eq:Bellman eq for v} (resp. \eqref{eq:differential BE for v}) has the unique solution $v = v_\pi$ over all bounded (resp. bounded $\mathrm{C}^1$) functions, and the boundedness is preserved under the policy improvement operation.

\begin{corollary}
	\label{cor:fundamental policy eval and imp thm for bounded v}
		Let $\gamma \in (0, 1)$ and $\pi$ be a policy. Then, 
		\begin{enumerate} 
			\item 
			if there exists a bounded function~$v$ satisfying the integral BE \eqref{eq:Bellman eq for v} or with $v \in \mathrm{C}^1$, the differential BE \eqref{eq:differential BE for v}, then $v_\pi$ is bounded (hence, admissible) and $v = v_\pi$.
			\item 
			if $v_\pi$ is bounded (hence, admissible), then so is $v_{\pi'}$ and we have $\pi \preccurlyeq \pi'$, where $\pi'$ is a maximal policy over $\pi$.
		\end{enumerate}
\end{corollary}

In fact, if the reward function $r$ is bounded, then so is the VF for \emph{any} given policy (so long as the state trajectory $t \mapsto X_t$ exists); hence the above results become stronger as follows.

\begin{assumption}
	\label{assumption:bounded R}
	$r$ is bounded and $\gamma \in (0, 1)$. 
\end{assumption}

\begin{corollary}
	\label{cor:properties when R is bounded}
	Under Assumption~\ref{assumption:bounded R}, 
	the followings hold for any given policy~$\pi$ and any maximal policy $\pi'$ over $\pi$: 
	\begin{enumerate} 
		\item $v_\pi$ and $v_{\pi'}$ are bounded (hence, admissible); $\pi \preccurlyeq \pi'$;
		\item $v_\pi$ is continuous if $f_\pi$ is locally Lipschitz;
		\item if a bounded function~$v$ satisfies the integral BE \eqref{eq:Bellman eq for v} or with $v \in \mathrm{C}^1$, the differential BE \eqref{eq:differential BE for v}, then $v = v_\pi$.
	\end{enumerate}
\end{corollary}

For a given policy~$\pi$, the VF properties in Corollary~\ref{cor:properties when R is bounded} are also true when $\gamma \in (0, 1)$ and $r_\pi$ (but not necessarily~$r$) is bounded (see and slightly modify the proof of Corollary~\ref{cor:properties when R is bounded} in \secref{appendix:proofs:case studies}). In this case (and the general cases where $\gamma \in (0, 1)$, and $v_\pi$ is bounded somehow), Proposition~\ref{prop:continuity of VFs}, Corollary~\ref{cor:fundamental policy eval and imp thm for bounded v}, and mathematical induction show that $\mathcal{T}^N v_\pi$ for any $N \in \mathbb{N}$ satisfies the properties of the VFs in Corollary~\ref{cor:properties when R is bounded}. In other words, if for the initial policy~$\pi_0$, 

\MyAssumption{$r_{\pi_0}$ (or the VF $v_{\pi_0}$) is bounded and $\gamma \in (0, 1)$}

 which is weaker than Assumption~\ref{assumption:bounded R}, then the sequences $\langle v_i \rangle$  and $\langle \pi_i \rangle$ generated by DPI or IPI satisfy: for any $i \in \mathbb{N}$, 
	\begin{enumerate}
		\item $v_i = v_{\pi_{i-1}}$,
		\item $v_{\pi_{i-1}}$ is bounded and $\pi_{i-1} \preccurlyeq \pi_i$,
		\item $v_{\pi_{i-1}}$ is continuous if $f_{\pi_{i-1}}$ is locally Lipschitz,
	\end{enumerate}
under the boundedness of each $v_i$ to ensure the boundary condition~\eqref{eq:boundary condition for PI} to be true by Proposition~\ref{prop:boundary condition true when discounted and bounded}.
	
\SetKwRepeat{Repeat}{\textbf{repeat} {\normalfont (under $\gamma \in (0, 1)$)}}{\textbf{until }}

 \begin{algorithm2e}[h!]
 \begin{spacing} {1.1}
 \DontPrintSemicolon
 \BlankLine
  \vspace{10pt}
  \nl \textbf{Initialize:} 
 	\smash{$\begin{cases} 
			\textrm{$\pi_0$, an initial policy s.t. $v_{\pi_0}$ is bounded;} \\
 			\textrm{$\Delta t > 0$, a small time step ($0 < \Delta t \ll 1$);}\!\!\!\!\!	\\
 			\textrm{$i \leftarrow 1$;} 
  	  \end{cases}$}\;
  \vspace{20pt}
 \nl \Repeat{convergence is met.}
 {
 $\;$\\
 \nl\textbf{Policy Evaluation:} given policy $\pi_{i-1}$, find a \emph{bounded$\!\!\!\!$} $\mathrm{C}^1$ function $V_i: \mathcal{X} \to \mathbb{R}$ such that for all $x \in \mathcal{X}$,   \\[7.5pt]
 	\begin{itemize} [leftmargin=0.3cm]
 		\item [] \textbf{(IPI Variant):} $V_i(x) \approx \mathbb{G}_{\pi_{i-1}}^x \big [ R_{0} + {\gamma}_\mathsf{d} \! \cdot \! V_i (X_{\! \Delta t}) \big ]$; 
			\\[7.5pt]
 		\item [] \textbf{(DPI Variant):} for $\alpha_\mathsf{d} \doteq \alpha \Delta t$ ($= - \ln \gamma_\mathsf{d}$),  
 			\\[-7.5pt]
 			\[
 				\alpha_\mathsf{d} \! \cdot \! V_i (x) = 
					h \big ( x, \pi_{i-1}(x), \Delta t \cdot \! \nabla V_i(x) \big );
			\] 
 	\end{itemize} 
 	$\;$\\[-7.5pt]
 \nl \textbf{Policy Improvement:}  find a policy $\pi_{i}$ s.t. for all $x \in \mathcal{X}$,$\!\!\!\!\!\!\!\!\!\!\!\!$ 
 \begin{align*}
 	\notag \\[-20pt]
	 	\pi_i(x) \in \smash{\Argmax_{u \in \mathcal{U}}} \big [ \, r(x,u) + \Delta t \cdot \! \nabla V_i (x) f_{\mathsf{c}}(x,u) \big ]; 
 	\notag \\[-17.5pt]	 	
 \end{align*}
 \nl $i \leftarrow i+1$;
 }
 \end{spacing}
 \caption{Variants of IPI and DPI with Bounded $\smash{v_{\pi_0}}$}
 \label{algorithm:Variants under bounded v}
\end{algorithm2e}
	
Algorithm~\ref{algorithm:Variants under bounded v} shows the respective variants of IPI and DPI when $\gamma \in (0, 1)$ and the VF $v_{\pi_0}$ w.r.t. the initial policy~$\pi_0$ is bounded. Here, the boundedness of $v_{\pi_0}$ can be made by that of $r$ or $r_{\pi_0}$. In policy evaluation, the variants of IPI and DPI solve, for $V_i \doteq v_i / \Delta t$, the discretized BE~\eqref{eq:1-step BE} and the differential BE~\eqref{eq:differential BE in DPI}, respectively; the other steps of both variants are the same and derived from their originals (Algorithms~\ref{algorithm:DPI} and \ref{algorithm:IPI}) by replacing $\eta$ and $v_i$ with the small time step $\Delta t$ and $\Delta t \cdot V_i$, respectively. Implementation examples of both variants in Algorithm~\ref{algorithm:Variants under bounded v} are given and discussed in \secref{section:simulation} with several types of (bounded) reward functions $r$ and a function approximator for $V_i$. The other types of variants (e.g., IPI with the $n$-step prediction~\eqref{eq:n-step BE}) can be also obtained by replacing the BE in policy evaluation with one of the other BEs in \secref{section:PIs} (e.g., \eqref{eq:n-step BE}). Since these variants all assume both $\gamma \in (0, 1)$ and the boundedness of the initial VF $v_{\pi_0}$, it is sufficient to find a \emph{bounded} $\mathrm{C}^1$ function $V_i$ in each policy evaluation (line~3) for holding the properties above regarding $\langle v_i \rangle$ and $\langle \pi_i \rangle$ without assuming the boundary condition~\eqref{eq:boundary condition for PI} on $v_i$ ($= \Delta t \cdot \! V_i$).

\Subsection{RL with Local Lipschitzness}{subsection:RL with local Lipschitzness}

Let 
$
\begin{cases}
		\Pi_\mathsf{Lip} \doteq \text{the set of all \emph{locally Lipschitz} policies,}
		\\[5pt]
		\mathrm{C}^1_\mathsf{Lip} \doteq 
		\{ v \in \mathrm{C}^1 : \nabla v \text{ is locally Lipschitz} \}.
\end{cases}
$\vspace{-0.5em}

In \secsref{subsection:RL with local Lipschitzness} and \ref{subsection:nonlinear optimal control}, we consider the RL problems, where 
\MyAssumption{The dynamics~$f$ and the maximal function~$u_*$ in \eqref{eq:assumption:general condition for policy improvement} are locally Lipschitz,}
and always use the notations $\pi'$ and $\pi_*$ to 
denote the maximal and HJB policies \emph{given by \eqref{eq:closed-form expression of the maximal policy under uniqueness} and \eqref{eq:closed-form expression of optimal policy}}, respectively. 

The Assumption implies continuity of $f$ and $u_*$ and ensures:
 \begin{enumerate}
 	\item $\pi'$ and $\pi_*$ are locally Lipschitz (i.e., $\pi', \pi_* \in \Pi_\mathsf{Lip}$) so long as $v_\pi$ in \eqref{eq:closed-form expression of the maximal policy under uniqueness} and $v_*$ in \eqref{eq:closed-form expression of optimal policy} are $\mathrm{C}^1_\mathsf{Lip}$, respectively;
 	\item the dynamics~$f_\pi$ under $\pi \in \Pi_\mathsf{Lip}$ is locally Lipschitz, and thereby the state trajectory $t \mapsto \mathbb{G}_\pi^x[ X_t ]$ for each $x \in \mathcal{X}$ is uniquely defined and $\mathrm{C}^1$ over the maximal existence interval $[0, t_\mathsf{max}(x; \pi)) \subseteq \mathbb{T}$ (see \citealp[Section~3.1 and Theorem~3.1 therein]{Khalil2002}). 
 \end{enumerate}
Here, $t_\mathsf{max}(x; \pi) \in (0, \infty]$ is defined for and depends on both initial state $x$ and $\pi \in \Pi_\mathsf{Lip}$; whenever $t_\mathsf{max}(x;\pi) < \infty$, 
\[
	\mathbb{G}_\pi^x ( \| X_t \|) \to \infty \textrm{ as }
	t \to t_\mathsf{max}(x; \pi).
\]
To circumvent this finite-time explosion issue, 
we set $v_\pi(x)$ to ``$- \infty$'' whenever $t_\mathsf{max}(x; \pi)$ is finite, that is, redefine $v_\pi$ as 
\begin{equation}
	v_\pi(x) \doteq 
		\begin{cases}
			\mathbb{G}_\pi^x 
			\bigg [ {\displaystyle \int_0^\infty} \gamma^t \cdot R_t \, dt \bigg ] 
			\textrm{ if } t_\mathsf{max}(x; \pi) = \infty,\!\!\!\!\!\!\!\!
			\\[12.5pt]
			\; - \infty, \quad \textrm{ otherwise.}
		\end{cases}
	\label{eq:def:VF under local Lipschitzness}
\end{equation}
Here, existence and uniqueness of the state trajectories were not assumed; $t_\mathsf{max}(\, \cdot \,; \pi)$ and thus $v_\pi$ in \eqref{eq:def:VF under local Lipschitzness} are well-defined as long as $\pi \in \Pi_\mathsf{Lip}$. Hence, with slight abuse of notation, we restrict the admissible sets $\Pi_\mathsf{a}$ and $\mathcal{V}_\mathsf{a}$ by redefining them as 
\[
	\begin{split}
		\Pi_\mathsf{a} &\doteq \{ \pi \in \Pi_\mathsf{Lip}: v_\pi(x) \textrm{ is finite for all } x\in \mathcal{X} \}, 
		\\[2.5pt]
		\mathcal{V}_\mathsf{a} &\doteq \{ v_\pi : \pi \in \Pi_\mathsf{a} \}.
	\end{split}
\]
Note that for each $x \in \mathcal{X}$, the value $v_\pi(x)$ is finite and the state trajectory $t \mapsto \mathbb{G}_\pi^x [ X_t ]$ is defined uniquely and $\mathrm{C}^1$ over the entire time interval $\mathbb{T}$ if $\pi \in \Pi_\mathsf{a}$ (or equivalently, if $v_\pi \in \mathcal{V}_\mathsf{a}$). Here, the global existence of the unique state trajectories was assumed in the general RL problem formulated in \secref{section:preliminaries} but now is encapsulated by admissibility. 

In what follows, we provide the policy improvement theorem extended from Theorem~\ref{thm:policy improvement thm}, without assuming any existence and uniqueness of the state trajectories, but under

\MyAssumption{$\mathcal{V}_\mathsf{a} \subset \smash{\mathrm{C}^1_\mathsf{Lip}}$}

to ensure the maximal policy $\pi' \in \Pi_\mathsf{Lip}$ whenever $\pi \in \Pi_\mathsf{a}$.

\begin{theorem} [Policy Improvement]
	\label{thm:policy improvement thm in locally Lipschitz case}
	If there exist a compact subset $\Omega \subset \mathcal{X}$ and $\mathcal{K}_\infty$ functions $\rho_1$, $\rho_2$ such that for a policy $\pi \in \Pi_\mathsf{a}$,
	\begin{align*}
		\rho_1 ( \|x\|_\Omega ) \leq \widebar v - v_\pi(x) \leq \rho_2 ( \|x\|_\Omega ) \quad \forall x \in \mathcal{X},
	\end{align*}
	where $\|x\|_\Omega \doteq \inf_{y \in \Omega} \| x - y \|$, then $\pi' \in \Pi_\mathsf{a}$ and $\pi \preccurlyeq \pi'$.
\end{theorem}
	
\Subsection{Nonlinear Optimal Control}{subsection:nonlinear optimal control}

The objective of optimal control is to stabilize the system~\eqref{eq:controlled system} w.r.t. a given equilibrium point $(x_\mathsf{e}, u_\mathsf{e})$ while minimizing a given cost functional. Here, any point in $\mathcal{X} \times \mathcal{U}$ such that ${\dot x}_\mathsf{e} = f(x_\mathsf{e}, u_\mathsf{e}) \equiv 0$ is called an equilibrium point~$(x_\mathsf{e}, u_\mathsf{e})$; it can be transformed to $(0, 0)$ and thus let $(x_\mathsf{e}, u_\mathsf{e}) = (0, 0)$ without loss of generality \citep{Khalil2002} and assume that $f(0, 0) = 0$. Note that if a policy~$\pi$ satisfies $\pi(0) = 0$, then we have $0 = f_\pi(0)$, i.e.,  $x_\mathsf{e} = 0$ is an equilibrium point of the system~\eqref{eq:controlled system} under~$\pi$. 

\vspace{-0.5em}

The optimal control framework in this subsection is a particular case of the locally Lipschitz RL problem in \secref{subsection:RL with local Lipschitzness} above. Hence, we impose the same assumptions on it: the local Lipschitzness of  $f$ and $u_*$, with $\pi'$ and $\pi_*$ denoting the respective policies \emph{given by \eqref{eq:closed-form expression of the maximal policy under uniqueness} and \eqref{eq:closed-form expression of optimal policy}}, the inclusion $\mathcal{V}_\mathsf{a} \subset \mathrm{C}^1_\mathsf{Lip}$, and the extended definition \eqref{eq:def:VF under local Lipschitzness} of the VF $v_\pi$ for $\pi \in \Pi_\mathsf{Lip}$. 

\vspace{-0.5em}

On the other hand, we define a class of policies $\Pi_0$ as 
\begin{align*}
	\Pi_0 \doteq \{ \pi \in \Pi_\mathsf{Lip}: \pi(0) = 0 \}
\end{align*}
and, with slight abuse of notation, redefine $\Pi_\mathsf{a}$ and $\mathcal{V}_\mathsf{a}$ by 
\[
		\Pi_\mathsf{a} \doteq \{ \pi \in \Pi_0: 
		v_\pi(x) \text{ is finite for all } x \in \mathcal{X} \}
\]
and $\mathcal{V}_\mathsf{a} \doteq \{ v_\pi : \pi \in \Pi_\mathsf{a} \}$. Here, we have merely added the condition $\pi(0) = 0$ into the definitions of $\Pi_\mathsf{a}$ and $\mathcal{V}_\mathsf{a}$ in \secref{subsection:RL with local Lipschitzness}. With these notations, $x_\mathsf{e} = 0$ comes to be an equilibrium point of the system \eqref{eq:controlled system} under $\pi \in \Pi_0$ ($\supseteq \Pi_\mathsf{a}$).

\vspace{-0.5em}

Similarly to \secref{subsection:RL with local Lipschitzness}, this subsection does not assume existence and uniqueness of the state trajectories; $\pi \in \Pi_\mathsf{a}$ (or $v_\pi \in \mathcal{V}_\mathsf{a}$) ensures: for every $x\in \mathcal{X}$, the state trajectory $t \mapsto \mathbb{G}_\pi^x [X_t]$ is uniquely defined and $\mathrm{C}^1$ over $\mathbb{T}$. In addition, the boundary conditions \eqref{eq:uniqueness condition for v} and \eqref{eq:boundary condition for PI} are not assumed but either proven to be true or replaced by their sufficient ones shown in \secref{appendix:sec:replacement of bdy condition} (e.g., see Theorem~\ref{thm:uniqueness condition in optimal control}).

\vspace{-0.5em}

Whenever necessary, we use the cost functions $c \doteq -r$ and $c_\pi \doteq - r_\pi$, the cost VF $J_\pi \doteq - v_\pi$, and $J_* \doteq - v_*$, rather than $-r$, $- r_\pi$, $-v_\pi$, and $- v_*$, respectively, for simplicity and consistency to optimal control conventions; the cost at time $t \in \mathbb{T}$ is denoted by $C_t \doteq c(X_t, U_t) = - R_t$. 

\vspace{-0.5em}

We consider a positive definite cost function $c$, i.e., assume
\begin{equation}
	c(x, u) > 0  \quad \forall (x, u) \neq (0, 0),	\textrm{ and } c(0, 0) = 0.
	\label{eq:n.d. R in optimal control}
\end{equation}
Then, by~\eqref{eq:n.d. R in optimal control} and the definition, the value $J_\pi(x)$ is always restricted to $[0, \infty]$ and, similarly to \eqref{eq:def:VF under local Lipschitzness}, $J_\pi(x) = \infty$ whenever $t_\mathsf{max}(x; \pi) < \infty$; otherwise, 
$J_\pi(x) = \mathbb{G}_\pi^x  \big [ \int_0^\infty \! \gamma^t C_t \, dt \big ]$.

\vspace{-0.5em}

\begin{lemma}
	\label{lemma:positive definiteness of c_pi}
	$c_\pi$ for $\pi \in \Pi_0$ is positive definite.
\end{lemma}

\vspace{-0.5em}

\begin{lemma}
\label{lemma:Lyapunov v}
	Let $\pi \in \Pi_\mathsf{a}$. Then, \textbf{\emph{a.}} $J_\pi$ is positive definite; \textbf{\emph{b.}} $x \mapsto {\dot J}_\pi(x, \pi(x))$ is negative semidefinite iff 
		\begin{equation}
			\alpha J_\pi \leqslant c_\pi
			\label{eq:condition for stability}
		\end{equation}	
		and \textbf{\emph{c.}} $x \mapsto {\dot J}_\pi(x, \pi(x))$ is negative definite iff 
		\begin{equation}
			\alpha J_\pi(x) < c_\pi(x) \qquad \forall x \in \mathcal{X} \setminus \{0\}.
			\label{eq:condition for asymp stability}
		\end{equation}
\end{lemma}

In what follows, 
we assume that $c_\pi$ for any $\pi \in \Pi_0$ is radially nonvanishing\footnote{This assumption excludes any function $c_\pi$ such that as $r \to \infty$, $\inf_{\|x\| \geq r} c_\pi(x) \to 0$ (e.g., $c_\pi(x) = x^2 \exp{(-x^2)}$) and is used in Theorem~\ref{thm:stability of optimal control} for proving global asymptotic stability for $\gamma =1$.} (\secref{appendices:notations:4}).
Given the conditions in Lemma~\ref{lemma:Lyapunov v}, $J_\pi$ is, in fact, a Lyapunov function \citep{Khalil2002} for the system ${\dot X}_t = f_\pi(X_t)$ as shown in the following theorem. 

\begin{theorem}
	\label{thm:stability of optimal control}
	The equilibrium point $x_\mathsf{e} = 0$ of dynamics $f_\pi$ under $\pi \in \Pi_\mathsf{a}$ is \mbox{stable} if \eqref{eq:condition for stability} holds, asymptotically stable if \eqref{eq:condition for asymp stability} is true, 
	and globally asymptotically stable if $\gamma = 1$ or, in addition to \eqref{eq:condition for asymp stability}, $J_\pi$ is radially unbounded.
\end{theorem}
	 
\begin{remark}
	\label{remark:admissibility implies asymptotic stability}
Whenever $\gamma = 1$, (i) \eqref{eq:condition for asymp stability} is true since $\alpha = 0$ and $c_\pi$ is positive definite by Lemma~\ref{lemma:positive definiteness of c_pi}; (ii) admissibility directly implies global asymptotic stability by Theorem~\ref{thm:stability of optimal control} 
(here, the radial unboundedness of $J_\pi$ is not assumed!).
\end{remark}


Next, we show global attractiveness (hence, global asymptotic stability) ensures uniqueness of the solution to the BEs.

\begin{definition}
	$x_\mathsf{e}$ is globally attractive under $\pi$ iff 
	\[
		t_\mathsf{max}(x; \pi) = \infty \text{ and } \smash{\lim_{t \to \infty}} \mathbb{G}_\pi^x [ X_t ] = x_\mathsf{e} \qquad \forall x \in \mathcal{X}.
	\]	
\end{definition}

\begin{theorem} [Policy Evaluation]
	\label{thm:uniqueness under gas}
	Let \text{$x_\mathsf{e} = 0$ under $\pi \in \Pi_0$}\\ be globally attractive. If there exists a function $v: \mathcal{X} \to \mathbb{R}$ s.t. $v$ is continuous at~$0$, $v(0) = 0$, and the BE \eqref{eq:Bellman eq for v} or, with $v \in \mathrm{C}^1$, the BE~\eqref{eq:differential BE for v} holds, then $\pi \in \Pi_\mathsf{a}$ and $v = v_\pi$.
\end{theorem}

The uniqueness of the solution to the BE can be also given under other conditions. When discounted, it has a more general condition than both \eqref{eq:condition for stability} and \eqref{eq:condition for asymp stability} for stability as well as contains the cases where $x_\mathsf{e} = 0$ is not necessarily (globally) attractive, and the state trajectories could even diverge. 

\begin{theorem} [Policy Evaluation]
	\label{thm:uniqueness condition in optimal control} 
	Let $J$ ($\doteq v$) be positive definite and $\kappa \cdot J \leqslant c_\pi$ for a policy $\pi \in \Pi_0$ and a constant $\kappa > 0$. Then, $\pi \in \Pi_\mathsf{a}$ and $v = v_\pi$ if either of \emph{\textbf{a}} or \emph{\textbf{b}} below is true.
	\begin{enumerate}
		\item [\emph{\textbf{a.}}] $v$ is $\mathrm{C}^1$, radially unbounded, and satisfies the BE~\eqref{eq:differential BE for v} or alternatively, the BE~\eqref{eq:Bellman eq for v} for arbitrary small $\eta > 0$;\\[-5pt]
		\item [\emph{\textbf{b.}}] $v$ satisfies the BE~\eqref{eq:Bellman eq for v} for a fixed $\eta > 0$, $c_\pi$ is radially unbounded, and there exist a function $\zeta: \mathcal{X} \to \mathbb{R}$ and $\ushort \alpha < \alpha$, possibly depending on $\pi$, s.t. for each $x \in \mathcal{X}$, 
			\begin{align}
					\mathbb{G}_\pi^x [C_t] \leq \zeta(x) \exp(\ushort \alpha \hspace{0.1em} t)
					\quad \forall t \in [0, t_\mathsf{max}(x; \pi)).
					\label{eq:optimal control:exponential increasing condition for admissibility}
			\end{align}	
	\end{enumerate}
\end{theorem}

The policy improvement theorem in \secref{subsection:RL with local Lipschitzness}, i.e., Theorem \ref{thm:policy improvement thm in locally Lipschitz case}, can be also extended as follows.

\begin{theorem} [Policy Improvement]
	\label{thm:policy improvement thm for optimal control}
	Let $\pi \in \Pi_\mathsf{a}$ and $J_\pi$ is radially unbounded. Then $\pi' \in \Pi_\mathsf{a}$ and $J_{\pi'} \leqslant J_{\pi}$.
\end{theorem}

From the theory and discussions above, we propose the following three conditions for the PI methods in the optimal control framework: for all $i \in \mathbb{N}$ and $J_i \doteq - v_i$, 
\begin{enumerate}
	\item [\textsf{(A)}] $\pi_0 \in \Pi_\mathsf{a}$,
	\item [\textsf{(B)}] $J_i \in \mathrm{C}^1_\mathsf{Lip}$ is positive definite and radially unbounded,
	\item [\textsf{(C)}] if $\gamma \neq 1$,  
		$
		\begin{cases}
			\text{$x_e = 0$ under $\pi_{i-1}$ is globally attractive, }\\[2.5pt]
			\text{or there exists $\kappa_i > 0$ s.t. $\kappa_i \! \cdot \! J_i \leqslant c_{\pi_{i-1}}$.}
		\end{cases}
		$
\end{enumerate}
Those three conditions are devised in order to run PI (for IPI, together with \textsf{(D)} or \textsf{(E)} below), without assuming the existence of unique state trajectories and the boundary condition~\eqref{eq:boundary condition for PI}. Here, \textsf{(C)} is imposed only when $\gamma \neq 1$ ($\alpha > 0$),  
in which case, if $\kappa_i < \alpha$, then the inequality in \textsf{(C)} is weaker than both of the stability conditions $\alpha J_i \leqslant c_{\pi_{i-1}}$ and 
\begin{align}
	\alpha J_i(x) < c_{\pi_{i-1}}(x) \qquad 
	\forall x \neq \mathcal{X} \setminus \{0\}
	\label{eq:gas condition in optimal control}
\end{align}
	that correspond to \eqref{eq:condition for stability} and \eqref{eq:condition for asymp stability}, respectively. 
	 
For running IPI under discounting $\gamma \in (0, 1)$, we impose an additional condition on each $\pi_{i-1}$:
\begin{enumerate}
	\item [\textsf{(D)}] if $\gamma \neq 1$, then \textbf{a.} $c_{\pi_{i-1}}$ is radially unbounded; \textbf{b.} there are
		$\ushort \alpha_i \in [0,  \alpha)$ and a function $\zeta_i$ s.t. \eqref{eq:optimal control:exponential increasing condition for admissibility} holds $\forall x \in \mathcal{X}$.
\end{enumerate}
Here, \textsf{(D\text\rm{\bf a})} is true if $x \mapsto c(x, u)$ is radially unbounded; \textsf{(D\textrm{\bf b})} is true for any policy~$\pi_{i - 1}$ that makes every state trajectory bounded or even diverge  exponentially with the rate smaller than $\alpha$. For instance, if $x_\mathsf{e} = 0$ under $\pi_{i-1}$ is globally attractive (so that \textsf{(C)} is true) or state trajectories are globally bounded, then \textsf{(D\textrm{\bf b})} is always valid with 
\[
	\ushort \alpha_i = 0 \textrm{ and } \zeta_i(x) = \textstyle \inf_{t \in \mathbb{T}\;} \mathbb{G}_{\pi_{i-1}}^x [C_t] < \infty,
\]
where $\zeta_i(x)$ is finite by boundedness of $t \mapsto \mathbb{G}_\pi^x [X_t]$ and continuity of both $c$ and $t \mapsto \mathbb{G}_\pi^x [X_t]$. 
Another condition for discounted IPI that can replace the condition \textsf{(D)} is 
\begin{enumerate}
	\item [\textsf{(E)}] if $\gamma \neq 1$, the BE~\eqref{eq:integral BE in IPI} holds \emph{for arbitrary small $\eta > 0$}.
\end{enumerate}
In practice, it is impossible to solve the BE~\eqref{eq:integral BE in IPI} for infinitely many $\eta$'s; 
one best practice for \textsf{(E)} is to implement IPI with \emph{one sufficiently small $\eta > 0$}. We also note that when $\gamma = 1$, \textsf{(C)}--\textsf{(E)} become irrelevant --- in this case, only \textsf{(A)} and \textsf{(B)} are required to run both IPI and DPI.

\begin{theorem}
	\label{thm:fundamental properties in optimal control}
	Under \emph{\textsf{(A)}}--\emph{\textsf{(C)}} (for IPI, with \emph{\textsf{(D)}} or \emph{\textsf{(E)}}), 
	\begin{enumerate}
		\item $\pi_{i-1} \! \in \Pi_\mathsf{a}$ and $J_i = J_{\pi_{i-1}} \geqslant J_{\pi_i}$ for all $i \in \mathbb{N}$;
		\item $x_\mathsf{e} = 0$ under $\pi_{i-1}$ is globally asymptotically stable (hence, globally attractive) if \eqref{eq:gas condition in optimal control} is true (or if $\gamma = 1$).
	\end{enumerate}
\end{theorem}

Without assuming the boundary condition~\eqref{eq:boundary condition for PI} and existence of unique state trajectories, the other properties in \secref{section:fundamental properties of PI} can be also extended under the above conditions \textsf{(A)}--\textsf{(C)} (for IPI, together with \textsf{(D)} or \textsf{(E)}), by following the same proofs in \secref{section:fundamental properties of PI}, but with Theorem~\ref{thm:fundamental properties} therein replaced by Theorem~\ref{thm:fundamental properties in optimal control}.

Note that the radial unboundedness of $J_i$ in \textsf{(B)} makes sense only when $J_{\pi_{i-1}}$ or the optimal cost VF $J_*$ is radially unbounded. For the latter case, it is guaranteed that $J_\pi$ for every $\pi \in \Pi_\mathsf{a}$ is radially unbounded by optimality $0 \leqslant J_* \leqslant J_\pi$.

Limitations also exist. First, it is difficult to check {\textsf{(C)}--\textsf{(E)}} and that $J_{\pi_{i-1}}$ is radially unbounded; $J_*$ is unknown until we have found at the end. Secondly, the results cannot be applied to the locally admissible cases, where the VF (equivalently $J_\pi$) is finite only around the equilibrium point $x_\mathsf{e} = 0$ locally, not globally over $\mathcal{X}$. 
Lastly, not easy to verify in general is the local Lipschitzness assumptions on $u_*$ and $\nabla J_\pi$ which are necessary for $\pi' \in \Pi_\mathsf{Lip}$. An example that is free from these limitations is the LQR (see \secref{subsection:LQR}). 

\begin{remark}
	\label{remark:admissibility}
	This article is the first to define admissibility \mbox{without asymptotic stability} to the best authors' knowledge. This concept can be broadly applied, e.g., to the discounted LQR cases in \secref{subsection:LQR} where the system may not be stable under an admissible policy due to $\gamma \in (0, 1)$. In fact, when $\gamma = 1$, admissibility of a policy~$\pi$ (i.e., $\pi \in \Pi_\mathsf{a}$) implies global asymptotic stability under $\pi$, with $J_\pi$ served as a Lyapuonv function, as discussed in Remark~\ref{remark:admissibility implies asymptotic stability}. This reveals that asymptotic stability can be excluded from the definition of admissibility, even in the existing optimal control frameworks (as long as the VF is $\mathrm{C}^1$). We also believe that our concept of admissibility can be generalized even when $v_\pi$ (or equivalently, $J_\pi$) is locally finite around the equilibrium $x_\mathsf{e} = 0$ (i.e., locally admissible), not globally.
\end{remark}

\MyRemark{If the dynamics~$f$ is non-affine, then the cost function $c$ has to be properly designed (e.g., by the techniques introduced in \secsref{subsubsection:case II} and  \ref{subsubsection:case III}) to avoid the pathological Hamiltonian discussed in Remark~\ref{remark:design of reward} and \secref{subsubsection:case II}. Note that as shown in \secref{appendix:subsection:a pathological ex}, such a pathological phenomenon can happen even when $c$ is positive definite (and quadratic when unconstrained), which is a typical choice in optimal control.
	}

\begin{figure*} [t!]
	\centering
	\subfloat[Case 1: concave Hamiltonian with bounded reward --- DPI]{\includegraphics[width=\columnwidth, height=2.9cm]{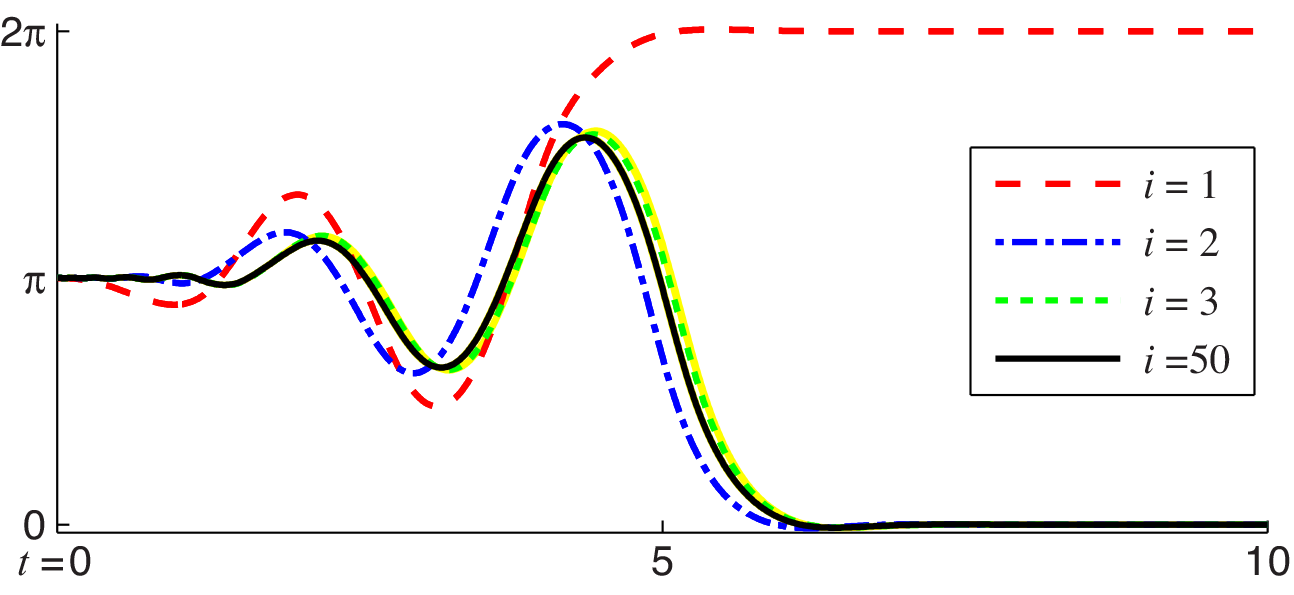}} \hfill
	\subfloat[Case 2: optimal control --- DPI]{\includegraphics[width=\columnwidth, height=2.9cm]{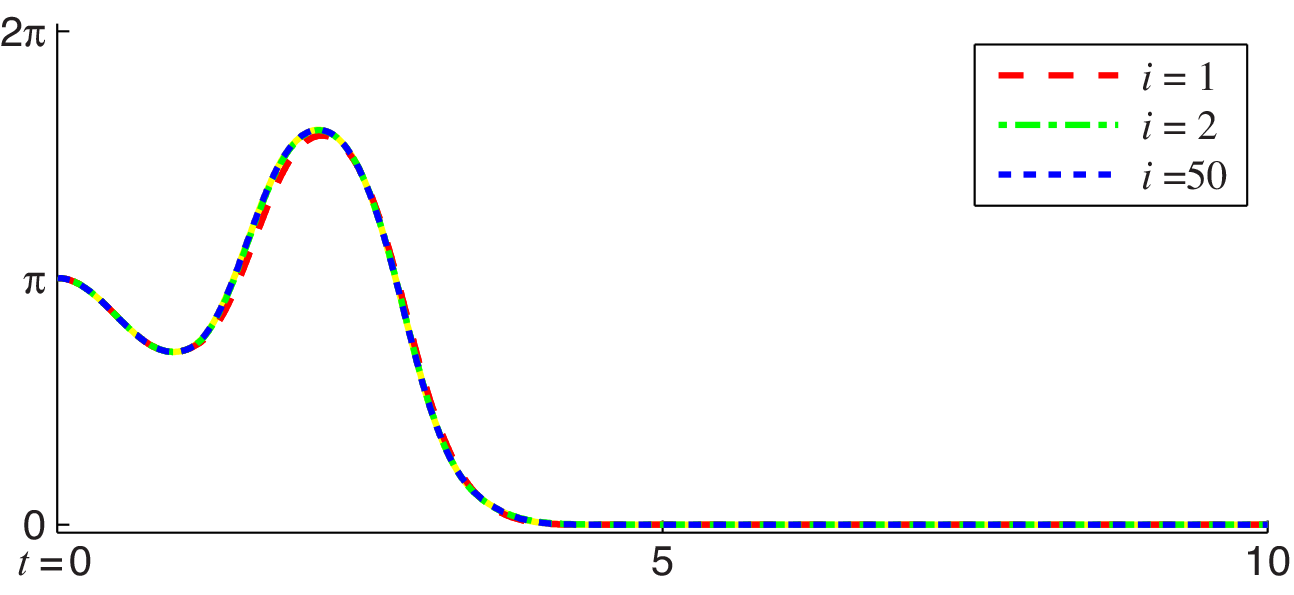}} \\
	\subfloat[Case 1: concave Hamiltonian with bounded reward ---IPI]{\includegraphics[width=\columnwidth, height=2.9cm]{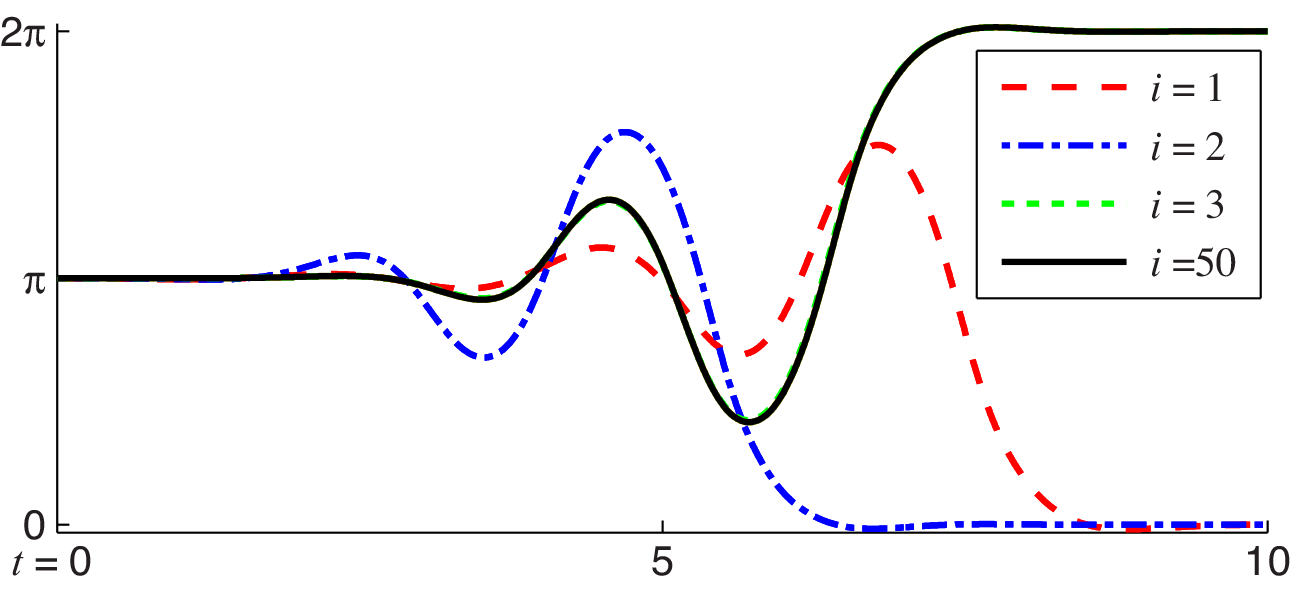}}  \hfill
	\subfloat[Case 3: bang-bang control --- IPI with $r(x, u) = \cos x_1$]{\includegraphics[width=\columnwidth, height=2.9cm]{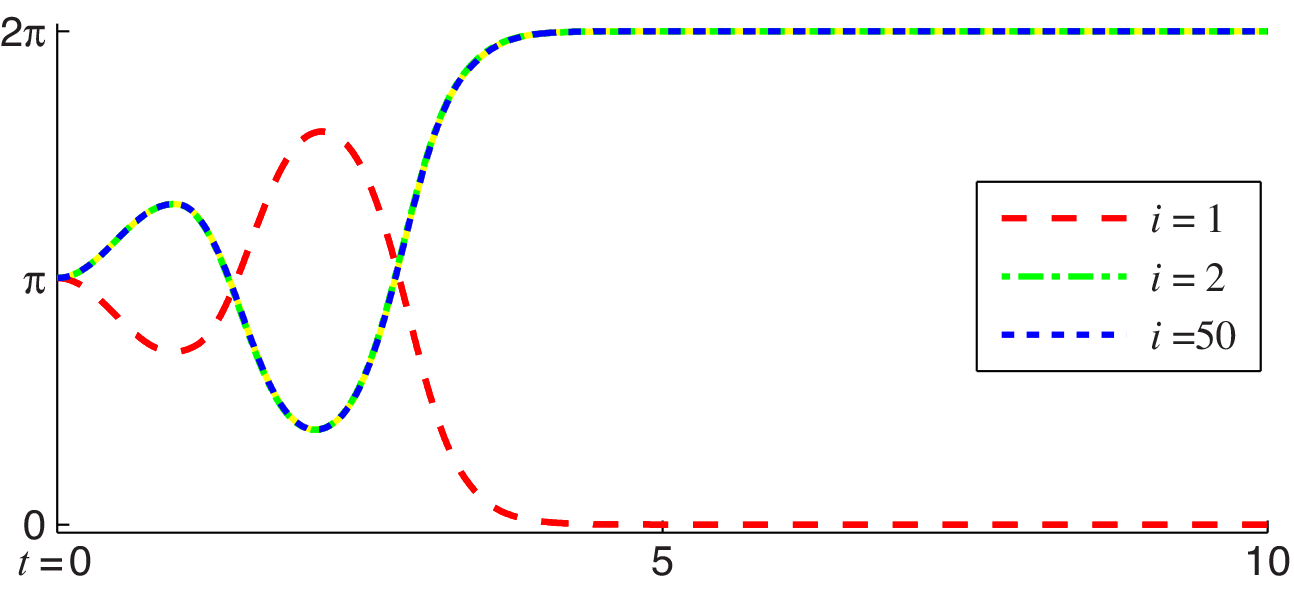}} \\
	\subfloat[Case 4: bang-bang control with binary reward -- DPI]{\includegraphics[width=\columnwidth, height=2.9cm]{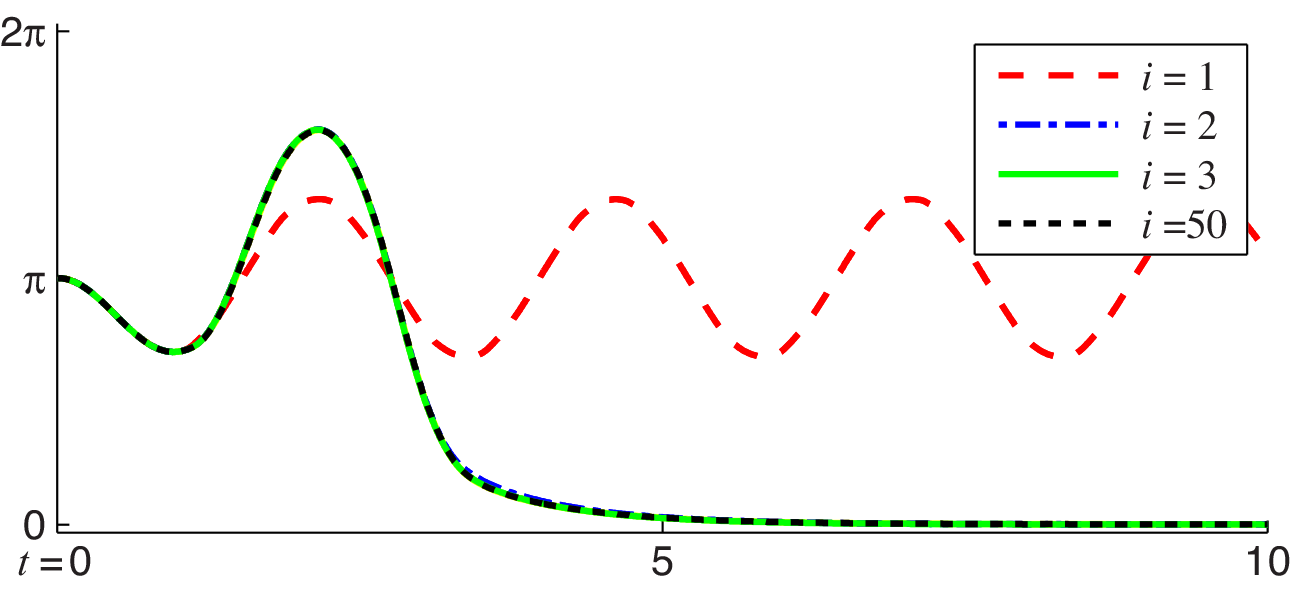}} \hfill
	\subfloat[Case 4: bang-bang control with binary reward -- IPI]{\includegraphics[width=\columnwidth, height=2.9cm]{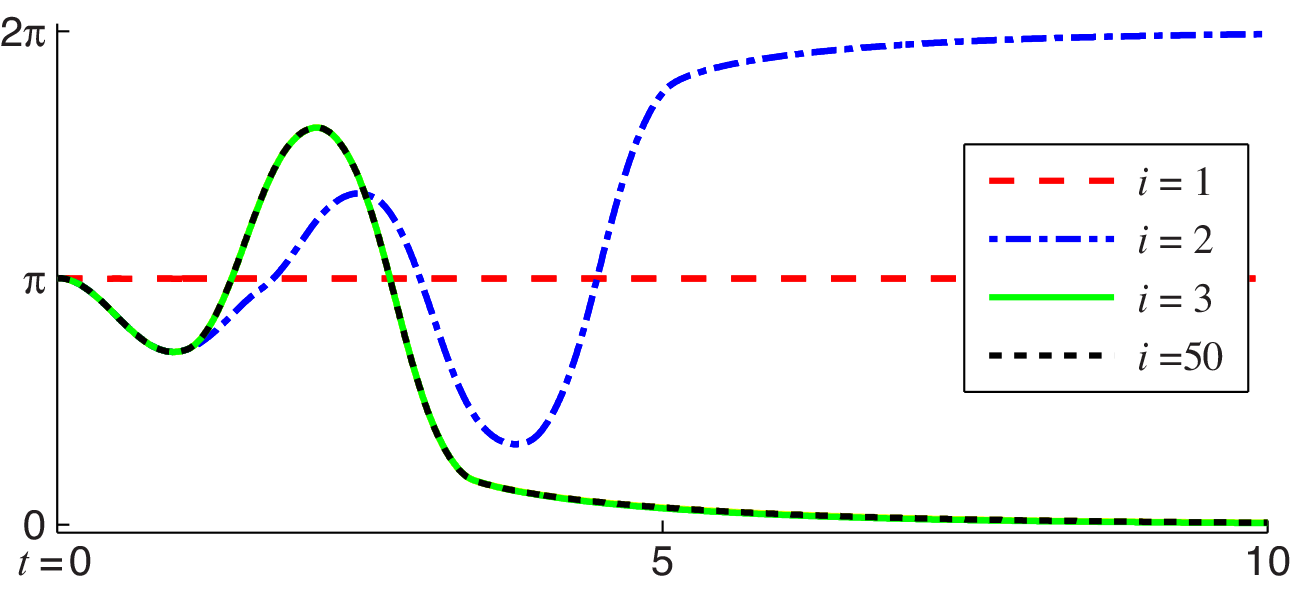}}
	\caption{The trajectories of the pendulum angular-position $\vartheta_t$ generated by the policies obtained during and after the PIs for each case study. All of the trajectories start from $X_0 = (\pi, 0)$, and the yellow regions correspond to those $\vartheta_t$-trajectories at iterations $i = (3),4,5,\cdots,49$.}
	\label{fig:pos trj}
\end{figure*}

\begin{figure*} [t!]
\centering
\subfloat[${\hat V}_{50}$ in Case 1 --- DPI]{\includegraphics[width=.245\linewidth]{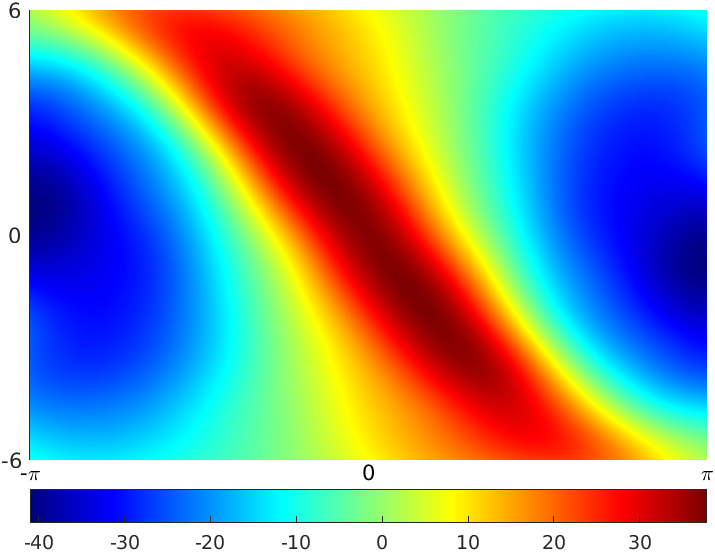}}
\subfloat[${\hat V}_{50}$ in Case 1 --- IPI]{\includegraphics[width=.245\linewidth]{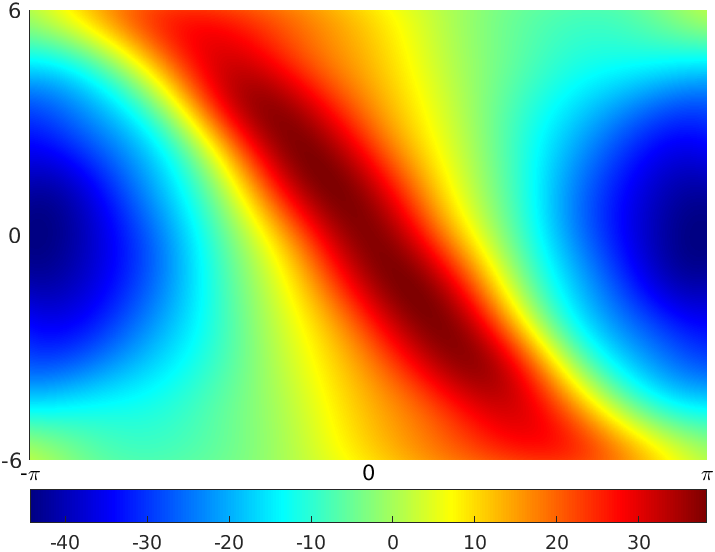}}
\subfloat[${\hat \pi}_{50}$ in Case 1 --- DPI]{\includegraphics[width=.245\linewidth]{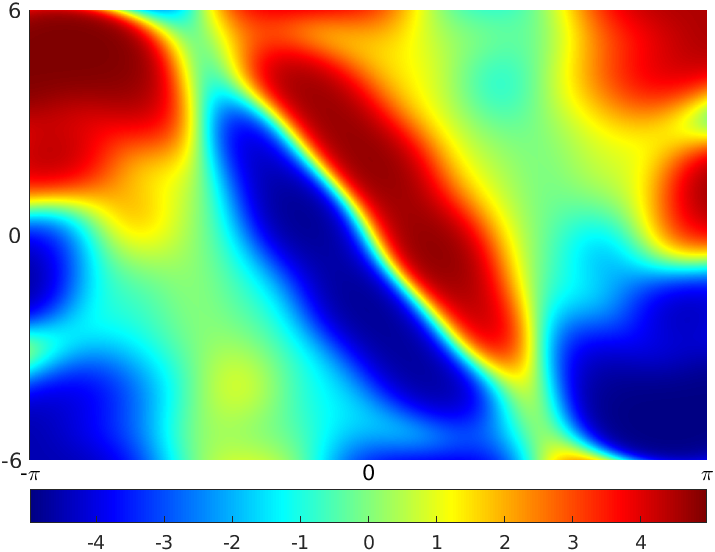}}
\subfloat[${\hat \pi}_{50}$ in Case 1 --- IPI]{\includegraphics[width=.245\linewidth]{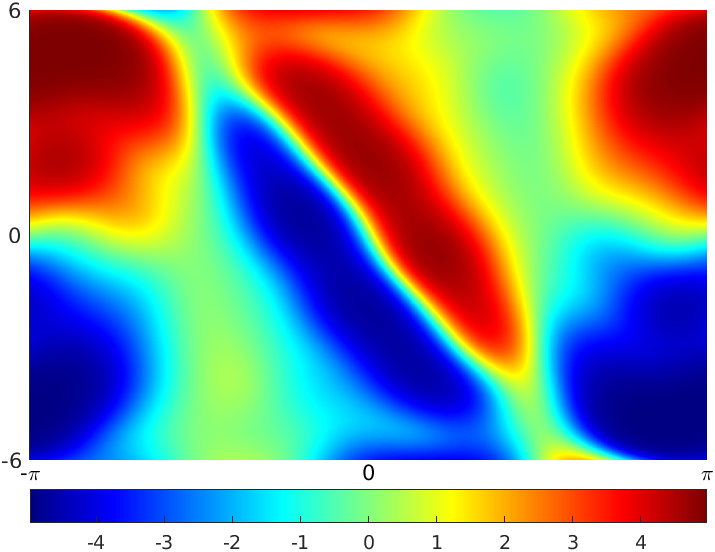}} \\[-0.35em]
\subfloat[${\hat V}_{50}$ in Case 2 --- DPI ]{\includegraphics[width=.245\linewidth]{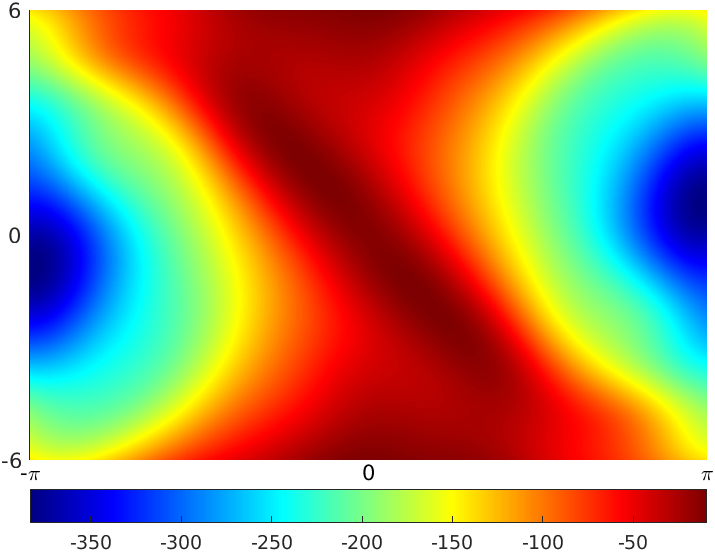}}
\subfloat[${\hat V}_{50}$ in Case 3 --- IPI w/ \eqref{eq:sim:reward for optimal bang-bang control}]{\includegraphics[width=.245\linewidth]{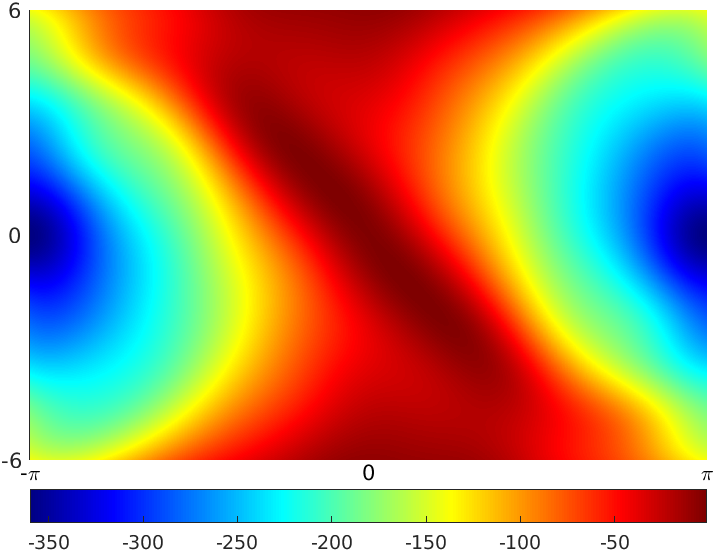}}
\subfloat[${\hat \pi}_{50}$ in Case 2 --- DPI]{\includegraphics[width=.245\linewidth]{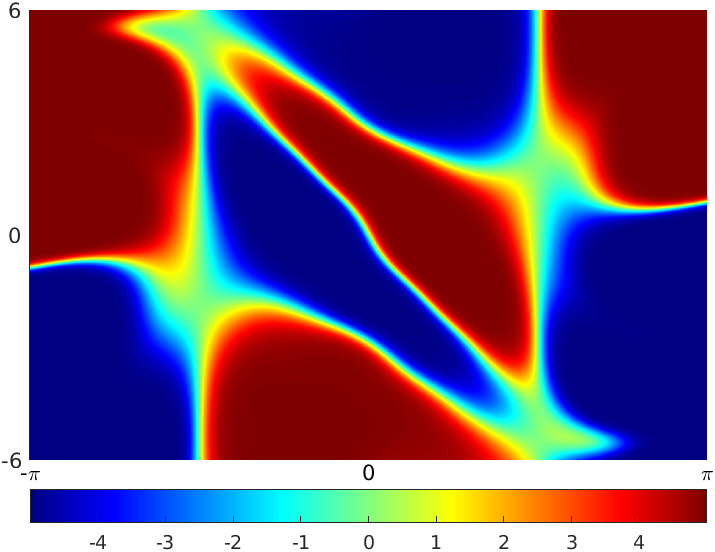}}
\subfloat[${\hat \pi}_{50}$ in Case 3 --- IPI w/ \eqref{eq:sim:reward for optimal bang-bang control}]{\includegraphics[width=.245\linewidth]{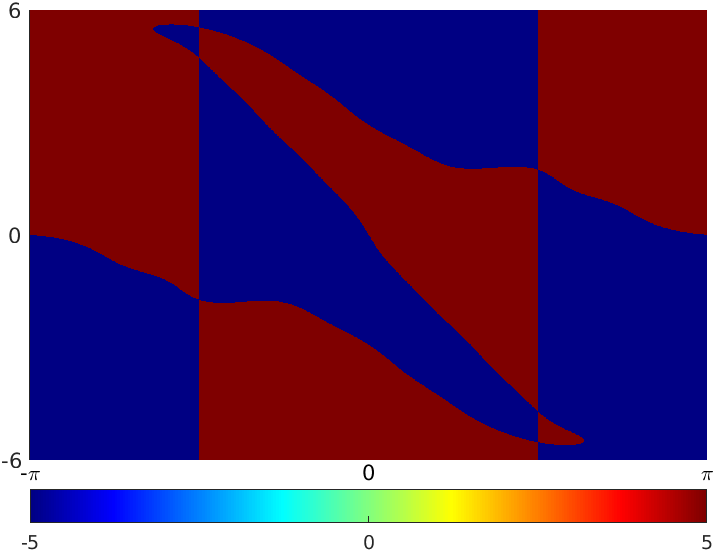}}\\[-0.35em]
\subfloat[${\hat V}_{50}$ in Case 4 --- DPI]{\includegraphics[width=.245\linewidth]{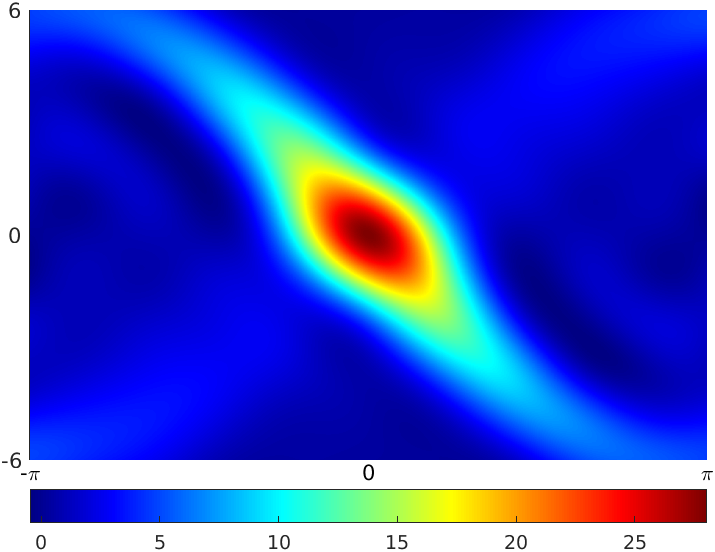}}
\subfloat[${\hat V}_{50}$ in Case 4 --- IPI]{\includegraphics[width=.245\linewidth]{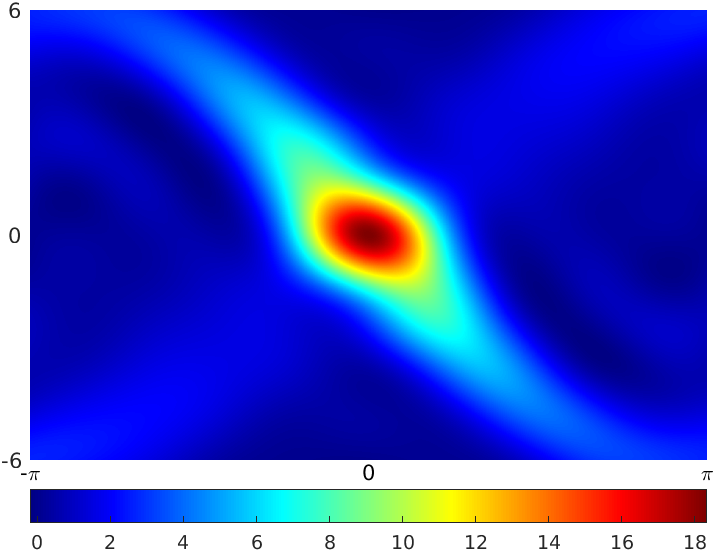}}
\subfloat[${\hat \pi}_{50}$ in Case 4 --- DPI]{\includegraphics[width=.245\linewidth]{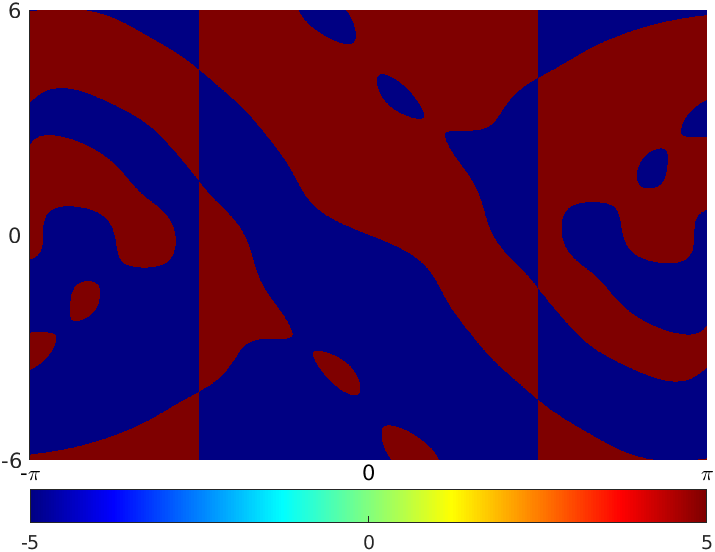}}
\stackunder[6.6pt]{\includegraphics[width=.245\linewidth]{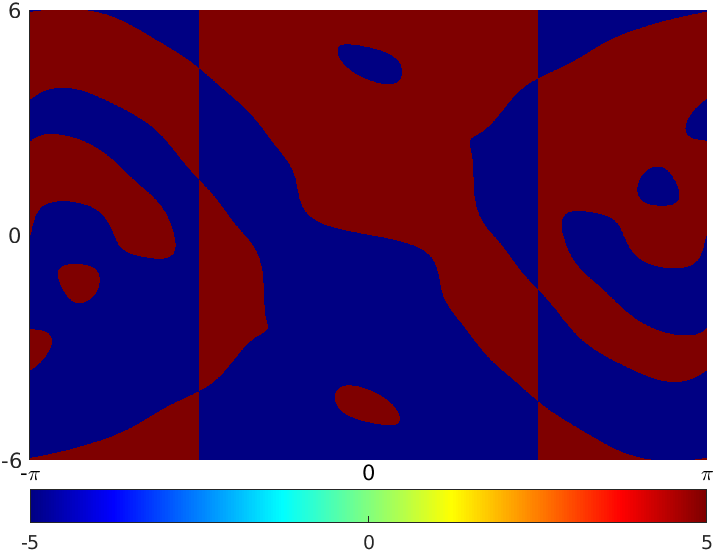}}{($\ell$) ${\hat \pi}_{50}$ in Case 4 --- IPI}
\caption{The optimal value function ${\hat V}_{50}(x) = V(x ; \theta_i^*)|_{i=50}$ (left sides) and the optimal policy ${\hat \pi}_{50}(x) = \pi(x ; \theta_i^*)|_{i=50}$ (right sides), estimated by DPI and IPI variants over $\Omega$. The horizontal and vertical axes correspond to $x_1$ ($= \vartheta$) and $x_2$ ($={\dot \vartheta}$), respectively.}
\label{fig:vi and pi}
\end{figure*}

\Section{Inverted-Pendulum Simulation Examples}{section:simulation}

To support the theory and further investigate the proposed PI methods, we simulate the variants of DPI and IPI shown in Algorithm~\ref{algorithm:Variants under bounded v} applied to an inverted-pendulum model:
\[
	{\ddot \vartheta}_t = - 0.01 {\dot \vartheta}_t + 9.8 \sin \vartheta_t - U_t \cos \vartheta_t, 
\]
where $\vartheta_t \in \mathbb{R}$ and $U_t \in \mathcal{U}$ are the angular position of and the external torque input to the pendulum at time $t$, respectively; the action space is given by $\mathcal{U} = [- u_\mathsf{max}, u_\mathsf{max} ] \subset \mathbb{R}$, with the torque limit $u_{\mathsf{max}} = 5$ [N$\cdot$m]. Letting $X_t \doteq [\, \vartheta_t \;\, {\dot \vartheta}_t \,]^{\mathsf{T}}$, then the dynamics can be expressed as \eqref{eq:controlled system} and \eqref{eq:input-affine dynamics} with 
\[
	f_{\mathsf{d}}(x) = \begin{bmatrix} x_2 \\  9.8 \, \sin x_1 - 0.01 x_2  \end{bmatrix} \textrm{ and }
	F_{\mathsf{c}}(x) = \begin{bmatrix}  0  \\  - \cos x_1 \end{bmatrix},
\]
where $x = [\, x_1 \,\; x_2 \,]^{\mathsf{T}} \in \mathcal{X}$ ($= \mathbb{R}^2$). In the simulations, we set the discount factor $\gamma = 0.1$ and the time step $\Delta t = 10$~[ms]; the zero initial policy $\pi_0(x) \equiv 0$ is employed.

The solution $V_i$ of the policy evaluation at each iteration $i$ is represented by a linear function approximator $V$ as 
\begin{equation}
	V_i(x) \approx V(x; \theta_i) \doteq \theta_{i}^\mathsf{T} \phi(x),  
	\label{eq:RBFN approximation of vi}
\end{equation}
for its weights $\theta_i \in \mathbb{R}^L$ and features $\phi: \mathcal{X} \to \mathbb{R}^L$, with $L = 121$. Each policy evaluation determines $\theta_i$ by the least-squares solution $\theta_i^*$ minimizing the Bellman errors over the set of initial states uniformly distributed as the ($N \times M$)-grid points over the region $\Omega = [- \pi, \pi] \times \! [-6, 6] \subset \mathcal{X}$. Here, $N$ and $M$ are the total numbers of the grids in the $x_1$- and $x_2$-directions, respectively; we choose $N = 20$ and $M = 21$, so the total $420$ number of grid points in $\Omega$ are used as initial states. When inputting to $V$, the first component $x_1$ of $x$ is normalized to a value within $[-\pi, \pi]$ by adding $\pm 2 \pi k$ to it for some $k \in \mathbb{Z}$. 

In what follows, we simulate four different settings, whose learning objective is to swing up and eventually settle down the pendulum at the upright position $\theta_t = 2\pi k$ for some $k \in \mathbb{Z}$, under the torque limit $|U_t| \leq u_\mathsf{max}$. For each case, we basically consider the reward function $r$ given by \eqref{eq:strictly concave reward} and \eqref{eq:integral formula of S} with
\begin{equation}
	s(\mathfrak{u}) = u_{\mathsf{max}} \tanh(\mathfrak{u}/u_{\mathsf{max}}).
	\label{eq:sim:s}
\end{equation}
As the inverted pendulum dynamics is input-affine, this setting corresponds to the concave Hamiltonian formulation in \secref{subsubsection:case I} (with a bounded $r$ if $\mathfrak{r}$ is bounded). The implementation details (the features $\phi$, policy evaluation, and policy improvement) are provided in \secref{appendix:section:implementation details}; the MATLAB/Octave source code for the simulations is also available online.\footnote{\href{http://github.com/JaeyoungLee-UoA/PIs-for-RL-Problems-in-CTS/}{github.com/JaeyoungLee-UoA/PIs-for-RL-Problems-in-CTS/}}

\Subsection{Case 1: Concave Hamiltonian with Bounded Reward}{subsection:simulation:case1}

First, we consider the reward function $r$ given by \eqref{eq:strictly concave reward} and \eqref{eq:integral formula of S} with $s(\cdot)$ given by \eqref{eq:sim:s}, $\Gamma = 10^{-2}$, and $\mathfrak{r}(x) = \cos x_1$. As mentioned above, this setting corresponds to the concave Hamiltonian formulation in \secref{subsection:RL under u-AC setting}, resulting in the following policy improvement update rule (see \secref{appendix:section:implementation details} for details):
\begin{align}
	\!\!\!\!\!
	\pi_i(x)
	\approx 
	\pi(x; \theta_i^*) 
	= - 5 \tanh \! \big (\! \cos x_1 \! \cdot \! \nabla_{\!x_2} \phi (x) \! \cdot \! \theta_i^* / 5 \big ).
	\label{eq:sim:VGB update}
\end{align}
As $\mathfrak{r}$ (hence $r$) is bounded, this setting also corresponds to ``discounted RL under Assumption~\ref{assumption:bounded R}'' in \secref{subsection:discounted RL with bounded v}. Therefore, the initial and subsequent VFs in PIs are all bounded; the properties in \secsref{subsubsection:case I} and \ref{subsection:discounted RL with bounded v} are all true; the Assumptions in Table~\ref{table:summary of case studies} w.r.t. \secsref{subsubsection:case I} and \ref{subsection:discounted RL with bounded v} are also all relaxed.

Figs.~\ref{fig:pos trj}(a), (c) and Figs.~\ref{fig:vi and pi}(a)--(d) show the trajectories of $\vartheta_t$ under the policies obtained during PI and the estimates of the optimal solution $(v_*, \pi_*$) finally obtained at the iteration $i = 50$, respectively; the yellow regions in Fig.~\ref{fig:pos trj} correspond to the trajectories of $\vartheta_t$ generated by the intermediate policies obtained by the PIs at iterations $i = (3),4,5,\cdots,49$. Although both DPI and IPI variants generate rather different trajectories of $\vartheta_t$ in Figs.~\ref{fig:pos trj}(a), (c), due to the difference in the estimates of the VF and policy (e.g., see Figs.~\ref{fig:vi and pi}(a)--(d)), both methods have achieved the learning objective merely after the first iteration. Here, the difference in the $\vartheta_t$-trajectories mainly comes from the different initial behaviors near $\vartheta = \pi$ --- see the differences in the policies in Figs.~\ref{fig:vi and pi}(c), (d) (and also the VF estimates in Figs.~\ref{fig:vi and pi}(a), (b)) near the borderlines $\vartheta = \pm \pi$. Also note that both DPI and IPI methods have achieved our learning objective without using an initial \emph{stabilizing} policy that is usually required in the optimal control setting under the total discounting $\gamma = 1$ (e.g., \citealp{Abu-Khalaf2005,Vrabie2009b,Lee2015}).

\Subsection{Case 2: Optimal Control}{subsection:simulation:case2}
	A better performance can be obtained if the state reward function $\mathfrak{r}$ in Case 1 is replaced by 
	\[
		\mathfrak{r}(x) = - x_1^2 - \epsilon \! \cdot \! x_2^2 \;\;\, \textrm{ with } \epsilon = 10^{-2}. 	
	\]
	This setting corresponds to the nonlinear optimal control introduced and discussed in \secref{subsection:nonlinear optimal control}. In this case, whenever input to $\mathfrak{r}$, the first component $x_1$ is normalized to a value within $[-\pi, \pi]$. Here, $\mathfrak{r}$ is still not bounded due to the existence of the term $-\epsilon \cdot x_2^2$, but Algorithm~\ref{algorithm:Variants under bounded v} (without assuming the boundedness of $\smash v_{\pi_0}$) can be successfully applied as shown in Figs.~\ref{fig:pos trj}(b), \ref{fig:vi and pi}(e), and \ref{fig:vi and pi}(g). 
	Fig.~\ref{fig:pos trj}(b) illustrates the trajectories of $\vartheta_t$ under the policies obtained by the DPI variant. Compared with Case~1, this setting gives a better initial and asymptotic performance --- every trajectory of $\vartheta_t$ in Fig.~\ref{fig:pos trj}(b) is almost the same as the final one (faster convergence of the PI) and converges to the goal state $x = (0, 0)$ more rapidly than any trajectories of $\vartheta_t$ in Case~1. In particular, the initial behavior near $\vartheta = \pm \pi$ has been improved, so that the policies in this case swing up the pendulum much faster than Case 1. One possible explanation about this is that the higher magnitude of the gradient of $\mathfrak{r}$ near $x_1 = \pm \pi$ expedites the initial swing-up process (note, in Case 1, $\nabla \mathfrak{r}(\pm \pi, x_2) = 0$ for any $x_2$). See also the difference of the final VF and policy in Figs.~\ref{fig:vi and pi}(e), (g) (Case 2) from those in Figs.~\ref{fig:vi and pi}(a)--(d)  (Case 1). The results for IPI are almost similar to DPI in this case, so their figures are omitted. \pagebreak

\Subsection{Case 3: Bang-bang Control}{subsection:simulation:case3}
	If $\Gamma \to 0$, the reward function $r$ and the policy update rule \eqref{eq:sim:VGB update} in Case 1 (\secref{subsection:simulation:case1}) are simplified to $r(x, u) = \cos x_1$ and 
\[
\pi_i(x)
	\approx 
	\pi(x; \theta_i^*) =	 - 5 \cdot \mathrm{sign} \big (\cos x_1 \! \cdot \! \nabla_{\!x_2} \phi (x) \cdot \theta_i^* \big)
\]
(see \secref{appendix:section:implementation details} for details), a bang-bang type discrete control. The PI methods can be also applied to optimize this bang-bang type controller. Note that this case is beyond our scope of the theory developed in \secsref{section:preliminaries}--\ref{section:case studies} since the policy is discrete, not continuous. For this bang-bang control framework, Fig.~\ref{fig:pos trj}(d) shows the $\vartheta_t$-trajectories under the discrete policies obtained by the IPI variant in Algorithm~\ref{algorithm:Variants under bounded v}. Though the fast switching behavior of the control $U_t$ is inevitable near $x = (0, 0)$, due to $\mathrm{sign}(\cdot)$, the initial and asymptotic control performance, compared with Case~1, has been increased in the limit $\Gamma \to 0$ up to the performance of optimal control (Case 2). 

By limiting $\Gamma \to 0$, the control policy in Case 2 can be also made a bang-bang type control, but in this case, with 
\begin{equation}
	r(x,u) = - x_1^2 - \epsilon \! \cdot \! x_2^2 \;\; \textrm{ with } \epsilon = 10^{-2}. 
	\label{eq:sim:reward for optimal bang-bang control}	
\end{equation}
We have observed that the performance of the PI methods in this case is almost same as that shown in Fig.~\ref{fig:pos trj}(d) for the previous case ``$r(x, u) = \cos x_1$'', derived from Case~1. Figs.~\ref{fig:vi and pi}(f) and (h) show the envelopes of the VF and the bang-bang policy under~\eqref{eq:sim:reward for optimal bang-bang control}, both of which are consistent with the envelopes for $\Gamma = 10^{-2}$ shown in Figs.~\ref{fig:vi and pi}(e) and (g). 

\Subsection{Case 4: Bang-bang Control with Binary Reward}{subsection:sim:bang-bang with binary}

In RL problems, the reward is often binary and sparely given only at or near the goal state. To investigate this case, we also consider the bang-bang policy given in the previous subsection, but with the binary reward function: 
\[
	r(x,u) = 
	\begin{cases}
 		1, \textrm{ if } |x_1| \leq 6/\pi \textrm{ and } |x_2| \leq 1/2
 		\\[5pt]
 		0, \textrm{ otherwise,}
	\end{cases}
\]
This gives the reward signal $R_t = 1$ near the goal state $x = (0, 0)$ only. Figs.~\ref{fig:pos trj}(e) and (f) illustrate the $\theta_t$-trajectories under the policies generated by the DPI and IPI variants (i.e., Algorithm~\ref{algorithm:Variants under bounded v}), respectively. Though the initial performance is neither stable ($i = 1$) nor consistent to each other ($i = 1,2$), both PI methods eventually converge to the same seemingly near-optimal point ($i = 3,4,\cdots, 50$). Note that the  performance after learning ($i = 50$) for both cases is the same as that of Cases~2 and 3 until around $t = 3 [s]$ as can be seen from Figs.~\ref{fig:pos trj}(b) and (d)--(f). Figs.~\ref{fig:vi and pi}(i)--($\ell$) also show the estimates of the optimal VF and policy at $i = 50$. Although the details are a bit different, we can see that both methods finally result in similar consistent estimates of the VF and policy. In this binary reward case, the shapes of the VF shown in Figs.~\ref{fig:vi and pi}(i) and (j) are distinguished from the others illustrated in Figs.~\ref{fig:vi and pi}(a),(b),(e), and (f) due to the reward information condensed near the goal state $x = (0, 0)$ only. Even in this situation, our PI methods were able to achieve the goal at the end, as shown in Figs.~\ref{fig:pos trj}(e) and (f). For the DPI variant, we have simulated this case with $M = 20$, instead of $M = 21$. 

\Subsection{Discussions}{subsection:sim:discussions}

We have simulated the variants of DPI and IPI (Algorithm~\ref{algorithm:Variants under bounded v}) under the four scenarios above. Some of them have achieved the learning objective immediately at the first iteration, and in all of the simulations, the proposed methods were able to achieve the goal, eventually. On the other hand, the implementations of the PIs have the following issues.
\begin{enumerate}
	\item The least-squares solution $\theta_i^*$ of each policy evaluation minimizes the Bellman error over a finite number of initial states in $\Omega$ (as detailed in \secref{appendix:section:implementation details}), meaning that it is not the optimal choice to minimize the Bellman error over the entire region $\Omega$. As mentioned in \secref{section:PIs}, the ideal policy evaluation cannot be implemented precisely---even when $\Omega$ is compact, it is a continuous space and thus contains an (uncountably) infinite number of points that we cannot fully cover in practice. \\[-7.5pt]
	\item As the dimension of the data matrix in the least squares is $L \times (NM) = 121 \times 420$ (see \secref{appendix:section:implementation details}), calculating the least-squares solution $\theta_i^*$ is computationally expensive, and the numerical error (and thus the convergence) is sensitive to the choice of the parameters such as (the number of) the features $\phi$, the time step $\Delta t$, discounting factor~$\gamma$, and of course, $N$ and $M$. In our experiments, we have observed that Case 2 (optimal control) was least sensitive to those parameters.  \\[-7.5pt]
	\item The VF parameterization. Since the pendulum is symmetric at $x_1 = 0$, the VFs and policies obtained in Fig~\ref{fig:vi and pi} are all symmetric, and thus it might be sufficient to approximate the VF over $[0, \pi] \times [-6, 6] \subset \Omega$, with a less number of weights, and use the symmetry of the problem. Due to the over-parameterization, we have observed that the weight vector $\theta_i^*$ in certain situations never converges but oscillates between two values, even after the VF $V_i$ has almost converged over $\Omega$. 
\end{enumerate}
All of these algorithmic and practical issues are beyond the scope of this paper and remain as a future work.

\Section{Conclusions}{section:conclusion}
$\;$\\[-17.5pt]
In this paper, we proposed fundamental PI schemes called DPI (model-based) and IPI (partially model-free) to solve the general RL problem formulated in CTS. We proved their fundamental mathematical properties: admissibility, uniqueness of the solution to the BE, monotone improvement,  convergence, and the optimality of the solution to the HJBE. Strong connections to the RL methods in CTS---TD learning and VGB greedy policy update---were made by providing the proposed ones as their ideal PIs. Case studies simplified and improved the proposed PI methods and the theory for them, with strong connections to RL and optimal control in CTS. Numerical simulations were conducted with model-based and partially model-free implementations to support the theory and further investigate the proposed PI methods beyond, under an initial policy that is admissible but not stable. Unlike the existing PI methods in the stability-based frameworks, an initial \emph{stabilizing} policy is not necessarily required to run the proposed ones. We believe that this work provides the theoretical background, intuition, and improvement to both (i) PI methods in optimal control and (ii) RL methods, to be developed in the future and developed so far in CTS domain. 



{\small 
\bibliographystyle{myplainnat}

}


\onecolumn

\appendix

\begin{center}
\begin{spacing}{2}
\textbf{\Large Policy Iterations for Reinforcement Learning Problems in Continuous Time and Space --- Fundamental Theory and Methods: Appendices}
\vspace{1em}

{\large Jaeyoung Lee,$^\mathrm{a}$$\;$ Richard S. Sutton$^\mathrm{b}$}

{\small $^\mathrm{a}$\emph{Department of Electrical and Computer Eng., University of Waterloo, Waterloo, ON, Canada, N2L 3G1 (jaeyoung.lee@uwaterloo.ca)}}
\vspace{-1em}

{\small $^\mathrm{b}$\emph{Department of Computing Science, University of Alberta, Edmonton, AB, Canada, T6G 2E8 (rsutton@ualberta.ca)}}
\vspace{-1.5em}
\end{spacing}
\end{center}	

\newcommand{\hr}{\begin{center} \line(1,0){504} \end{center}}

\hr
	\vspace{-1.0em}
	{
	\footnotesize
	{\bf Abstract}
	\vspace{-1.0em}
	
This supplementary document provides additional studies and all the details of the contents presented by \citet{Lee2020b}, as listed below. Roughly speaking, we present related works, details of the theory, algorithms, and implementations, additional case studies, and all the proofs, with the same abbreviations, terminologies, and notations. All the numbers of equations, sections, theorems, lemmas, etc. that do not contain any alphabet will refer to those in the main paper \citep{Lee2020b}, whereas any numbers starting with an alphabet correspond to those in the Appendices herein.
	}
	\vspace{-1.5em}	
\hr

\vspace{-1.75em}

{
\renewcommand{\cftsecleader}{\cftdotfill{\cftdotsep}}
\renewcommand*{\contentsname}{}
\tableofcontents
}

\section{Notations and Terminologies}
\label{appendix:notations and terminologies}

We provide a complete list of notations and terminologies used in the main paper and the appendices. In any statement, iff and s.t. stand for \emph{if~and~only~if} and \emph{such that}, respectively. ``$\doteq$'' denotes the equality relationship that is true \emph{by definition}.

\begin{multicols}{2}

\newcommand{\notations}[1]{\begin{tabular}{L{4em}L{22em}} #1 \end{tabular}}
\newcommand{\abbreviations}[1]{\notations{#1}}

\subsection{Abbreviations}
\abbreviations{
	ADP & approximate dynamic programming \\
	BE & Bellman equation \\
	CTS & continuous time and space \\
	DPI & differential policy iteration \\
	IPI & integral policy iteration \\
	HJB & Hamilton-Jacobi-Bellman \\	
	HJBE & Hamilton-Jacobi-Bellman equation \\
	LQR & linear quadratic regulation \\
	MDP & Markov decision process \\
	ODE & ordinary differential equation \\
	PI & policy iteration \\
	RBF & radial basis function \\		
	RL & reinforcement learning \\
	TD & temporal difference \\
	VF & value function \\
	VGB & value-gradient-based
}

\subsection{Sets, Vectors, and Matrices}
\notations{
	$\mathbb{N}$ & set of all natural numbers \\
	$\mathbb{R}$ & set of all real numbers \\
	$\mathbb{C}$ & set of all complex numbers \\
	$\mathbb{Z}$ & set of all integers \\
	$\mathbb{R}^{n \times m}$ & set of all $n$-by-$m$ real matrices \\
	$\mathbb{R}^n$ & $n$-dimensional Euclidean space $\doteq \mathbb{R}^{n \times 1}$
}

For a matrix $A \in \mathbb{R}^{n \times m}$ and a vector $x \in \mathbb{R}^m$, 

\notations{
	$A^\mathsf{T}$ & transpose of $A$ \\
	$\rank{A}$ & rank of $A$ \\
	$\|x\|$ & Euclidean norm of $x$, i.e., $\|x\| \doteq (x^\mathsf{T} x)^{1/2}$ \\
	$\|x\|_\Omega$ & \makecell[l]{distance of $x$ from a subset $\Omega \subset \mathbb{R}^m$, i.e., \\ $\|x\|_\Omega \doteq \inf\{ \| x - y \|: y \in \Omega \}$}\\  
	$\MatNorm{A}$ & \makecell[l]{induced norm of $A$, i.e., $\MatNorm{A} \doteq {\displaystyle \sup_{\smash \|x \| = 1}} \| Ax \|$}\\
	$I$ & identity matrix with a compatible dimension
}

\subsection{Euclidean Topology}  
Let $\Omega \subseteq \mathbb{R}^n$.
\begin{description} [itemsep=.25em]
	\item $\Int{\Omega}$ denotes the \emph{interior of $\Omega$}.
	\item $\Bdy{\Omega}$ denotes the \emph{boundary of $\Omega$}.
	\item $\Omega$ is said to be \emph{compact} iff it is closed and bounded.
\end{description}
If $\Omega$ is open, then $\Omega \cup \Bdy{\Omega}$ (resp. $\Omega$) is called an \emph{$n$-dimensional manifold with} (resp. \emph{without}) \emph{boundary}. By this definition, a manifold contains no isolated point. 

\subsection{Functions, Sequences, and Convergence}
\label{appendices:notations:4}
Let $\Omega \subseteq \mathbb{R}^n$ and $f: \Omega \to \mathbb{R}^m$ be a function. 
\begin{description} [itemsep=.25em]
	\item $f \in \mathrm{C}^k$ (i.e., \emph{$f$ is $\mathrm{C}^k$}) iff the $k$th order partial derivatives of $f$ all exist and are continuous, over the interior $\Int{\Omega}$.
	\item $\nabla f: \Int{\Omega} \to \mathbb{R}^{m \times n}$ denotes \emph{the gradient of~$f$}.
	\item $f$ is \emph{locally Lipschitz} iff for each $x \in \Omega$, there exists $L > 0$ and a neighborhood $\mathcal{N}_x$ of $x$ s.t. for all $y, z \in \mathcal{N}_x$, 
\begin{equation}
	\| f(y) - f(z) \| \leq L \| y - z \|.
	\label{eq:Lipschitz inequality}
\end{equation}
	\item	 $f$ is \emph{globally Lipschitz} iff $\exists L > 0$ s.t. \eqref{eq:Lipschitz inequality} holds $\forall y, z \in \Omega$.
	\item $f \in \mathrm{C}^1_\mathsf{Lip}$ (i.e., \emph{$f$ is $\mathrm{C}^1_\mathsf{Lip}$}) iff $f$ is locally Lipschitz and $\mathrm{C}^1$.
	\item $f$ is \emph{odd} iff $f(-x) = f(x)$ for all $x \in \Omega$. 
	\item $f$ with $m = n$ is \emph{strictly monotone} iff for each $x$, $x' \in \Omega$,  \vspace{-.5em}
	\[
			(f(x) - f(x'))^{\mathsf{T}}(x - x')  > 0 
			\text{ whenever } x \neq x'.
	\]
	\vspace{-1.5em}
	\item $f(E)\doteq \{ f(x) : x\in E\}$, the image of $E \subseteq \Omega$ under $f$.
\end{description}

A sequence $\langle a_i \rangle_{i=1}^\infty$ is abbreviated as $\langle a_i \rangle$ or $a_i$ for notational simplicity. 
A sequence of functions $\langle f_i \rangle$ \emph{converges (to $f$)}
\begin{description} [itemsep=.25em]
	\item \emph{pointwise} iff $f_i(x) \to f(x)$ for each $x \in \Omega$;
	\item \emph{uniformly} on $E \subseteq \Omega$ iff $\sup_{x \in E} \| f_i(x) - f(x) \| \to 0$;
	\item \emph{locally uniformly} iff for each $x \in \Omega$, there is a neighborhood of $x$ on which $f_i \to f$ uniformly.
\end{description}
For any two functions $f_1, f_2: \mathbb{R}^n \to [-\infty, \infty)$, we write 
\[
	f_1 \leqslant f_2 \;\, \Longleftrightarrow \;\, f_1(x) \leq f_2(x) \quad x \in \mathbb{R}^n.
\]
A function $f: \Omega \to \mathbb{R}$ is said to be
\begin{description} [itemsep=.25em]
	\item \emph{positive semidefinite} iff $f(0) = 0$ and $f \geqslant 0$;
	\item \emph{negative semidefinite} iff $-v$ is positive semidefinite;
	\item \emph{positive definite} iff $f(0) = 0$ and $f(x) > 0$ for all $x \neq 0$;
	\item \emph{negative definite} iff $-f$ is positive definite;
	\item \emph{radially unbounded} iff $\inf_{\|x\| \geq r} |f(x)| \to \infty$ as $r \to \infty$;
	\item \emph{radially nonvanishing} iff $\inf_{\|x\| \geq r} |f(x)| \not \to 0$ as $r \to \infty$;
	\item \emph{convex} iff for each $x, x' \in \Omega$ and $\beta \in (0, 1)$, \vspace{.25em}
	\begin{itemize} [itemsep=.25em]
		\item [(1)] $x_\beta \doteq \beta x + (1 - \beta) x' \in \Omega$ (i.e., $\Omega$ is \emph{convex}),
		\item [(2)] $f(x_\beta) \leq \beta \cdot f(x) + (1 - \beta) \cdot f(x')$;
	\end{itemize}
	\item \emph{concave} iff $-f$ is convex;
	\item \emph{strictly convex} iff $f$ is convex and for any $\beta \in (0, 1)$, \vspace{-.5em}
	\[
		f(x_\beta) < \beta \cdot f(x) + (1 - \beta) \cdot f(x')\textrm{ whenever } x \neq x';
	\]
	\item \emph{strictly concave} iff $-f$ is strictly convex.
\end{description} 
$f: [0, \infty) \to [0, \infty)$ is said to be $\mathcal{K}_\infty$ iff $f(0) = 0$ and $f$ is strictly increasing, radially unbounded, and continuous; 

A square matrix $P \in \mathbb{R}^{n \times n}$ is 
\begin{description} [itemsep=.25em]
	\item \emph{positive (semi)definite} iff so is $z \mapsto z^\mathsf{T} P z$ and $P^\mathsf{T} = P$;
	\item \emph{negative (semi)definite} iff so is $z \mapsto z^\mathsf{T} P z$ and $P^\mathsf{T} = P$.
\end{description}
For $P, P' \in \mathbb{R}^{n \times n}$, we denote $P < P'$ (resp. $P \leq P'$)  iff $P' - P$ is positive definite (resp. positive semidefinite). 

\subsection{Reinforcement Learning} 

\renewcommand{\notations}[1]{\begin{tabular}{L{2.75em}L{22em}} #1 \end{tabular}}
	\notations{
		$l$ & dimension $\in \mathbb{N}$ of the state space $\mathcal{X}$ \\
		$m$ & dimension $\in \mathbb{N}$ of action spaces (e.g., $\mathcal{U}$ and $\mathcal{A}$)	\\
	}
	
An \emph{action space} is an $m$-dimensional manifold in $\mathbb{R}^m$ with or without boundary hence has no isolated point by definition.$\!\!$

	\notations{
		$\mathcal{X}$, $\mathcal{X}^\mathsf{T}$ & state space $\mathcal{X} \doteq \mathbb{R}^l$ and $\mathcal{X}^\mathsf{T} \doteq \mathbb{R}^{1 \times l}$ \\
		$\mathcal{U}$ & action space $\subseteq \mathbb{R}^m$	\\
		$\mathcal{A}$ & a transformed action space $\subseteq \mathbb{R}^m$ (\secsref{subsection:RL under u-AC setting} and \ref{subsubsection:case III}) \\
		$\mathbb{T}$ & time space $\mathbb{T} \doteq [0, \infty)$
	}

	\notations{
		$f$, $f^x$ & dynamics $f: \mathcal{X} \times \mathcal{U} \to \mathcal{X}$ and $f^x(u) \doteq f(x, u)$ \\
		$f_\mathsf{d}$ & drift dynamics $f_\mathsf{d}: \mathcal{X}  \to \mathcal{X}$ \\
		$f_\mathsf{c}$ & input-coupling dynamics $f_\mathsf{c}: \mathcal{X} \times \mathcal{U}  \to \mathcal{X}$ \\
		$F_\mathsf{c}$ & input-coupling matrix $F_\mathsf{c}: \mathcal{X}  \to \mathbb{R}^{n \times m}$ (\secref{subsection:RL under u-AC setting}) \\		
		$r$, $r^x$ & reward function $r: \mathcal{X} \times \mathcal{U} \to \mathbb{R}$; $r^x(u) \doteq r(x, u)$ \\
		$r_\mathsf{max}$ & the reward maximum ``$\max_{(x, u)} r(x, u)$''	\\
		$\gamma$ & discount factor $\in (0, 1]$ \\
		$\alpha$ & attenuation rate $\alpha \doteq - \ln \gamma \geq 0$	\\
		$h$ & Hamiltonian function $h  :\mathcal{X} \times \mathcal{U} \times \mathcal{X}^\mathsf{T} \to \mathbb{R}$	\\
		$u_*$ & maximal function $u_*(x, p) \in \Argmax_{u} h(x, u, p)$ \\
	}
	
	\notations{
		$t$ & time variable $\in \mathbb{T}$	\\
		$\eta$ & time horizon $\in (0, \infty]$ \\
		$X_t$ & state vector $\in \mathcal{X}$ at time $t$	\\
		${\dot X}_t$ & the time derivative $\in \mathcal{X}$ of $X_t$ at time $t$	\\		
		$U_t$ & action (also called control) vector $\in \mathcal{U}$ at time $t$	\\
		$A_t$ & a transformed action vector $\in \mathcal{A}$ at time $t$	\\		
		$R_t$ & reward at time $t$, i.e., $r(X_t, U_t) \in \mathbb{R}$	\\
		$\mathfrak{R}_\eta$ & discounted cumulative reward up to horizon $\eta$ \\
		${\dot v}$ & time derivative $dv(X_t)/dt = \nabla v(X_t) f(X_t, U_t)$		
	}	

A \emph{policy} is a continuous function from $\mathcal{X}$ to $\mathcal{U}$; for a policy~$\pi$,$\!\!\!$ 

	\notations{
	$\mathbb{G}_\pi^{x} [ Y]$ & value~$Y$ when $X_0 = x$ and $U_{t} = \pi(X_{t})$ $\forall t \in \mathbb{T}$	\\
	$v_\pi$ & value function (VF) with respect to $\pi$	\\
	$\widebar v$ & a uniform upper-bound of VFs ($\bar v = 0$ for $\gamma = 1$ $\qquad$ and $\bar v = r_\mathsf{max} / \alpha$ otherwise --- see Lemma~\ref{lemma:VF range}) \\
	$f_\pi$ & closed-loop dynamics $f_\pi(x) \doteq f(x, \pi(x))$	\\
	$r_\pi$ & closed-loop reward function $r_\pi(x) \doteq r(x, \pi(x))$	\\	
	$\pi'$ & an improved/maximal policy $\pi' \! \succcurlyeq \pi$, i.e., $v_{\pi'} \! \geqslant v_\pi$  	\\
	}

	When $f_\pi$ is locally Lipschitz, $t_\mathsf{max}(x; \pi)$ denotes the minimal time s.t. $\forall t \geq t_\mathsf{max}(x; \pi)$, \emph{no} state $\mathbb{G}_\pi^x[X_t]$ exists (\secref{subsection:RL with local Lipschitzness}).
	
	\notations{	
	$\Pi_\mathsf{a}$ & set of all admissible policies 	\\
	$\Pi_\mathsf{Lip}$ & set of all locally Lipschitz policies	\\
	$\mathcal{V}_\mathsf{a}$ & set of all admissible VFs	\\
	$d$, $d_\Omega$ & a metric and the uniform pseudometric, on $\mathcal{V}_\mathsf{a}$	\\
	$\mathcal{T}$ & the PI operator \\
	$v_*$ & a solution to the HJBE or the optimal VF	\\
	$v^*$ & a (unique) fixed point of $\mathcal{T}$ \\	
	$\pi_*$ & an HJB or optimal policy (or a function~$\pi_*$ that satisfies \eqref{eq:optimal policy} and is potentially discontinuous).
	}	

\subsection{Policy Iteration}

\notations{
		$i$ & iteration index $\in \mathbb{N}$	\\
		$v_i$, $V_i$ & solution to the BE at iteration $i$; $V_i \doteq v_i / \Delta t$ 	\\
		${\hat v}_*$ & limit function ${\hat v}_*(x) \doteq \sup_i v_i(x) = {\displaystyle \lim_{i \to\infty}} v_i(x)$	\\		
		$\pi_0$, $\pi_i$ & an initial policy and the policy at iteration $i$	\\
		$\Delta t$ & small time step $(0 < \Delta t \ll 1)$	\\
		$\gamma_\mathsf{d}$	& discount factor $\doteq \gamma^{\Delta t}$ in discrete time	\\
		${\hat \gamma}_\mathsf{d}$ & an approximation of $\gamma_\mathsf{d} \approx {\hat \gamma}_\mathsf{d} \doteq 1 - \alpha_\mathsf{d}$ 	\\
		$ \alpha_\mathsf{d}$ & attenuation rate $\doteq \alpha \Delta t = -\ln \gamma_\mathsf{d}$ in discrete time
}

\subsection{Optimal Control and LQRs}

\notations{
		$c$ & cost function $c \doteq - r$	\\
		$c_\pi$ & closed-loop cost function $c_\pi(x) \doteq c(x, \pi(x))$	\\
		$C_t$ & cost at time $t$, i.e., $c(X_t, U_t) \in \mathbb{R}$ \\
		$J_\pi$ & cost value function $J_\pi \doteq - v_\pi$	\\
		$J_i$, $J_*$ & $J_i \doteq -v_i$ and $J_* \doteq -v_*$	\\
		$\Pi_0$ & set of all policies $\pi \in \Pi_\mathsf{Lip}$ s.t. $\pi(0) = 0$ 
}

Let $A \in \mathbb{R}^{l \times l}$, $B \in \mathbb{R}^{m \times l}$, and $C \in \mathbb{R}^{p \times l}$. Then, 
\begin{description} [itemsep=.25em]
	\item $A$ is \emph{Hurwitz} iff every eigenvalue has a negative real part;
	\item $(A, B)$ \emph{stabilizable} iff $\exists K \in \mathbb{R}^{m \times l}$ s.t. $A - BK$ is Hurwitz;
	\item $(C, A)$ \emph{observable} iff for any $\eta > 0$, the initial state $X_0$ can be determined from the history $\{ (CX_t, U_t)\}_{t \in [0, \eta]}$, where $\{X_t\}_{t \in [0, \eta]}$ satisfies ${\dot X}_t = AX_t + BU_t$.
\end{description}
\end{multicols}

\section{Highlights and Related Works}
	\label{section:related work}

	First, we briefly review the related works from RL and optimal control fields. We also highlight the main aspects of (i) the proposed PI methods and the underlying theory, both developed by \citet{Lee2020b}, and (ii) the appendices herein.

	\textbf{DPI \& IPI.} Two main PI methods in our work are DPI, whose policy evaluation is associated with the differential BE, and IPI associated with the integral BE. The former was inspired by the model-based PI methods in optimal control (e.g., \citealp{Rekasius1964,Leake1967,Saridis1979,Beard1997,Abu-Khalaf2005,Bian2014})
	and has a direct connection to TD(0) in CTS \citep{Doya2000,Fremaux2013} --- see \secref{subsection:DPI}. As regards to the latter, the integral BE was first introduced by \citet{BairdIII1993} in the field of RL and then spotlighted in the optimal control community, resulting in a series of IPI methods applied to a class of input-affine dynamics for optimal regulations \citep{Vrabie2009b,Lee2015}, robust control \citep*{Wang2016}, and (discounted) LQ tracking control (\citealp*{Modares2014,Zhu2015}; \citealp{Modares2016}), with a number of extensions to off-policy IPI methods (e.g., \citealp{Bian2014,Lee2015,Wang2016,Modares2016}). In our work \citep{Lee2020b}, 
	\begin{enumerate}
		\item the proposed IPI was motivated by the first IPI given by  \citet{Vrabie2009b} for nonlinear optimal regulations;
		\item the ideas of DPI and IPI have generalized for a broad class of dynamics and reward functions in CTS shown in \secref{section:preliminaries}, which includes the existing RL tasks \citep{Doya2000,Mehta2009,Fremaux2013} and the case tasks of RL and optimal control presented in \secsref{section:case studies} and \ref{appendix:section:additional case studies}.
	\end{enumerate}
		
	\textbf{Case Studies.} 
		\begin{enumerate}
			\item \textbf{(\secref{subsection:RL under u-AC setting}. Concave Hamiltonian Formulation).} A highlight is in \secref{subsection:RL under u-AC setting}, which draws the connection to the VGB greedy policy update \citep{Doya2000}, a general idea of simplifying policy improvement in input-constrained RL problems. There exist similar ideas in the optimal control field for input-constrained \citep{Lyashevskiy1996,Abu-Khalaf2005} and unconstrained optimal regulations \citep{Rekasius1964,Saridis1979,Beard1997,Abu-Khalaf2005,Vrabie2009b,Lee2015} under input-affine dynamics, and even for the non-affine dynamics \citep{Bian2014,Kiumarsi2016}.
			\item \textbf{(\secref{subsection:nonlinear optimal control}. Nonlinear Optimal Control).} The existing PI methods for the optimal regulations, presented in the literature above and by \citet{Leake1967}, are strongly linked to \secref{subsection:nonlinear optimal control}, where we case-studied asymptotic stability and fundamental properties of DPI and IPI applied to a general optimal regulation problem with non-affine dynamics and $\gamma \in (0, 1]$. The asymptotic stability conditions given in Theorem~\ref{thm:stability of optimal control} in \secref{subsection:nonlinear optimal control} are similar to and inspired by \citet[Assumptions~2.3 and 3.8]{Gaitsgory2015}. 
			\item \textbf{(\secref{subsection:discounted RL with bounded v} Discounted RL with Bounded VF).} Another highlight is the discounted RL problem with \emph{bounded} reward function (\secref{subsection:discounted RL with bounded v}). In this case, the VF is guaranteed to be bounded for \mbox{any policy}, by which the underlying PI theory becomes dramatically simplified and clear (see Corollary~\ref{cor:properties when R is bounded}). This framework is akin to the RL tasks in a finite MDP, where the reward defined for each state transition is bounded \citep{Sutton2018}. 
		\end{enumerate}
	See also \secref{section:simulation} for simulation examples of those case studies in \secref{subsection:RL under u-AC setting}, \ref{subsection:discounted RL with bounded v}, and \ref{subsection:nonlinear optimal control}, for RL and optimal control. 
			
	\textbf{Admissibility \& Asymptotic Stability.} Theoretically, since we consider a stability-free RL framework (under the minimal assumptions in \secref{section:preliminaries}), we excluded asymptotic stability from the definition of an admissible policy. Here, the notion of admissibility in optimal control has been defined with asymptotic stability (e.g., \citealp{Beard1997,Abu-Khalaf2005,Vrabie2009b,Modares2014,Bian2014,Lee2015} to name a few), and this work is the first to define admissibility in CTS \emph{without} asymptotic stability. Conversely, in a general optimal control problem, we also showed that when $\gamma = 1$, admissibility, according to our definition, implies asymptotic stability (if the associated VF is $\mathrm{C}^1$) --- see Theorem~\ref{thm:stability of optimal control} and Remarks~\ref{remark:admissibility implies asymptotic stability} and \ref{remark:admissibility} in \secref{subsection:nonlinear optimal control}. This means that asymptotic stability can be removed from the definition of admissibility even in \emph{optimal control}. The admissibility in discounted optimal control was also investigated in \secref{subsection:nonlinear optimal control} under the condition weaker than a Lyapunov's global asymptotic stability criterion (e.g., see Theorem~\ref{thm:uniqueness condition in optimal control}). 
		
	\textbf{(Mode of) Convergence.} We characterized the convergence properties of the PI methods towards the optimal solution in the following three ways. Those three modes provide different convergence conditions and compensate for one another.	
	\begin{enumerate}
		\item In the first characterization, we employed \citet{Bessaga1959}'s converge fixed point principle to show that the VFs generated by the PI methods converge to the optimal one in a metric (Theorem~\ref{thm:true convergence in metric}). This first-type convergence, called convergence in a metric, is weaker than locally uniform convergence below but does not impose any other assumptions than the existence and uniqueness of a fixed point that turns out to be the optimal VF by Corollary~\ref{cor:weak optimality}.
		\item  The second way was to extend the approach of \citet{Leake1967}'s, suggesting continuity of the PI operator (see Theorem~\ref{thm:true uniform convergence:1}) as one of the additional conditions for locally uniform convergence.
		\item  Lastly, we also generalized the convergence proof from the optimal control literature \citep{Saridis1979,Beard1997,Murray2002,Abu-Khalaf2005,Bian2014} to our RL framework, resulting in the strongest convergence among the three, under a certain condition other than the two above (see Theorem~\ref{thm:true uniform convergence:2}). In this direction, we highlight that for the proof of this third type convergence, the gradients of the VFs obtained by the PIs need to be assumed to converge locally uniformly, \emph{even for the existing results in optimal control}, as the convergence of the generated VFs does not imply any convergence of their derivatives (see Remark~\ref{remark:the needs for locally uniform convergence}).
	\end{enumerate}

\textbf{LQR.} In \secref{subsection:LQR}, we discuss DPI and IPI applied to a class of the LQR tasks \citep[Chapter~16]{Lancaster1995} where bilinear cost terms of states and controls exist. Here, DPI falls into a particular case of the existing general matrix-form PIs \citep{Arnold1984,Mehrmann1991}, but this study slightly generalizes many existing PI methods for the LQRs (e.g., \citealp{Kleinman1968,Vrabie2009a}; \citealp*{Lee2014}) by taking such bilinear cost terms into considerations, with the relaxation of the positive definite matrix assumption imposed on the general matrix-form PI \citep[Theorem~11.3]{Mehrmann1991}.

\section{More on the Bellman Equations with the Boundary Condition} 
\label{appendix:sec:replacement of bdy condition}

Here, the theory on (the uniqueness of) the BEs established in \secref{subsection:BE with boundary condition} is elaborated with supplementary  theorems and discussions. Let $v: \mathcal{X} \to \mathbb{R}$ be a function s.t. for a policy~$\pi$, either of the followings holds:
			\begin{align}
				&\text{(1) $v$ satisfies the integral BE: }
				v(x) = \mathbb{G}_\pi^x \big [ \mathfrak{R}_{\eta}
				+ \gamma^{\eta} \! \cdot\! v(X_{\eta}) \big ] \qquad \forall x \in \mathcal{X},
				\tag{\ref{eq:Bellman eq for v}}
				\\[5pt]
				&\text{(2) $v$ is $\mathrm{C}^1$ and satisfies the differential BE: }
				\alpha \cdot v (x) = 
					h(x, \pi(x), \nabla v(x)) \qquad \forall x \in \mathcal{X}.
				\tag{\ref{eq:differential BE for v}}
			\end{align}	
In \secref{subsection:BE with boundary condition}, we showed that the boundary condition~\eqref{eq:uniqueness condition for v}:
\begin{equation}
	\lim_{k \to \infty} \mathbb{G}_\pi^x \big  [ \, \gamma^{k \cdot \eta} \cdot v(X_{k \cdot \eta}) \, \big ] = 0 \qquad \forall x \in \mathcal{X}
	\tag{\ref{eq:uniqueness condition for v}}
\end{equation}
is necessary and sufficient for the policy $\pi$ being admissible and a solution $v$ to the BE \eqref{eq:Bellman eq for v} or \eqref{eq:differential BE for v} being equal to the VF~$v_\pi$. In other words, the boundary condition~\eqref{eq:uniqueness condition for v} ensures admissibility of $\pi$ and  uniqueness of the solution $v$ to the BE. However, except for a few cases, \eqref{eq:uniqueness condition for v} is hard or even impossible to check as it is a condition at infinity in time. The theorem below shows admissibility and weaker properties of the BEs but without the boundary condition~\eqref{eq:uniqueness condition for v}.

\begin{theorem}
	\label{thm:general policy evaluation thm}
	Let $\eta > 0$ be fixed and suppose $v$ satisfies either the integral BE~\eqref{eq:Bellman eq for v} or, with $v \in \mathrm{C}^1$, the differential BE~\eqref{eq:differential BE for v}. If $v$ is upper bounded (by zero if $\gamma = 1$), then (i) $\pi$ is admissible and $v \leqslant v_\pi$; (ii) the limit in \eqref{eq:uniqueness condition for v} exists and satisfies
	\[
		 v(x) - v_\pi(x) = \lim_{k \to \infty} \mathbb{G}_\pi^x \big [ \gamma^{k \cdot  \eta} \cdot v(X_{k \cdot \eta}) \big ] \leq 0 \qquad \forall x \in \mathcal{X}.		
	\]
\end{theorem}

\begin{Proof}
	Suppose $v$ satisfies the integral BE~\eqref{eq:Bellman eq for v} without loss of generality (or, convert the differential BE~\eqref{eq:differential BE for v} into \eqref{eq:Bellman eq for v} via Lemma~\ref{lemma:Bellman ineq} and fix $\eta > 0$). Then, the repetitive applications of the BE~\eqref{eq:Bellman eq for v} to itself $k$-times result in
	\begin{align*}
		v(x) 
		= \mathbb{G}_\pi^x \big [ \mathfrak{R}_\eta + \gamma^\eta \! \cdot \! v(X_\eta) \big ]	
		= \mathbb{G}_\pi^x \big [ \mathfrak{R}_{2 \eta} + \gamma^{2 \eta} \! \cdot \! v(X_{2 \eta}) \big ]
		=
		\;\, \cdots \;\,
		= \mathbb{G}_\pi^x \big [ \mathfrak{R}_{k \cdot \eta} + \gamma^{k \cdot \eta} \! \cdot \! v(X_{k \cdot \eta}) \big ]	\qquad \forall x \in \mathcal{X}.
	\end{align*}
	Hence, taking the limit $k \to \infty$ and noting that $v_\pi(x) = \lim_{k \to \infty} \mathbb{G}_\pi^x \big [ \mathfrak{R}_{k \cdot \eta}]$, we have 
	\[
		v(x) - v_\pi(x) = \lim_{k \to \infty} \mathbb{G}_\pi^x \big [ \gamma^{k \cdot \eta} \! \cdot \! v(X_{k \cdot \eta}) \big ] \leq \sup_{x \in \mathcal{X}} v(x) \cdot \lim_{k \to \infty} \gamma^{k \cdot \eta} = 0 \qquad \forall x \in \mathcal{X},
	\]
	where the inequality is true since $v$ is upper-bounded (by zero if $\gamma = 1$) and $\gamma \in (0, 1]$. Now that we established $v \leq v_\pi$, the policy $\pi$ is admissible as $-\infty < v(x) \leq v_{\pi}(x) \leq \widebar v < \infty$ for all $x \in \mathcal{X}$ by Lemma~\ref{lemma:VF range}, and the proof is completed.
\end{Proof}

In what follows, we introduce conditions sufficient for the boundary condition \eqref{eq:uniqueness condition for v} to be true. 
\begin{lemma}
	\label{lemma:boundary condition:sufficiency}
	Suppose $v$ is upper-bounded (by zero if $\gamma = 1$). Then, $v$ and a policy $\pi$ satisfy the boundary condition~\eqref{eq:uniqueness condition for v} if one of the followings (\emph{\textbf{a}} or \emph{\textbf{b}}) is true:
	\begin{enumerate}
		\item [\emph{\textbf{a.}}] $v$ is $\mathrm{C}^1$, and there exists a positive constant $\kappa > 0$ s.t. ${\dot v} (x, \pi(x)) \geq (\alpha - \kappa) \cdot v(x)$ for all $x \in \mathcal{X}$; \\[-5pt]
		\item [\emph{\textbf{b.}}] there exists a function $\zeta: \mathcal{X} \to \mathbb{R}$ and a constant $\ushort \alpha < \alpha$, both possibly depending on $\pi$, s.t. 
			\[
					\forall x \in \mathcal{X}\!: \; 
					\mathbb{G}_\pi^x [v(X_t)] \geq \zeta(x) \cdot \exp(\ushort \alpha \hspace{0.1em} t)
					 \; \text{ for all } t \in \mathbb{T}.
			\]
	\end{enumerate}
\end{lemma}

\newpage

\begin{Proof}
	\textbf{a.} Denoting $J \doteq -v$, the inequality can be written as
	\begin{align*}
		{\dot J} (x, \pi(x)) \leq (\alpha - \kappa) \cdot J(x) \qquad \forall x \in \mathcal{X}.		
	\end{align*}
	Hence, the application of the \citet{Gronwall1919}'s inequality results in 
	$
		\mathbb{G}_\pi^x [ J(X_t) ] \leq e^{(\alpha - \kappa) t} \cdot J(x)
	$ for all $x \in \mathcal{X}$. That is, 
	\begin{align*}
		e^{-\alpha t} \cdot \ushort J  \leq \mathbb{G}_\pi^x [ e^{-\alpha t} J(X_t) ] \leq e^{-\kappa t}\cdot J(x) \qquad \forall x \in \mathcal{X}
	\end{align*}
	where $\ushort J \in \mathbb{R}$ is a lower-bound of $J$. Take $\ushort J = 0$ if $\gamma = 1$ (note: $v$ ($=-J$) is assumed upper-bounded, \emph{by zero if $\gamma = 1$}). Then, since $\kappa > 0$, $\alpha \geq 0$, and $\ushort J = 0$ whenever $\alpha = 0$, both left and right sides converge to zero as $t \to \infty$, resulting in 
	\[
		\lim_{t \to \infty} \mathbb{G}_\pi^x [ e^{-\alpha t} J(X_t) ] = 0 \qquad x \in \mathcal{X},
	\]
	which implies the boundary condition~\eqref{eq:uniqueness condition for v} since $\gamma = e^{-\alpha}$ and $J = -v$. 
	
	\textbf{b.} Since we assume $v$ is upper bounded (by zero if $\alpha = 0$), the inequality implies that
	\begin{align*}
		\forall x \in \mathcal{X}: \;\, e^{-(\alpha - \ushort \alpha) t} \cdot \zeta(x) \leq \mathbb{G}_\pi^x \big  [ \, e^{-\alpha t} \cdot v(X_{t}) \, \big ] \leq \bar \upsilon  \cdot e^{-\alpha t} \;\, \text{ for all } t \in \mathbb{T},
	\end{align*}
	where $\bar  \upsilon \in \mathbb{R}$ is an upper-bound of $v$. Here, $\bar  \upsilon$ is finite and, if $\alpha = 0$, zero. Since $\alpha - \ushort \alpha > 0$, $\alpha \geq 0$, and $\bar \upsilon = 0$ whenever $\alpha = 0$, both left and right sides converge to zero as $t \to \infty$, resulting in 
	$
		\lim_{t \to \infty} \mathbb{G}_\pi^x \big  [ \, e^{-\alpha t} \cdot v(X_{t}) \, \big ] =  0$ for all $x \in \mathcal{X}
	$.
\end{Proof}

\begin{lemma}
	\label{lemma:appendix:conversion b.t.w. BEs for arbitrary eta}
	If $v \in \mathrm{C}^1$ satisfies 	either the integral BE~\eqref{eq:Bellman eq for v} for arbitrarily small $\eta > 0$ or the differential BE~\eqref{eq:differential BE for v}, then 
	\begin{equation}
		\alpha \cdot v(x) = r_\pi(x) + {\dot v}(x, \pi(x))
		\qquad \forall x \in \mathcal{X}.
		\label{eq:appendix:differential BE with dot v}
	\end{equation}	
\end{lemma}

\begin{proof}
	If $v \in \mathrm{C}^1$ satisfies the integral BE~\eqref{eq:Bellman eq for v} for arbitrary small $\eta > 0$, then rearranging the BE as 
	\begin{align*}
		\big ( 1 - \gamma^{\eta} \big ) \cdot v(x) =
		\mathbb{G}_\pi^x \Big [ \mathfrak{R}_{\eta} + \gamma^{\eta} \! \cdot \! \big (
		v(X_\eta) - v(X_{0}) \big ) \Big ] \qquad \forall x \in \mathcal{X},
	\end{align*}
	dividing it by $\eta$, and limiting $\eta \to 0$ yields \eqref{eq:appendix:differential BE with dot v}. On the other hand, the differential BE~\eqref{eq:differential BE for v} for $v \in \mathrm{C}^1$ is also equivalent to \eqref{eq:appendix:differential BE with dot v} by $h(x, u, \nabla v(x)) = r(x, u) + {\dot v}(x, u)$ (see the definition \eqref{eq:def:Hamiltonian} of Hamiltonian~$h$ and note that ${\dot v}(x, u) = \nabla v(x) f(x, u)$). 
\end{proof}

Combining Lemmas~\ref{lemma:boundary condition:sufficiency} and \ref{lemma:appendix:conversion b.t.w. BEs for arbitrary eta} with Theorem~\ref{thm:uniqueness condition}, we obtain the following theorem, in which the boundary condition \eqref{eq:uniqueness condition for v} is not assumed but proven to be true by Lemmas~\ref{lemma:boundary condition:sufficiency} and \ref{lemma:appendix:conversion b.t.w. BEs for arbitrary eta}, under the given conditions.

\begin{theorem}
	\label{thm:boundary condition:sufficiency}
	Suppose $v$ is upper bounded (by zero if $\gamma = 1$) and satisfies $r_\pi \leqslant \kappa \cdot v$ for a policy $\pi$ and a constant $\kappa > 0$. Then, $\pi$ is admissible and $v = v_\pi$ if one of the followings (\emph{\textbf{a}} or \emph{\textbf{b}}) is true.
	\begin{enumerate}
		\item [\emph{\textbf{a.}}] $v$ is $\mathrm{C}^1$ and satisfies either the integral BE~\eqref{eq:Bellman eq for v} for arbitrarily small $\eta > 0$ or the differential BE~\eqref{eq:differential BE for v};\\[-5pt]
		\item [\emph{\textbf{b.}}] 
			$
			\begin{cases}
				\text{$v$ satisfies the integral BE~\eqref{eq:Bellman eq for v} for a fixed horizon $\eta > 0$;} \\[2.5pt]
				\text{there exist a function $\xi: \mathcal{X} \to \mathbb{R}$ and a constant $\ushort \alpha < \alpha$, both possibly depending on $\pi$, s.t. for all $x\in \mathcal{X}$,} 
			\end{cases}
			$
			\begin{align}
					\mathbb{G}_\pi^x [R_t] \geq \xi(x) \cdot \exp(\ushort \alpha \hspace{0.1em} t)
					 \; \text{ for all } t \in \mathbb{T}.
					\tag{\ref{eq:exponential increasing condition for admissibility}}
			\end{align}			
	\end{enumerate}
\end{theorem}

\begin{Proof}
	For both cases, $\pi$ is admissible by Theorem~\ref{thm:general policy evaluation thm}, and we prove $v = v_\pi$ for each case as follows.

	\textbf{a.} $v \in \mathrm{C}^1$ satisfies \eqref{eq:appendix:differential BE with dot v} by Lemma~\ref{lemma:appendix:conversion b.t.w. BEs for arbitrary eta}, hence substituting the inequality $r_\pi \leqslant \kappa \cdot v$ into \eqref{eq:appendix:differential BE with dot v} yields 
	\[
		\alpha \cdot v(x) \leq \kappa \cdot v(x) + {\dot v}(x, \pi(x)) \qquad \forall x \in \mathcal{X},
	\]
	and the application of Lemma~\ref{lemma:boundary condition:sufficiency}a and Theorem~\ref{thm:uniqueness condition} concludes $v = v_\pi$.
	
	\textbf{b.} By $r_\pi \leqslant \kappa \cdot v$ and \eqref{eq:exponential increasing condition for admissibility}, the following inequality holds:
	\[
		\mathbb{G}_\pi^x [v(X_t)] \geq \kappa^{-1} \cdot \mathbb{G}_\pi^x [R_t] \geq {\zeta}(x) \cdot \exp(\ushort \alpha \hspace{0.1em} t)
		\qquad \forall x \in \mathcal{X},
	\] 
	where $\zeta \doteq \kappa ^{-1} \cdot \xi$ and we substituted $\mathbb{G}_\pi^x [ r_\pi(X_t) ] = \mathbb{G}_\pi^x [ R_t ]$. Therefore, $v = v_\pi$ by Lemma~\ref{lemma:boundary condition:sufficiency}b and Theorem~\ref{thm:uniqueness condition}.
\end{Proof}

In Theorem~\ref{thm:boundary condition:sufficiency}a, the integral BE~\eqref{eq:Bellman eq for v} can replace the differential BE~\eqref{eq:differential BE for v}, but only when $\eta > 0$ is arbitrary small. If the BE~\eqref{eq:Bellman eq for v} is true for a \emph{fixed} $\eta > 0$, then Theorem~\ref{thm:boundary condition:sufficiency}b suggests an additional condition for $v = v_\pi$, i.e., the inequality \eqref{eq:exponential increasing condition for admissibility}. We note that for $\gamma \in (0, 1)$ (i.e., $\alpha > 0$), the lower-bound \eqref{eq:exponential increasing condition for admissibility} on $R_t$ is true for any policy~$\pi$ that makes every state trajectory (i) bounded or (ii) even diverge exponentially with the rate smaller than $\alpha$. For $\gamma = 1$ (i.e., $\alpha = 0$ and $r_\mathsf{max} = 0$ --- see \secref{subsection:RL Problem}), the inequality \eqref{eq:exponential increasing condition for admissibility} implies exponential convergence $R_t \to 0$.

The conditions in Theorem~\ref{thm:boundary condition:sufficiency} are particularly related to the optimal control framework in \secref{subsection:nonlinear optimal control} but can be also applied to any case in our work to replace the boundary conditions \eqref{eq:uniqueness condition for v} and \eqref{eq:boundary condition for PI}. For example, the boundary condition \eqref{eq:boundary condition for PI} can be replaced by the following one(s):
	\begin{enumerate}
		\item $v_i$ is upper-bounded (by zero if $\gamma = 1$) and satisfies $r_{\pi_{i-1}} \leqslant \kappa_i \cdot v_i$ for a constant $\kappa_i > 0$;
		\item for IPI, either 
			$
			\begin{cases}
				\text{$v_i$ satisfies the integral BE~\eqref{eq:integral BE in IPI} therein, \emph{for arbitrary small} $\eta > 0$, or  }
				\\[2.5pt]
				\text{$\exists$a function $\xi_i: \mathcal{X} \to \mathbb{R}$ and a constant $\ushort \alpha_i < \alpha$ s.t. $
					\mathbb{G}_{\pi_{i-1}}^x [R_t] \geq \xi_i(x) \cdot \exp(\ushort \alpha_i \hspace{0.1em} t)$ for all $(x, t) \in \mathcal{X} \times \mathbb{T}$.}
			\end{cases}
			$
	\end{enumerate}
Theorem~\ref{thm:boundary condition:sufficiency} under the above condition(s) can replace Theorem~\ref{thm:uniqueness condition} with the boundary condition~\eqref{eq:boundary condition for PI}, in the proofs and statements of Theorems (e.g.,  see Theorem~\ref{thm:fundamental properties in optimal control} in \secref{subsection:nonlinear optimal control}, with Theorem~\ref{thm:uniqueness condition in optimal control} and their proofs in \secref{appendix:proofs:case studies}; see also Theorem~\ref{thm:fundamental properties} in \secref{section:fundamental properties of PI} and its proof in \secref{appendix:C}).

\section{Existence and Uniqueness of the Maximal Function $u_*$}
\label{appendix:section:maximal function}

This appendix provides the details about the existence and uniqueness of the maximal function $u_*$ in \secref{subsection:policy improvement} satisfying
		\begin{equation}
				u_*(x, p) \in \Argmax_{u \in \mathcal{U}} h(x, u, p) \;\;\, \forall (x, p) \in \mathcal{X} \times \mathcal{X}^\mathsf{T},
			\tag{\ref{eq:assumption:general condition for policy improvement}}
		\end{equation}
by which a maximal policy~$\pi'$ over $\pi \in \Pi_\mathsf{a}$, defined as a continuous function $\pi':\mathcal{X} \to \mathcal{U}$ s.t.
	\begin{equation}
		\pi'(x) \in \Argmax_{u \in \mathcal{U}} h(x, u, \nabla v_\pi(x)) \;\;\, \forall x \in \mathcal{X},
		\tag{\ref{eq:assumption:policy improvement}}
	\end{equation}	
can be represented in a closed form:  
	\begin{align}
		\pi'(x) = u_* (x, \nabla v_\pi (x) ).
		\tag{\ref{eq:closed-form expression of the maximal policy under uniqueness}}
	\end{align}
\begin{enumerate}
	\item \textbf{(Existence)} If $\mathcal{U}$ is \emph{compact}, then for each $(x, p) \in \mathcal{X} \times \mathcal{X}^\mathsf{T}$, the maximum of the function $u \mapsto h(x, u, p)$ exists by continuity of the Hamiltonian function $h$ \citep[Theorem~4.16]{Rudin1964}. That is, \emph{a function $u_*: \mathcal{X} \times \mathcal{X}^\mathsf{T} \!\! \to \mathcal{U}$ satisfying \eqref{eq:assumption:general condition for policy improvement} always exists whenever $\mathcal{U}$ is compact.}
		\\[-5pt]
		
	\item \textbf{(Uniqueness)} If $\mathcal{U}$ is \emph{convex} and the function $u \mapsto h(x, u, p)$ is \emph{concave} and $\mathrm{C}^1$ for each $(x, p) \in \mathcal{X} \times \mathcal{X}^\mathsf{T}$, then the maximization \eqref{eq:assumption:general condition for policy improvement} falls into a convex optimization in which any regular point $\bar u \in \Int{\mathcal{U}}$ such that  
		\[
			\partial h(x, \bar u, p) / \partial \bar u = 0,
		\]
		if exists, belongs to the $\mathrm{argmax}$-set in \eqref{eq:assumption:general condition for policy improvement} \citep[Theorem~7.15]{Sundaram1996} and thus can be the maximal argument $u_*(x, p)$ satisfying~\eqref{eq:assumption:general condition for policy improvement}. In this case, $\pi'(x)$ in \eqref{eq:assumption:policy improvement} corresponds to a regular point $\bar u$ for $p = \nabla v_\pi(x)$. Besides, as exemplified in \secref{subsection:RL under u-AC setting}, if $u \mapsto h(x, u, p)$ is \emph{strictly concave}, then such a regular point $\bar u$, if exists, is unique, meaning that $u_*(x, p)$ in \eqref{eq:assumption:general condition for policy improvement} is determined \emph{uniquely}  \citep[Theorems~7.14 and 7.15]{Sundaram1996}, hence so is each $\pi'(x)$ by \eqref{eq:closed-form expression of the maximal policy under uniqueness}.
		\end{enumerate}

\section{Theory of Optimality}
\label{appendix:section:optimality}

In this appendix, we provide a theory of optimality regarding (i) an HJB solution $(v_*, \pi_*)$:
\begin{tagcases} [\forall x \in \mathcal{X}:]
	\displaystyle \alpha \cdot v_*(x) 
	= \max_{u \in \mathcal{U}} h(x, u, \nabla v_*(x)) &
	\tag{\ref{eq:HJBE}}
	\\[5pt]
	\displaystyle \pi_*(x)
	\in \Argmax_{u \in \mathcal{U}} h(x, u, \nabla v_*(x)) &
	\tag{\ref{eq:optimal policy}}
\end{tagcases}
and (ii) a fixed point $v^*$ of $\mathcal{T}$ (i.e., $v^* \in \mathcal{V}_\mathsf{a}$ s.t. $\mathcal{T} v^* = v^*$). Here, note that a fixed point $v^*$ of $\mathcal{T}$ is always a solution to the HJBE~\eqref{eq:HJBE} by Proposition~\ref{prop:a fixed point is a solution to HJBE} (but not vice versa). Hence, if every solution $v_*$ to the HJBE~\eqref{eq:HJBE} is proven to be optimal, then so is every fixed point $v^*$ of $\mathcal{T}$. We first state the following theorem regarding the optimality of the HJB solution $(v_*, \pi_*)$.

\newpage
\begin{theorem} [Optimality]
	\label{thm:general optimality of the HJB solution}
	If a solution $v_* \in \mathrm{C}^1$ to the HJBE \eqref{eq:HJBE} exists and is upper-bounded (by zero if $\gamma = 1$), then for any policy~$\pi_*$ satisfying \eqref{eq:optimal policy},
	\begin{enumerate}
		\item [\textbf{\emph{a}}.] $\pi_*$ is admissible and $v_* \leqslant v_{\pi_*}$;
		\item [\textbf{\emph{b}}.] $v_\pi \leqslant v_*$ if $\pi$ satisfies the boundary condition:
	\begin{equation}
		\smash{\lim_{t \to \infty}} \mathbb{G}_{\pi}^x \big  [ \, \gamma^{t} \cdot v_*(X_{t}) \, \big ] = 0 \qquad \forall x \in \mathcal{X}
		\label{eq:boundary condition with v_*}
	\end{equation}	
	(conversely, \eqref{eq:boundary condition with v_*} is true if $\pi$ is admissible and $v_\pi \leqslant v_*$);
	\item [\textbf{\emph{c}}.]
	$v_* = v_{\pi_*}$ if either (i) the boundary condition \eqref{eq:boundary condition with v_*} is true for $\pi = \pi_*$ or (ii) $r_{\pi_*} \leqslant \kappa  \cdot v_*$ holds for a constant $\kappa > 0$;
	\item [\textbf{\emph{d}}.] 
			$(v_*, \pi_*)$ is optimal if $v \leqslant v_*$ for any $v \in \mathcal{V}_\mathsf{a}$.
	\end{enumerate}
\end{theorem}

\begin{Proof}
\textbf{a}. Substituting \eqref{eq:optimal policy} into the HJBE~\eqref{eq:HJBE}, we have 
	\begin{equation}
		\alpha \cdot v_*(x) = h(x, \pi_*(x), \nabla v_*(x)) \qquad \forall x \in \mathcal{X}.		
		\label{eq:differential BE for v_*}
	\end{equation}
	Then, $\pi_*$ is admissible and $v_* \leqslant v_{\pi_*}$ by Lemma~\ref{lemma:policy improvement lemma} or Theorem~\ref{thm:general policy evaluation thm}. 
	
\textbf{b}. By the HJBE~\eqref{eq:HJBE}, $v_*$ and any policy~$\pi$ satisfy
	\[
		\alpha \cdot v_*(x) \geq h(x, \pi(x), \nabla v_*(x)) \qquad \forall x \in \mathcal{X},
	\]	
	hence if $\pi$ satisfies \eqref{eq:boundary condition with v_*}, then applying Lemma~\ref{lemma:Bellman ineq} and taking the limit $\eta \to \infty$ results in
\begin{align*}
	v_*(x) \geq \underbrace{\lim_{\eta \to \infty}  \mathbb{G}_\pi^x 
			\bigg [ 
				\int_0^{\eta}
				\gamma^{t} \cdot R_t \, dt 
			\bigg ]}_{=v_\pi(x)}
			+ \underbrace{\lim_{\eta \to \infty} \mathbb{G}_\pi^x [\gamma^{\eta} \cdot v_*(X_{\eta})]}_{=0} 
			= v_\pi(x)	\qquad \forall x \in \mathcal{X}.
\end{align*}	
Conversely, if $\pi$ is admissible and $v_\pi \leqslant v_*$, then Proposition~\ref{prop:VF satisfies the boundary condition} and the upper-boundedness of $v_*$ (by zero if $\gamma = 1$) results in
	\begin{align*}
	0 = \lim_{t \to \infty} \mathbb{G}_{\pi}^x \big  [ \, \gamma^{t} \cdot v_{\pi}(X_{t}) \, \big ] 
	\leq \lim_{t \to \infty} \mathbb{G}_{\pi}^x \big  [ \, \gamma^{t} \cdot v_*(X_{t}) \, \big ]
	\leq \sup_{x \in \mathcal{X}} v_*(x) \cdot \lim_{t \to \infty} \gamma^{t} \leq 0
	\end{align*}
	implying the boundary condition \eqref{eq:boundary condition with v_*}. 

\textbf{c}. The application of Theorems~\ref{thm:uniqueness condition} and \ref{thm:boundary condition:sufficiency}a to \eqref{eq:differential BE for v_*} directly proves $v_* = v_{\pi_*}$ under the respective conditions. 
	
\textbf{d}. The first part ``\textbf{a}'' and the condition ``$v \leqslant v_*$ for any $v \in \mathcal{V}_\mathsf{a}$'' imply that $\pi_*$ is admissible and $v \leqslant v_* \leqslant v_{\pi_*}$ for any $v \in \mathcal{V}_\mathsf{a}$; substituting $v = v_{\pi_*}$ results in $v_* = v_{\pi_*}$, which and the condition completes the proof.
\end{Proof}

Under the upper-boundedness of $v_* \in \mathrm{C}^1$ in Theorem~\ref{thm:general optimality of the HJB solution}, any policy $\pi_*$ given by \eqref{eq:optimal policy} dominates all policies $\pi$'s s.t. the boundary condition \eqref{eq:boundary condition with v_*} holds. On the other hand, certain additional conditions (e.g., \eqref{eq:boundary condition with v_*} holds for all admissible policies $\pi$'s) are required for the optimality condition ``$v \leqslant v_*$ for all $v \in \mathcal{V}_\mathsf{a}$'' in Theorem~\ref{thm:general optimality of the HJB solution}d to be true (e.g., see case studies in \secsref{appendix:subsection:optimality case studies} and \ref{subsection:discounted RL with bounded state trj})

\subsection{Sufficient Conditions for Optimality}
\label{subsection:sc for optimality}

Based on the properties of PIs --- convergence (Theorems~\ref{thm:convergence of PI}, \ref{thm:true convergence in metric}, \ref{thm:true uniform convergence:1}, and \ref{thm:true uniform convergence:2}) and monotonicity (Theorem~\ref{thm:fundamental properties}) --- we provide sufficient conditions for optimality, where the notion of ``optimality'' can be interpreted in a weaker sense than or in a similar manner to that shown in Theorem~\ref{thm:nc for optimality} (e.g., see \eqref{eq:strong optimality} below). In the latter case, once $v_*$ is the optimal VF, any policy $\pi_*$ satisfying \eqref{eq:optimal policy} comes to be optimal ($\because$ $v_* \leqslant v_{\pi_*}$ by Theorem~\ref{thm:policy improvement thm} and $v_{\pi_*} \leqslant v_*$ by optimality, hence  $v_* = v_{\pi_*}$). Specifically, we establish the notions of weak and strong optimality along with the following convergence properties introduced in \secref{subsection:convergence}: 
\begin{enumerate} [leftmargin=1.0cm, itemsep=0.25em, topsep=-0.65em]
	\item [($\mathsf{C}1$)] \textbf{(weak convergence)} $\mathcal{T}^{i - 1} v \to v_*$ in a metric;
	\item [($\mathsf{C}2$)] \textbf{(strong convergence)} $\mathcal{T}^{i - 1} v \to v_*$ locally uniformly;
	\item [($\mathsf{C}3$)] \textbf{(additional convergence)} $\nabla (\mathcal{T}^{i - 1} v) \to \nabla v_*$ locally uniformly and $\pi_i \to \pi_*$ pointwise,
\end{enumerate}
where we replaced $v_{i}$ with $\mathcal{T}^{i-1} v$ and $v_1 = v$.

First, we show that Assumption~\ref{assumption:uniqueness of fixed point} alone is sufficient for $v^*$ therein to be weak optimal, i.e., optimal in a metric. Note that $v^*$ is a solution $v_*$ to the HJBE~\eqref{eq:HJBE} by Proposition~\ref{prop:a fixed point is a solution to HJBE}. 

\begin{corollary}
	\label{cor:weak optimality}
		Under Assumption~\ref{assumption:uniqueness of fixed point}, there exists a metric $d$ on $\mathcal{V}_\mathsf{a}$ s.t. $\mathcal{T}$ is a contraction under $d$ and for every $v \in \mathcal{V}_\mathsf{a}$, 
		\begin{equation}
			v \leqslant \mathcal{T} v \leqslant \mathcal{T}^2 v \leqslant \cdots \leqslant \mathcal{T}^{i-1} v \; \smash{\xrightarrow{{\scriptsize i \to \infty}}} \; v^*,
			\label{eq:weak optimality}			
		\end{equation}
		where the convergence is in the metric~$d$.
\end{corollary}

\begin{Proof}
	Apply Theorems~\ref{thm:fundamental properties} and \ref{thm:true convergence in metric}.
\end{Proof}

Corollary~\ref{cor:weak optimality} characterizes $v^*$ as the optimal VF in the weak sense ($\mathsf{C1}$) --- as the unique limit point in a metric~$d$, of every monotonically increasing sequence of VFs generated by applying $\mathcal{T}$ recursively (or one of the PI methods). Under the metric~$d$, $\mathcal{T}$ is continuous since it is a contraction. 

Although the weak optimality of $v^*$ in Corollary~\ref{cor:weak optimality} looks reasonable, the downside is that convergence \eqref{eq:weak optimality} and continuity of $\mathcal{T}$ are w.r.t. an \emph{unknown} metric~$d$. With continuity of $\mathcal{T}$ under the uniform pseudometric $d_\Omega$, a stronger characterization of $v^*$ is possible, as shown in the next corollary. 

\begin{corollary}
	\label{cor:strong optimality:1}
		If $
			\lim_{i \to \infty} \mathcal{T}^{i-1} v \in \mathcal{V}_\mathsf{a} 
		$ for every $v \in \mathcal{V}_\mathsf{a}$ and for each compact subset $\Omega$ of $\mathcal{X}$, $\mathcal{T}$ is continuous under $d_\Omega$, then under Assumption~\ref{assumption:uniqueness of fixed point}, $v \leqslant v^*$ for every $v \in \mathcal{V}_\mathsf{a}$. 
\end{corollary}

\begin{Proof}
	Note that ${\hat v}_* = \lim_{i \to \infty} v_i = \lim_{i \to \infty} \mathcal{T}^{i-1} v_1 \in \mathcal{V}_\mathsf{a}$ pointwise by Theorem~\ref{thm:convergence of PI}a. Therefore, we have ${\hat v}_* \in \mathcal{V}_\mathsf{a}$ and the application of Theorems~\ref{thm:fundamental properties} and \ref{thm:true uniform convergence:1} for each $v$ ($= v_1$) $\in \mathcal{V}_\mathsf{a}$ completes the proof.
\end{Proof}

Under the given conditions on $\mathcal{T}$, Corollary~\ref{cor:strong optimality:1} states that $v^*$ in Assumption~\ref{assumption:uniqueness of fixed point} is truly the optimal VF over the space $\mathcal{V}_\mathsf{a}$ of all admissible VFs. This characterization of optimality: 
\begin{align}
	v^* \in \mathcal{V}_\mathsf{a} \text{ and } v \leqslant v^* \text{ for every } v \in \mathcal{V}_\mathsf{a}	
	\label{eq:strong optimality}
\end{align}
is exactly the same as that in Theorem~\ref{thm:nc for optimality} and obviously stronger than that in Corollary~\ref{cor:weak optimality}. Conversely, \eqref{eq:strong optimality} implies that $v^*$ is a fixed point of $\mathcal{T}$, as shown in Proposition~\ref{prop:fixed point and HJBE}a below.

\begin{proposition}
	\label{prop:fixed point and HJBE}
	$
	\begin{cases}
		\emph{\text{\textbf{a.}} If $v_* \in \mathcal{V}_\mathsf{a}$ is the optimal VF, then it is a fixed point of $\mathcal{T}$.}
		\\[5pt]
		\emph{\text{\textbf{b.}} The fixed point of $\mathcal{T}$ is unique over $\mathcal{V}_\mathsf{a}$ if so is the solution of the HJBE~\eqref{eq:HJBE}.}
	\end{cases}
	$
\end{proposition}

\begin{Proof}
	\textbf{a.} Let $v_* \in \mathcal{V}_\mathsf{a}$ be the optimal VF. Then, it satisfies the HJBE \eqref{eq:HJBE} by Theorem~\ref{thm:nc for optimality}, hence we have $v_* \leqslant \mathcal{T} v_*$ by Theorem~\ref{thm:policy improvement thm}. By optimality, $\mathcal{T} v_* \leqslant v_*$ is obvious. Therefore, $v_* = \mathcal{T} v_*$, i.e., $v_*$ is a fixed point of $\mathcal{T}$. \textbf{b.} Suppose $v_*$ is the unique solution to the HJBE~\eqref{eq:HJBE}, but there exists another VF $v_*' \neq v_*$ s.t. $v_*' = \mathcal{T} v_*'$. Then, $v_*'$ is a solution to the HJBE by Proposition~\ref{prop:a fixed point is a solution to HJBE} and thus by the uniqueness, $v_*' = v_*$, a contradiction. Therefore, if $v_*$ is a unique solution to the HJBE~\eqref{eq:HJBE} over $\mathcal{V}_\mathsf{a}$, then it is the unique fixed point of $\mathcal{T}$.
\end{Proof}

By Proposition~\ref{prop:fixed point and HJBE}b, the uniqueness of the fixed point $v^*$ of $\mathcal{T}$ can be replaced by that of the solution $v_*$ to the HJBE over $\mathcal{V}_\mathsf{a}$, and we have the following corollary that extends Theorems~\ref{thm:fundamental properties} and \ref{thm:true uniform convergence:2}.
	
\begin{corollary}
		\label{cor:strong optimality:2}
		Suppose that Assumption~\ref{assumption:convergence of nabla v_i and pi_i} holds for any initial admissible policy~$\pi_0$. Then, under Assumptions~\ref{assumption:closed graph} and \ref{assumption:uniqueness of HJB over C1}, the HJBE~\eqref{eq:HJBE} has a unique  solution $v_*$ over $\mathrm{C}^1$ s.t. 
		\begin{enumerate}
			\item the strong optimality \eqref{eq:strong optimality} and Assumption~\ref{assumption:uniqueness of fixed point} are true for $v^* = v_*$;
			\item for each initial admissible policy~$\pi_0$, there exists a function~$\pi_*$ s.t. \eqref{eq:optimal policy} holds and the generated VFs and policies satisfy the stronger convergence, i.e., \emph{($\mathsf{C}2$)} and \emph{($\mathsf{C}3$)}.
		\end{enumerate}
\end{corollary}

\begin{Proof}
	Theorems~\ref{thm:fundamental properties} and \ref{thm:true uniform convergence:2} imply that for a given admissible initial policy~$\pi_0$, there exists a solution $v_* \in \mathrm{C}^1$ to the HJBE \eqref{eq:HJBE} s.t. (i) \smash{$v_{\pi_0} \leqslant v_*$} and (ii) the convergence ($\mathsf{C}2$) and ($\mathsf{C}3$) hold for a function $\pi_*$ satisfying \eqref{eq:optimal policy}. Since the solution~$v_*$ is now unique over $\mathrm{C}^1$ by Assumption~\ref{assumption:uniqueness of HJB over C1} and $\pi_0$ is arbitrary, the former implies that $v \leqslant v_*$ for any $v \in \mathcal{V}_\mathsf{a}$. Moreover, by Theorem~\ref{thm:general optimality of the HJB solution}d, $v_*$ is the optimal VF and thus satisfies the strong optimality~\eqref{eq:strong optimality} for $v^* = v_*$. Since $\mathcal{V}_\mathsf{a} \subset \mathrm{C}^1$ by \eqref{eq:assumption:every admissible VF is C1}, $v_*$ is the unique solution of the HJBE over $\mathcal{V}_\mathsf{a}$ ($\subset \mathrm{C}^1$). Therefore, $v_*$ is the unique fixed point of $\mathcal{T}$ (i.e.,  Assumption~\ref{assumption:uniqueness of fixed point} holds for $v^* = v_*$) by Proposition~\ref{prop:fixed point and HJBE}, which completes the proof.
\end{Proof}

\newpage 

Under the given conditions in
Corollary~\ref{cor:strong optimality:1} or \ref{cor:strong optimality:2}, \eqref{eq:weak optimality} holds with \emph{locally uniform convergence} ($\mathsf{C}2$) (apply Theorem~\ref{thm:fundamental properties} for monotonicity) --- stronger than convergence ($\mathsf{C}1$) in a metric shown in Corollary~\ref{cor:weak optimality}. In addition, Corollary~\ref{cor:strong optimality:2} provides the additional convergence ($\mathsf{C}3$) without employing the PI operator $\mathcal{T}$ and any assumptions imposed on it. We note that even the stronger (i.e., locally uniform) convergence of $\langle \pi_i \rangle$ towards $\pi_* \in \Pi_\mathsf{a}$ can be obtained in the concave Hamiltonian formulation in \secref{subsection:RL under u-AC setting}, with both Assumptions~\ref{assumption:closed graph} and \ref{assumption:convergence of nabla v_i and pi_i}b for any $\pi_0 \in \Pi_\mathsf{a}$ in Corollary~\ref{cor:strong optimality:2} \emph{relaxed} (see Corollary~\ref{cor:strong optimality:concave h case} below).

In summary, we characterized $v_*$ in the Corollaries as a \emph{unique} VF to which $\langle \mathcal{T}^{i - 1}v\rangle$ for any $v \in \mathcal{V}_\mathsf{a}$ monotonically converges (i.e., satisfies \eqref{eq:weak optimality} with $v^* = v_*$) in their respective manners, 
where $v_*$ is assumed to be a unique fixed point of $\mathcal{T}$ (Corollaries~\ref{cor:weak optimality} and \ref{cor:strong optimality:1}) or a unique solution of the HJBE~\eqref{eq:HJBE} (Corollary~\ref{cor:strong optimality:2}). Here, the uniqueness is truly necessary --- otherwise, some sequence of VFs generated by PIs may converge to another VF $v_*' \neq v_*$. In this case, the optimality of $v_*$ becomes vague and not decidable unless $v_*' \leqslant v_*$ for any of such VFs $v_*'$. Since an optimal VF~$v_*$ is unique over $\mathcal{V}_\mathsf{a}$ as discussed in \secref{subsection:HJBE}, any two different VFs $v_*, v_*' \in \mathcal{V}_\mathsf{a}$ cannot be the optimal at the same time. 

A similar characterization of $v_*$ is possible without the assumptions and conditions imposed in the Corollaries, including the uniqueness of $v_*$ and $v^*$, but by proving or imposing (i) the boundary condition \eqref{eq:boundary condition with v_*} for a class of policies and (ii) one of the two conditions on $(v_*, \pi_*$) in Theorem~\ref{thm:general optimality of the HJB solution}c. This approach will be employed in the next subsection  (\secref{appendix:subsection:optimality case studies}) to characterize the optimality of $v_*$ (and $\pi_*$) under the given respective frameworks.

\subsection{Case Studies of Optimality}
\label{appendix:subsection:optimality case studies}

We now provide and discuss the condition(s) for optimality of the HJB solution $(v_*, \pi_*)$ under certain classes of RL problems shown in \secref{section:case studies} Case Studies --- specifically, the cases presented in \secsref{subsection:RL under u-AC setting}, \ref{subsection:discounted RL with bounded v}, and \ref{subsection:nonlinear optimal control}. 

\vspace{1.5em}

\textbf{Concave Hamiltonian Formulation (\secref{subsection:RL under u-AC setting}).} Under \eqref{eq:input-affine dynamics} and \eqref{eq:strictly concave reward}, Corollary~\ref{cor:strong optimality:2} can be simplified and strengthened with the assumptions on the policies and policy improvement therein \emph{relaxed}.

\begin{corollary}
	\label{cor:strong optimality:concave h case}
		If Assumption~\ref{assumption:convergence of nabla v_i and pi_i}a holds for any initial admissible policy~$\pi_0$, then under \eqref{eq:input-affine dynamics}, \eqref{eq:strictly concave reward}, and Assumption~\ref{assumption:uniqueness of HJB over C1}, there exists a unique HJB solution $(v_*, \pi_*)$ over $\mathcal{V}_\mathsf{a} \times \Pi_\mathsf{a}$ s.t. Assumption~\ref{assumption:uniqueness of fixed point} holds for $v^* = v_*$, $\pi \preccurlyeq \pi_*$ for all $\pi \in \Pi_\mathsf{a}$, $v_* = v_{\pi_*}$, and for any initial admissible policy~$\pi_0$, $v_i \to v_*$, $\nabla v_i \to \nabla v_*$, and $\pi_i \to \pi_*$, all locally uniformly.
\end{corollary}

\begin{Proof}
	Combine Lemma~\ref{lemma:concave h} with Corollary~\ref{cor:strong optimality:2}. Also note that the HJB policy $\pi_*$ satisfying \eqref{eq:optimal policy} is uniquely determined under \eqref{eq:input-affine dynamics} and \eqref{eq:strictly concave reward} by $\pi_*(x) = \sigma(F_\mathsf{c}^\mathsf{T} (x) \nabla v_*^\mathsf{T}(x))$--- see \secref{subsubsection:case I} for details.
\end{Proof}

Here, we have directly extended Corollary~\ref{cor:strong optimality:2} to \ref{cor:strong optimality:concave h case} above in the same way as extending Theorem~\ref{thm:true uniform convergence:2} to \ref{thm:true convergence}, by applying Lemma~\ref{lemma:concave h} under the concave Hamiltonian formulation \eqref{eq:input-affine dynamics} and \eqref{eq:strictly concave reward}. Therefore, as discussed in Remark~\ref{remark:generalization of u-AC RL problem:1} and \secref{subsubsection:case II}, Corollary~\ref{cor:strong optimality:concave h case} (specifically, Lemma~\ref{lemma:concave h}) can be further extended to 
\begin{enumerate}
	\item the input-affine case where the reward function $r$ satisfies the conditions in Remark~\ref{remark:generalization of u-AC RL problem:1} and $(x, \mathfrak{u}) \mapsto \sigma^x(\mathfrak{u})$ is continuous;
	\item the non-affine case \eqref{eq:a class of non-affine dynamics} and \eqref{eq:a class of reward fnc for non-affine dynamics} in a similar manner to Theorem~\ref{thm:true convergence:nonaffine case}, with $\varphi$ and $\mathfrak{c}$ possibly depending on the state $x \in \mathcal{X}$.
\end{enumerate}

\vspace{2em}

\textbf{Discounted RL Problems with Bounded VFs (\secsref{subsection:discounted RL with bounded v}).} In this case, we can dramatically improve the optimality theory with respect to the solution $v_*$ to the HJBE~\eqref{eq:HJBE} and the HJB policy  $\pi_*$ in \eqref{eq:optimal policy} (of course, under the Assumptions made in \secref{section:preliminaries}).

\begin{theorem} 
	\label{thm:bounded HJB solution}
	Let $\gamma \in (0, 1)$. If the HJBE~\eqref{eq:HJBE} has a bounded $\mathrm{C}^1$ solution $v_*$, then for any HJB policy $\pi_*$ satisfying \eqref{eq:optimal policy}, 
	\begin{enumerate} 
		\item $v_{\pi_*}$ is bounded (hence, admissible) and $v_* = v_{\pi_*}$;
		\item $\pi \preccurlyeq \pi_*$ for any policy~$\pi$.
	\end{enumerate}
	Moreover, $v_*$ is the unique solution to the HJBE~\eqref{eq:HJBE} over all bounded $\mathrm{C}^1$ functions $v: \mathcal{X} \to \mathbb{R}$.
\end{theorem}

\begin{Proof}
	The first two parts can be proven by applying Proposition~\ref{prop:boundary condition true when discounted and bounded} with $v = v_*$ and Theorem \ref{thm:general optimality of the HJB solution}b and c. For the uniqueness of $v_*$, note that if $v_*'$ is another bounded $\mathrm{C}^1$ solution to the HJBE, then we have $v_* \leqslant v_*'$ and $v_*' \leqslant v_*$, hence $v_* = v_*'$.
\end{Proof}

\newpage 

\textbf{Nonlinear Optimal Control (\secref{subsection:nonlinear optimal control}).} Under the assumptions and notations in \secref{subsection:nonlinear optimal control}, the optimality of an HJB solution $(J_*, \pi_*)$, with $J_* \doteq - v_*$, can be characterized as follows, without assuming the existence of the unique state trajectories.

\begin{theorem} 
	\label{thm:optimality in nonlinear optimal control}
	Under the assumptions and notations in \secref{subsection:nonlinear optimal control}, if there exists an HJB solution $(v_*, \pi_*)$ of \eqref{eq:HJBE} and \eqref{eq:closed-form expression of optimal policy} s.t.
	\begin{enumerate}
		\item $J_*$ is $\mathrm{C}^1_\mathsf{Lip}$, positive definite, and radially unbounded;
		\item if $\gamma \neq 1$, then either $x_\mathsf{e} = 0$ under $\pi_*$ is globally attractive or $\kappa_* J_* \leqslant c_{\pi_*}$ holds for a constant $\kappa_* > 0$,
	\end{enumerate}
	then, $\pi_* \in \Pi_\mathsf{a}$, $J_* = J_{\pi_*}$, and
	$J_* \leqslant J_\pi$ for any policy $\pi \in \Pi_0$ such that
	\begin{equation}
		t_\mathsf{max}(x; \pi) = \infty \text{ and } \smash{\lim_{t \to \infty}} \mathbb{G}_\pi^x [\gamma^{t} \cdot J_*(X_t)] = 0
		\qquad \forall x \in \mathcal{X}.
		\label{eq:boundary condition for J_*}
	\end{equation}
	Moreover, $x_\mathsf{e} = 0$ under $\pi_*$ is globally asymptotically stable if $\gamma = 1$ or 
	\begin{equation}
		\alpha J_*(x) < c_{\pi_*}(x) \qquad \forall x \in \mathcal{X} \setminus \{ 0 \}. 
		\label{eq:condition* for asymp stability}
	\end{equation}
\end{theorem}
\begin{Proof} The HJB policy $\pi_*$ satisfies \eqref{eq:closed-form expression of optimal policy}; $v_*$ ($= - J_*$) is $\mathrm{C}^1_\mathsf{Lip}$ and negative definite. Hence, $\pi_* \in \Pi_0$ by Lemma~\ref{lemma:pi'(0)=0}. Moreover, the HJBE~\eqref{eq:HJBE}, \eqref{eq:optimal policy}, and the positive definiteness of $c$, with $J_* = - v_*$ and $c = -r$, imply that 
	\[
		{\dot J}_*(x, \pi_*(x)) 
		= \alpha \cdot J_*(x) - c_{\pi_*}(x)
		\leq \alpha \cdot J_*(x)	
		\quad \forall x \in \mathcal{X}.
	\]
	$J_*$ is continuous, positive definite, and radially unbounded.  Hence, by Lemma~\ref{lemma:bound of a positive definite function}, there exist $\mathcal{K}_\infty$ functions $\rho_1$ and $\rho_2$ s.t.
	$
		\rho_1(\|x\|) \leq J_* (x) \leq \rho_2(\|x\|)
	$ for all $x \in \mathcal{X}$.
	Therefore, the application of Lemma~\ref{lemma:existence and uniqueness of state trj.} proves that $t_\mathsf{max}(x; \pi_*) = \infty$ for all $x \in \mathcal{X}$. The remaining proof is divided into the following two cases.
	\begin{enumerate}
		\item If $\gamma = 1$, then the HJBE~\eqref{eq:HJBE} and \eqref{eq:optimal policy} is reduced to 
			${\dot J}_*(x, \pi_*(x)) = -c_{\pi_*} (x)$ $\forall x \in \mathcal{X}$,
			where $c_{\pi_*}$ is positive definite by Lemma~\ref{lemma:positive definiteness of c_pi}. Therefore,
			$x_\mathsf{e} = 0$ under $\pi_*$ is globally asymptotically stable \citep[Theorem~4.2]{Khalil2002}, 
			with $J_*$ as the radially-unbounded Lyapunov function, and Theorem~\ref{thm:uniqueness under gas} results in $\pi_* \in \Pi_\mathsf{a}$ and $J_* = J_{\pi_*}$.
		\item For $\gamma \neq 1$, we first prove $\pi_* \in \Pi_\mathsf{a}$ and $J_* = J_{\pi_*}$, then global asymptotic stability. If $x_\mathsf{e} = 0$ under $\pi_*$ is globally attractive, then Theorem~\ref{thm:uniqueness under gas} proves $\pi_* \in \Pi_\mathsf{a}$ and $J_* = J_{\pi_*}$. Otherwise, if $\kappa_* J_* \leqslant c_{\pi_*}$ holds for some $\kappa_* > 0$, then $\pi_* \in \Pi_\mathsf{a}$ and $J_* = J_{\pi_*}$ by Theorem~\ref{thm:uniqueness condition in optimal control}a. Here, note that the HJBE~\eqref{eq:HJBE} and \eqref{eq:optimal policy} imply the differential BE~\eqref{eq:differential BE for v} for $v = v_*$ and $\pi =  \pi_*$. Now that $\pi_* \in \Pi_\mathsf{a}$ and $J_* = J_{\pi_*}$, $x_\mathsf{e} = 0$ under $\pi_*$ is globally asymptotically stable if $\alpha J_*(x) < c_{\pi_*}(x)$ for all $x \in \mathcal{X} \setminus \{ 0 \}$, by Theorem~\ref{thm:stability of optimal control} and the radial unboundedness of $J_*$.
	\end{enumerate}
	For any case, we have $\pi_* \in \Pi_\mathsf{a}$, $J_* = J_{\pi_*}$, and global asymptotic stability under the given conditions. Moreover, $J_* \leqslant J_\pi$ for any policy $\pi \in \Pi_0$ s.t. \eqref{eq:boundary condition for J_*} holds, by Theorem~\ref{thm:general optimality of the HJB solution}b. So, the proof is completed.
\end{Proof}

The conditions on $(J_*, \pi_*)$ in Theorem~\ref{thm:optimality in nonlinear optimal control}  can be considered a limit version of the three conditions presented in \secref{subsection:nonlinear optimal control}: 
\begin{enumerate}
	\item [\textsf{(A)}] $\pi_0 \in \Pi_\mathsf{a}$,
	\item [\textsf{(B)}] $J_i \in \mathrm{C}^1_\mathsf{Lip}$ is positive definite and radially unbounded,
	\item [\textsf{(C)}] if $\gamma \neq 1$, then either (i) $x_\mathsf{e} = 0$ under $\pi_{i-1}$ is globally attractive, or (ii) there exists $\kappa_i > 0$ s.t. $\kappa_i \! \cdot \! J_i \leqslant c_{\pi_{i-1}}$.
\end{enumerate}
So, similarly to the inequality in \textsf{(C)}, if $\kappa_* < \alpha$, then the inequality $\kappa_* J_* \leqslant c_{\pi_*}$ is weaker than both of the stability conditions $\alpha J_* \leqslant c_{\pi_*}$ and \eqref{eq:condition* for asymp stability} corresponding to \eqref{eq:condition for stability} and \eqref{eq:condition for asymp stability}, respectively. 

\MyRemark{Suppose $J_*$ is $\mathrm{C}^1$ and positive definite. Then, $x_\mathsf{e} = 0$ under $\pi_* \in \Pi_0$ is asymptotically stable if \eqref{eq:condition* for asymp stability} is true.
	This is because the HJBE~\eqref{eq:HJBE}, \eqref{eq:optimal policy}, and the condition yields ${\dot J}_*(x, \pi_*(x)) = \alpha J_*(x) - c_{\pi_*}(x) < 0$ for all $x \in \mathcal{X} \setminus \{0\}$, implying the asymptotic stability under $\pi_* \in \Pi_0$ by the Lyapunov's theorem \citep[Theorem~4.1]{Khalil2002}. Note that \eqref{eq:condition* for asymp stability} is weaker than the stability condition given by \citet[Assumption~2.3]{Gaitsgory2015}:
	\[
		\kappa_* J_*(x) \leq c(x, u)  \qquad \forall (x,u) \in \mathcal{X} \times \mathcal{U},
		\qquad\text{for some $\kappa_* > \alpha$.}
	\]
	This inequality and the positive definiteness of $J_*$ indeed imply
	$
		\alpha J_*(x) < \kappa_* J_*(x) \leq c_{\pi_*}(x)$ for all $x \in \mathcal{X} \setminus \{0\}
	$, i.e., \eqref{eq:condition* for asymp stability},
	but not vice versa. The other condition given by \citet[Assumption~3.8]{Gaitsgory2015} for global asymptotic stability can be replaced by the radial unboundedness of  $J_*$ (see Lemma~\ref{lemma:bound of a positive definite function} in \secref{appendix:section:proofs}).}

\MyRemark{The boundary condition~\eqref{eq:boundary condition for J_*} is true for any policy $\pi \in \Pi_0$ s.t. $x_\mathsf{e} = 0$ is globally attractive (or in particular, globally asymptotically stable) as in the proof of Theorem~\ref{thm:uniqueness under gas} (see \secref{appendix:proofs:case studies}). On the other hand, when discounted, \eqref{eq:boundary condition for J_*} contains the cases where the state trajectories are (i) globally bounded as in \secref{subsection:discounted RL with bounded state trj} or (ii) even diverge exponentially such as in the discounted LQR case in \secref{subsection:LQR}.
}

\section{A Pathological Example \citep{Kiumarsi2016}}
\label{appendix:subsection:a pathological ex}

Presented in this appendix is a counter-example where the dynamics is simple but non-affine, and the design of the reward function~$r$ is critical. In this example, (i) a naive choice of $r$ fails to give a closed-form solution of policy improvement and the HJBE; (ii) in the unconstrained case, such a choice results in a pathological Hamiltonian $h$ such that the solutions (i.e., $\pi'$ in \eqref{eq:assumption:policy improvement} for $\pi \in \Pi_\mathsf{a}$, $v_*$ in the HJBE~\eqref{eq:HJBE}, and $\pi_*$ in \eqref{eq:optimal policy}) do not exist. We encourage the readers to review \secref{appendix:section:maximal function} beforehand. See also \secref{subsubsection:case II} for a technique to avoid such a pathological behavior.

Consider the scalar dynamics ($l = m = 1$) with the action space $\mathcal{U} = [-u_\mathsf{max}, u_\mathsf{max}]$ for $u_\mathsf{max} \in (0, \infty]$:
	\[
		\dot X_t = X_t^3 + U_t^3.
	\]
Suppose that the reward function $r$ given by \eqref{eq:strictly concave reward} and \eqref{eq:integral formula of S} with $\Gamma = 1$, that is, $r(x,u) = \mathfrak{r}(x) - \mathfrak{c}(u)$ for a continuous function $\mathfrak{r}: \mathcal{X} \to \mathbb{R}$ and $\mathfrak{c}: \mathcal{U} \to \mathbb{R}$ given by 
	\begin{align*}
		\mathfrak{c}(u) = \lim_{v \to u} \smash{\int_0^v} (s^{\mathsf{T}})^{-1} (\mathfrak{u}) \; d\mathfrak{u}.		
	\end{align*}
Then, the Hamiltonian $h: \mathbb{R} \times \mathcal{U} \times \mathbb{R} \to \mathbb{R}$ in this case is given by 
	\begin{equation}
		h(x, u, p) = \mathfrak{r}(x) - \mathfrak{c}(u) + p \cdot (x^3 + u^3). 
		\label{eq:Hamiltonian in pathological ex}				
	\end{equation}

\textbf{(Input-constrained Case)} First, we consider $\mathcal{U} = [- 1, 1]$ with $s = \tanh$. In this case, since $\mathcal{U}$ is compact, a maximal function $u_*(x, p)$ satisfying~\eqref{eq:assumption:general condition for policy improvement} for each $(x, p) \in \mathbb{R}^2$ exists (see \secref{appendix:section:maximal function}). However, a regular point $u \in (-1 , 1)$ s.t. 
	\begin{equation}
		\partial h(x, u, p) / \partial u 
		= 
		- \tanh^{-1} u + 3 p u^2 
		= 0
		\label{eq:1st order nc:pathological ex}		
	\end{equation}
	cannot be expressed in a closed form since \eqref{eq:1st order nc:pathological ex} is nonlinear in $u$.
		
\textbf{(Unconstrained Case)} Next, consider \eqref{eq:unconstrained case}, that is, $u_\mathsf{max} = \infty$ and $s(u) = u / 2$. In this case, the maximal function ${u_*}$ does not exist since $\mathfrak{c}(u) = u^2$ and thus for any $p > 0$ and $x \in \mathbb{R}$, the Hamiltonian \eqref{eq:Hamiltonian in pathological ex} satisfies
	\[
		\lim_{u \to \infty} h(x, u, p) = \lim_{u \to - \infty} h(x, u, - p) = \infty.
	\]
	Therefore, except for the trivial cases $\nabla v_\pi = 0$ and $\nabla v_* = 0$, the maximal policy $\pi'$ in \eqref{eq:assumption:policy improvement} and the solution~$v_*$ to the HJBE \eqref{eq:HJBE} (and accordingly, $\pi_*$ in \eqref{eq:optimal policy}) fail to exist since so do the maxima in those respective equations. Note that the regular points $u$ s.t. $\partial h(x, u, p) / \partial u = 0$, explicitly given by $u = 0$ and $u = 2/(3p)$, are the local maximum and the local minimum, respectively, but the global maximum does not exist in this case.
	
	The issue here is that even though $\mathfrak{c}$ is strictly convex, \emph{$h$ is not (strictly) concave} due to the cubic term $u^3$ in the dynamics $f(x, u) = x^3 + u^3$. This means that the uniqueness of $u_*$ is not guaranteed, and the existing regular points $u$'s 
	satisfying $\partial h(x, u, p) / \partial u = 0$ are not necessarily the maximum of the Hamiltonian $h(x, u, p)$ (see \secref{appendix:section:maximal function}). 

\section{Additional Case Studies}
\label{appendix:section:additional case studies}

This appendix provides additional case studies with (strong) connections to (i) the case studies in \secref{section:case studies} and (ii) the theory established in the main article \citep{Lee2020b} and \secref{appendix:section:optimality}.

\subsection{General Concave Hamiltonian Formulation}
\label{subsubsection:case III}

Here, we extend the methods and results in \secref{subsubsection:case I} to the general nonlinear system~\eqref{eq:controlled system}. The core idea is to introduce a continuous bijection $\psi: \Int{\mathcal{U}} \to \mathbb{R}^m$ (which has a continuous inverse $\psi^{-1}$ by Lemma~\ref{lemma:continuity of the inverse}) and an $m$-dimensional action-dynamics: 
	\begin{equation}
		\mathfrak{\dot U}_t = A_t, \;\;\, A_t \in \mathcal{A}
		\label{eq:generalization with action dynamics:1}		
	\end{equation}
	where $\mathcal{A} \subseteq \mathbb{R}^m$ is an action space, and the differential action trajectory $t \mapsto A_t$ is a continuous function from $\mathbb{T}$ to $\mathcal{A}$, determining the rate of change of $\mathfrak{U}_t$ for all $t \in \mathbb{T}$, by \eqref{eq:generalization with action dynamics:1}; the effective action $\mathfrak{U}_t  \in \mathbb{R}^m$ generates the real action $U_t$ by 
	\begin{equation}
		U_t = \psi^{-1}(\mathfrak{U}_t).
		\label{eq:generalization with action dynamics:2}		
	\end{equation}
	Under \eqref{eq:generalization with action dynamics:1} and \eqref{eq:generalization with action dynamics:2}, the results for the concave Hamiltonian formulation in \secref{subsection:RL under u-AC setting} can be applied to the RL problem with the following affine dynamics 
	\begin{align*}
		\begin{bmatrix}
				{\dot X}_t \\
				\mathfrak{\dot U}_t 
		\end{bmatrix}
		=
		\begin{bmatrix}
			f(X_t, \psi^{-1}(\mathfrak{U}_t)	) \\
			0
		\end{bmatrix}
		+
		\begin{bmatrix}
			0	\\
			I
		\end{bmatrix}
		A_t, 		
	\end{align*}
	with $(X_t, \mathfrak{U}_t) \in \mathbb{R}^{l \times m}$ considered as its state and $A_t \in \mathcal{A}$ as the action, and the extended reward function $r_\mathsf{e}$:
	\begin{align*}
		r_\mathsf{e} (x, \mathfrak{u}, a) \doteq r(x, \psi^{-1}(\mathfrak{u})) - \mathfrak{c}(a), 		
	\end{align*}
	where the real action $U_t \in \mathcal{U}$ is determined by \eqref{eq:generalization with action dynamics:2}. Here, $\mathfrak{c}:\mathcal{A} \to \mathbb{R}$ satisfies the same properties as $\mathfrak{c}$ in \eqref{eq:strictly concave reward} and can be $(x,\mathfrak{u})$-dependent in the same way to the $x$-dependent $\mathfrak{c}$ in \eqref{eq:strictly concave reward:general} (Remark~\ref{remark:generalization of u-AC RL problem:1}). Note that the resulting IPI will be model-free --- it does not explicitly depend on the input-coupling dynamics $f_\mathsf{c}$ in \eqref{eq:decomposition of f(x,u)} anymore and, of course, $f_\mathsf{d}$. When $\mathcal{U} = \mathcal{A} = \mathbb{R}^m$, similar ideas were presented by \citet{Murray2002} for input-affine optimal control and \citet*{Lee2012} for LQRs.

\subsection{Discounted RL with Bounded State Trajectories}
\label{subsection:discounted RL with bounded state trj}


When $\gamma \in (0, 1)$ and the state trajectories are bounded, the properties and results similar to those in ``\secref{subsection:discounted RL with bounded v} Discounted RL with Bounded VF'' can be obtained as shown below.

\MyDefinition{
	The state trajectories under $\pi$ are said to be globally bounded iff for each $x \in \mathcal{X}$, 
	$t \mapsto \mathbb{G}_\pi^x [X_t]$ is bounded over $\mathbb{T}$.
}

\begin{proposition}
	\label{prop:boundary condition true when trjs are bounded}
	If the state trajectories under $\pi$ are globally bounded, and  $v$ is continuous, then under $\gamma \in (0, 1)$, they satisfy the boundary condition \eqref{eq:uniqueness condition for v}.
\end{proposition}

\begin{Proof}
	Since $t \mapsto \mathbb{G}_\pi^x [  X_t ]$ is bounded and $v$ is continuous, $t \mapsto \mathbb{G}_\pi^x [ v(X_t) ]$ is also bounded, for each $x \in \mathcal{X}$. Hence, the proof can be done by applying Lemma~\ref{lemma:boundary condition on v_* with discounting} in \secref{appendix:section:proofs}.
\end{Proof}

\begin{corollary} [Policy Evaluation]
	\label{cor:properties of discounted RL with bounded state trj}
	Let $\gamma \in (0, 1)$ and the state trajectories under $\pi$ be globally bounded. Then, $\pi$ is admissible, and $v = v_\pi$ is the unique solution to the BEs \eqref{eq:Bellman eq for v} and \eqref{eq:differential BE for v} over all continuous and $\mathrm{C}^1$ functions, respectively.
\end{corollary}

\begin{Proof}
	 Apply Theorem~\ref{thm:uniqueness condition} and Proposition~\ref{prop:boundary condition true when trjs are bounded}.
\end{Proof}

By Corollary~\ref{cor:properties of discounted RL with bounded state trj}, as long as the state trajectories under~$\pi_{i-1}$ are globally bounded and $\gamma \in (0, 1)$, $\pi_{i-1}$ is admissible, and the $i$th iteration of the PI methods can run without assuming the boundary condition~\eqref{eq:boundary condition for PI} that is shown to be true by Proposition~\ref{prop:boundary condition true when trjs are bounded}. In this case, however, the VF is not necessarily bounded (see the next example, LQR (\secref{subsection:LQR}), in which the admissible VF is always quadratic), and it is a bit unclear when and how the state trajectories are bounded. Some stability-related conditions sufficient for global boundedness of the state trajectories are:

\begin{enumerate}
	\item input-to-state stability \citep[Definition~4.7]{Khalil2002}, ensuring that the state trajectories are globally bounded under \emph{any given policy} whenever $\mathcal{U}$ is bounded;
		\\[-7.5pt]
	\item global asymptotic stability (e.g., see the nonlinear optimal control in \secref{subsection:nonlinear optimal control} and the LQR in \secref{subsection:LQR});
		\\[-7.5pt]
	\item global ultimate boundedness of the state trajectories \citep[Definition~4.6]{Khalil2002}, which is stronger than the global boundedness of the state trajectories but weaker than global asymptotic stability. 
\end{enumerate}  

In general, stability of the system implies boundedness of the state trajectories within some region, but \emph{not vice versa}. 

Note that the global boundedness of the state trajectories under $\pi$, including the above three special cases, guarantees their global existence and uniqueness over the entire time interval $\mathbb{T}$, under locally Lipschitz $f_\pi$, as can be shown by applying the following proposition for all $x \in \mathcal{X}$.

\begin{proposition} 
\label{prop:existence and uniqueness of state trj}
Let $f_\pi$ be locally Lipschitz and $x \in \mathcal{X}$. If there exists a compact subset $\Omega_x \subset \mathcal{X}$ s.t. $t \mapsto \mathbb{G}_\pi^x[X_t]$ lies entirely in $\Omega_x$, then the state trajectory $t \mapsto \mathbb{G}_\pi^x [X_t]$ is uniquely defined and $\mathrm{C}^1$ over $\mathbb{T}$.
\end{proposition}

\begin{Proof}
	See \citep[Section~3.1 with Theorem~3.3  therein]{Khalil2002}.
\end{Proof}

The HJB solution $(v_*, \pi_*)$ can be also characterized in the discounted case as the optimal solution among all the policies that make the state trajectories globally bounded.

\begin{corollary}
	\label{cor:optimality for bounded trajectories}
	Suppose $\gamma \in (0, 1)$ and the HJBE \eqref{eq:HJBE} has an upper-bounded solution $v_* \in \mathrm{C}^1$. Then, 
	\begin{enumerate}
		\item $\pi_*$ is admissible and $v_* \leqslant v_{\pi_*}$, for any policy $\pi_*$ s.t. \eqref{eq:optimal policy} holds;
		\item if the state trajectories under $\pi$ (resp. $\pi_*$) are globally bounded, then $v_\pi \leqslant v_*$ (resp. $v_* = v_{\pi_*}$).
	\end{enumerate} 
\end{corollary}
	
\begin{Proof}
	Obvious by Theorem~\ref{thm:general optimality of the HJB solution}a--c and Proposition~\ref{prop:boundary condition true when trjs are bounded} with $v = v_*$.
\end{Proof}

\subsection{Linear Quadratic Regulations (LQRs)}
\label{subsection:LQR}
A linear quadratic regulation (LQR) consists of 
\begin{align}
	\begin{cases}
		\textrm{a linear dynamics: } f(x,u) = A^0x + Bu,	\\[7.5pt]
		\textrm{the unconstrained action space: } \mathcal{U} = \mathbb{R}^m,	\\
		\textrm{a quadratic positive cost function: }
		c(x,u) = 
		\begin{bmatrix}
			x^\mathsf{T}  & u^\mathsf{T} 	
		\end{bmatrix}
		\mathcal{W}
		\begin{bmatrix}
			x \\ u	
		\end{bmatrix}
		\geq 0,
		\text{ with }
		\mathcal{W} \doteq 	
		\begin{bmatrix}
					\!\!S &	E \\ 
					E^\mathsf{T} & \Gamma
		\end{bmatrix},
	\end{cases}
	\label{eq:LQR case}
\end{align}
where $(A^0,B,S)$ for $A^0 \in \mathbb{R}^{l \times l}$, $B \in \mathbb{R}^{l \times m}$, $S \in \mathbb{R}^{l \times l}$ is stabilizable and observable, $\mathcal{W} \in \mathbb{R}^{(l + m) \times (l + m)}$ is positive semidefinite and nondegenerate,\footnote{$\mathcal{W}$ is nondegenerate iff $\rank{\mathcal{W}} = \rank{S} + \rank{R}$, which is true when $\mathcal{W}$ is positive definite or $E = 0$.} and $\Gamma \in \mathbb{R}^{m \times m}$ is positive definite. Note that the LQR~\eqref{eq:LQR case} falls into a special case of the nonlinear optimal control  in \secref{subsection:nonlinear optimal control} whenever the matrix $\mathcal{W}$ is positive definite. On other other hand, $f^x$ is affine and $r^x$ is strictly concave for each $x \in \mathcal{X}$, with its dynamics satisfying \eqref{eq:input-affine dynamics} for 
\[
	f_{\mathsf{d}}(x) = A^0x \textrm{ and } F_{\mathsf{c}}(x) = B
\] 
	and its reward function $r$ ($= -c$) given of the form \eqref{eq:strictly concave reward:general} in Remark~\ref{remark:generalization of u-AC RL problem:1} for 
\[
	\mathfrak{r}(x) = - x^\mathsf{T} S x  \textrm{ and } \mathfrak{c}(x, u) = u^\mathsf{T} \Gamma u + 2 x^\mathsf{T} E u.
\]
Moreover, whenever $E = 0$, it becomes \eqref{eq:strictly concave reward} with $\mathfrak{c}$ given by $\mathfrak{c}(u) = u^\mathsf{T} \Gamma u$, the unconstrained case~``\eqref{eq:integral formula of S} and \eqref{eq:unconstrained case}''. Therefore, the LQR \eqref{eq:LQR case} is an example of the concave Hamiltonian formulation in \secref{subsubsection:case I}. Also note that in LQR, $f$ is obviously globally Lipschitz, ensuring the global existence of the unique state trajectories under any globally Lipschitz policy \citep[Theorem~3.2]{Khalil2002}; if the policy $\pi$ is linear, i.e., 
\begin{equation}
	\pi(x) = - Kx \text{ for a gain matrix } K \in \mathbb{R}^{m \times l},
	\label{eq:linear policy}	
\end{equation}
then the state trajectory $t \mapsto \mathbb{G}_\pi^x [X_t]$ is explicitly given by $\mathbb{G}_\pi^x [X_t] = e^{(A^0-BK) t} x$ \citep{Chen1998}.

\SetKwRepeat{Repeat}{\textbf{repeat} {\normalfont (under the LQR formulation~\eqref{eq:LQR case})}}{\textbf{until }}

 \begin{algorithm2e}[h!]
 \begin{spacing} {1.1}
 \DontPrintSemicolon
 \BlankLine
  \nl \textbf{Initialize:} 
  	$\pi_0(x) = - K_0x$, the initial admissible policy; $i \leftarrow 1$;\;
  	 $\;$\\[-5pt]
 \nl \Repeat{convergence is met.}
 {
 $\;$\\
 \nl\textbf{Policy Evaluation:} given policy $\pi_{i-1}(x) = - K_{i-1} x$, find a quadratic function $v_i(x) = - x^\mathsf{T} P_i x$, with $P_i = P_i^\mathsf{T}$, s.t. \\[5pt]
 	\begin{itemize} [leftmargin=0.5cm]
 		\item [] \textbf{(IPI)} $v_i$ satisfies the BE~\eqref{eq:Bellman eq for v} for some $\eta > 0$; or \textbf{(DPI)} $P_i \in \mathbb{R}^{l \times l}$ satisfies the matrix formula~\eqref{eq:matrix iter in policy eval of DPI for LQR}; 
 				\\[10pt]
 	\end{itemize} 
 \nl \textbf{Policy Improvement:} $K_{i} \gets \Gamma^{-1} (B^\mathsf{T} P_i + E^\mathsf{T})$; 
 	\\[5pt]
 \nl $i \leftarrow i+1$;
 }
 \end{spacing}
 \BlankLine
 \caption{IPI and DPI for the LQR~\eqref{eq:LQR case}}
 \label{algorithm:IPI:LQR}
\end{algorithm2e}

In an LQR~\eqref{eq:LQR case}, under a linear policy~\eqref{eq:linear policy}, $J_\pi$ ($\doteq - v_\pi$) is quadratic, if finite, and can be expressed as $J_{\pi}(x) = x^\mathsf{T} P_{\pi} x$ for a positive definite matrix~$P_{\pi} \in \mathbb{R}^{l \times l}$ 
(e.g., see \citealp[Lemma~16.3.2 with Theorem~16.3.3.(d)]{Lancaster1995}; \citealp[Section~2]{Lee2014}). Moreover, 
the maximal policy~$\pi'$ in Remark~\ref{remark:generalization of u-AC RL problem:1} is linear again and can be represented as 
\begin{align*}
	\pi'(x) = - K' x \textrm{ with } K' = \Gamma^{-1} (B^\mathsf{T} P_\pi + E^\mathsf{T}).	
\end{align*}
This observation gives IPI and DPI for the LQR~\eqref{eq:LQR case} shown in Algorithm~\ref{algorithm:IPI:LQR}, where DPI solves the matrix equation: 
\begin{align}
			(A_{i-1}^\alpha)^\mathsf{T} P_{i} + P_i A_{i-1}^\alpha 
			= K_{i-1}^\mathsf{T} E^\mathsf{T} + E K_{i-1} - S - K_{i-1}^\mathsf{T} \Gamma K_{i-1},
	\label{eq:matrix iter in policy eval of DPI for LQR}
\end{align}
at each of the $i$th iteration of policy evaluation. Here, we denote 
\[
	A_{i-1}^\alpha \doteq A^\alpha - B K_{i-1} \text{ for } A^{\alpha} \doteq A^0 - \alpha I/2
\]
where $I \in \mathbb{R}^{l \times l}$ denotes the identity matrix. Note that DPI (and IPI --- see Theorem~\ref{thm:LQR IPI property}a below) in Algorithm~\ref{algorithm:IPI:LQR} is equivalent to the existing matrix-form PIs (\citealp{Arnold1984,Mehrmann1991}; see also \citealp{Kleinman1968,Lee2014} for the case $E = 0$). In addition, if $\mathcal{W}$ is positive definite, then rearranging \eqref{eq:gas condition in optimal control} using \eqref{eq:matrix iter in policy eval of DPI for LQR} yields the very stability condition:
\[
	\textrm{$(A_{i-1}^0)^\mathsf{T} P_i + P_i A_{i-1}^0$ is negative definite,}	
\]
for $J_i$ ($= -v_i$) to be the Lyapunov function for the linear dynamics $f(x, u) = A^0x + Bu$ under the policy $\pi_{i-1}(x) = K_{i-1} x$ \citep[Theorem~4.6]{Khalil2002}. Here, each $P_i$ is assumed symmetric and proven below to be positive definite by \smash{$P_i = P_{\pi_{i-1}}$}.

In fact, if the policy~$\pi$ is linear, the process~$X_t^\alpha$ generated by 
\begin{equation}
	\dot X_t^\alpha = A^\alpha X_t^\alpha + B U_t^\alpha
	\label{eq:Z-linear system}
\end{equation}
and $U_t^\alpha = \pi(X_t^\alpha)$ for all $t \in \mathbb{T}$ yields the following expression \eqref{eq:LQR value function total form} of $J_\pi$, \emph{without the discount factor $\gamma$ (or rate $\alpha$) in its cumulative cost} \citep{Anderson1989}:
\begin{align}
	J_\pi(x)
	= \mathbb{G}_{\pi}^{x} \bigg [ \int_0^\infty e^{-\alpha t} \cdot C_t \, dt \bigg ]
	= \mathbb{G}_{\pi}^{x, \alpha} \bigg [ \int_0^\infty C_t \, dt \bigg ],\footnotemark
	\label{eq:LQR value function total form}
\end{align}
where $\mathbb{G}_\pi^{x, \alpha}[Y]$ means $\mathbb{G}_\pi^x[Y]$ if $\alpha = 0$ but otherwise the value $Y$ w.r.t. the state  $X_t = X_t^\alpha$ and the action $U_t = U_t^\alpha$ $\forall t \in \mathbb{T}$; $C_t = c(X_t, U_t)$ is the quadratic cost at time $t$. \footnotetext{\label{first} The equality comes from the fact: 
$\mathbb{G}_\pi^x [ e^{-\alpha t/2} \, X_t] = e^{-\alpha t/2} \cdot e^{(A^0-BK) t} x = e^{(A^\alpha-BK) t} x = \mathbb{G}_\pi^{x, \alpha} [ X_t]$ for $\pi(x) = - Kx$.} 
Here, $(A^\alpha, B, S)$ is stabilizable and observable since so is $(A^0, B, S)$ (\secref{appendix:proofs regarding LQR}). Therefore, any discounted LQR can be transformed into an equivalent undiscounted total  one, simply by replacing $A^0$ with $A^\alpha$. 

After the transformation into \eqref{eq:Z-linear system} and \eqref{eq:LQR value function total form}, we can see that a linear policy $\pi$ is admissible \emph{iif} $X_t^\alpha$ under~$\pi$ converges to~$0$ (see \citealp[Proposition~16.2.9]{Lancaster1995}); the convergence $X_t^\alpha \to 0$ implies that any quadratic function $J$ ($= - v$), say $J(x) = x^\mathsf{T} P x$ for some $P \in \mathbb{R}^{l \times l}$, satisfies the boundary condition~\eqref{eq:uniqueness condition for v} since
	\begin{align*}
		\mathbb{G}_\pi^x \big [ \gamma^{t} J(X_{t})  \big ]  
		=  
		\mathbb{G}_\pi^x \big [ e^{- \alpha t} \!\cdot\! X_{t}^{\mathsf{T}} P X_{t} \big ] 
		= \mathbb{G}_{\pi}^{x, \alpha} \big [ J ( X_t ) \big ] \longrightarrow 0 \text{ as } t \to \infty.
		\footref{first}
	\end{align*}
	Therefore, by Theorem~\ref{thm:fundamental properties}, $\pi_i$ in Algorithm~\ref{algorithm:IPI:LQR} is admissible and $P_i = P_{\pi_{i-1}}$ for all $i \in \mathbb{N}$, but \emph{without assuming the boundary condition~\eqref{eq:boundary condition for PI}} that is true in LQR, as shown above.

As regards to the HJB solution $(v_*, \pi_*)$ and the Assumptions in \secref{section:fundamental properties of PI}, the applications of the LQR theory \citep[Theorem 16.3.3]{Lancaster1995}, Proposition~\ref{prop:fixed point and HJBE}, and  Lemma~\ref{lemma:concave h}a with Remark~\ref{remark:generalization of u-AC RL problem:1} to \eqref{eq:Z-linear system} and \eqref{eq:LQR value function total form} show that 
\begin{enumerate}
	\item $(v_*,  \pi_*)$ satisfying the HJBE~\eqref{eq:HJBE} and \eqref{eq:optimal policy} exists;
	\item $J_*$ ($\doteq - v_*$) and $ \pi_*$ are optimal and given by 
		\[
		\begin{cases} 
			J_*(x) = x^\mathsf{T} P_* x \textrm{ for a positive definite } P_* \in \mathbb{R}^{l \times l}, \\[5pt]
			\pi_*(x) = - K_* x \textrm{ with } K_* \doteq \Gamma^{-1} (B^\mathsf{T} P_* + E^\mathsf{T});
		\end{cases}
		\]
	\item Assumptions~\ref{assumption:uniqueness of fixed point}, \ref{assumption:closed graph}, and \ref{assumption:uniqueness of HJB over C1} are all true.
\end{enumerate}
Applying the theory developed in this work, we finally obtain the following result regarding the PIs applied to the LQR.
	\begin{theorem}
	\label{thm:LQR IPI property}
	The sequences $\langle K_i \rangle$ and $\langle P_i \rangle$ generated by Algorithm~\ref{algorithm:IPI:LQR} satisfy the followings:
	\begin{enumerate}
		\item [\textbf{\emph{a}}.] $\forall i \in \mathbb{N}$: $\pi_i(x) = - K_i x$ is admissible and $P_i = P_{\pi_{i-1}}$,
				 \\[-5pt]
		\item [\textbf{\emph{b}}.] $0 < P_* \leq \cdots \leq P_{i+1} \leq P_i \leq \cdots \leq P_1$,
				 \\[-5pt]
		\item [\textbf{\emph{c}}.] $\lim_{i \to \infty} P_i = P_*$ and $\lim_{i \to \infty} K_i = K_*$.
	\end{enumerate}	
\end{theorem}

\begin{Proof}
	First, Theorem~\ref{thm:fundamental properties} and the optimality of $P_*$ prove the first and second parts. Next, Theorem~\ref{thm:convergence of PI} implies that there exists $P \in \mathbb{R}^{l \times l}$ s.t. $P_i \to P$ (see \secref{appendix:proofs regarding LQR}).
	Let $M_\Omega \doteq \sup_{x \in \Omega} \|x\| < \infty $ for a compact subset $\Omega \subset \mathcal{X}$. Then, we have
	\begin{align*}
		0 \leq \smash{\sup_{x \in \Omega}} \big \|(P_i - P)x \big \| & \leq 
		\smash{\sup_{x \in \Omega}} \big ( \MatNorm{P_i - P} \! \cdot \! \| x \| \big )
		= M_\Omega \! \cdot \! \MatNorm{P_i - P},
	\end{align*}
	where $M_\Omega \cdot \MatNorm{P_i - P} \to 0$ by $P_i \to P$. Hence, $\nabla v_i$ given by $\nabla v_i(x) = - 2 x^\mathsf{T} P_i$ converges uniformly on any compact subset of $\mathcal{X}$ and by Lemma~\ref{lemma:equivalence b.t.w. locally uniform and compact convergence}, locally uniformly. Finally, by Theorem~\ref{thm:true convergence} with Remark~\ref{remark:generalization of u-AC RL problem:1}, $P = P_*$ and $K_i \to K_*$.	
\end{Proof}

By extending the existing analytical results to the LQR \eqref{eq:LQR case}, we can see more: \emph{the convergence $P_i \to P_*$ is quadratic} (see \secref{appendix:proofs regarding LQR}). Therefore, PI methods have faster convergence rates than linear in both discrete and continuous domains: it is finite in a finite MDP \citep{Puterman1994,Powell2007,Sutton2018} and quadratic in the LQR~\eqref{eq:LQR case}. Moreover, the latter could also imply the local quadratic convergence \mbox{$v_i \to v_*$} for a class of nonlinear optimal control problems in \secref{subsection:nonlinear optimal control} as the nonlinear problem can be approximated near the equilibrium point $(x_\mathsf{e}, u_\mathsf{e}) = (0,0)$ by an LQR~\eqref{eq:LQR case} with 
\begin{equation*}
	A^0 = \nabla_x f(0,0), \;\; B = \nabla_u f(0,0), \;\; \mathcal{W} = \nabla^2 c(0,0) 		
\end{equation*}
whenever the gradient $\nabla f(x, u) \in \mathbb{R}^{l \times (l+m)}$ and the Hessian $\nabla^2 c(x, u) \in \mathbb{R}^{(l+m) \times (l+m)}$ exist and are continuous, at $(0, 0)$. Here, $\nabla_x f$ and $\nabla_u f$ denote the gradients  of $f(x,u)$ w.r.t. $x$ and $u$, respectively. Therefore, the rate of convergence is possibly, locally quadratic for the nonlinear optimal control problem in \secref{subsection:nonlinear optimal control} when its linearization $(A^0, B, \mathcal{W})$ above exists and satisfies the assumptions on the LQR shown in this subsection---since $c$ and thus $\mathcal{W}$ are positive definite, those assumptions are in fact guaranteed to be true, except stabilizability of $(A^0, B)$.

\section{Implementation Details}
\label{appendix:section:implementation details}

This appendix provides details of the implementations of the PI methods (i.e., Algorithm~\ref{algorithm:Variants under bounded v}) experimented in \secref{section:simulation}.

\subsection{Structure of the VF Approximator $V_i$}

Recall that in \secref{section:simulation}, the solution to the policy evaluation, $V_i$, is represented by a linear function approximator $V$ as
\begin{equation}
	V_i(x) \approx V(x; \theta_i) \doteq \theta_{i}^\mathsf{T} \phi(x),  
	\tag{\ref{eq:RBFN approximation of vi}}
\end{equation}
for its weights $\theta_i \in \mathbb{R}^L$ and features $\phi: \mathcal{X} \to \mathbb{R}^L$, with the number of features $L = 121$. Since the policy improvement needs a differentiable structure, we choose radial basis functions (RBFs) as the features $\phi$, rather than using (tile-coded) binary ones \citep{Sutton2018}. Hence, the $j$-th component of the feature vector $\phi$ is given by 
\[
	\phi_j(x) = \mathrm{exp} \big ( \! - \! (x - c_j)^\mathsf{T} \Sigma^{-1} \, (x - c_j) \big )
\]
where $\Sigma \doteq \diag{1, 2}$ is a weighting matrix, and $\{ c_j \! \in \! \Omega : 1 \leq j \leq L \}$ is the set of RBF center points $c_j$ that are uniformly distributed within the compact region $\Omega = [- \pi, \pi] \times \! [-6, 6] \subset \mathcal{X}$. In the simulations in \secref{section:simulation}, we choose $L \! = \! 11 \times 11 \! = \! 121$; the set of center points $\{c_j\}$ includes the origin $(0, 0)$ and a finite number of points on the boundary $\partial \Omega$. Whenever inputting to the features $\phi$, the first component $x_1$ of $x$ is normalized to a value within $[-\pi, \pi]$ by adding $\pm 2 \pi k$ to it for some $k \in \mathbb{Z}$. 

\subsection{Least-Squares Solution of Policy Evaluation}

In the experiments in \secref{section:simulation}, the policy evaluation (or the BE) in Algorithm~\ref{algorithm:Variants under bounded v} is solved by batch least squares, over the set of initial states $\{ x_k : 1 \leq k \leq N \times M \}$, uniformly distributed as the ($N \times M$)-grid points over $\Omega$, where $N$, $M \in \mathbb{N}$ are the total numbers of the grids in the $x_1$- and $x_2$-directions, respectively. We chose $N = 20$ and $M = 21$, so at each of the $i$th iteration, the total $420$ number of grid points $x_k$'s in $\Omega$ are considered to determine the least-squares solution $\theta_i^*$ of policy evaluation, except for the DPI variant in Case 4 where we used $M = 20$ instead of $21$.

To describe the batch least square solution $\theta_i^*$, note that under the approximation~\eqref{eq:RBFN approximation of vi}, the BEs of the variants of DPI and IPI in Algorithm~\ref{algorithm:Variants under bounded v} can be expressed at each point $x = x_k$ as 
\begin{equation}
	y_i^{\mathsf{T}}(x_k) \! \cdot \! \theta_i + \varepsilon_i(x_k) = r(x_k, \pi_{i-1}(x_k)), 
	\label{eq:Bellman eq with LFA}
\end{equation}
where $\varepsilon_i: \mathcal{X} \to \mathbb{R}$ is the approximation error for each case, and $y_i: \mathcal{X} \to \mathbb{R}^L$ is given by 
\[
	y_i(x) \doteq
		\begin{cases}
		\mathbb{G}_{\pi_{i-1}}^x \! \big [ \phi(X_0) - \gamma_\mathsf{d} \cdot \phi(X_{\Delta t}) \big ] & \text{for the variant of IPI,} 
		\\[5pt]
		\alpha_\mathsf{d} \! \cdot \! \phi(x) - \Delta t \! \cdot \! \nabla \phi(x) \! \cdot \! f_{\pi_{i-1}} (x) & \text{for the variant of DPI.}
		\end{cases}
\]
Concatenating the vectors as and denoting them by  
\begin{align*}
	\mathcal{Y}_i &\doteq 
		\begin{bmatrix}
				y_i(x_1) &\, y_i(x_2) &\, \cdots &\, y_i(x_{NM})
		\end{bmatrix}
		\\
	\mathcal{E}_i &\doteq 
		\begin{bmatrix}
				\varepsilon_i(x_1) &\, \varepsilon_i(x_2) &\, \cdots &\, \varepsilon_i(x_{NM})
		\end{bmatrix}^\mathsf{T}
		\\		
	\mathcal{R}_i &\doteq 
		\begin{bmatrix}
			r(x_1, \pi_{i-1}(x_1)) & \cdots & r(x_{NM}, \pi_{i-1}(x_{NM})) 
		\end{bmatrix}^\mathsf{T} 
\end{align*}
the expression~\eqref{eq:Bellman eq with LFA} can be compactly rewritten as 
\[
	\mathcal{Y}_i^\mathsf{T} \! \cdot \theta_i + \mathcal{E}_i = \mathcal{R}_i,
\]
and the batch least-squares solution ${\theta}_i^*$ minimizing the approximation error $\mathcal{J}(\theta_i) \doteq \tfrac{1}{2} \| \mathcal{E}_i \|^2$ over $\{ x_k \}$ is given by 
\[
	\smash{{\theta}_i^*} = \big ( \mathcal{Y}_i \mathcal{Y}_i^\mathsf{T} \big )^{-1} \mathcal{Y}_i \, \mathcal{R}_i 
\]
so long as $\rank{\mathcal{Y}_i} = L$. At each of the $i$th iteration, we collected data $\mathcal{Y}_i$ and $\mathcal{R}_i$ at the distinct points $\{ x_k \} \subset \Omega$ and then performed the batch least squares to yield the minimizing solution~${\theta}_i^*$ of policy evaluation. 

\subsection{Reward Function and Policy Improvement Update Rule}

Recall that each experimental case in \secref{section:simulation} basically considers the reward function $r$ given by \eqref{eq:strictly concave reward} and \eqref{eq:integral formula of S} with \eqref{eq:sim:s}, that is, 
\begin{equation}
	r(x, u) = \mathfrak{r}(x) - \mathfrak{c}(u), \; \text{ with }
	\mathfrak{c}(u) = \lim_{v \to u} \int_0^v (s^{\mathsf{T}})^{-1} (\mathfrak{u}) \cdot \Gamma \, d\mathfrak{u}
	\text{ and }
	s(\mathfrak{u}) = u_{\mathsf{max}} \tanh(\mathfrak{u}/u_{\mathsf{max}})
	\label{eq:appendix:policy improvement:1}
\end{equation}
where $\Gamma > 0$ and the sigmoid function $s$ gives the following expressions of the functions $\sigma$ in \eqref{eq:policy improvement without max} and $\mathfrak{c}$: 
\begin{align*}
	\sigma(\mathfrak{u}) &= u_{\mathsf{max}} \tanh (\Gamma^{-1} \cdot \mathfrak{u}/u_{\mathsf{max}})
	= 5 \tanh \big ((5\Gamma)^{-1} \cdot \mathfrak{u} \big ),
	\\[5pt]
	\mathfrak{c}(u) &= \Gamma \cdot (u_{\mathsf{max}}^2/2) \cdot \ln \big ( {u}_+^{{u}_+} \cdot {u}_-^{{u}_-} \big )
	= 12.5 \cdot \Gamma \cdot \ln \big ( {u}_+^{{u}_+} \cdot {u}_-^{{u}_-} \big )	
\end{align*}
for $u_\pm \doteq 1 \pm u / u_{\mathsf{max}}$. Here, note that $\mathfrak{c}(u)$ is finite for all $u \in \mathcal{U}$ and has its maximum at the end points $u = \pm u_{\mathsf{max}}$ as $\mathfrak{c}(\pm u_{\mathsf{max}}) = \Gamma \cdot (u_{\mathsf{max}}^2\ln 4)/2 \approx 17.3287 \cdot \Gamma$.

As the inverted pendulum dynamics is input-affine, the above reward setting \eqref{eq:appendix:policy improvement:1} corresponds to the concave Hamiltonian formulation in \secref{subsubsection:case I}. Hence, the policy improvement becomes the following simple update rule:
\begin{equation}
	\pi_i(x)
	\approx 
	\pi(x; \theta_i^*) 
	= \sigma \big ( \Delta t \cdot F_{\mathsf{c}}^\mathsf{T}(x) \cdot \nabla V^\mathsf{T}(x; \theta_i^*) \big ) 
	= - 5 \tanh \bigg ( \frac{\Delta t }{5\Gamma} \cdot \cos x_1 \cdot \nabla_{\!x_2} \phi (x) \cdot \theta_i^* \bigg )
	\label{eq:appendix:policy improvement:2}
\end{equation}
where $\nabla_{\!x_2} \phi (x) \in \mathbb{R}^{1 \times L}$ denotes the gradient of $\phi(x_1, x_2)$ with respect to the second component $x_2$. Cases 1 and 2 in \secref{section:simulation} are associated with the above update rule~\eqref{eq:appendix:policy improvement:2}.

In the limit $\Gamma \to 0^+$, it is obvious that $\sigma(\mathfrak{u}) \to u_\mathsf{max} \cdot \mathrm{sign}(\mathfrak{u})$ and $\mathfrak{c}(u) \to 0$. In this case, thereby, the reward function \eqref{eq:appendix:policy improvement:1} and the policy improvement update rule \eqref{eq:appendix:policy improvement:2} become $r(x, u) = \mathfrak{r}(x)$ and 
\begin{align*}
	\pi_i(x) \approx \pi(x; \theta_i^*) = - u_\mathsf{max} \cdot \mathrm{sign} \big (\cos x_1 \! \cdot \! \nabla_{\!x_2} \phi (x) \! \cdot \! \theta_i^* \big).
\end{align*}
Cases 3 and 4 in \secref{section:simulation} consider this type of bang-bang policies, with continuous (Case 3) and binary state-reward $\mathfrak{r}$ (Case 4).

\section{Proofs}
\label{appendix:section:proofs}

In this appendix, we provide all the proofs of the Theorems, Lemmas, Propositions, and Corollaries stated in the main work \citep{Lee2020b}. For the proof of properties of locally uniform convergence, the following lemma is necessary.

\begin{lemma} 
	\label{lemma:equivalence b.t.w. locally uniform and compact convergence}
	A sequence of functions $g_i : \mathcal{X} \to \mathbb{R}^n$ converges to $g$ locally uniformly iff $g_i \to g$ uniformly on every compact subset of $\mathcal{X}$.
\end{lemma}

\begin{Proof}
	The proof is a simple extension of \citet{Remmert1991}'s from $n = 1$ to any $n \in \mathbb{N}$. For the proof, we generalize the metric $d_\Omega$ by redefining it as $\smash{d_\Omega(f, g) \doteq \sup_{x \in \Omega} \|f(x) - g(x)\|}$ for a subset $\Omega \subseteq \mathcal{X}$ and functions $f, g$ from $\mathcal{X}$ to $\mathbb{R}^n$. Then, the uniform convergence $g_i \to g$ on $\Omega$ is equivalent to $d_\Omega(g_i, g) \to 0$. Also note that $\mathcal{X}$ ($\doteq \mathbb{R}^l$) is a Euclidean space.
	
	First, suppose that $g_i \to g$ uniformly on every compact subset of $\mathcal{X}$, hence on every closed ball $\mathcal{\widebar B}_x(r) \doteq \{ y \in \mathcal{X} : \|x - y\| \leq r \}$. Since $\mathcal{\widebar B}_x(r)$ contains an open ball $\mathcal{B}_x(r) \doteq \{ y \in \mathcal{X} : \|x - y\| < r \}$, $g_i \to g$ uniformly on every $\mathcal{B}_x(r)$. Hence, we conclude that $g_i \to g$ locally uniformly ($\because$ each $\mathcal{B}_x(r)$ is a neighborhood of each $x \in \mathcal{X}$).

	To prove the converse, suppose that $g \to g_i$ locally uniformly so that $g \to g_i$ uniformly on a neighborhood $\mathcal{N}_x$ of each point $x \in \mathcal{X}$. Let $\Omega$ be a compact subset of $\mathcal{X}$. Then, since every neighborhood is open, $\{ \mathcal{N}_x : x \in \Omega\}$ is an open cover of $\Omega$, i.e., a collection of open sets $\mathcal{N}_x$ s.t. $\bigcup_{x \in \Omega} \mathcal{N}_x \supset \Omega$. By Heine-Borel Property \citep[Theorem~13.94]{Thomson2001}, the open cover $\{ \mathcal{N}_x : x \in \Omega\}$ of $\Omega$ can be reduced to a finite subcover of $\Omega$, say $\{\mathcal{N}_{x_j}\}_{j=1}^k$, meaning that $\mathcal{O} \doteq \bigcup_{j = 1}^k \mathcal{N}_{x_j} \supset \Omega$. Since $g_i$ locally uniformly converges to $g$ on each $\mathcal{N}_{x_j}$, so does on their finite union $\mathcal{O}$, hence on the subset $\Omega$ of $\mathcal{O}$. This completes the proof since the compact set $\Omega \subset \mathcal{X}$ is arbitrary.
\end{Proof}

\subsection{Proofs in \secref{section:preliminaries} Preliminaries}

\begin{Proof*}{Lemma~\ref{lemma:VF range} (\secref{subsection:RL Problem})}
	For any policy~$\pi$ and any $x \in \mathcal{X}$,
	\[
		v_\pi(x)
		\leq
		\lim_{\eta \to \infty} \bigg ( r_\mathsf{max} \cdot \int_0^\eta \gamma^{t} \, dt \bigg )
		=  
		\begin{cases}
			r_\mathsf{max} / \alpha \,\textrm{ for } \gamma \in (0, 1),
			\\[5pt]
			\quad\;\; 0 \;\;\;\, \textrm{ for } \gamma = 1
		\end{cases}
	\]
	(note that $r_\mathsf{max} = 0$ when $\gamma = 1$). 
	This proves the statement with $\widebar v = r_\mathsf{max} / \alpha$ for $0 < \gamma < 1$ and $\widebar v = 0$ for $\gamma = 1$.	
\end{Proof*}

\begin{Proof*}{Proposition~\ref{prop:general admissible condition} (\secref{subsection:RL Problem})}
	If the reward $R_t$ under a policy~$\pi$ satisfies \eqref{eq:exponential increasing condition for admissibility} for $\ushort \alpha < \alpha$ and for all $x \in \mathcal{X}$, then 
	\begin{align*}
		v_\pi(x) = \int_0^\infty e^{- \alpha t} \! \cdot \mathbb{G}_\pi^x [ \;\! R_t \;\! ] \, dt 
		\geq 
		\xi(x) \cdot \! \int_0^\infty e^{- (\alpha - \ushort \alpha ) t} \; dt = ( \alpha - \ushort \alpha )^{-1} \cdot \xi(x) > - \infty
		\qquad \forall x \in \mathcal{X}
	\end{align*}
	by definitions. This also shows that $v_\pi$ is lower-bounded if so is $\xi$. Finally, the proof is completed by Lemma~\ref{lemma:VF range}.
\end{Proof*}

\begin{Proof*}{Lemma~\ref{lemma:Bellman ineq} (\secref{subsection:BE with boundary condition})}
By the standard calculus and $\alpha \doteq - \ln \gamma$, 
\[
	\frac{d}{dt} \big ( \gamma^{t} \cdot v(X_t) \big )
	= \gamma^{t} \cdot \big ( \dot v (X_t, U_t) - \alpha \cdot v(X_t) \big ).			
\]
Hence, applying \eqref{eq:conversion lemma:differential ineq} and noting that 
$h(x, u, \nabla v(x)) = r(x, u) + {\dot v}(x, u)$,
 we obtain that for any $t \geq 0$ and $x \in \mathcal{X}$, 
\begin{align*}
	0 
	\sim \mathbb{G}_\pi^x \Big [ \, \gamma^{t} \! \cdot \! \big ( \, h(X_t, U_t, \nabla v(X_t)) -  \alpha \! \cdot \! v(X_t) \big ) \Big ] 
	= \mathbb{G}_\pi^x \Big [ \, \gamma^{t} \! \cdot \! \big ( R_{t} + \dot v(X_t, U_t) - \alpha \! \cdot \! v(X_t) \big ) \Big ]
	= \mathbb{G}_\pi^x \bigg [ \, \gamma^{t} \! \cdot \! R_{t} + {\frac{d}{dt}} \big ( \gamma^{t} \cdot v(X_t) \big ) \bigg ]
\end{align*}
where $\sim$ is equal to $=$, $\leq$, or $\geq$. Then, integrating it from $t = 0$ to $t = \eta$ yields \eqref{eq:conversion lemma:integral ineq}. 
	
For the proof of the opposite direction, assume that $v$ satisfies \eqref{eq:conversion lemma:integral ineq}. Then, rearranging \eqref{eq:conversion lemma:integral ineq} as 
\begin{align*}
	\big ( 1 - \gamma^{\eta} \big ) \cdot v(x) \sim 
	\mathbb{G}_\pi^x \Big [ \mathfrak{R}_{\eta} 
	+ \gamma^{\eta} \! \cdot \! \big (
	v(X_\eta) - v(X_{0}) \big ) \Big ] \qquad \forall x \in \mathcal{X} \quad \forall \eta > 0, 
\end{align*}
dividing it by $\eta$, and limiting $\eta \to 0$ yields
\begin{align*}
	-\ln \gamma \cdot v(x) \sim r_\pi(x) + {\dot v}(x, \pi(x))
	\qquad \forall x \in \mathcal{X},
\end{align*}
which implies \eqref{eq:conversion lemma:differential ineq} since $\alpha = - \ln \gamma$ and $h(x, \pi(x), \nabla v(x)) = r_\pi(x) + {\dot v}(x, \pi(x))$. 
\end{Proof*}

\begin{Proof*}{Proposition~\ref{prop:VF satisfies the boundary condition}  (\secref{subsection:BE with boundary condition})}
Fix $x \in \mathcal{X}$ and take the limit $\eta \to \infty$ of \eqref{eq:Bellman eq}. Then, we obtain
	\begin{align*}
		v_\pi(x) &=  \lim_{\eta \to \infty} \mathbb{G}_\pi^x \big [ \, \mathfrak{R}_{\eta}
		+
		\gamma^{\eta} \! \cdot \! v_\pi(X_{\eta}) \, \big ]
		= v_\pi(x) + \lim_{\eta \to \infty} \mathbb{G}_\pi^x \big [  \gamma^{\eta} \!  \cdot \! v_\pi(X_{\eta})  \big ].
	\end{align*}
Hence, noting that $v_\pi(x)$ is finite by $\pi \in \Pi_\mathsf{a}$, we obtain the boundary condition 
	$\lim_{\eta \to \infty} \mathbb{G}_\pi^x  \big [ \gamma^{\eta} \! \cdot \! v_\pi(X_\eta) \big ] = 0$, which completes the proof as $x \in \mathcal{X}$ is arbitrary.
\end{Proof*}

\newpage

\begin{Proof*}{Theorem~\ref{thm:uniqueness condition}  (\secref{subsection:BE with boundary condition})}
	Suppose $v$ satisfies the integral BE~\eqref{eq:Bellman eq for v} without loss of generality (or, convert the differential BE~\eqref{eq:differential BE for v} into \eqref{eq:Bellman eq for v} via Lemma~\ref{lemma:Bellman ineq} and fix $\eta > 0$). Then, the repetitive applications of \eqref{eq:Bellman eq for v} to itself $k$-times result in
	\begin{align*}
		v(x) 
		= \mathbb{G}_\pi^x \big [ \mathfrak{R}_\eta + \gamma^\eta \! \cdot \! v(X_\eta) \big ]	
		= \mathbb{G}_\pi^x \big [ \mathfrak{R}_{2 \eta} + \gamma^{2 \eta} \! \cdot \! v(X_{2 \eta}) \big ]
		=
		\;\, \cdots \;\,
		= \mathbb{G}_\pi^x \big [ \mathfrak{R}_{k \cdot \eta} + \gamma^{k \cdot \eta} \! \cdot \! v(X_{k \cdot \eta}) \big ]	\qquad \forall x \in \mathcal{X}.
		\end{align*}
	Taking the limit $k \to \infty$ and substituting \eqref{eq:uniqueness condition for v}, we obtain
	\begin{align*}
		v(x) 
		=\, \underbrace{\lim_{k \to \infty} \mathbb{G}_\pi^x \big [ \, \mathfrak{R}_{k \cdot \eta} \big ]}_{=v_\pi(x)} +
		\underbrace{\lim_{k \to \infty} \mathbb{G}_\pi^x \big [ \, \gamma^{k \cdot \eta} \! \cdot \! v(X_{k \cdot \eta}) \, \big ]}_{=0}
		= v_\pi(x) \qquad \forall x \in \mathcal{X}.
	\end{align*}
	Therefore, $v = v_\pi$ and since $v(x)$ is finite for each $x \in \mathcal{X}$, $\pi$ is admissible. The converse is obvious by Proposition~\ref{prop:VF satisfies the boundary condition}.
\end{Proof*}

\begin{Proof*}{Lemma~\ref{lemma:policy improvement lemma} (\secref{subsection:policy improvement})}
The inequality in Lemma~\ref{lemma:policy improvement lemma} is equivalent to 
	\[
		v(x) \leq \mathbb{G}_{\pi'}^x \big [ \mathfrak{R}_{\eta}
			+ \gamma^{\eta} \! \cdot \! v(X_{\eta}) \big ]
		\qquad \forall x \in \mathcal{X} \quad \forall \eta > 0.
	\]
by Lemma~\ref{lemma:Bellman ineq}. Then, taking the limit supremum at $\eta \to \infty$, we obtain for each $x \in \mathcal{X}$:
	\[
		v(x) \leq v_{\pi'}(x) + \smash{\limsup_{\eta \to \infty}} \; \mathbb{G}_{\pi'}^x \big [ \gamma^{\eta} \!\cdot  v(X_{\eta}) \big ] \leq v_{\pi'}(x)
		\qquad \forall x \in \mathcal{X}
	\]
where we have substituted
	\begin{align*}
		\limsup_{\eta \to \infty} \; \mathbb{G}_{\pi'}^x [ \gamma^{\eta} \!\cdot\! v(X_{\eta}) ] \leq \sup_{x \in \mathcal{X}} v(x) \cdot \lim_{\eta \to \infty} \gamma^\eta = 0
	\end{align*}
which is true since $v$ is upper-bounded (by zero if $\gamma = 1$) and $\gamma \in (0, 1]$. Since $v(x) $ is finite for all $x \in \mathcal{X}$, we have
	\[
		-\infty < v(x) \leq v_{\pi'}(x) \leq \widebar v < \infty \qquad \forall x \in \mathcal{X}		
	\]
by Lemma~\ref{lemma:VF range}. Therefore, $\pi'$ is admissible and $v \leqslant v_{\pi'}$.
\end{Proof*}

\begin{Proof*}{Theorem~\ref{thm:policy improvement thm} (\secref{subsection:policy improvement})}
The policy~$\pi'$ given by \eqref{eq:assumption:policy improvement} satisfies: 
	\[
		h (x, \pi'(x), \nabla v_\pi(x))
		\geq h (x, \pi(x), \nabla v_\pi(x))
		= \alpha \cdot v_{\pi}(x) \qquad \forall x \in \mathcal{X}
	\]
where we substituted the differential BE~\eqref{eq:differential BE}. Therefore, the application of Lemma~\ref{lemma:policy improvement lemma} directly proves the theorem. 
\end{Proof*}

\begin{Proof*}{Theorem~\ref{thm:nc for optimality} (\secref{subsection:HJBE})}
	By optimality and Lemma~\ref{lemma:VF range}, 
	$v \leqslant v_* \leqslant \widebar v$ for any $v \in \mathcal{V}_\mathsf{a}$, implying $v_* \in \mathcal{V}_\mathsf{a}$.
	Since $v_*$ is the VF for the policy $\pi_*$, $v_* = v_{\pi_*}$ and $\pi_* \in \Pi_\mathsf{a}$. 
	Moreover, the maximal policy $\pi'_*$ over $\pi_* \in \Pi_\mathsf{a}$ is also optimal since $\pi_* \preccurlyeq \pi_*'$ by Theorem~\ref{thm:policy improvement thm} and $\pi_*' \preccurlyeq \pi_*$ by the optimality of $\pi_*$, resulting in $v_* = v_{\pi_*} = v_{\pi_*'} \in \mathcal{V}_\mathsf{a}$.  Therefore, the differential BE~\eqref{eq:differential BE} w.r.t. the policy~$\pi = \pi_*'$ and the policy improvement \eqref{eq:assumption:policy improvement} for $\pi' = \pi_*'$, both with $v_{\pi_*} = v_{\pi_*'} = v_*$, result in the HJBE~\eqref{eq:HJBE}. Comparing the HJBE~\eqref{eq:HJBE} with the differential BE~\eqref{eq:differential BE} for $\pi = \pi_*$ and $v_\pi = v_*$, we have \eqref{eq:optimal policy}. 
\end{Proof*}

\subsection{Proofs in \secref{section:fundamental properties of PI} Fundamental Properties of PIs}
\label{appendix:C}

\begin{Proof*}{Theorem~\ref{thm:fundamental properties}}
	$\pi_0$ is admissible by initialization. Suppose for some $i \in \mathbb{N}$ that $\pi_{i-1}$ is admissible. 
	Then, $v_{i} = v_{\pi_{i-1}}$ holds by Theorem~\ref{thm:uniqueness condition} and the boundary condition~\eqref{eq:boundary condition for PI}; $\pi_i$ is also admissible and $\pi_{i-1} \preccurlyeq \pi_{i}$ by Theorem~\ref{thm:policy improvement thm}. 
	Therefore, the mathematical induction completes the proof.
\end{Proof*}

\begin{Proof*}{Theorem~\ref{thm:convergence of PI}}
By Theorem~\ref{thm:fundamental properties} and Lemma~\ref{lemma:VF range}, we have
\[
	v_{1}(x) \leq \cdots \leq v_{i}(x) \leq v_{i+1}(x) \leq \cdots \leq \widebar v < \infty
	\text{ for each fixed $x \in \mathcal{X}$.}
\]
That is, the sequence $\langle v_{i}(x) \rangle$ in $\mathbb{R}$ is monotonically increasing and upper bounded by a constant $\widebar v \in \mathbb{R}$. Hence, $v_i(x)$ converges to ${\hat v}_*(x) \doteq \sup_{i \in \mathbb{N}} v_i(x)$ by monotone convergence theorem \citep[Theorem~2.28]{Thomson2001}, implying the pointwise convergence $v_i \to {\hat v}_*$. 

Next, since every admissible VF is assumed $\mathrm{C}^1$ (see \eqref{eq:assumption:every admissible VF is C1}) and $v_i = v_{\pi_{i-1}}$ is admissible by Theorem~\ref{thm:fundamental properties}, $v_i$ is continuous for each $i \in \mathbb{N}$. Hence, ${\hat v}_*$ is lower semicontinuous \citep[Proposition~7.11c]{Folland1999} and the monotone sequence $\langle v_i \rangle$ converges to ${\hat v}_*$ \emph{uniformly on} $\Omega$ if $\Omega$ is compact and ${\hat v}_*$ is \emph{continuous over} $\Omega$ by Dini's theorem \citep[Theorem~7.13]{Rudin1964}. Finally, $v_i \to {\hat v}_*$ uniformly on any compact $\Omega \subset \mathcal{X}$ if ${\hat v}_*$ is continuous, hence the last statement is obvious by Lemma~\ref{lemma:equivalence b.t.w. locally uniform and compact convergence}.
\end{Proof*}

\begin{Proof*}{Proposition~\ref{prop:a fixed point is a solution to HJBE} (\secref{subsection:convergence})}
Since $v^*$ is a fixed point of $\mathcal{T}$, $\mathcal{T} v^* = v^* \in \mathcal{V}_\mathsf{a}$. Let $\pi^*$ be a policy s.t. 
\begin{equation}
	\pi^*(x) \in \Argmax_{u \in \mathcal{U}}\, h(x, u, \nabla v^*(x)) \qquad \forall x \in \mathcal{X}.
	\label{eq:policy pi^*}
\end{equation}
Then, we have $v_{\pi^*} = \mathcal{T} v^* = v^* \in \mathcal{V}_\mathsf{a}$, hence $\pi^*$ is admissible. Since any admissible policy~$\pi$ satisfies the differential BE~\eqref{eq:differential BE}, it is true for $\pi = \pi^*$, that is, 
$\alpha \cdot v_{\pi^*}(x)  = h(x, \pi^*(x), \nabla v_{\pi^*}(x))$ for all $x \in \mathcal{X}$, 
from which and $v_{\pi^*} = v^*$ we finally obtain
\[
		\alpha \cdot v^*(x) 
		= h(x, \pi^*(x), \nabla v^*(x)) \qquad \forall x \in \mathcal{X}.
\]
Therefore, the substitution of \eqref{eq:policy pi^*} concludes that a fixed point $v^*$ of $\mathcal{T}$ is a solution $v_*$ to the HJBE~\eqref{eq:HJBE}.	
\end{Proof*}

\begin{Proof*}{Theorem~\ref{thm:true convergence in metric} (\secref{subsection:convergence})}
By Lemma~\ref{lemma:uniqueness of fixed point} below and Assumption~\ref{assumption:uniqueness of fixed point}, $v^*$ is a unique fixed point of $\mathcal{T}^N$ for all $N \in \mathbb{N}$. Hence, \citet{Bessaga1959}'s converse of the \citet{Banach1922}'s fixed point theorem ensures that there exists a metric $d$ on $\mathcal{V}_{\mathsf{a}}$ such that $(\mathcal{V}_{\mathsf{a}}, d)$ is a complete metric space and $\mathcal{T}$ is a contraction under~$d$. Then, as $v^*$ is the unique fixed point of $\mathcal{T}$, the \citet{Banach1922}'s fixed point theorem (e.g., see \citealp[Theorem~2.2]{Kirk2013}; or \citealp[Lemma~13.73]{Thomson2001}) shows
	\[
		\forall v_1 \in \mathcal{V}_\mathsf{a}: \lim_{i \to \infty} v_i = \lim_{i \to \infty} \mathcal{T}^{i - 1} v_1 = v^* \textrm{ in the metric $d$,}	
	\]
	implying the convergence $v_i \to v^*$ in the metric $d$.
\end{Proof*}

\begin{lemma} [\secref{subsection:convergence}]
	\label{lemma:uniqueness of fixed point}
		If $v^*$ is a unique fixed point of $\mathcal{T}$, then it is a unique fixed point of $\mathcal{T}^N$ for any $N \in \mathbb{N}$.
\end{lemma}

\begin{Proof}
	Suppose $v^*$ is the unique fixed point of $\mathcal{T}$. Then, it is also a fixed point of $\mathcal{T}^N$ for any $N \in \mathbb{N}$ since 
  	\[
		\mathcal{T}^N v^* = \mathcal{T}^{N-1} ( \mathcal{T} v^* ) = \mathcal{T}^{N-1} v^* = \cdots = \mathcal{T}v^* = v^*. 
	\]
	To show that $v^*$ is the \emph{unique} fixed point of $\mathcal{T}^N$ \emph{for all} $N \in \mathbb{N}$ by contradiction, 
	suppose that there exist $M \in \mathbb{N}$ and $v \in \mathcal{V}_{\mathsf{a}}$ s.t. $\mathcal{T}^M v = v \neq v^*$. Then, the repetitive applications of Theorem~\ref{thm:policy improvement thm} result in 
	 \[
	 	v \leqslant \mathcal{T} v \leqslant \mathcal{T}^2 v \leqslant \cdots \leqslant \mathcal{T}^M v = v
	 \]
	 and thus $\mathcal{T} v = v$. Since $v^*$ is the \emph{unique} fixed point of~$\mathcal{T}$, we have a contradiction, $v = v^*$. Therefore, $v^*$ is the unique fixed point of $\mathcal{T}^N$ for all $N \in \mathbb{N}$, and the proof is completed.
\end{Proof}

\begin{Proof*}{Theorem~\ref{thm:true uniform convergence:1} (\secref{subsection:convergence})}
${\hat v}_* \in \mathcal{V}_\mathsf{a}$ and \eqref{eq:assumption:every admissible VF is C1} imply ${\hat v}_* \in \mathrm{C}^1$ and thus continuity of ${\hat v}_*$. Hence, $v_i$ locally uniformly converges to $\hat v_*$ by Theorem~\ref{thm:convergence of PI}c. This and Lemma~\ref{lemma:equivalence b.t.w. locally uniform and compact convergence} imply that for each compact subset $\Omega$ of $\mathcal{X}$, $v_i \to {\hat v}_*$ in the uniform pseudometric $d_\Omega$. By this and continuity of $\mathcal{T}$ under $d_\Omega$, we have 
\[
 	{\hat v}_* = \lim_{i \to \infty} v_{i+1} = \lim_{i \to \infty} \mathcal{T} v_i = \mathcal{T} \Big  (\lim_{i \to \infty} v_i \Big ) = \mathcal{T}{\hat v}_* \; \textrm{ in the pseudometric } d_\Omega,
\]
implying $d_\Omega({\hat v}_*, \mathcal{T}{\hat v}_*) = 0$ for every compact subset~$\Omega \subset \mathcal{X}$, hence ${\hat v}_* = \mathcal{T} {\hat v}_*$. Therefore, we finally have ${\hat v}_* = v^*$ by Assumption~\ref{assumption:uniqueness of fixed point}, and the proof is completed.
\end{Proof*}

\begin{Proof*}{Theorem~\ref{thm:true uniform convergence:2} (\secref{subsection:convergence})}
$\nabla v_i$ converges locally uniformly by Assumption~\ref{assumption:convergence of nabla v_i and pi_i}a. Hence, for each $x \in \mathcal{X}$, there is a neighborhood $\mathcal{N}_x$ of $x$ on which $\nabla v_i$ converges uniformly. Since a neighborhood $\mathcal{N}_x$ of $x$ contains an open ball $\mathcal{B}_x \doteq \{ y \in \mathcal{X} : \|x - y\| < r\}$ centered at $x$, for some $r > 0$, and every open ball in $\mathcal{X}$ is convex, Lemma~\ref{lemma:uniform convergence of nabla v_i} below ensures that for every $x \in \mathcal{X}$, ${\hat v}_*$ is $\mathrm{C}^1$ over $\mathcal{B}_x$ and $\nabla v_i \to \nabla {\hat v}_*$ uniformly on $\mathcal{B}_x$. This and \smash{$\mathcal{X} = \bigcup_{x \in \mathcal{X}} \mathcal{B}_x$} establish that 	${\hat v}_*$ is $\mathrm{C}^1$ and 
	\begin{equation}
		\nabla v_i \to \nabla {\hat v}_* \textrm{ locally uniformly.}
		\label{eq:appendix:locally uniform limit of the gradients}
	\end{equation}
	Since ${\hat v}_*$ is continuous ($\because$ it is $\mathrm{C}^1$), Theorem~\ref{thm:convergence of PI}c implies that
	$v_i \to {\hat v}_*$ locally uniformly. Let ${\hat \pi}_*: \mathcal{X} \to \mathcal{U}$ be the function to which $\langle \pi_{i} \rangle$ converges pointwise. Such a function ${\hat \pi}_*$ exists by Assumption~\ref{assumption:convergence of nabla v_i and pi_i}b. Then, since each of the $i$th policy~$\pi_i$ satisfies 
	\[
		\pi_i (x) \in \Argmax_{u \in \mathcal{U}} \, h(x, u, \nabla v_i(x)) \qquad \forall x \in \mathcal{X},
	\]
	Assumption~\ref{assumption:closed graph} and \eqref{eq:appendix:locally uniform limit of the gradients} imply that the limit function~${\hat \pi}_*$ holds
	\begin{equation}
		{\hat \pi}_* (x) \in \Argmax_{u \in \mathcal{U}} \, h(x, u, \nabla {\hat v}_*(x)) \qquad \forall x \in \mathcal{X}. 
		\label{eq:appendix:the limit policy}		
	\end{equation}
	Note that for each $i \in \mathbb{N}$, $v_i =v_{\pi_{i-1}} \!\! \in \mathcal{V}_\mathsf{a}$ by Theorem~\ref{thm:fundamental properties}, hence $\pi_{i-1}$ satisfies the differential BE~\eqref{eq:differential BE} for $\pi = \pi_{i-1}$. That is,
	\begin{align*}
		\alpha \cdot  v_i(x) = h(x, \pi_{i-1}(x), \nabla v_i(x)) \qquad \forall x \in \mathcal{X} \quad \forall i \in \mathbb{N}.
	\end{align*}
	Then, taking the pointwise limit $i \to \infty$ on both sides and using continuity of $h$ and \eqref{eq:appendix:the limit policy} results in 
	\begin{align}
		\alpha \cdot {\hat v}_*(x) &= h(x, {\hat \pi}_*(x), \nabla {\hat v}_*(x)) 
		   = \max_{u \in \mathcal{U}} \; h(x, u, \nabla {\hat v}_*(x)) 
		   \qquad \forall x \in \mathcal{X}.
		   \label{eq:appendix:HJBE for v hat}
	\end{align} 
	Here, \eqref{eq:appendix:HJBE for v hat} and \eqref{eq:appendix:the limit policy} are exactly the HJBE~\eqref{eq:HJBE} and \eqref{eq:optimal policy}, respectively, for $v_* = {\hat v}_*$ and $\pi_* = {\hat \pi}_*$, completing the proof. 
\end{Proof*}

\begin{lemma}
	\label{lemma:uniform convergence of nabla v_i}
		If $\nabla v_i$ uniformly converges on an open convex subset $\mathcal{S} \subset \mathcal{X}$, then 
		\[
			\text{${\hat v}_*$ is $\mathrm{C}^1$ over $\mathcal{S}$ and $\nabla v_i \to \nabla {\hat v}_*$ uniformly on $\mathcal{S}$.}
		\]
\end{lemma}

\begin{Proof}
		Let $x \in \mathcal{S}$ and $e_j$ be the unit vector in $\mathcal{X}$ ($= \mathbb{R}^l$) whose $j$th element is 1 (and all the other ones are 0's). Since $\mathcal{S}$ is open, there exists $\theta > 0$ s.t. for each $j$, 
		both 
		\begin{align*}
			x_{j}^+ \doteq x + \dfrac{\theta}{2} \cdot e_j \text{ and }
			x_{j}^- \doteq x - \dfrac{\theta}{2} \cdot e_j			
		\end{align*}
		belong to $\mathcal{S}$. Define a function $g_j: [0, 1] \to \mathcal{S}$ as
		\[
			g_j(\beta) \doteq \beta x_j^+ + (1 - \beta) x_j^-  \text{ for } \beta \in [0, 1],
		\]
		where the dependencies on $x$ and $\theta$ are implicit; by convexity of $\mathcal{S}$, $g_j(\beta) \in \mathcal{S}$ for all $\beta \in [0, 1]$. Then, the composition $v_i \circ g_j$ pointwise converges to ${\hat v}_* \circ g_j$ by Theorem~\ref{thm:convergence of PI}a. Moreover, the derivative $(v_i \circ g_j)'$ (w.r.t. $\beta$) can be expressed by chain rule as 
		\[
			(v_i \circ g_j)'(\beta) = \theta \cdot \! \nabla v_i ( g_j(\beta) ) \, e_j = 
			\theta \cdot \frac{\partial v_i (z) }{\partial z_j}\bigg|_{z = g_j(\beta)}
		\]
		which reveals that $(v_i \circ g_j)'$ is continuous and converges uniformly on $[0, 1]$ since so is $\nabla v_i$ on $\mathcal{S}$ (note that $v_i = v_{\pi_{i-1}} \!\! \in \mathrm{C}^1$ by Theorem~\ref{thm:fundamental properties} and the regularity Assumption~\eqref{eq:assumption:every admissible VF is C1}). Hence, the application of \citep[Theorem~9.34]{Thomson2001} shows that ${\hat v}_* \circ g_j$ is differentiable (w.r.t. $\beta$) and 
		\begin{equation}
			(v_i \circ g_j)' \to ({\hat v}_* \circ g_j)' \textrm{ uniformly on } [0, 1].
			\label{eq:uniform convergence of zeta_i o g}
		\end{equation}
		By definition, the derivative $({\hat v}_* \circ g_j)'(\beta)$ at $\beta = 1/2$ satisfies 
		\begin{align*}
			({\hat v}_* \circ g_j)'(1/2) 
			= \lim_{\epsilon \to 0} \frac{{\hat v}_*(g_j(1/2 + \epsilon)) - {\hat v}_*(g_j(1/2))}{\epsilon}
			= \lim_{\epsilon \to 0} \frac{{\hat v}_*(x + \epsilon \theta \! \cdot \! e_j) - {\hat v}_*(x)}{\epsilon}
			= \theta \cdot \frac{\partial {\hat v}_*(x)}{\partial x_j}.
		\end{align*}
		Since this is true for any $j = \{1,2,\cdots, l\}$ and any $x \in \mathcal{S}$, the gradient $\nabla {\hat v}_*$ exists over $\mathcal{S}$. Moreover, \eqref{eq:uniform convergence of zeta_i o g} at $\beta = 1/2$ implies
		\begin{align*}
			\frac{\partial v_i(x)}{\partial x_j} \to \frac{\partial {\hat v}_*(x)}{\partial x_j} \qquad \forall x \in \mathcal{S} \quad \forall j \in \{1,2,\cdots, l\}, 			
		\end{align*}
		hence $\nabla v_i$ uniformly converges to $\nabla {\hat v}_*$ on $\mathcal{S}$. This also implies that the convergent point $\nabla {\hat v}_*$ is continuous over $\mathcal{S}$ \citep[Theorem~7.12]{Rudin1964}. That is, ${\hat v}_*$ is $\mathrm{C}^1$ over $\mathcal{S}$. 
 \end{Proof}

\subsection{Proofs in \secref{section:case studies} Case Studies}
\label{appendix:proofs:case studies}

Here, we prove all the mathematical statements, including Theorems, Lemmas, Propositions, and Corollaries, w.r.t. each case study presented in the main paper \citep[\secref{section:case studies}]{Lee2020b}. For some proofs in \secref{subsection:RL under u-AC setting}, the following lemmas are required.

\begin{lemma} 
	\label{lemma:continuity of the inverse}
	For any two action spaces $\mathcal{U}$ and $\mathcal{A}$, the inverse of a continuous bijection $g: \Int{\mathcal{U}} \to \Int{\mathcal{A}}$ is continuous.
\end{lemma}
	
\begin{Proof}
	By definitions, $\Int{\mathcal{U}}$ and $\Int{\mathcal{A}}$ are open sets in $\mathbb{R}^m$. Hence, 
	\citet{Brouwer1911}'s invariance of domain theorem implies that $g(\mathcal{O})$ for every open subset $\mathcal{O} \subseteq \Int{\mathcal{U}}$ is open. Hence, the inverse $g^{-1}$ is continuous \citep[Theorem~4.8]{Rudin1964}.
\end{Proof}

\begin{lemma} 
	\label{lemma:uniform convergence of composition}
	Let $\psi: \mathcal{X}^2 \to \mathcal{U}$ be a continuous function. If a sequence $\langle g_i \rangle$ of continuous functions $g_i: \mathcal{X} \to \mathcal{X}$ converges to $g$ locally uniformly, then so does $x \mapsto \psi(x, g_i(x))$ to $x \mapsto \psi(x, g(x))$.
\end{lemma}

\begin{Proof}
Let $\Omega \subset \mathcal{X}$ be compact. Then, by locally uniform convergence $g_i \to g$ and Lemma~\ref{lemma:equivalence b.t.w. locally uniform and compact convergence}, we have the followings:
	\begin{enumerate}
		\item $\langle g_i \rangle$ is uniformly equicontinuous over any compact subset $S \subset \mathcal{X}$ \citep[Theorem~7.24]{Rudin1964}, that is, 
			\begin{equation}
				\text{given $\delta > 0$, there exists $\delta' > 0$ s.t. } \big ( \|x - x'\| < \delta' \; \Longrightarrow \; \|g_i (x) - g_i(x')\| < \delta, \quad \forall x, x' \in S \quad \forall i \in \mathbb{N} \big );
				\label{appendix:proof:lemmaC6:1}		
			\end{equation}
		\item $\langle g_i \rangle$ is uniformly bounded on $\Omega$ (e.g., \citealp[Theorem~7.25]{Rudin1964});
		\item $g$ is continuous over $\Omega$ \citep[Theorem~7.12]{Rudin1964}, hence the image $g(\Omega)$ is compact \citep[Theorem~4.14]{Rudin1964}.
	\end{enumerate}
	In short, we have uniform equicontinuity \eqref{appendix:proof:lemmaC6:1}	over any compact subset $S \subset \mathcal{X}$ and a uniform bound $M > 0$ over $\Omega$, that is,
	\begin{equation}
		\|g_i(x)\| \leq M \text{ and }
		\|g(x)\| \leq M \qquad \forall x \in \Omega \quad \forall i \in \mathbb{N}.
		\label{appendix:proof:lemmaC6:2}		
	\end{equation}	
	
	Next, let $\varepsilon > 0$ and $S_0 \subset \mathcal{X}$ be a compact subset defined as $S_0 \doteq \{ y \in \mathcal{X} : \|y\| \leq M \}$. Then, the function $\psi$ is uniformly continuous over the compact subset $\Omega \times S_0 \subset \mathcal{X}^2$ \citep[Theorem~4.14]{Rudin1964}, hence there exists $\delta > 0$ such that 
	\[
		\|x - x'\| < \delta \text{ and } \|y - y'\| < \delta \quad \Longrightarrow \quad \big \| \psi(x, y) - \psi(x', y') \big \| < \varepsilon,
		\qquad \forall x, x' \in \Omega \quad \forall y, y' \in S_0
	\]
	Since $g_i(x), g(x) \in S_0$ for each $x \in \Omega$ and $i \in \mathbb{N}$ by \eqref{appendix:proof:lemmaC6:2}, the uniform equicontinuity \eqref{appendix:proof:lemmaC6:1} over the compact set $S = S_0$ finally results in: for any $x, x' \in \Omega$ and any $i \in \mathbb{N}$, 
	\[
		\|x - x'\| < \delta^*  \quad \Longrightarrow \quad \big \| \psi(x, g_i(x)) - \psi(x, g_i(x)) \big \| < \varepsilon
	\]
	where $\delta^* \doteq \min \{ \delta, \delta'\}$. That is, the sequence of functions $x \mapsto \psi(x, g_i(x))$ is uniformly equicontinuous over $\Omega$. Moreover, for each $x \in \mathcal{X}$, $y \mapsto \psi(x, y)$ is continuous and $g_i(x)$ converges to $g(x)$, hence $\psi(x, g_i(x))$ converges to $\psi(x, g(x))$. Therefore, $x \mapsto \psi(x, g_i(x))$ converges to $x \mapsto \psi(x, g(x))$ uniformly on $\Omega$ (\citealp[Lemma~39 in Chapter~7]{Royden1988}; or see \citealp[Exercise~16 in Chapter~7]{Rudin1964}); since the compact set $\Omega$ is arbitrary, the proof is completed by Lemma~\ref{lemma:equivalence b.t.w. locally uniform and compact convergence}.
\end{Proof}
	
\begin{Proof*}{Properties of $\mathfrak{c}$ (\secref{subsubsection:case I})} 
The target properties are w.r.t. the gradient and the gradient inverse of $\mathfrak{c}$ shown below:
\begin{enumerate}
	\item $\nabla \mathfrak{c}^\mathsf{T}$ is \emph{bijective}, so that its inverse $\sigma \doteq (\nabla \mathfrak{c}^\mathsf{T})^{-1}$ exists;
	\item $\nabla \mathfrak{c}^\mathsf{T}$ and $\sigma$ are \emph{strictly monotone} and \emph{continuous}.
\end{enumerate}
where $\mathfrak{c}: \mathcal{U} \to \mathbb{R}$ is a function given in \eqref{eq:strictly concave reward}. Here, we prove those properties of $\mathfrak{c}$.

To begin with, recall that $\mathfrak{c}$ is assumed \emph{strictly convex}, $\mathrm{C}^1$, and its gradient $\nabla \mathfrak{c}$ is \emph{surjective}, i.e.,  $\nabla \mathfrak{c}^\mathsf{T}(\Int{\mathcal{U}}) = \mathbb{R}^{m}$. First, we focus on $\nabla \mathfrak{c}^\mathsf{T}$. As $\mathfrak{c}$ is assumed $\mathrm{C}^1$, continuity of $\nabla \mathfrak{c}^\mathsf{T}$ is obvious. To prove strict monotonicity, note that $\mathfrak{c}$ satisfies Lemma~\ref{lemma:strict inequality for strictly convex S} below; by adding the two strict inequalities in Lemma~\ref{lemma:strict inequality for strictly convex S} and rearranging it, we obtain 
	\begin{equation}
			(\nabla \mathfrak{c} (u) - \nabla \mathfrak{c}(u')) (u - u') > 0
			\qquad \forall u \neq u'.
			\label{eq:monotonicity of nabla S}		
	\end{equation}
	Hence, $\nabla \mathfrak{c}^\mathsf{T}$ is strictly monotone.\footnote{The converse (i.e., $\nabla \mathfrak{c}^\mathsf{T}$ is strictly monotone $\Longrightarrow$ $\mathfrak{c}$ is strictly convex) is also true. This equivalence between convexity of $\mathfrak{c}$ and monotonicity of $\nabla \mathfrak{c}^\mathsf{T}$ is known as \citet{Kachurovskii1960}'s theorem.} Moreover, $\nabla \mathfrak{c}^\mathsf{T}$ is injective --- if not, $\exists u$, $u' \in \Int{\mathcal{U}}$ s.t. $u \neq u'$ but $\nabla \mathfrak{c}(u) = \nabla \mathfrak{c}(u')$, which and strict monotonicity of $\nabla \mathfrak{c}$ directly lead us a contradiction ``$0 > 0$'': 
	\[
    	0 = 0^\mathsf{T} (u - u') = (\nabla \mathfrak{c}(u) - \nabla \mathfrak{c}(u')) (u - u') > 0.
	\]
	Therefore, the surjective mapping $\nabla \mathfrak{c}^\mathsf{T}$ is also injective and thus bijective. This ensures the existence of the inverse $\sigma$; since $\nabla \mathfrak{c}^\mathsf{T}$ is continuous, so is its inverse $\sigma$ by Lemma~\ref{lemma:continuity of the inverse} above with $\mathcal{A} =\mathbb{R}^m$. 
	
	To prove strict monotonicity of $\sigma$, let $u \doteq \sigma(\mathfrak{u})$ and $u' \doteq \sigma(\mathfrak{u}')$ for arbitrary $\mathfrak{u}$, $\mathfrak{u}' \in \mathbb{R}^m$. Then, we obviously have $\mathfrak{u} = \nabla \mathfrak{c}^\mathsf{T}(u)$ and $\mathfrak{u}' = \nabla \mathfrak{c}^\mathsf{T}(u')$ and thus, by \eqref{eq:monotonicity of nabla S}, we conclude that 
	$
		(\mathfrak{u} - \mathfrak{u}')^\mathsf{T} (\sigma(\mathfrak{u}) - \sigma(\mathfrak{u}')) > 0 
	$ whenever $\mathfrak{u} \neq \mathfrak{u}'$, hence $\sigma$ is also strictly monotone. This completes the proof.
\end{Proof*}

\begin{lemma} 
\label{lemma:strict inequality for strictly convex S}
For a strictly convex $\mathrm{C}^1$ function $\mathfrak{c}: \mathcal{U} \to \mathbb{R}$ and for any $u$, $u' \in \Int{\mathcal{U}}$ such that $u \neq u'$, 
\[
	\begin{cases}
		\mathfrak{c}(u) > \mathfrak{c}(u') + \nabla \mathfrak{c}(u')(u - u')
		\\[2.5pt]
		\mathfrak{c}(u') > \mathfrak{c}(u) + \nabla \mathfrak{c}(u)(u' - u),
	\end{cases}
\]
where the second inequality is due to the interchange of $u$ and $u'$ of the first one.
\end{lemma}

\begin{Proof}
	Let $g(\beta) \doteq \mathfrak{c} \big ( \beta \cdot u + (1 - \beta) \cdot u' \big )$ for $\beta \in [0, 1]$. Then, $g$ is strictly convex and $\mathrm{C}^1$ since so is $\mathfrak{c}$. By the mean value theorem, there exists $\bar \beta \in (0, 1)$ such that 
	\begin{align*}
		g(1) - g(0) = g'(\bar \beta) > \smash{\lim_{\beta \to 0^+}} g'(\beta)
		= \nabla \mathfrak{c}(u') (u - u'),
	\end{align*}
	where the strict inequality comes from the fact that the derivative $g'$ of a strictly convex $\mathrm{C}^1$ function $g$ is strictly increasing.\footnote{Consider the inequalities for $0 \leq x_1 < x_1' < x_2 < x_2' \leq 1$:
	\[
		\smash{\frac{g(x_1') - g(x_1)}{x_1' - x_1} < \frac{g(x_2) - g(x_1')}{x_2 - x_1'} < \frac{g(x_2') - g(x_2)}{x_2' - x_2}}
	\]  
	(e.g., see \citealp[Theorem~7.5]{Sundaram1996}) and take the limits $x_1' \to x_1$ and $x_2' \to x_2$, resulting in $g'(x_1) < g'(x_2)$ for $x_1 < x_2$.} Then, the proof is completed by substituting the definition of $g$ into the strict inequality.
\end{Proof}

\begin{Proof*}{Theorem~\ref{thm:true convergence} (\secref{subsubsection:case I})}
Combine Lemma~\ref{lemma:concave h} below with Theorem~\ref{thm:true uniform convergence:2}.
\end{Proof*}

\begin{lemma} [\secref{subsubsection:case I}]
	\label{lemma:concave h}
	Under \eqref{eq:input-affine dynamics} and \eqref{eq:strictly concave reward},
	\begin{enumerate}
		\item [\textbf{\emph{a}}.] Assumption~\ref{assumption:closed graph} is true;
				\\[-5pt]
		\item [\textbf{\emph{b}}.] if $\langle \nabla v_i \rangle$ locally uniformly converges to a function $\xi$, then $\langle \pi_i \rangle$ locally uniformly converges to $\pi_\xi$, where 
		\[
			\pi_\xi(x) \doteq \sigma ( F_\mathsf{c}^\mathsf{T} (x) \! \cdot \! \xi^\mathsf{T}(x)).
		\]
	\end{enumerate}
\end{lemma}

\begin{Proof}
	\textbf{a.} Under \eqref{eq:input-affine dynamics} and \eqref{eq:strictly concave reward}, 
	the maximal function~$u_*$ in \eqref{eq:assumption:general condition for policy improvement} is uniquely determined by \eqref{eq:u bar without max} and thus continuous. This also implies that the $\mathrm{argmax}$-set in \eqref{eq:assumption:general condition for policy improvement} is a singleton, hence Assumption~\ref{assumption:closed graph} is equivalent to the continuity of $p \mapsto u_*(x, p)$ (see Remark~\ref{remark:argmax assumption}) which is obviously true by continuity of $u_*$. 
	
	\textbf{b. } For each $i \in \mathbb{N}$, $v_i \in \mathrm{C}^1$ (i.e., $\nabla v_i$ is continuous) since $v_i \in \mathcal{V}_\mathsf{a}$ by Theorem~\ref{thm:fundamental properties} and $\mathcal{V}_\mathsf{a} \subset \mathrm{C}^1$ by \eqref{eq:assumption:every admissible VF is C1}. The function $(x, y) \mapsto \sigma (F_\mathsf{c}^\mathsf{T}(x)y)$ is also continuous since so are $F_\mathsf{c}$ and $\sigma$. Therefore, applying Lemma~\ref{lemma:uniform convergence of composition} with $\psi(x, y) = \sigma(F_\mathsf{c}^\mathsf{T} (x) y )$, $g_i = \nabla v_i$, and $g = \xi$ completes the proof.
\end{Proof}

\begin{Proof*}{Theorem~\ref{thm:true convergence:nonaffine case} (\secref{subsubsection:case II})}
Denoting $a \doteq \varphi(u)$ and considering $a \in \Int{\mathcal{A}}$ as the action transformed from $u \in \Int{\mathcal{U}}$, we can formulate the input-affine dynamics $\bar f$ and the reward function $\bar r$ from \eqref{eq:a class of non-affine dynamics} and \eqref{eq:a class of reward fnc for non-affine dynamics} as 
		\begin{equation}
			\begin{cases}
				{\bar f}(x, a) 
				\doteq f(x,  \varphi^{-1}(a)) 
				= f_{\mathsf{d}}(x) + F_\mathsf{c}(x) \cdot a
			\\[5pt]
				{\bar r}(x, a) 
				\doteq 
				r(x, \varphi^{-1}(a)) 
				= \mathfrak{r}(x) - \mathfrak{c}(a),
			\end{cases}
			\label{eq:proof:thm5.4:transformed RL problem}	
		\end{equation}
	both of which are defined for all $(x, a) \in \mathcal{X} \times \Int{\mathcal{A}}$. The associated Hamiltonian~$\bar h: \mathcal{X} \times \Int{\mathcal{A}} \times \mathcal{X}^\mathsf{T} \to \mathbb{R}$ is given by 
	\begin{align}
		\bar h(x, a, p) 
		= \underbrace{\mathfrak{r}(x) - \mathfrak{c}(a)}_{{\bar r}(x, a)} + \, p \cdot (\underbrace{f_\mathsf{d}(x) + F_\mathsf{c}(x) \cdot a}_{{\bar f}(x, a)} )		
		= h(x, \varphi^{-1}(a), p).
		\label{eq:transformed hamiltonian}
	\end{align}
	Here, both $a \mapsto \bar r(x, a)$ and $a \mapsto \bar h(x, a, p)$ are strictly concave and $\mathrm{C}^1$ for each $x \in \mathcal{X}$. Thus, similarly to the maximal function $u_*$ in \secref{subsubsection:case I}, a maximal function~$a_*: \mathcal{X} \times \mathcal{X}^\mathsf{T} \to \Int{\mathcal{A}}$ such that
	\[
		a_*(x, p) \in \Argmax_{a \in \Int{\mathcal{A}}} {\bar h}(x, a, p)
		\qquad \forall (x, p) \in \mathcal{X} \times \mathcal{X}^\mathsf{T}
	\]
	exists and is continuous as it can be uniquely represented as (see \secref{subsubsection:case I})
	\begin{equation}
		a_* (x, p)
		= \sigma \big ( F_{\mathsf{c}}^\mathsf{T}(x) \, p^\mathsf{T} \big )
		\qquad \forall (x, p) \in \mathcal{X} \times \mathcal{X}^\mathsf{T}.
		\label{eq:a_*}
	\end{equation}
	\vspace{-1.5em}
	\begin{claim} [\secref{subsubsection:case II}]
		\label{claim:maximum lemma}
		$\varphi^{-1} \big [ a_*(x, p) \big ] \in {\Argmax_{u \in \mathcal{U}}} \, h(x, u, p)$
		$\quad \forall (x, p) \in \mathcal{X} \times \mathcal{X}^\mathsf{T}$.
	\end{claim}
	
	By \eqref{eq:a_*} and Claim~\ref{claim:maximum lemma}, a maximal function $u_*$ satisfying \eqref{eq:assumption:general condition for policy improvement} for the RL problem \eqref{eq:a class of non-affine dynamics} and \eqref{eq:a class of reward fnc for non-affine dynamics} is given by 
	\begin{align}
		u_*(x, p)
		= {\tilde \sigma} \big ( F_\mathsf{c}^\mathsf{T}(x) \, p^\mathsf{T} \, \big )
		\qquad \forall (x, p) \in \mathcal{X} \times \mathcal{X}^\mathsf{T},		
		\label{eq:u_* with transformation}
	\end{align}
	where $\tilde \sigma (\mathfrak{u}) \doteq \varphi^{-1} [ \sigma(\mathfrak{u})]$.
	Moreover, $u_*$ is continuous since so are both $a_*$ and the inverse $\varphi^{-1}$ by \eqref{eq:a_*} and Lemma~\ref{lemma:continuity of the inverse}, respectively (note that $u_*(x, p) = \varphi^{-1} \big [ a_*(x, p) \big ]$ by Claim~\ref{claim:maximum lemma} above). Therefore, substituting \eqref{eq:u_* with transformation} into \eqref{eq:closed-form expression of the maximal policy under uniqueness} and \eqref{eq:closed-form expression of optimal policy} result in the following respective closed-form expressions of a maximal policy $\pi'$ over $\pi \in \Pi_\mathsf{a}$ and a HJB policy $\pi_*$:
	\[
		\pi' (x) = \tilde \sigma ( F_{\mathsf{c}}^\mathsf{T} (x) \nabla v_\pi^\mathsf{T} (x) )
		\text{ and }
		\pi_* (x)= \tilde \sigma ( F_{\mathsf{c}}^\mathsf{T} (x) \nabla v_*^\mathsf{T} (x) ).
	\]
	
	Next, substituting \eqref{eq:transformed hamiltonian} and $\pi_*(x) = \varphi^{-1} [ a_*(x, \nabla v_*(x)) ]$ into the HJBE~\eqref{eq:HJBE}, we obtain the HJBE w.r.t. $\bar h$, for the same $v_*$:
	\begin{align*}
		\alpha \cdot v_*(x) 
			= h (x, \pi_*(x), \nabla v_*(x) ) 
			= {\bar h}\big (x, a_*(x, \nabla v_*(x)), \nabla v_*(x) \big ) 
			= \max_{a \in \Int{\mathcal{A}}} {\bar h}(x, a, \nabla v_*(x)) \qquad \forall x \in \mathcal{X}.
	\end{align*}
	In addition, the PI running on the original RL problem \eqref{eq:a class of non-affine dynamics} and \eqref{eq:a class of reward fnc for non-affine dynamics}, with its policy improvement $\pi_{i}(x) =
		\tilde \sigma \big ( F_{\mathsf{c}}^\mathsf{T}(x) \nabla v_i^\mathsf{T}(x) \big )$, results in the same VFs as the PI running on the transformed one~\eqref{eq:proof:thm5.4:transformed RL problem}. This is because once a policy~$\pi$ is admissible, 
	\[
		\alpha \cdot v_\pi(x) = h(x, \pi(x), \nabla v_\pi(x))  = \bar h(x, \bar \pi(x), \nabla v_{\pi}(x)) \qquad \forall x \in \mathcal{X}	
	\]
	for the policy $\bar \pi$ given by $\bar \pi (x) \doteq \varphi(\pi(x))$, by the differential BE~\eqref{eq:differential BE} and \eqref{eq:transformed hamiltonian}. This implies that applied to the transformed RL problem, the policy $\bar \pi$ is admissible and its VF is equal to the original VF $v_\pi$ by Theorem~\ref{thm:uniqueness condition}. 
	
	Therefore, the application of Theorem~\ref{thm:true convergence} to the transformed RL problem shows that for both cases, the limit function ${\hat v}_*$ is a solution $v_* \in \mathrm{C}^1$ to the HJBE s.t. $v_i \to v_*$ and $\nabla v_i \to \nabla v_*$ both locally uniformly. For locally uniform convergence of $\langle \pi_i \rangle$ towards $\pi_*$, apply Lemma~\ref{lemma:uniform convergence of composition} with $\psi(x, y) = \tilde \sigma (F_{\mathsf{c}}^\mathsf{T} (x) y)$, $g_i = \nabla v_i$, and $g = \nabla v_*$.

\textbf{(Proof of Claim~\ref{claim:maximum lemma}).} Fix $(x, p) \in \mathcal{X} \times \mathcal{X}^\mathsf{T}$ and note that the associated Hamiltonian $\bar h$ satisfies (see \eqref{eq:transformed hamiltonian})
		\begin{equation}
			\bar h(x, a, p) = h(x, \varphi^{-1}(a), p)
			\quad \forall a \in \Int{\mathcal{A}}.
			\label{eq:transformed hamiltonian in proof}	
		\end{equation}
		Since $\varphi$ is a bijection between the interior spaces $\Int{\mathcal{U}}$ and $\Int{\mathcal{A}}$, we have $\varphi(\Int{\mathcal{U}}) = \Int{\mathcal{A}}$, which and \eqref{eq:transformed hamiltonian in proof} imply
				\begin{align}
					\max_{a \in \Int{\mathcal{A}}} \bar h(x, a, p) 
					= 
					\max_{u \in \Int{\mathcal{U}}} \bar h(x, \varphi(u), p)
					=
					\max_{u \in \Int{\mathcal{U}}} h(x, u, p).
					\label{eq:maximum conversion eq}
				\end{align}
		For simplicity, denote $a_*(x, p)$ by $a_*$ and $u_* \doteq  \varphi^{-1} (a_*)$. Here, $a_*$ and $u_*$ belong to the interiors $\Int{\mathcal{A}}$ and $\Int{\mathcal{U}}$, respectively. So, 
			\begin{align*}
				\max_{a \in \Int{\mathcal{A}}} \bar h(x, a, p)
				&=
				\bar h(x, a_*, p) 
				=
				h(x, \varphi^{-1}(a_*), p)	
				=
				h(x, u_*, p)
				\qquad \text{by \eqref{eq:transformed hamiltonian in proof}.}
			\end{align*}
			This and \eqref{eq:maximum conversion eq} imply that $u_*$ satisfies 
			\[
				u_* (x, p) \in \Argmax_{u \in \Int{\mathcal{U}}} h(x, u, p).				
			\]
			This proves the statement if $\Bdy{\mathcal{U}} = \varnothing$. If not, suppose that there exists a $\tilde u \in \Bdy{\mathcal{U}}$ on the boundary $\Bdy{\mathcal{U}}$ s.t. 
			\begin{equation}
				h(x, \tilde u, p) > h(x, u, p) \textrm{ for all } u \in \Int{\mathcal{U}}.
				\label{eq:maximum on the boundary}
			\end{equation}
			Then, by continuity of $h$ and the definition of a boundary, 
			for $\varepsilon = h(x, \tilde u, p) - h(x, u_*, p) > 0$, there exists ${\hat u}_*$ in the interior $\Int{\mathcal{U}}$ s.t. $h(x, {\tilde u}, p) - h(x, {\hat u}_*, p) < \varepsilon$, which implies 
			\[
				h(x, {\hat u}_*, p) > h(x, u_*, p),
			\]
			meaning that $u_* \in \Int{\mathcal{U}}$ is not a maximum of the mapping $u \mapsto h(x, u, p)$ over the interior $\Int{\mathcal{U}}$, a contradiction. Therefore, there is no $\tilde u \in \Bdy{\mathcal{U}}$ s.t. \eqref{eq:maximum on the boundary} holds; we conclude that $u_*$ is a maximum of the mapping $u \mapsto h(x, u, p)$ over $\Int{\mathcal{U}} \cup \Bdy{\mathcal{U}} = \mathcal{U}$. 
\end{Proof*}

\begin{Proof*}{Proposition~\ref{prop:continuity of VFs} (\secref{subsection:discounted RL with bounded v})}
	Fix the policy~$\pi$ and for simplicity, denote with slight abuse of notation 
	\[
		v \doteq v_\pi, \quad X_t(x) \doteq \mathbb{G}_\pi^x[X_t], \quad R_t(x) \doteq \mathbb{G}_\pi^x [ R_t ].
	\]
	Here, the dependencies on the policy~$\pi$ are implicit, and for each $x \in \mathcal{X}$, the state trajectory $t \mapsto X_t(x)$ is assumed to exist uniquely for all $t \geq 0$ (see~\secref{section:preliminaries}). Also note that $R_t(x) = r_\pi(X_t(x))$ since 
	\[
		R_t(x) = \mathbb{G}_\pi^x [ R_t ] = r_\pi(\mathbb{G}_\pi^x [ X_t ]) = r_\pi(X_t(x)).
	\]

	Suppose $v$ is bounded and fix $x_0 \in \mathcal{X}$. Then, by continuity of $r_\pi$ and continuous dependency of $(t, x) \mapsto X_t(x)$ on $x$ \citep[Theorem 3.5]{Khalil2002}, we have: for any $\eta > 0$ and any $\beta > 0$, there exists $\delta \equiv \delta(\beta, \eta) > 0$ such that
	\[
		\|x - x_0\| < \delta \;\Longrightarrow\; 
		\big | R_t (x) - R_t(x_0) \big | < \beta \;\;\,  
		\forall t \in [0, \eta],
	\]
	from which and the integral BE~\eqref{eq:Bellman eq}, we obtain that whenever $\|x - x_0 \| < \delta$, 
	\begin{align*}
		\big |v(x) - v(x_0) \big | \leq  
		\int_0^{\eta} \gamma^{t} \!\cdot\! \big | R_t(x) - R_t(x_0) \big |  \, dt  + \gamma^{\eta} \! \cdot \big |v\big (X_{\eta}(x) \big ) \big | +  \gamma^{\eta} \! \cdot \big | v\big (X_{\eta}(x_0) \big ) \big | < \beta \cdot \eta + 2 \cdot \gamma^\eta \cdot M,
	\end{align*}
	where $M > 0$ is a bound of $v$, i.e., a positive constant such that $\sup_{x \in \mathcal{X}} |v(x)| \leq M$. Since $\beta, \eta > 0$ are arbitrary, for given $\varepsilon > 0$, choose $\beta = \varepsilon / 2\eta$ and any $\eta > 0$ s.t. $\gamma^\eta < \varepsilon / (4 M)$. Then, we conclude that for any $\varepsilon > 0$, there exists $\delta  \equiv \delta(\varepsilon) > 0$ s.t.
    \[
    	\|x - x_0 \| < \delta \; \Longrightarrow \; \big |v(x) - v(x_0) \big | < \frac{\varepsilon}{2} + \frac{\varepsilon}{2} = \varepsilon,
    \] 
    the $\varepsilon$-$\delta$ statement of the continuity of $v$ ($=v_\pi$) at $x_0$, and the proof is completed as $x_0 \in \mathcal{X}$ is arbitrary. 
\end{Proof*}

\begin{Proof*}{Proposition~\ref{prop:boundary condition true when discounted and bounded} (\secref{subsection:discounted RL with bounded v})}
	If $v$ is bouned, then since $\mathbb{G}_\pi^x [ v(X_t) ] = v \big ( \mathbb{G}_\pi^x [ X_t] \big )$, $t \mapsto \mathbb{G}_\pi^x [ v(X_t) ]$ for any policy~$\pi$ is bounded over $\mathbb{T}$ (uniformly in $x \in \mathcal{X}$), hence by Lemma~\ref{lemma:boundary condition on v_* with discounting} below, $v$ satisfies the boundary condition~\eqref{eq:uniqueness condition for v}.
\end{Proof*}

\begin{lemma}
	\label{lemma:boundary condition on v_* with discounting}
	In the discounted case, the boundary condition \eqref{eq:uniqueness condition for v} is true if $t \mapsto \mathbb{G}_{\pi}^x  [ v(X_{t}) ]$ is bounded for each $x \in \mathcal{X}$.
\end{lemma}

\begin{Proof}
	For $x \in \mathcal{X}$, let $M_x > 0$ be a constant s.t. $\sup_{t \in \mathbb{T}} \mathbb{G}_{\pi}^x  \big [ |v(X_{t})| \big ] \leq M_x$. Then, since $\gamma \in (0, 1)$, we have
	\[
		0 \leq \lim_{t \to \infty} \mathbb{G}_{\pi}^x \big  [ \, \gamma^{t} \cdot | v(X_{t}) |  \, \big ] 
		\leq
		\lim_{t \to \infty} M_x \cdot \gamma^{t} = 0 \qquad \forall x \in \mathcal{X},
	\]
	implying the boundary condition \eqref{eq:uniqueness condition for v}.
\end{Proof}

\begin{Proof*}{Corollary~\ref{cor:fundamental policy eval and imp thm for bounded v} (\secref{subsection:discounted RL with bounded v})}
		For the first part, since $v$ is bounded, Proposition~\ref{prop:boundary condition true when discounted and bounded} and Theorem~\ref{thm:uniqueness condition} ensure $v = v_\pi$, hence $v_\pi$ is bounded. Next, if $v_\pi$ is bounded (hence admissible), then we have $v_{\pi} \leqslant v_{\pi'} \leqslant \widebar v$ by Theorem~\ref{thm:policy improvement thm} and Lemma~\ref{lemma:VF range}, and thus $v_{\pi'}$ is also bounded.
\end{Proof*}

\begin{Proof*}{Corollary~\ref{cor:properties when R is bounded} (\secref{subsection:discounted RL with bounded v})}
	Under Assumption~\ref{assumption:bounded R}, we can choose $\ushort \alpha$ and $\xi(x)$ in the lower bound \eqref{eq:exponential increasing condition for admissibility} of $\mathbb{G}_\pi^x [ R_t ]$ as $\ushort \alpha = 0$ and a constant function $\xi(x) \equiv \inf \{ r(y, u) : (y, u) \in \mathcal{X} \times \mathcal{U} \} \in \mathbb{R}$. Hence, $v_\pi$ is bounded by Proposition~\ref{prop:general admissible condition}, for any given policy~$\pi$. The remaining proof is now obvious by Proposition~\ref{prop:continuity of VFs} and Corollary~\ref{cor:fundamental policy eval and imp thm for bounded v}.
\end{Proof*}

\begin{Proof*}{Theorem~\ref{thm:policy improvement thm in locally Lipschitz case} (\secref{subsection:RL with local Lipschitzness})}
	Since the differential BE~\eqref{eq:differential BE} and the $\mathrm{argmax}$-formula \eqref{eq:assumption:policy improvement} are true for $\pi \in \Pi_\mathsf{a}$ and the maximal policy~$\pi'$ over it, they satisfy the inequality  \eqref{eq:policy improvement inequality} in Lemma~\ref{lemma:policy improvement lemma} for $v = v_\pi$. Hence,
	\begin{align*}
		{\dot v}_\pi (x, \pi'(x)) 
		\geq - r_{\pi'}(x) + \alpha \cdot v_\pi(x) 
		\geq - (r_\mathsf{max} - \alpha \cdot v_\pi(x)) = - \alpha \cdot (\widebar v - v_\pi(x))
		\qquad \forall x \in \mathcal{X},
	\end{align*}
	where the last equality comes from Lemma~\ref{lemma:VF range}. Let $J_\pi \doteq \widebar v - v_\pi$. Then, the inequality can be expressed as 
	\[
		{\dot J}_\pi (x, \pi'(x)) \leq \alpha J_\pi(x) \qquad \forall x \in \mathcal{X},
	\]
	by substituting ${\dot v}_\pi = - {\dot J}_\pi$ and rearranging it. By $\pi \in \Pi_\mathsf{a}$ and the Assumptions, we see that $u_*$ and $\nabla v_\pi$ are locally Lipschitz. Hence, $\pi'$ given by \eqref{eq:closed-form expression of the maximal policy under uniqueness} is locally Lipschitz (i.e., $\pi' \in \Pi_\mathsf{Lip}$); the application of Lemma~\ref{lemma:existence and uniqueness of state trj.} below results in $t_\mathsf{max}(x; \pi') = \infty$ for all $x \in \mathcal{X}$. Now that the state trajectories exist globally and uniquely, we conclude $\pi \preccurlyeq \pi' \in \Pi_\mathsf{a}$ by Theorem~\ref{thm:policy improvement thm}.
\end{Proof*}

\begin{lemma} [\secsref{subsection:RL with local Lipschitzness} and \ref{subsection:nonlinear optimal control}]
	\label{lemma:existence and uniqueness of state trj.}
	If there exist  a $\mathrm{C}^1$ function $J: \mathcal{X} \to \mathbb{R}$ and $\mathcal{K}_\infty$-functions $\rho_1$ and $\rho_2$ s.t. for all $x \in \mathcal{X}$, 
	\begin{align}
		&\rho_1(\|x\|_\Omega) \leq J(x) \leq \rho_2(\|x\|_\Omega)
		\label{eq:appendix:first-inequality-for-existence-of-unique-state-trj}
		\\[5pt]
		&{\dot J}(x, \pi(x)) \leq \lambda \cdot J(x) 
		\label{eq:appendix:second-inequality-for-existence-of-unique-state-trj}
	\end{align}
for a compact subset $\Omega \subset \mathcal{X}$, a constant $\lambda \in \mathbb{R}$, and a policy $\pi \in \Pi_\mathsf{Lip}$, then $t_\mathsf{max}(x; \pi) = \infty$ for all $x \in \mathcal{X}$.
\end{lemma}

\begin{Proof}
	First, $t_\mathsf{max}(x; \pi) \in (0, \infty]$ is well-defined for each $x \in \mathcal{X}$ since $\pi \in \Pi_\mathsf{Lip}$ and thus $f_{\pi}$ is locally Lipschitz. Applying the \citet{Gronwall1919}'s inequality to \eqref{eq:appendix:second-inequality-for-existence-of-unique-state-trj}, we obtain $\mathbb{G}_{\pi}^x [J(X_t)] \leq e^{\lambda t} \cdot J(x)$ and by \eqref{eq:appendix:first-inequality-for-existence-of-unique-state-trj},
	\[
		\mathbb{G}_{\pi}^x [\rho_1(\|X_t\|_\Omega)] \leq \mathbb{G}_{\pi}^x [J(X_t)] \leq e^{\lambda t} \cdot \rho_2(\|x\|_\Omega)
		\qquad \forall x \in \mathcal{X}.
	\]	 
	Therefore, the proof is completed by applying Lemma~\ref{lemma:existence and uniqueness of state trj. bounded exponentially} below for each $x \in \mathcal{X}$, with $\rho = \rho_1$ and $\bar \rho(x, t) = e^{\lambda t} \rho_2(\|x\|_\Omega)$.
\end{Proof}

\begin{lemma} [\secsref{subsection:RL with local Lipschitzness} and \ref{subsection:nonlinear optimal control}]
	\label{lemma:existence and uniqueness of state trj. bounded exponentially}
	Let $\Omega \subset \mathcal{X}$ be compact. Given a policy $\pi$ and $x\in \mathcal{X}$, 
	if there exist functions $\rho: [0, \infty) \to [0, \infty)$ and $\bar \rho: \mathcal{X} \times [0, \infty) \to [0, \infty)$ s.t. 
	\begin{enumerate}
		\item both $\rho$ and $t \mapsto \big ( \bar \rho(x, t) - \bar \rho(x, 0) \big ) $ are $\mathcal{K}_\infty$;
		\\[-5pt]
		\item $\mathbb{G}_\pi^x [\rho \big ( \|X_t\|_\Omega \big ) ] \leq {\bar \rho} (x, t)$ for all $t \in [0, t_\mathsf{max}(x; \pi))$,
	\end{enumerate}
	then $t_\mathsf{max}(x; \pi) = \infty$.
\end{lemma}

\begin{Proof}
	Since the inverse of a $\mathcal{K}_\infty$ function $\rho^{-1}$ exists and is also $\mathcal{K}_\infty$ \citep[Lemma~4.2]{Khalil2002}, we obtain
	\[
		\mathbb{G}_{\pi}^x [\|X_t\|_\Omega] \leq {\tilde \rho}(x, t) \doteq \rho^{-1}\big ( \bar \rho(x, t) \big )
		\qquad \forall t \in [0, t_\mathsf{max}(x; \pi)).
	\]
	For the proof, we suppose \emph{$t_\mathsf{max}(x; \pi)$ is finite} and show a contradiction $t_\mathsf{max}(x; \pi) = \infty$. First, $t \mapsto \mathbb{G}_{\pi}^x [\|X_t\|_\Omega]$ is bounded by 
	\[
		{\tilde \rho}_\mathsf{max}(x; \pi) \doteq \sup \big \{  {\tilde \rho}(x, t) : 0 \leq t \leq t_\mathsf{max}(x; \pi) \big \} = {\tilde \rho}(x, t_\mathsf{max}(x; \pi)).
	\] 
	Thus, the state trajectory $t \mapsto \mathbb{G}_{\pi}^x [X_t]$, defined for all $t \in [0, t_\mathsf{max}(x))$, remains within the compact\footnote{\smash{${\widebar \Omega}(x; \pi)$ is compact (i.e., closed and bounded) by its definition since (i) so is $\Omega$ and (ii) $t_\mathsf{max}(x; \pi)$ (hence, ${\tilde \rho}_\mathsf{max}(x; \pi)$) is finite.}} set ${\widebar \Omega}(x; \pi) \supset \Omega$ given by
	\[
		{\widebar \Omega}(x; \pi) \doteq \big \{ y \in \mathcal{X} : \| y \|_\Omega \leq {\tilde \rho}_\mathsf{max}(x; \pi) \big \}
	\]
	and thereby is uniquely defined \emph{for all $t \in \mathbb{T}$} by Proposition~\ref{prop:existence and uniqueness of state trj} in \secref{subsection:discounted RL with bounded state trj}, leading to a contradiction: $t_\mathsf{max}(x; \pi) = \infty$.
\end{Proof}

\begin{Proof*}{Lemma~\ref{lemma:positive definiteness of c_pi} (\secref{subsection:nonlinear optimal control})}
	The positive definiteness of $c_\pi$ is obvious by \eqref{eq:n.d. R in optimal control} and $c_\pi(0) = c(0, \pi(0)) = 0$ ($\because$ $\pi(0) = 0$).
\end{Proof*}

\begin{Proof*}{Lemma~\ref{lemma:Lyapunov v} (\secref{subsection:nonlinear optimal control})}
\textbf{a.} Since $\pi \in \Pi_\mathsf{a} \subseteq \Pi_0$, 
\begin{enumerate}
	\item $c_\pi(0) = 0$ by Lemma~\ref{lemma:positive definiteness of c_pi}; 	
	\item $x_\mathsf{e} = 0$ is an equilibrium point under~$\pi$ ($\because$ $f_\pi(0) = f(0, \pi(0)) = f(0, 0) = 0$), that is, $\mathbb{G}_\pi^0 [ X_t ] \equiv 0$.
\end{enumerate}
Hence, $\mathbb{G}_\pi^0 [C_t ] = c_\pi \big ( \mathbb{G}_\pi^0 [X _t ] \big ) = c_\pi(0) = 0$ for all $t \in \mathbb{T}$, implying $J_\pi(0) = 0$. 

By $\pi \in \Pi_\mathsf{a}$, we also have $t_\mathsf{max}(x; \pi) = \infty$ and 
\[
	J_\pi(x) = \mathbb{G}_\pi^x \bigg [ \int_0^\infty \gamma^t \cdot C_t \, dt  \bigg ] \in [0, \infty)
	\qquad \forall x \in \mathcal{X}.
\]
Since $c_\pi(0) = 0$ and $c_\pi(x) > 0$ for any $x \neq 0$ by Lemma~\ref{lemma:positive definiteness of c_pi}, and $t \mapsto \mathbb{G}_\pi^x [C_t]$ is continuous ($\because$ so are $c_\pi$ and $t \mapsto \mathbb{G}_\pi^x[X_t]$ --- see \secref{subsection:RL with local Lipschitzness}), we have that for each $x \neq 0$, there exists $\eta > 0$ s.t. $\inf_{0 \leq t < \eta} \mathbb{G}_\pi^x [C_t ] > 0$. Therefore, by the integral BE~\eqref{eq:Bellman eq}, 
\[
	J_\pi(x) \geq \bigg  ( \! \inf_{0 \leq t < \eta} \mathbb{G}_\pi^x [C_t ] \bigg ) \cdot \!\!\int_0^\eta \gamma^t \, dt + \gamma^\eta \! \cdot \mathbb{G}_\pi^x [ J_\pi(X_\eta) ] >\gamma^\eta \! \cdot \mathbb{G}_\pi^x [ J_\pi(X_\eta) ] \geq 0
	\qquad \forall x \neq 0,
\]
that is, $J_\pi(x) > 0$ for each $x \neq 0$. This and $J_\pi(0) = 0$ prove that $J_\pi$ is positive definite. 

\textbf{b.} and \textbf{c.} Since $\pi \in \Pi_\mathsf{a}$ satisfies the differential BE \eqref{eq:differential BE}, by Lemma~\ref{lemma:appendix:conversion b.t.w. BEs for arbitrary eta} with $J_\pi = - v_\pi$, $c_\pi = -r_\pi$, and ${\dot  J}_\pi = - {\dot v}_\pi$, we have 
 	\begin{align}
		{\dot J}_\pi(x, \pi(x)) 
		&= \alpha J_\pi(x) - c_\pi(x)
		\qquad \forall x \in \mathcal{X}.
		\label{eq:optimal control:nd of v dot}
	\end{align}
	Here, ${\dot J}_\pi(0, \pi(0)) = 0$ since both $J_\pi$ and $c_\pi$ are positive definite; the proof is now obvious by \eqref{eq:condition for stability}, \eqref{eq:condition for asymp stability}, and \eqref{eq:optimal control:nd of v dot}. 	
\end{Proof*}

\newpage
\begin{Proof*}{Theorem~\ref{thm:stability of optimal control} (\secref{subsection:nonlinear optimal control})}
Given $\pi \in \Pi_\mathsf{a}$, the inequality \eqref{eq:condition for asymp stability} in Lemma~\ref{lemma:Lyapunov v} is true whenever $\gamma = 1$ (i.e., $\alpha = 0$) since $c_\pi$ is positive definite by Lemma~\ref{lemma:positive definiteness of c_pi}. Therefore, the proof is obvious by Lemma~\ref{lemma:Lyapunov v} and Lyapunov's stability theorems \citep[Theorems 4.1 and 4.2]{Khalil2002}, except that the asymptotic stability is global when ``$\gamma = 1$ but $J_\pi$ \emph{is not radially unbounded (but radially nonvanishing)}''. To prove this case, fix $\pi \in \Pi_\mathsf{a}$ and let $x_\mathsf{e} = 0$ be asymptotically stable under $\pi$. $\mathcal{B}_\pi \subseteq \mathcal{X}$ denotes the basin of attraction under $\pi$, i.e., the set of all points $x \in \mathcal{X}$ s.t. $X_t(x) \to 0$ as $t \to \infty$, where $X_t(x) \doteq \mathbb{G}_\pi^x [X_t]$ denotes the state trajectory under $\pi$ starting at $X_0 = x \in \mathcal{X}$. Here, the dependency of $X_t(x)$ on $\pi$ is implicit. Also note that 
\begin{enumerate}
	\item since $\mathcal{B}_\pi$ is open \citep[Lemma~8.1]{Khalil2002} and contains the origin $x_\mathsf{e} = 0$, 
			there exists $r > 0$ such that 
			\begin{align}
				\|x\| < r \quad \Longrightarrow \quad x \in \mathcal{B}_\pi;
				\label{eq:proof:stability:1}
			\end{align}
	\item since $c_\pi \geqslant 0$ is positive definite by Lemma~\ref{lemma:positive definiteness of c_pi}, continuous ($\because$ so is $r$ and $\pi \in \Pi_\mathsf{a}$ by definitions), and assumed radially nonvanishing (i.e., $\lim_{r \to \infty} \inf \{c_\pi(x) : \|x\| \geq r\} \neq 0$ --- see \secref{appendices:notations:4}), we have
	\begin{equation}
		 \varphi_\pi(r) \doteq \inf \{ c_\pi(x): \| x \| \geq r \} > 0 \qquad \forall r > 0;
		\label{eq:proof:stability:2}
	\end{equation}
	\item by time-invariance $X_{\tau + t} (x) = X_t ( X_{\tau}(x))$ (and noting that $t_\mathsf{max}(x; \pi) = \infty$ for all $x \in \mathcal{X}$ by $\pi \in \Pi_\mathsf{a}$), we have
		\begin{equation}
			\label{eq:proof:stability:3}		
			x \not \in \mathcal{B}_\pi \quad \Longrightarrow \quad 
			X_t(x) \not \in \mathcal{B}_\pi \text{ for all } t \geq 0
		\end{equation}
		--- if $X_\tau(x) \in \mathcal{B}_\pi$ for some $\tau > 0$, then $x \in \mathcal{B}_\pi$ ($\because$ $\displaystyle \smash{\lim_{t \to \infty}} X_{\tau + t}(x) = \smash{\lim_{t \to \infty}} X_t(X_\tau(x)) = 0$).
\end{enumerate} 

The proof will be done by contradiction. Suppose $\mathcal{B}_\pi \neq \mathcal{X}$. 
Then, it implies that there exists $x \not \in \mathcal{B}_\pi$ in $\mathcal{X}$ and $r > 0$ such that
	\begin{equation}
		\|X_t(x)\| \geq r \text{ for all } t \in \mathbb{T}
		\label{eq:proof:stability:4}
	\end{equation}
by \eqref{eq:proof:stability:3} and then the contraposition of \eqref{eq:proof:stability:1}.
Finally, applying \eqref{eq:proof:stability:2} and \eqref{eq:proof:stability:4} to the cost VF for $\gamma = 1$ yields
	\begin{align*}
		J_\pi(x) 
		= \lim_{\eta \to \infty} \int_0^\eta c_\pi(X_t(x)) \, dt 
		\geq \varphi_\pi(r) \cdot \lim_{\eta \to \infty} \int_0^\eta 1 \, dt 
		= \infty,
	\end{align*}
	 a contradiction to $\pi \in \Pi_\mathsf{a}$. Therefore, $\mathcal{B}_\pi = \mathcal{X}$ and thus the asymptotic stability under $\pi \in \Pi_\mathsf{a}$ is global whenever $\gamma = 1$.
\end{Proof*}

\begin{Proof*}{Theorem~\ref{thm:uniqueness under gas} (\secref{subsection:nonlinear optimal control})}
	Since $x_\mathsf{e} = 0$ under $\pi$ is globally attractive, for each $x \in \mathcal{X}$, (i) the state trajectory $t \mapsto \mathbb{G}_\pi^x [X_t]$ exists uniquely and globally over $\mathbb{T}$, and (ii) $\mathbb{G}_\pi^x [X_t] \to 0$ as $t \to \infty$. Hence, continuity of $v$ at $0$ and $v(0) = 0$ imply that $\mathbb{G}_\pi^x [\gamma^{t} \, v(X_{t})] \to 0$ as $t \to \infty$, for all $x \in \mathcal{X}$; the proof is completed by Theorem~\ref{thm:uniqueness condition}.	
\end{Proof*}

\begin{Proof*}{Theorem~\ref{thm:uniqueness condition in optimal control} (\secref{subsection:nonlinear optimal control})}
	Since $J$ ($\doteq -v$) is positive definite, $v$ is upper-bounded by zero. $\kappa J \leqslant c_\pi$ is equivalent to $r_\pi \leqslant \kappa \cdot v$ by definitions. Therefore, if the state trajectories $t \mapsto \mathbb{G}_\pi^x [ X_t]$ are uniquely defined over $\mathbb{T}$ (i.e., $t_\mathsf{max}(x; \pi) = \infty$), for all $x \in \mathcal{X}$, then the inequality~\eqref{eq:optimal control:exponential increasing condition for admissibility} in the case \textbf{b} is equivalent to \eqref{eq:exponential increasing condition for admissibility} for $\xi = - \zeta$, and the application of Theorem~\ref{thm:boundary condition:sufficiency} concludes $\pi \in \Pi_\mathsf{a}$ and $v = v_\pi$ for both cases \textbf{a} and \textbf{b}. The followings are the proofs of $t_\mathsf{max}(x; \pi) = \infty$ for all $x \in \mathcal{X}$ for each case. Also note that $c_\pi$ is positive definite by Lemma~\ref{lemma:positive definiteness of c_pi} and $\pi \in \Pi_0$.
	
	\textbf{a}. First, $J$ satisfies 
	$
		\alpha J(x) = c_\pi(x) + {\dot J}(x, \pi(x)) \geq \kappa J(x) + {\dot J}(x, \pi(x)) 
	$ 
	for all $x \in \mathcal{X}$,
	by Lemma~\ref{lemma:appendix:conversion b.t.w. BEs for arbitrary eta} and $\kappa J \leqslant c_\pi$. That is, ${\dot J}(x, \pi(x)) \leq (\alpha - \kappa) \cdot J(x)$ for all $x \in \mathcal{X}$. Since $J$ is assumed $\mathrm{C}^1$ and radially unbounded, Lemma~\ref{lemma:bound of a positive definite function} below implies that there exist $\mathcal{K}_\infty$ functions $\rho_1$ and $\rho_2$ s.t. 
	$
		\rho_1(\|x\|) \leq J (x) \leq \rho_2(\|x\|)
	$ for all $x \in \mathcal{X}$. Therefore, the application of Lemma~\ref{lemma:existence and uniqueness of state trj.} with $\Omega = \{0\}$ (i.e., with $\|\cdot\|_\Omega = \|\cdot\|$) proves that $t_\mathsf{max}(x ; \pi) = \infty$ for all $x \in \mathcal{X}$.
	
	\textbf{b.} Since $c_\pi$ is positive definite, continuous by definitions, and radially unbounded by assumption, there exists a $\mathcal{K}_\infty$ function $\rho$ s.t. $\rho(\|x\|) \leq c_\pi(x)$ for all $x \in \mathcal{X}$ by Lemma~\ref{lemma:bound of a positive definite function}. Hence, we obtain that for each $x \in \mathcal{X}$, 
	\[
		\mathbb{G}_\pi^x [ \rho( \|X_t\|) ] \leq \mathbb{G}_\pi^x [ c_\pi(X_t) ] = \mathbb{G}_\pi^x [ C_t ] \leq \zeta(x) \cdot \exp(\ushort \alpha t) \quad \forall t \in [0, t_\mathsf{max}(x; \pi) );
	\]
	the application of Lemma~\ref{lemma:existence and uniqueness of state trj. bounded exponentially} for each $x \in \mathcal{X}$, with $\bar \rho(x, t) = \zeta(x) \exp(\ushort \alpha t)$, concludes $t_\mathsf{max}(x; \pi) = \infty$ for all $x\in \mathcal{X}$. 
\end{Proof*}

\begin{lemma} [{\citealp[Lemma~4.3]{Khalil2002}}]
	\label{lemma:bound of a positive definite function}
	If $g : \mathcal{X} \to \mathbb{R}$ is continuous, positive definite, and radially unbounded, then there exist $\mathcal{K}_\infty$ functions $\rho_1$ and $\rho_2$ s.t. 
		$
			\rho_1(\|x\|) \leq g(x) \leq \rho_2(\|x\|)
		$ for all $x \in \mathcal{X}$.
\end{lemma}

\begin{Proof*}{Theorem~\ref{thm:policy improvement thm for optimal control} (\secref{subsection:nonlinear optimal control})}
$J_\pi$ is (i) positive definite (by Lemma~\ref{lemma:Lyapunov v}a),  (ii) $\mathrm{C}^1_\mathsf{Lip}$ (by the regularity $\mathcal{V}_\mathsf{a} \subset \mathrm{C}^1_\mathsf{Lip}$), and (iii) radially unbounded. So, by Lemma~\ref{lemma:bound of a positive definite function} above, there exist $\mathcal{K}_\infty$ functions $\rho_1$ and $\rho_2$ s.t.  
$
		\rho_1(\|x\|) \leq J_\pi (x) \leq \rho_2(\|x\|)$ for all $x \in \mathcal{X}
$.
Since $c$ is positive definite by \eqref{eq:n.d. R in optimal control},  $r_\mathsf{max} = -\min \{ c(x, u) : (x, u) \in \mathcal{X} \times \mathcal{U}\} = 0$, hence $\widebar v = 0$ by Lemma~\ref{lemma:VF range}. Therefore, we conclude $\pi' \in \Pi_\mathsf{a}$ and $J_{\pi'} \leqslant J_{\pi}$ 
by Theorem~\ref{thm:policy improvement thm in locally Lipschitz case} with $\Omega = \{ 0 \}$ (i.e., $\| \cdot \|_\Omega = \| \cdot \|$) and Lemma~\ref{lemma:pi'(0)=0} below.
\end{Proof*} 

\begin{lemma} [\secref{subsection:nonlinear optimal control}]
	\label{lemma:pi'(0)=0}
	Let $v : \mathcal{X} \to \mathbb{R}$ be $\mathrm{C}^1_\mathsf{Lip}$ and negative definite.
	If a policy~$\pi$ is given by $\pi(x) = u_*(x, \nabla v(x))$, then  $\pi \in \Pi_0$.
\end{lemma}

\begin{Proof}
	Since $v$ is $\mathrm{C}^1$ and negative definite, we have $\nabla v(0) = 0$ ($\because$ $x = 0$ is the global maximum). Then, by the $\mathrm{argmax}$-formula \eqref{eq:assumption:general condition for policy improvement} of $u_*$ and the definition \eqref{eq:def:Hamiltonian} of the Hamiltonian $h$, we have at $x = 0$:
	\[
		\pi(0) = u_*(0, \nabla v(0)) = u_*(0, 0)\in \Argmax_{u \in \mathcal{U}} \, h(0, u, 0) = \Argmax_{u \in \mathcal{U}} \, r(0, u),
	\]
	which implies $\pi(0) = 0$ since $(x, u) = (0, 0)$ is the unique global maximum of $r$ by negative definiteness of $r$ ($= -c$)---see \eqref{eq:n.d. R in optimal control}. Moreover, $\pi$ is locally Lipschitz since so are both $u_*$ and $\nabla v$. Therefore, we conclude that $\pi \in \Pi_0$. 
\end{Proof}

\begin{Proof*}{Theorem~\ref{thm:fundamental properties in optimal control} (\secref{subsection:nonlinear optimal control})}
First, $\pi_0 \in \Pi_\mathsf{a}$ by \textsf{(A)}. Suppose $\pi_{i-1} \in \Pi_\mathsf{a}$ for some $i \in \mathbb{N}$. Then, for $\gamma = 1$, $x_\mathsf{e} = 0$ is globally asymptotically stable (hence globally attractive) under $\pi_{i-1}$ by Theorem~\ref{thm:stability of optimal control}, hence $J_i = J_{\pi_{i-1}}$ by Theorem~\ref{thm:uniqueness under gas} and $v_i \in \smash{\mathrm{C}^1_\mathsf{Lip}}$ in \textsf{(B)}. In the same way, we have $J_i = J_{\pi_{i-1}}$ for $\gamma \in (0, 1)$ by Theorem~\ref{thm:uniqueness under gas} if $x_\mathsf{e} = 0$ under $\pi_{i-1}$ is globally attractive. Otherwise, the condition~\textsf{(B)} and the inequality $\kappa_i \cdot J_i \leqslant c_{\pi_{i-1}}$ in \textsf{(C)} (for IPI, with \textsf{(D)} or \textsf{(E)}) result in $J_i = J_{\pi_{i-1}}$ by Theorem~\ref{thm:uniqueness condition in optimal control}. In short, we have shown that (i) $J_i = J_{\pi_{i-1}}$ by Theorems~\ref{thm:uniqueness under gas} and \ref{thm:uniqueness condition in optimal control} for any case, and (ii) if $\gamma = 1$, then $x_\mathsf{e} = 0$ under $\pi_{i-1}$ is globally asymptotically stable. Therefore, the followings are also obvious by the radial unboundedness of $J_i$ in \textsf{(B)} and Theorems~\ref{thm:stability of optimal control} and \ref{thm:policy improvement thm for optimal control}:
\begin{enumerate}
	\item $x_\mathsf{e} = 0$ under $\pi_{i-1}$ is globally asymptotically stable \emph{if \eqref{eq:gas condition in optimal control} is true (or if $\gamma = 1$ as proven above)},
	\item $\pi_i \in \Pi_\mathsf{a}$ and $J_{\pi_i} \leqslant J_{\pi_{i-1}}$.
\end{enumerate}
Now that $\pi_i \in \Pi_\mathsf{a}$, the mathematical induction completes the proof.
\end{Proof*} 

\subsection{Proofs of Some Facts in \secref{subsection:LQR} LQRs}
\label{appendix:proofs regarding LQR}

In this appendix, for completeness, 
we provide proofs of some of the facts used in \secref{subsection:LQR}.

\begin{Proof*}{Stabilizability and Observability of $(A^\alpha, B, S)$}
Note that $(A, B)$ is stabilizable iff 
$\rank{[ \, A - \lambda I \;\,  B\, ]} = l$ $\forall \lambda \in \mathbb{C}$ such that $\mathrm{Re}{\lambda} \geq 0$ \citep[Theorem~3.2]{Zhou1998}. Since $(A^0,B)$ is stabilizable, therefore, we obtain 
\begin{equation}
	\rank{[ \, A^\alpha - \lambda I \;\,  B\, ]} = \mathrm{rank}\big([ \, A^0 - (\lambda + \alpha/2) I \;\,  B\, ]\big ) = l
	\label{eq:appendix:rank equation for stabilizability}
\end{equation}
for all $\lambda \in \mathbb{C}$ such that $\textrm{Re} \lambda \geq - \alpha/2$ with $\alpha \geq 0$. Hence, \eqref{eq:appendix:rank equation for stabilizability} holds whenever $\textrm{Re} \lambda \geq 0$, i.e., $(A^\alpha, B)$ is stabilizable. Similarly, $(S, A)$ is observable iff $\rank{[ \, A^\mathsf{T} - \lambda I \;\,  S \, ]} = l$ for all $\lambda \in \mathbb{C}$ \citep[Theorem~3.3]{Zhou1998}. Since 
\[
	\mathrm{rank}\big ( [ \, ( A^\alpha )^{\!\mathsf{T}} \! \! - \lambda I \;\,  S\, ]\big ) = \mathrm{rank}\big([ \, ( A^0 )^{\!\mathsf{T}} - \bar \lambda I \;\,  S\, ]\big ) = l
\]
for all $\bar \lambda \doteq \lambda  + \alpha / 2 \in \mathbb{C}$ and thus for all $\lambda \in \mathbb{C}$, the observability of $(S, A^\alpha)$ is now obvious by that of $(S, A^0)$. 	
\end{Proof*}

\begin{Proof*}{Existence of $P$ s.t. $P_i \to P$} For $x, y \in \mathcal{X}$,  let 
$B_i: \mathcal{X}^2 \to \mathbb{R}$ be defined for $J_i(x) = x^\mathsf{T} P_i x$ ($= -v _i(x)$) as 
\[
	B_i(x, y) \doteq  J_i(x + y) - J_i(x - y)  = 4 \, x^\mathsf{T} P_i y,
\]
and denote ${\hat J}_* \doteq - {\hat v}_*$. Then, since $J_i \to {\hat J}_*$ pointwise by Theorem~\ref{thm:convergence of PI}a, $B_i$ pointwise converges to $B$ defined as 
\[
	B (x, y) \doteq {\hat J}_*(x + y) - {\hat J_*}(x - y).
\]
Since $B_i$ is bilinear and symmetric, we have the following claim.
\begin{claim}
	\label{claim:appendix:bilinearity}
	$B$ is \textbf{\emph{a.}} bilinear and \textbf{\emph{b.}} symmetric. 
\end{claim}
By Claim~\ref{claim:appendix:bilinearity}, there exists a symmetric matrix $P$ s.t. $B(x, y) = 4 \, x^\mathsf{T} P y$. Moreover, ${\hat J}_*(0) = 0$ ($\because$ $0 = J_i(0) \to {\hat J}_*(0)$). Therefore, ${\hat J}_*$ is quadratic (and thus continuous) as shown below: 
\[
	{\hat J}_*(x) = {\hat J}_*(x) - {\hat J}_*(0) = B(x/2,x/2) = x^\mathsf{T} P x.
\]		
Next, let $\Omega \doteq \{x \in \mathcal{X} \! : \|x\| \! = \! 1\}$. Then, $\Omega$ is obviously compact, hence $J_i \to {\hat J}_*$ uniformly  on $\Omega$ by Theorem~\ref{thm:convergence of PI}b. Moreover, since ${\hat J}_* \leqslant J_i$ for every $i \in \mathbb{N}$ by Theorem~\ref{thm:fundamental properties}, every $P_i - P$ is positive semidefinite and thus represented as  $P_i - P = N_i^\mathsf{T} N_i$ for some $N_i \in \mathbb{R}^{l \times l}$ \citep[Theorem~3.7.3]{Chen1998}. Therefore, by the definition of $d_\Omega$,
\begin{align*}
	d_\Omega(J_i, {\hat J}_*) \!
	= \sup_{x \in \Omega} \big | \, x^\mathsf{T} (P_i - P)x \, \big | 
	= \sup_{\|x \| = 1}  \| N_i x \|^2
	= \MatNorm{N_i}^2 = \MatNorm{\smash{N_i^\mathsf{T} N_i}} = \MatNorm{P_i - P} \geq 0,
\end{align*}
where $\MatNorm{N_i}^2 = \MatNorm{\smash{N_i^\mathsf{T} N_i}}$ holds since $\MatNorm{\;\cdot\;}$ is induced by the Euclidean norm $\|\cdot\|$. Finally, since $d_\Omega(J_i, {\hat J}_*) \to 0$ by the uniform convergence $J_i \to {\hat J}_*$ on $\Omega$, we conclude that $\MatNorm{P_i - P} \to 0$, i.e., $P_i \to P$.

\textbf{(Proof of Claim~\ref{claim:appendix:bilinearity}).} \textbf{a. Bilinearity.} Since $B_i$ is bilinear (i.e., $B_i(x, y) = 4\, x^\mathsf{T} P_i y$),
	\[
		B_i (x_1 + x_2, y) = B_i(x_1,y) + B_i(x_2,y) \qquad  \forall x_1, x_2, y \in \mathcal{X},
	\]
	where both sides converge to $B(x_1 + x_2,y)$ and $B(x_1, y) + B(x_2, y)$, respectively. This proves that for each $y \in \mathcal{X}$, $B(\cdot, y)$ preserves the vector addition. Similarly, we can prove that $B(\alpha x, y) = \alpha B(x, y)$ for all $x, y \in \mathcal{X}$ and $\alpha \in \mathbb{R}$. Therefore, $B(\cdot, y)$ is linear and in the same way, so is $B(x, \cdot)$, meaning that $B$ is bilinear.
	
	\textbf{b. Symmetry.} Since each $P_i$ is symmetric, so is each $B_i$, hence for all $x, y \in \mathcal{X}$, $B_i(x, y) = B_i (y, x)$; by the pointwise convergence $B_i \to B$, we have $B_i(x, y) \to B(x, y)$ and $B_i(y, x) \to B(y, x)$; by the uniqueness of  the limit point, $B(x, y) = B(y, x)$, for all $x \in \mathcal{X}$. Therefore, $B$ is symmetric.
\end{Proof*}

\begin{Proof*}{Quadratic Convergence $P_i \to P_*$}
Note that the matrix formula~\eqref{eq:matrix iter in policy eval of DPI for LQR} can be rewritten for $i \in \mathbb{N} \setminus \{1\}$ as 
\begin{equation}
	(A_{i-1}^\alpha)^\mathsf{T} P_i + P_i A_{i-1}^\alpha  
	= - \mathcal{S} - \mathcal{K}_{i-1}^\mathsf{T} \Gamma \mathcal{K}_{i-1},
	\label{eq:appendix:matrix iter in policy eval of DPI for LQR} 
\end{equation}
where $\mathcal{S} \doteq S - E \Gamma^{-1} E^\mathsf{T}$ is a Schur complement of $\mathcal{W}$ and thus positive semi-definite \citep{Horn1990}; $\mathcal{K}_{i-1} \doteq \Gamma^{-1} B^\mathsf{T} P_{i-1}$. Here, by the policy improvement and definitions in \secref{subsection:LQR}, $A_{i-1}^\alpha$ in \eqref{eq:appendix:matrix iter in policy eval of DPI for LQR} can be rewritten as
\begin{equation*}
	A_{i-1}^\alpha = \mathcal{A}^\alpha - B \mathcal{K}_{i-1}
	\textrm{ for }
	\mathcal{A}^\alpha \doteq A^0 - \alpha I  / 2 - B\Gamma^{-1} E^\mathsf{T},	
\end{equation*}
where $\mathcal{A}^\alpha$ is different from $A^\alpha$ ($\doteq A^0 - \alpha I  / 2$). Therefore, one can see that \eqref{eq:appendix:matrix iter in policy eval of DPI for LQR} is exactly same as the well-known matrix-form PI \citep{Kleinman1968} for the LQR \eqref{eq:LQR case} with $A^0$, $S$, and $E$ replaced by $\mathcal{A}^\alpha$, $\mathcal{S}$ and $0$, respectively.

For the corresponding simplified LQR:
$
	\begin{cases*}
		\textrm{a linear dynamics: } {\tilde f}(x,u) = \mathcal{A}^\alpha x + Bu,	\\[2.5pt]
		\textrm{the unconstrained action space: } \mathcal{U} = \mathbb{R}^m,	\\
		\textrm{a quadratic positive cost function: }
		{\tilde c}(x,u) = x^\mathsf{T} \mathcal{S} x + u^\mathsf{T} \Gamma u,
	\end{cases*}
$\\[10pt]
each policy ${\tilde \pi}_i(x) = - \mathcal{K}_i x$ is admissible by Theorem~\ref{thm:fundamental properties}, meaning that the states under ${\tilde \pi}_i$ converge to zero as $t \to \infty$ (as discussed in \secref{subsection:LQR}). This convergence happens for the linear system iff $A^\alpha_i$ ($= \mathcal{A}^\alpha - B\mathcal{K}_i$) is Hurwitz \citep{Chen1998,Khalil2002} and thus also proves that $(\mathcal{A}^\alpha, B)$ is stabilizable. Since $(S, A^\alpha)$ is observable, so is $(\mathcal{S}, \mathcal{A}^\alpha)$ by \citet[Lemma~16.2.7]{Lancaster1995} and nondegenerate $\mathcal{W}$. Therefore, the quadratic convergence $P_i \to P_*$ is directly proven by following \citet{Kleinman1968}'s proof (when $(\mathcal{A}^\alpha, B)$ is controllable) or generally, by \citet[Theorem~5 and Remark~4 with $\hslash \to \infty$]{Lee2014}. Additionally, this approach can provide an alternative proof of Theorem~\ref{thm:LQR IPI property}. 	
\end{Proof*}

\newlength{\bibitemsep}\setlength{\bibitemsep}{.4\baselineskip plus .05\baselineskip minus .05\baselineskip}
\newlength{\bibparskip}\setlength{\bibparskip}{0pt}
\let\oldthebibliography\thebibliography
\renewcommand\thebibliography[1]{%
  \oldthebibliography{#1}%
  \setlength{\parskip}{\bibitemsep}%
  \setlength{\itemsep}{\bibparskip}%
}

{\small
\bibliographystyle{myplainnat}

}
\end{document}